\newtheorem{theorem}{Theorem}
\newcommand{\citepw}[1]{\citeauthor{#1}~(\citeyear{#1})}
\newcommand{\fhat}[1]{{$\hat{f}(#1)$}}
\newcommand{\dhat}[1]{{$\hat{d}(#1)$}}
\newcommand{\fhatnoparens}[1]{{$\hat{f}$}}
\newcommand{\dhatnoparens}[1]{{$\hat{d}$}}
\newcommand{\aspect}{{\em aspect}}
\def \plotwidth {2.2in}
\newcommand{\incfig}[1]{
\includegraphics[width=\plotwidth{}]{#1}
}
\title{Rectangle Search: An Anytime Beam Search (Extended Version)}
\author {
    Sofia Lemons,\textsuperscript{\rm 1, 2}
    Wheeler Ruml,\textsuperscript{\rm 1}
    Robert C. Holte,\textsuperscript{\rm 3}
    Carlos Linares L\'opez,\textsuperscript{\rm 4}
}
\begin{document}

\maketitle

\begin{abstract}

Anytime heuristic search algorithms try to find a (potentially
suboptimal) solution as quickly as possible and then work to find
better and better solutions until an optimal solution is obtained or
time is exhausted. The most widely-known anytime search algorithms are
based on best-first search.  In this paper, we propose a new
algorithm, rectangle search, that is instead based on beam search, a variant of breadth-first search.
It repeatedly explores alternatives at all depth levels and is thus best-suited to problems featuring deep local minima.
Experiments using a variety of popular search benchmarks suggest that rectangle search is competitive with fixed-width beam search and often performs better than the previous best anytime search algorithms.

\end{abstract}

\section{Introduction}

It is often convenient to have a heuristic search algorithm that can flexibly make use of however much time is available.  The search can be terminated whenever desired and returns the best solution found so far.  \citepw{dean:atp} termed these {\em anytime} algorithms. \citepw{russell:crt} further differentiated between {\em interruptible} algorithms, which quickly find a solution and then find better solutions as time passes, eventually finding an optimal plan if given sufficient time, and {\em contract} algorithms, which are informed of the termination time in advance and thus need only find a single solution before that time.
Anytime algorithms have been proposed as a useful tool for building intelligent systems \cite{zilberstein:uaa,zilberstein:ocr}.
While only a few contract search algorithms have been proposed \cite{dionne:das}, interruptible algorithms have been widely investigated and applied
\cite{likhachev:pld}.

As we review below, the most well-known interruptible anytime heuristic search algorithms are based on best-first search.  Best-first search is attractive as it is the basis for the optimally-efficient optimal search algorithm A* \cite{hart:fbh} and it is relatively well understood.  However, because anytime algorithms are intended for use cases in which the solutions found do not need to be proven optimal, and are not even expected to be optimal, it is not obvious that best-first search is the most appropriate choice of algorithmic architecture.

In this paper, we propose an interruptible algorithm called rectangle search that is inspired by beam search, which is based on breadth-first search rather than best-first search.  Rectangle search is straightforward and simple to implement: the beam incrementally widens and deepens.   We study rectangle search's performance experimentally on seven popular heuristic search benchmarks.  We find that, overall, rectangle search outperforms previously-proposed anytime search algorithms.  Furthermore, it tends to find solutions of comparable cost at similar times when compared to fixed-width beam search, with the added benefit of reducing solution cost over time. We investigate when it performs well or poorly, finding that it performs well in problems with large local minima.  Overall, rectangle search appears to mark a new state of the art for anytime search, while also serving as a replacement for fixed-width beam search.

\section{Background}

Before presenting rectangle search, we first review relevant prior work in anytime search and beam search.

\subsection{Anytime Search}


Most previous anytime algorithms are based on weighted A*~\cite{pohl.i:avoidance}, which is a best-first search using $f'(n) = g(n) + w \times h(n)$, where $g(n)$ is the cost to reach $n$ from the start, $h(n)$ is the estimated cost-to-go to the goal from $n$, and $w\ge1$. For example, anytime weighted A* (AWA*) \cite{hansen:ahs} runs weighted A* but retains a current incumbent solution and continues searching for better solutions until there are no open nodes with $f(n) = g(n) + h(n) < g(incumbent)$, thus proving that the incumbent is optimal.

Anytime Repairing A* (ARA*) \cite{likhachev:aap} is similar, but applies a schedule of decreasing weights ending with $w=1$. When a solution is found, it decreases the weight according to the schedule and reorders the open list. It terminates after finding a solution with $w=1$ or after exhausting all nodes with $f(n) < g(\mathit{incumbent})$.

 Anytime EES (AEES) \cite{thayer:aees} requires no weighting parameter and explicitly works to minimize the time between finding new solutions by using distance-to-go estimates $d(n)$. $d(n)$ is an estimate of the number of actions along a minimum-cost path from node $n$ to a goal. Often this can be computed while calculating $h(n)$. AEES maintains an open list ordered on an error-adjusted evaluation function \fhat{n}, a focal list ordered on an error-adjusted distance-to-go measurement \dhat{n}, and a cleanup list ordered on $f(n)$. It compares the current incumbent solution's cost to the lowest $f$-value among open nodes to determine a bound $w$ on solution suboptimality. The focal list maintains only nodes $n$ with \fhat{n} $< w \cdot$ \fhat{b} where $b$ is the lowest \fhatnoparens{}-valued node, because these are predicted to lead to a solution that is better than the current incumbent.  AEES was demonstrated to outperform other anytime algorithms including ARA*, Anytime Nonparametric A* (ANA*) \cite{vandenberg:ana}, and Restarting Weighted A* (RWA*) \cite{richter:jf}.

\subsection{Beam Search}


Beam search \cite{bisianiEAI1987} is an incomplete and suboptimal variant of breadth-first search. It expands a fixed maximum number of nodes at each depth level of the search, referred to as the beam width $b$. All nodes from the beam at the current level are expanded and then the best $b$ among those descendants are selected for the next level's beam. Only nodes at the same depth in the tree are compared.  It continues to search until either a solution is found or until no new states are reachable from the current depth. A variant of beam search using $d(n)$ for node selection called bead outperforms beam search using $f(n)$ or $h(n)$ in non-unit cost domains \cite{lemons:bsf}.

One issue with beam search is that when the beam width $b$ is increased, sometimes a higher cost solution is returned. The algorithm monobead \cite{lemons:bsf} addresses this issue. It regards the beam as an ordered sequence of numbered {\em slots}. Given nodes on the beam at current depth $d$, to fill beam slot $i$ at the next depth level, the node at slot $i$ of depth $d$'s beam is expanded, its children are added to a priority queue for depth $d+1$, and the best node from that queue is selected. This iterates for values of $i$ from $1$ to $b$, with the priority queue retaining any children that were not selected to fill previous slots. The node selected for slot $i$ at depth $d+1$ is thus restricted to be a child of a node in slots 1 through $i$ at depth $d$. This careful selection order prevents children of nodes at later slots from supplanting children of earlier slots and preserves any solutions that would have been found by searches with a narrower beam width.

\begin{figure}[t]
\centering
    \includegraphics[width=1.5in]{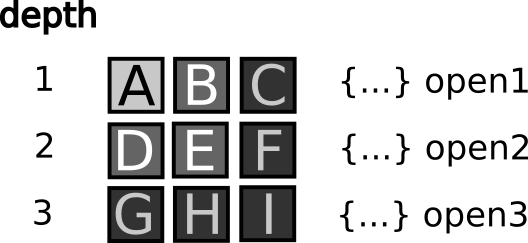}
    \caption{The exploration of rectangle search.}
    \label{fig:nodes}
\end{figure}

\begin{algorithm}[t]
\Begin{
$openlists \leftarrow [\emptyset]$\\ \label{rectangle:openlists}
$closed \leftarrow \emptyset$\\
$incumbent \leftarrow$ node with $g = \infty$\\
expand start and add children to $openlists[0]$\\
$depth \leftarrow 1$\\
\While { non-empty lists exist in $openlists$ } { \label{rectangle:mainloop}
\For { $i=0 \ldots length(openlists)-2$ } { \label{rectangle:iterateexisting}
select \& expand from $openlists[i]$\\
} 
extend $openlists$ with $\emptyset$ $aspect$ times\\ \label{rectangle:addnewopen} \label{rectangle:aspect}
\For { $j=i+1 \ldots length(openlists)-2$ } { \label{rectangle:iteratenew}
\For { $k=1 \ldots depth$ } {
select \& expand from $openlists[j]$\\
}
} 
$depth \leftarrow depth + aspect$\\
trim empty lists from start \& end of $openlists$\\ \label{rectangle:trimopens}
} 
return $incumbent$\\
} 
\caption{Pseudocode for rectangle search.}
\label{alg:rectanglecode}
\end{algorithm}

\SetKwRepeat{Do}{do}{while}

\begin{algorithm}[t]
\setcounter{AlgoLine}{16} 
\KwData{closed, openlists, i, incumbent}
\Begin{
\Do{$f(n) \geq g(incumbent)$ \label{SE:pruneexp}}{
$n \leftarrow$ remove first node from $openlists[i]$\\ \label{SE:selectn} \label{SE:nselected}
}
add $n$ to $closed$\\
$children \leftarrow expand(n)$\\ \label{SE:makechildren}
\For{each child in $children$} {
\If{$f(child) < g(incumbent)$} { \label{SE:prunegen}
\If{$child$ is a goal} { \label{SE:goalfound}
$incumbent \leftarrow child$ \\
report new incumbent \\
}
\Else {
$dup \leftarrow$ $child$'s entry in $closed$\\
\If{$child$ not in $closed$ or $g(child) < g(dup)$} { \label{SE:checkclosed}
add $child$ to $openlists[i+1]$\\ \label{SE:insertchild} \label{SE:nextopen}
}
}
}
} 
} 
\caption{Node selection \& expansion.}
\label{alg:selectexpand}
\end{algorithm}

\section{Rectangle Search}

One way to view rectangle search is as an iteratively widening and deepening monobead search that obviates the need to pre-specify $b$. In each iteration $i$ it is allowed to expand one new node at each previously explored depth level and to expand $i$ nodes at a new depth level. This last step makes the number of expansions at each allowed depth level equal, resulting in a rectangular shape of the state space being explored, as shown in Figure \ref{fig:nodes}. By default, the algorithm explores only one additional depth level at each iteration (resulting in a square), but this number of new levels can be adjusted by setting an \aspect\ parameter. We use rectangle($x$) to refer to rectangle search with $\mbox{\aspect} =x$.

Figure \ref{fig:nodes} demonstrates three iterations of rectangle(1). In the first iteration, node A is selected from the front of $open1$ (the open list for nodes at depth $1$) and expanded, with its children being inserted into $open2$. The second iteration begins with node B being selected from $open1$ and expanded and its children also inserted into $open2$. The algorithm is now allowed to expand two nodes at a new depth level, depth 2. At this level, nodes D and E are selected from the front of $open2$ and expanded, with their children being inserted into $open3$. The third iteration consisted of node C being selected at depth 1, node F being selected at depth 2, and nodes G, H, and I being selected at depth 3. If no solution has been found yet, the children of the nodes at depth 3 would be inserted into a new open list for depth level 4 (not shown).

Algorithms \ref{alg:rectanglecode} and \ref{alg:selectexpand} show the design of rectangle search. It creates an ordered collection of priority queues, called $openlists$, which initially contains only a single entry for storing nodes at depth level $1$ (line \ref{rectangle:openlists}). The main loop at line \ref{rectangle:mainloop} continues so long as there are open nodes at any depth level. For all depth levels except the deepest, a single node is selected and expanded, with its children being inserted into the next depth level's open list (line \ref{SE:insertchild}). Because of this structure, there will not have been any nodes previously expanded from the deepest depth level represented in $openlists$. New open lists are added to represent deeper levels (line \ref{rectangle:addnewopen}), and then up to $depth$ many nodes will be selected from all but the deepest depth level now represented in $openlists$. For rectangle(1), this will result in $depth$ many nodes being expanded at a single new depth level at each iteration of the main loop, and a single new priority queue being added to $openlists$ to hold the nodes at the new depth level to be explored in the next iteration. While rectangle search could order its priority queues on any criterion, we use $d(n)$ to encourage finding a first solution quickly.

\subsection{Properties of Rectangle Search}

\begin{theorem}
With an admissible heuristic, rectangle search is complete.
\end{theorem}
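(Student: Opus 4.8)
The plan is to show that whenever a goal is reachable, rectangle search generates one after finitely many iterations (and, when the state space is finite, otherwise empties all its open lists and halts if no goal exists). I would first reduce the claim to the discovery of the \emph{first} solution. Observe that the incumbent is initialized with $g(\mathit{incumbent}) = \infty$, so every test of the form $f(n) \geq g(\mathit{incumbent})$ in Algorithm~\ref{alg:selectexpand} fails and every test $f(\mathit{child}) < g(\mathit{incumbent})$ succeeds: before any solution has been found, no node is ever pruned on selection or generation. This is where admissibility enters. Once an incumbent does exist, $f(n) = g(n) + h(n)$ is a lower bound on the cost of any solution through $n$, so a node pruned because $f(n) \geq g(\mathit{incumbent})$ cannot lie on a path to a solution cheaper than the incumbent. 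Thus pruning can never remove the last path to a still-undiscovered solution, and in particular it can never prevent the discovery of a first solution. It therefore suffices to prove that, in the pruning-free regime that precedes any incumbent, a reachable goal is eventually generated.

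The core is a constructive reachability argument resting on two structural facts. First, nodes are never discarded from the open lists: Algorithm~\ref{alg:rectanglecode} only trims \emph{empty} lists (line~\ref{rectangle:trimopens}), so any generated, unpruned node remains available for selection. Second, as described in the text, after $t$ iterations of the main loop rectangle search has performed $t$ expansions at each explored depth level; hence the number of expansions devoted to any fixed depth level grows without bound as the search continues. Assuming a finite branching factor, I would combine these facts in an induction on depth $\ell$ to show that every reachable node at depth $\ell$ is eventually expanded. For the base case, the depth-$1$ open list receives only the finitely many children of the start, and since the expansion budget at depth $1$ grows without bound, all of them are expanded in finite time. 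For the inductive step, if every reachable depth-$\ell$ node has been expanded by some finite iteration, then only finitely many nodes are ever inserted into the depth-$(\ell+1)$ list (each is a child of an expanded depth-$\ell$ node, and there are finitely many such expansions, each producing finitely many children); because the expansion budget at depth $\ell+1$ also grows without bound, this finite set is exhausted in finite time. Applying the induction up to depth $D-1$, where $D$ is the length of a path to a goal, the goal's predecessor on that path is eventually expanded, which generates the goal and sets the incumbent (line~\ref{SE:goalfound}). A solution is therefore found after finitely many iterations.

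The step I expect to be the main obstacle is making precise that each open list is \emph{eventually exhausted} despite the fact that new nodes keep arriving while earlier ones are still being expanded. This requires bounding the total number of insertions into each depth level's queue, which in turn requires bounding the number of expansions at the preceding level --- and that is exactly where re-openings (a node re-added with a strictly smaller $g$ at line~\ref{SE:checkclosed}) must be controlled. Under a finite state space with nonnegative edge costs, each node's $g$-value can strictly decrease only finitely often, so the number of expansions per level, and hence the number of insertions into the next level, is finite; this closes the induction. I would also verify the bookkeeping around line~\ref{rectangle:trimopens}, confirming that trimming only empty lists never removes a level that can still receive nodes, so that no reachable node is stranded.
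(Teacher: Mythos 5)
Your proposal is correct and follows the same core approach as the paper's proof: because the incumbent starts with $g = \infty$, no node is pruned before the first solution is found (the paper cites exactly the same two lines), and the search therefore exhausts the reachable state space. The paper asserts this exhaustiveness in a single sentence, whereas you supply the supporting induction on depth levels (finitely many insertions per level plus an unbounded per-level expansion budget); note that your extra appeal to a finite state space to control re-openings is not actually needed, since every insertion into level $\ell+1$, re-opening or not, is already counted among the children of the finitely many expansions at level $\ell$.
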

\begin{proof}
Rectangle search begins from the start state and terminates when no states remain at any depth level (line \ref{rectangle:mainloop}). Because it begins with an incumbent cost value of $\infty$, it will not prune any non-duplicate nodes until a solution is found (lines \ref{SE:pruneexp} and \ref{SE:prunegen}). It will search the entire reachable state space if necessary to find a solution.
\end{proof}

\begin{theorem}
With an admissible heuristic, the last solution returned by rectangle search is optimal.
\end{theorem}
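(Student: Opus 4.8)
The plan is to argue by contradiction, leaning on the completeness result established above rather than on any best-first expansion order (which rectangle search does not have). First I would record two monotonicity facts. (i) The incumbent cost $g(incumbent)$ is non-increasing over the run: a new incumbent is installed only at line~\ref{SE:goalfound}, inside the test $f(child) < g(incumbent)$ of line~\ref{SE:prunegen}, and since $h=0$ at a goal we have $g(child) = f(child) < g(incumbent)$, a strict improvement. (ii) Every node removed without being expanded --- whether skipped at selection (line~\ref{SE:pruneexp}) or discarded at generation (line~\ref{SE:prunegen}) --- satisfies $f(n) \ge g(incumbent)$ at that moment, hence $f(n) \ge c_f$, where $c_f$ is the \emph{final} incumbent cost, by (i). By admissibility $f(n)$ lower-bounds the cost of any solution through $n$, so no pruned node can lie on a path to a solution cheaper than $c_f$.

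Next I would set up the contradiction. Suppose a solution exists (otherwise completeness already gives the claim) and that $c_f > C^{*}$, where $C^{*}$ is the optimal solution cost. Fix an optimal solution path $P = (s=n_0, n_1, \ldots, n_k=goal)$ of cost $C^{*}$. Along $P$ each prefix cost equals the optimal cost-to-reach $g^{*}(n_i)$, so by admissibility $f(n_i) \le C^{*} < c_f \le g(incumbent)$ at \emph{every} instant of the run, using (i). Consequently no node of $P$ is ever pruned at line~\ref{SE:pruneexp} or line~\ref{SE:prunegen}.

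The core of the argument is an induction along $P$ showing that, before termination, each state $n_i$ is expanded carrying its optimal $g$-value $g^{*}(n_i)$. The base case is $n_0 = s$, expanded at the start. For the step, when $n_i$ is expanded with $g = g^{*}(n_i)$ it generates $n_{i+1}$ with $g(n_{i+1}) = g^{*}(n_{i+1})$; since $n_{i+1}$ is not pruned, the duplicate test of line~\ref{SE:checkclosed} either adds it to an open list, or finds a closed copy with $g(dup) \le g^{*}(n_{i+1})$, which forces $g(dup) = g^{*}(n_{i+1})$ (no real path beats the optimum), so an optimal-cost copy has already been expanded. In the first case, because the node is never pruned and the search halts only when every open list is empty (line~\ref{rectangle:mainloop}), completeness guarantees this optimal-cost copy is eventually expanded. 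Either way the induction advances. Taking $i = k-1$, the goal $n_k$ is eventually generated with $g(n_k) = C^{*}$; the test at line~\ref{SE:prunegen} passes since $C^{*} < c_f \le g(incumbent)$, and line~\ref{SE:goalfound} installs an incumbent of cost $C^{*}$, contradicting $c_f > C^{*}$. Hence $c_f = C^{*}$.

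I expect the delicate step to be the induction's treatment of duplicates: rectangle search keeps a single global closed list across all depth levels and does not expand in $f$-order, so the familiar A* invariant ``some open node on the optimal path has $f \le C^{*}$ and is expanded next'' is unavailable. I must instead verify that re-opening on a strictly cheaper path (the $g(child) < g(dup)$ condition of line~\ref{SE:checkclosed}) never lets a stale, higher-cost closed entry permanently block the optimal path --- exactly what fails for naive closed-list handling under an admissible but inconsistent heuristic. Confirming that this condition suffices, and that the depth-indexing of the open lists does not interfere with it, is the part that needs the most care.
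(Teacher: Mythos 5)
Your proof is correct, and it stands on the same two pillars as the paper's: pruning (lines \ref{SE:pruneexp} and \ref{SE:prunegen}) discards only nodes with $f \geq g(incumbent)$, which admissibility makes harmless, and the incumbent is returned only after every open list has been drained (line \ref{rectangle:mainloop}). But your route through those pillars is genuinely different in its key step. The paper's three-sentence argument rests on the global assertion that the algorithm ``expands all nodes with $f$-value less than or equal to the optimal solution''; read literally, that is too strong, because a copy of a state with $f$ below the optimal cost $C^{*}$ can be silently discarded by the duplicate test at line \ref{SE:checkclosed}, and the paper never explains why that is benign. You instead prove the local lemma that actually suffices: by induction along a single optimal path, every state on it is expanded carrying its optimal $g$-value, the crux being your case analysis at line \ref{SE:checkclosed} --- either the child is (re-)inserted because $g(child) < g(dup)$, or else $g(dup) \leq g^{*}$ forces $g(dup) = g^{*}$, so an optimal-cost copy was already expanded. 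That is precisely the point where naive closed-list handling fails under admissible-but-inconsistent heuristics, so the ``delicate step'' you flagged is the right one, and your treatment of it supplies justification that the paper's sketch omits. Two minor notes: your fact (ii) lists only the two $f$-based prunings and omits duplicate discarding, which is harmless only because your induction handles duplicates separately; and, like the paper, you implicitly read the theorem as a claim about terminating runs (your ``eventually expanded'' means ``before the main loop can exit''), which is the correct reading since $incumbent$ is returned only after that loop exits.
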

\begin{proof}
Rectangle search prunes only nodes with $f$-values greater than or equal to the current incumbent (lines \ref{SE:pruneexp} and \ref{SE:prunegen}), which cannot lead to a better solution. It terminates when no nodes remain at any depth (line \ref{rectangle:mainloop}). Therefore, it expands all nodes with $f$-value less than or equal to the optimal solution and the last solution it returns will be optimal.
\end{proof}

Rectangle search is related to monobead search because both algorithms select nodes for expansion at each level in a well-defined order, with respect to beam slots. Rectangle does not hold this strict ordering when expanding nodes at a new depth level for the first time. It also performs duplicate elimination without regard to the slot at which a node was seen. However, once a depth level has had nodes selected and expanded, only a single node will be selected and expanded at a time for that level, preserving a monobead-like order from that point onward. Therefore, the solutions found by rectangle may not follow the monotonic ordering ensured by monobeam. To highlight the similarity, we define a variant of rectangle search called strict rectangle that expands nodes at the deepest current depth level one at a time, based on their slot index in the beam, and selects nodes to fill those slots at the next depth level immediately after the current node in that slot is expanded. Strict rectangle also only eliminates duplicate nodes if the previously seen version was expanded from the same slot as the duplicate or earlier. With these restrictions, the node selection for a specific slot and depth are exactly the same for monobead and strict rectangle search.

To estimate the overhead of rectangle search in relation to a fixed-width beam search, we analyze the number of nodes expanded by strict rectangle search relative to the number of nodes expanded by monobead search. For simplicity, we assume strict rectangle with $\mbox{\aspect} = 1$. Let the first solution found by monobead be generated when it expands a node $x$ in slot $s$ at depth $d$. To provide an upper bound on the extra work done by strict rectangle to find this solution, we assume the beam width for monobead is equal to $s$, no other solutions exist, and strict rectangle never runs out of nodes to fill its beam.  If $s=d$, strict rectangle will search exactly the same region of the search space as monobead and the number of nodes expanded by each will be equal. If $s > d$, then strict rectangle will expand $x$ during iteration $s$ and will have searched $s-d-1$ superfluous depth levels ($x$ will be expanded before any node at depth $s$ is expanded), each with $s-1$ nodes expanded. And if $s < d$, strict rectangle will expand $x$ during iteration $d$ and will have expanded $d-s$ superfluous nodes at depth levels $1$ to $d-1$.  This yields:
\[ overhead \leq 
\begin{cases} 
0 & s = d \\
(s-d-1)(s-1) & s > d \\
(d-1)(d-s) & s < d
\end{cases}
\]
In summary, the overhead of strict rectangle over monobead is at most quadratic in whichever of $d$ or $s$ is greater.

As we will see in our experimental evaluation, rectangle(1) is not well-suited for domains in which the heuristic is very accurate and solutions are very deep (e.g., $d \gg s$).  Rectangle search must do $O(d^2)$ work to reach depth $d$, whereas a best-first search might only need to do $O(d)$ if the heuristic is accurate.  In this sense, rectangle(1) errs too much on the side of exploration in such domains. Using a larger \aspect\ may help to reach deeper regions of the search space earlier, but will still require some extraneous exploration at higher levels.

On the other hand, rectangle search is not obliged to expand open nodes in order of their heuristic evaluation value (be it $f$, $h$, or $d$).  A {\em local minimum} (or {\em crater}) is a strongly connected set of states with deceptive heuristic values that are lower than the highest heuristic value necessary to encounter along a path from them to a goal \cite{wilt:svg,heusner:diss}.  Intuitively, they have low heuristic values but are not actually close to a goal and expanding these nodes does not necessarily represent progress.  Unlike many best-first searches, rectangle search does not need to expand all nodes in a local minimum before expanding a node with a higher heuristic value.  Furthermore, we conjecture that rectangle's similarity to monotonic beam search may aid in retaining diverse nodes during the search, preventing a large local minimum from displacing all other nodes in the beam and dominating the search.

Although it is based on breadth-first search, rectangle search's behavior has similarities to depth-first approaches.  When run with a large \aspect\ value (e.g., 500 is evaluated below), rectangle search's behavior is similar to limited-discrepancy search \cite{korf:ild}.  First, it repeatedly expands the best-looking child of the previously expanded node, forming a deep but thin probe into the state space.  This is similar to hill-climbing \cite{hoffmann:ff} and depth-first search.  If the heuristic is accurate or goals are plentiful, this will quickly find a goal.  When a goal is found, its cost establishes an $f$ pruning bound that limits the depth of subsequent search and forces the algorithm to widen more rapidly.  Rectangle search widens by expanding nodes at all previous depth levels, allowing it, for example, to escape a deep local minimum in $d$ without expanding all nodes in the minimum.  This is similar in spirit to limited-discrepancy search, a depth-first-search-based method in which alternatives to the child preferred by a node ordering heuristic are explored at every level of a tree, except that, by virtue of its {\em openlists}, rectangle search has the flexibility to explore alternatives anywhere in a depth layer.

\section{Experimental Evaluation}

To better understand rectangle search's behavior, we implemented it and other algorithms in C++ \footnote[1]{Code available at \url{https://github.com/snlemons/search}.} and compared their anytime behavior on seven classic search benchmarks (and for most, multiple variations in cost or action model).  All algorithms were given a 7.5GB memory limit and a 5 minute time limit on a 2.6 GHz Intel Xeon E5-2630v3.

\subsection{Relevant Comparators}

In addition to best-first algorithms like ARA* and AEES, we examined depth-first algorithms that can give anytime results. DFS* \cite{vempaty:dfvbf} is an anytime algorithm based on iterative deepening and depth-first branch-and-bound. It performs a depth-first search with an increasing cost bound (beginning with $h(start)$ and doubling at each iteration) until it finds an initial solution. It then performs a branch-and-bound search with the incumbent solution's cost as the new bound. We implemented DFS* both with and without child ordering.

We also implemented a similar variation of Improved Limited Discrepancy Search (ILDS) \cite{korf:ild} which we call ILDS*. Because ILDS requires a depth bound, ILDS* begins with a depth bound equal to $d(start)$ and performs ILDS with an increasing number of discrepancies until all nodes within the depth bound have been explored. When a goal is found, it is retained as an incumbent. The search continues until no node with $f(n) \leq g(incumbent)$ is pruned based on the depth bound or discrepancy bound.

Another anytime algorithm with similarities to rectangle search is Complete Anytime Beam Search (CABS) \cite{zhang:cab}. As adapted by \citeauthor{libralesso:tssop} \citeyearpar{libralesso:tssop}, CABS performs a sequence of increasing-width beam searches, doubling the beam width at each iteration, and retaining the best solution so far as an incumbent. It terminates when no node excluded from the beam has $f(n) \leq g(incumbent)$. While CABS has been primarily used in depth-bounded problems, many of our domains do not have fixed depths and therefore our implementation of CABS uses a closed list.

To keep our plots clear, we only include the competitive algorithms: ARA*, AEES, CABS, and rectangle search.  Plots for all algorithms in all domains are given by \citet{lemons:tar}. We ran ARA* in several previously-proposed configurations: initial weights of 10 and 2.5 with a decrement of 0.02 \cite{likhachev:aap} and a weight schedule of 5, 3, 2, 1.5, 1 \cite{thayer:aees}.  Rectangle search was tested with \aspect\ values of 1 and 500 (selected through preliminary tests).

\newcommand{\fixsubcapspace}{\vspace{-12pt}}

\begin{figure*}[t!]
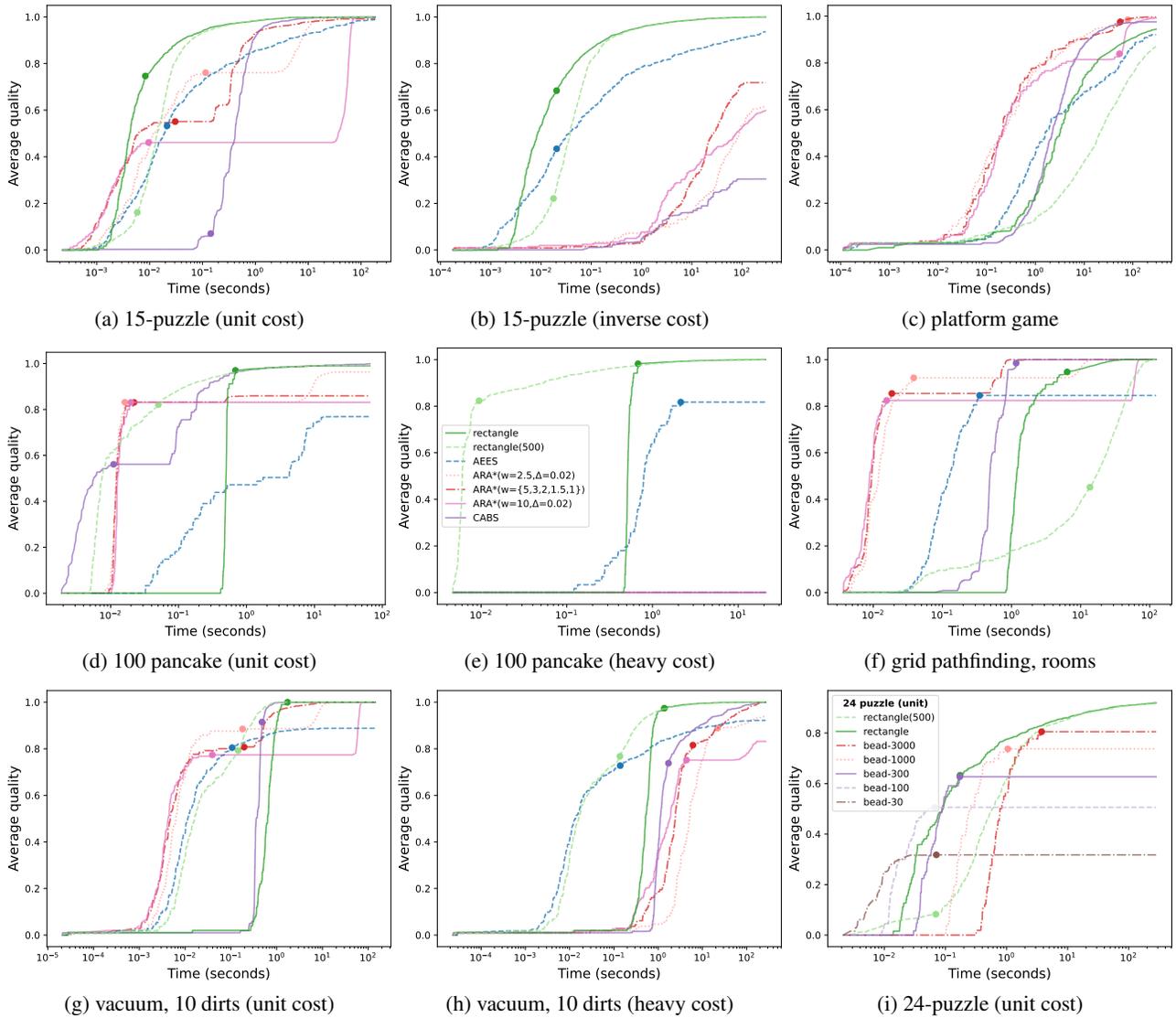

\centering

\begin{subfigure}{\plotwidth{}}
\incfig{figures/plots/tiles/anytime-quality-unit}
\fixsubcapspace
\caption{15-puzzle (unit cost)}
\label{fig:tiles-quality-unit}
\end{subfigure}
\begin{subfigure}{\plotwidth{}}
\incfig{figures/plots/tiles/anytime-quality-inverse}
\fixsubcapspace
\caption{15-puzzle (inverse cost)}
\label{fig:tiles-quality-inverse}
\end{subfigure}
\begin{subfigure}{\plotwidth{}}
\incfig{figures/plots/plat2d/anytime-quality-unit}
\fixsubcapspace
\caption{platform game}
\label{fig:plat2d-quality-unit}
\end{subfigure}

\vspace{5pt}

\begin{subfigure}{\plotwidth{}}
\incfig{figures/plots/100pancake/anytime-quality-unit}
\fixsubcapspace
\caption{100 pancake (unit cost)}
\label{fig:100pancake-quality-unit}
\end{subfigure}
\begin{subfigure}{\plotwidth{}}
\incfig{figures/plots/100pancake/anytime-quality-heavy}
\fixsubcapspace
\caption{100 pancake (heavy cost)}
\label{fig:100pancake-quality-heavy}
\end{subfigure}
\begin{subfigure}{\plotwidth{}}
\incfig{figures/plots/64room/anytime-quality-octile}
\fixsubcapspace
\caption{grid pathfinding, rooms}
\label{fig:64room-quality-octile}
\end{subfigure}

\vspace{5pt}

\begin{subfigure}{\plotwidth{}}
\incfig{figures/plots/vacuum/anytime-quality-unit}
\fixsubcapspace
\caption{vacuum, 10 dirts (unit cost)}
\label{fig:vacuum-quality-unit}
\end{subfigure}
\begin{subfigure}{\plotwidth{}}
\incfig{figures/plots/vacuum/anytime-quality-heavy}
\fixsubcapspace
\caption{vacuum, 10 dirts (heavy cost)}
\label{fig:vacuum-quality-heavy}
\end{subfigure}
\begin{subfigure}{\plotwidth{}}
\incfig{figures/plots/24tiles/rectangle-beam-quality-unit}
\fixsubcapspace
\caption{24-puzzle (unit cost)}
\label{fig:24tiles-beam-quality-unit}
\end{subfigure}

\caption{Anytime algorithms (\ref{fig:tiles-quality-unit}-\ref{fig:64room-quality-octile}, legend in \ref{fig:100pancake-quality-heavy}).  Rectangle (anytime) versus fixed-width bead (one shot, \ref{fig:24tiles-beam-quality-unit}).}
\label{fig:plots}
\end{figure*}

\subsection{The Sliding Tile Puzzle}

Six different cost models of sliding tiles were used: unit cost; heavy cost, where moving tile number $t$ costs $t$; sqrt cost, moving $t$ costs $\sqrt{t}$; inverse cost, $1/t$; reverse cost, $\#spaces - t$, and reverse inverse cost, $1/(\#spaces-t)$. 
We tested both the 15-puzzle (4x4) and the 24-puzzle (5x5), using a cost-weighted Manhattan distance heuristic.

Selected results are shown in Figures \ref{fig:tiles-quality-unit} and \ref{fig:tiles-quality-inverse} (the legend is in \ref{fig:100pancake-quality-heavy}). We plot anytime  performance over time, including before all algorithms have solved all instances. The primary metric we use is average {\em quality}: the cost of the best known solution (optimal if known, otherwise the best solution cost given by any of the algorithms at any time), divided by the cost of the incumbent solution (or $\infty$ if no solution has been found yet.) Quality can range from 0 for unsolved instances to a maximum of 1. A dot is included to show when an algorithm has solved all instances. Before this dot, an increase in average quality could be due to either a new instance being solved or a better solution being found for an instance that was already solved.

In both the 15- and 24-puzzle, and across all cost models, rectangle(1) achieves full coverage before other algorithms and provides better solution cost from that point onward. More specifically, both configurations of rectangle reach full coverage first, with rectangle(1) having lower solution costs at that point and rectangle(500) quickly catching up. In particular, in the inverse cost model rectangle(1), rectangle(500), and AEES are the only algorithms to solve all instances within our time bound.

\subsection{Blocks World}

We tested on 100 random blocks world instances with 20 blocks. We included two different action models: `blocks world', where blocks are directly moved to a stack as an action, and `deep blocks world' \cite{lelis:sts}, where picking up and putting down blocks each use an action, leading to longer solutions. The heuristic used was the number of blocks out of place (any block which is not in a sequence of blocks from the table up which matches the goal state). This heuristic value is doubled for deep blocks.  For both variants, rectangle search reaches full coverage earliest and provides the best solution costs after.

\subsection{The Pancake Problem}

Two cost models were used: unit cost, and heavy cost \cite{hatem:2014}, in which each pancake is given an ID number from 1 through $N$ (the number of pancakes), and the cost of a flip is the ID of the pancake above the spatula. 50, 70, and 100 pancakes were used, with the gap heuristic \cite{helmert:2010}, with modifications to include cost per pancake in the heavy cost model.

Selected results are shown in Figures \ref{fig:100pancake-quality-unit} and \ref{fig:100pancake-quality-heavy}. In all the sizes of unit pancake, rectangle(1) lags behind the other algorithms in terms of solving. Once it has solved the problems, it provides better solution cost than the other algorithms, but this is not desirable behavior for an anytime algorithm. However, increasing \aspect\ improves the performance: rectangle(500) solves unit instances at around the same time as the ARA* variants, and provides better solution cost than all algorithms shortly after. CABS reaches full coverage slightly earlier than other algorithms, but lacks in terms of quality for much of the time. In heavy cost pancakes, only rectangle(500), rectangle(1) and AEES solve problems reliably, with rectangle(500) being a clear winner.

\subsection{Vacuum World}

In vacuum world \cite{russell:aima} a robot must vacuum up dirt in a grid world. We tested both unit costs and heavy costs, where the cost of movement is equal to the number of dirts that the robot has already vacuumed \cite{thayer:bss}. The heuristic sums the number of remaining dirts, the edges of the minimum spanning tree (MST) of the Manhattan distances among the dirts, and the minimum Manhattan distance from the agent to one of the dirts. For the heavy cost model, the distance components were multiplied by the number of dirts cleaned so far. We tested three problem sizes: 200$\times$200 grid with 10 dirts, 500$\times$500 grid with 20 dirts, and 500$\times$500 grid with 60 dirts.

Selected results are shown in Figures \ref{fig:vacuum-quality-unit} and \ref{fig:vacuum-quality-heavy}. ARA* leads in performance in smaller unit cost problems, though rectangle is competitive, reaching full coverage only slightly after ARA*. Rectangle also improves quality beyond ARA* shortly after reaching full coverage. As problem size increases, ARA* loses its lead. In the 500x500 problems with 60 dirts rectangle(1) lags, but rectangle(500) dominates in terms of coverage. CABS provides better quality for some of the time in these largest vacuum problems. In heavy cost, only rectangle and AEES are able to solve reliably in all problem sizes tested, with rectangle(500) clearly superior.

\subsection{Grid Pathfinding}

For 2D grid pathfinding problems we used 5000x5000 maps with random obstacles (grid-random) with 4-way movement (unit cost and life cost \cite{ruml:bfu}), and structured maps of room64 and orz100d \cite{sturtevant:bgp} with 8-way movement. For 4-way movement, we use cost-adjusted Manhattan distance; for 8-way movement, octile distance.

Results in the rooms map are shown in Figure \ref{fig:64room-quality-octile}. Overall, ARA* gives best results in all of the maps used. Rectangle provides comparable solution costs for the instances which it solves, but tends to lag significantly on coverage, even more than AEES and CABS. The performance of rectangle(1) is relatively consistent across the maps tested. However, the performance of rectangle(500) is worse in orz100d than in random grids, and worse in 64 room maps than in orz100d. We will return to this phenomenon below.

\subsection{A Platform Game}

The platform domain \cite{burns:hsw} is based on 2D video games in which a character jumps between platforms to reach a goal location. The heuristic uses visibility graphs and actions have unit cost. Not all actions in this domain are reversible, as gravity has an effect on available movements while jumping. We used 100 problem instances defined as 50x50 grids. Results are shown in Figure \ref{fig:plat2d-quality-unit}. Most algorithms give comparable performance on this domain, though ARA* provides slightly better quality and is first to reach full coverage. The other algorithms approach full coverage but do not quite reach it in the time given. It is noteworthy that in this domain, rectangle(500) performs worse than rectangle(1), indicating that reconsideration of earlier steps may matter more than quick, deep exploration.

\subsection{Dynamic Traffic}

The traffic domain \cite{kiesel:agq} has both fixed and moving obstacles that the agent must navigate around to reach a goal. The agent knows the future positions of all obstacles. When the agent collides with an obstacle, it is returned to the start location. This search space is directed, because the choices available to the agent change as obstacles move and the same configuration of obstacles may never occur again. We used 100 grid maps of size 100x100 with 5000 total obstacles.

In this domain, all algorithms give comparable performance, except rectangle(1), which solves much later than all of the other algorithms. ARA* reaches full coverage slightly before the other algorithms, but rectangle(500) and CABS improve on quality more quickly than the other algorithms after reaching full coverage.

\subsection{Strengths and Weaknesses of Rectangle Search}

\begin{table}[t]
\centering
\begin{tabular}{l|ll}
& {\bf worse} & {\bf better} \\ \hline
rect(500) & non-random grids    & pan, hvy vac \\
AEES     & 24, pan, bw & \\
rect(1)  & pan, vac, traffic, grid & tiles \\
CABS     & tiles, hvy pan, bw, grid &  \\
ARA* 5   & 15 inv, 24, pan, bw, hvy vac & plat, room, orz \\
ARA* 2.5 & 15 inv, 24, pan, bw, hvy vac & plat, room, orz \\
ARA* 10  & tiles, 24, pan, bw, hvy vac & plat, room, orz \\
\end{tabular}
\caption{Where each method is worse or better than others.}
\label{tab:bests-fails}
\end{table}

Table~\ref{tab:bests-fails} summarizes the domains in which each algorithm performs exceptionally well or poorly relative to the others.  When a general domain like `tiles' is listed instead of a specific variant like `hvy vac', this indicates that most variants gave similar relative performance.  Rectangle search provides competitive results across a wide variety of domains. While it is not the best algorithm in all domains, rectangle(1) gives superior performance in all sliding tiles configurations, and rectangle(500) in pancakes and the heavy vacuum problems.  Perhaps more importantly, rectangle(500) is not entirely inferior to other algorithms in any domain. Its only clear failures are on the non-random map variants of grid pathfinding (orz100d and 64room) --- we discuss this below. On the other hand, all other algorithms tested are markedly inferior (either failing to solve or providing poor quality) in at least two of the domains tested. For these reasons, rectangle(500) appears promising as a method of first resort on new problem domains.

In cases where the heuristic gives extremely accurate guidance, the additional exploration of rectangle(1) at higher depth levels will be unnecessary and increasing \aspect\ can lead to rectangle finding solutions faster. However, not all the settings in which rectangle search struggles to compete are ones in which the heuristic is extremely accurate, and adjusting \aspect\ does not adequately solve performance problems in all domains.

\def \cupwidth {0.8in}

\begin{figure*}[t]
\centering
\begin{adjustbox}{angle=90}
\begin{tabular}{cc}
    rectangle & GBFS\\
    \hline
    \multicolumn{2}{c}{ goal outside} \\
    \includegraphics[width=\cupwidth{}]{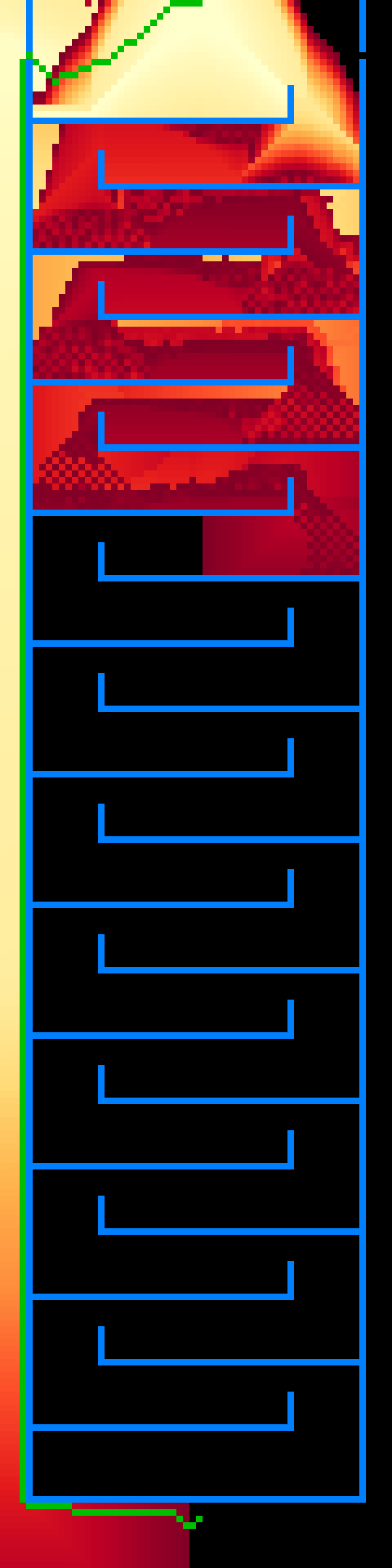} &
    \includegraphics[width=\cupwidth{}]{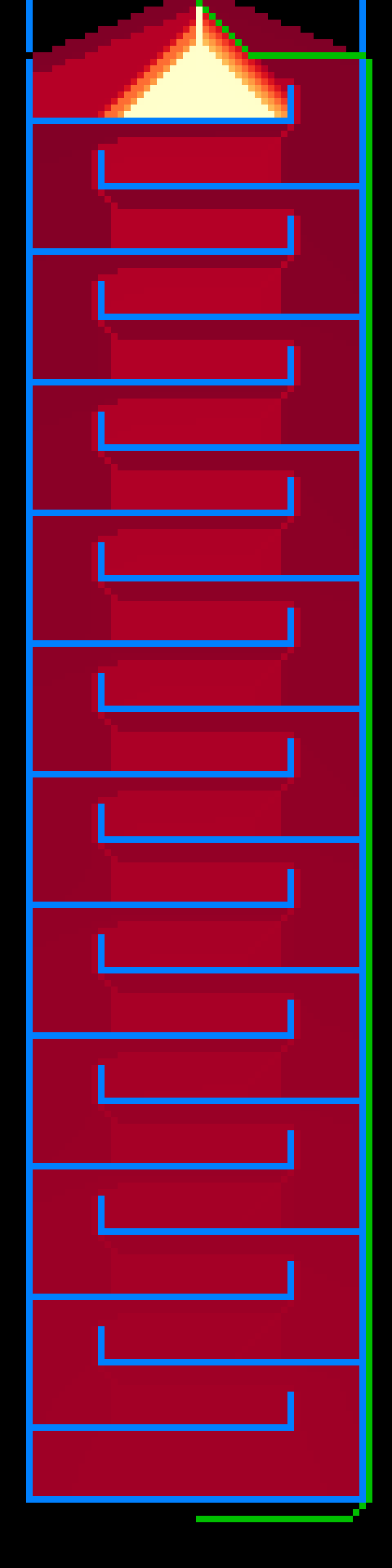}\\
    \hline
   \multicolumn{2}{c}{ goal inside} \\
    \includegraphics[width=\cupwidth{}]{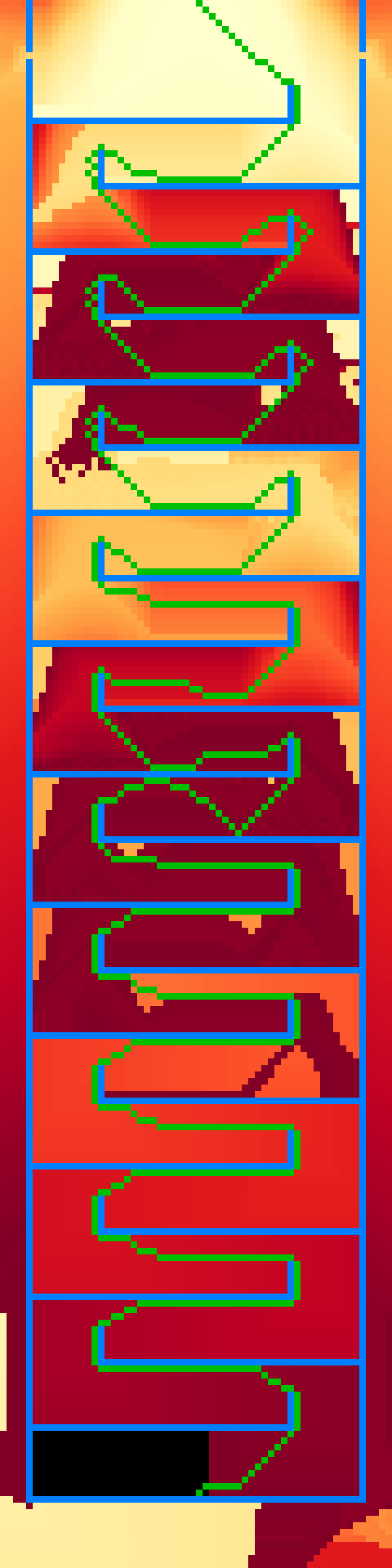} &
    \includegraphics[width=\cupwidth{}]{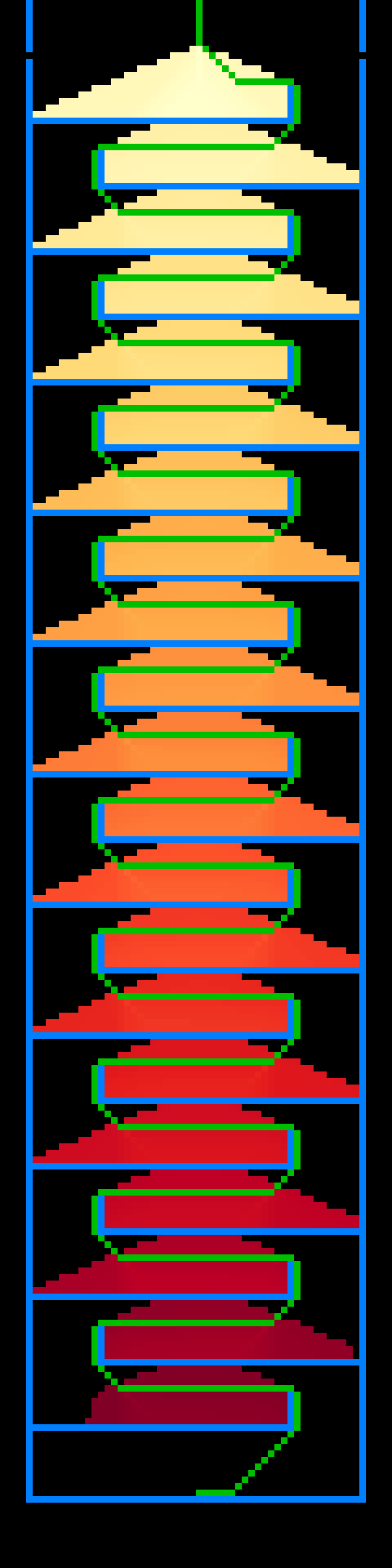} 
\end{tabular}
\end{adjustbox}
\caption{Good and bad scenarios for rectangle search vs best-first search.}
\label{fig:multi-cups}
\end{figure*}


To understand how rectangle search can perform better than best-first searches and when it can be expected to perform worse, we focus on heuristic error---in particular, local minima. 
Consider the grid pathfinding problem shown in Figure \ref{fig:multi-cups}, with a large room containing a slalom of cup-like obstacles and small exits from the room near the start.  States are colored green along the solution path and a gradient from yellow to red shows expansion order, with yellow denoting earliest and dark red latest. If the goal location is outside the room (left panels), most of the room forms one large local minimum. The best path is to move through one of the nearby exits but the heuristic tempts the search inward. A best-first search (GBFS in the figure, as it is the limiting case of ARA* and AEES) will explore the entire local minimum until it has exhausted all states that appear better than the exits. Rectangle search (shown here with \aspect=1), however, widens its exploration of the early regions of the search space and finds a path through an exit and to the goal quickly without exploring most of the local minimum.
On the other hand, when the goal is inside the obstacle (right panels), the cups form many small local minima and it is best for a search to focus its efforts inside the room. GBFS quickly fills each minimum and reaches the goal, while rectangle's skeptical widening causes it to search both inside and outside the room, even re-expanding useless states when it encounters better paths to them.

Rectangle search assumes that it is helpful to re-consider early decisions. This allows it to escape problematic regions of the search space such as a large minimum. But in problems where progress is periodically made and it is better to commit to paths than to reconsider early choices, rectangle search will waste effort. Therefore, rectangle search is strongest in problems where there are deep local minima that it can avoid searching, but is not the best choice when local minima are small and the search can profit from committing to nodes with low heuristic values. For example, in the 64room map of grid pathfinding (Figure \ref{fig:64room-quality-octile}), the rooms create a sequence of local minima, causing rectangle search to waste effort exploring adjacent rooms near the start while ARA* and AEES can forget about a room once they escape it. In contrast, the tiny local minima created by randomly placed single-cell obstacles are easy to escape by generating a few successors (considering alternative fringe nodes earlier in the search is not necessary), so rectangle(500) is no worse than other algorithms in such problems.

\subsection{Comparison with Bead Search}

We also performed a comparison of rectangle search with fixed-width bead search to understand how the additional overhead of rectangle search compares to the results obtainable by a well-selected beam width.
In the sliding tile puzzle, rectangle search provides competitive quality to bead at a variety of widths, and is able to continue improving its solution quality where bead cannot.  Figure \ref{fig:24tiles-beam-quality-unit} shows results for the unit cost 24-puzzle. While rectangle(1) is clearly superior in this setting, rectangle(500) still gives comparable results to many of the fixed-width beam searches. Similar results were found for the rest of the sliding tiles, blocks world, and the platform game. In the pancake problem, the smaller vacuum problems, grid pathfinding, and traffic, it is rectangle(500) which performs approximately as well as fixed width bead searches or better, with rectangle(1) doing as well as many but not all of the fixed-width bead searches. In only the larger vacuum problems with unit cost do we see both configurations of rectangle performing significantly more poorly than bead searches.

Overall, rectangle search is able to provide solutions about as quickly and with comparable quality to bead search with fixed widths across most of the domains tested. This means that even in settings where
anytime behavior is not required,
rectangle search can serve as the algorithm of choice.

\section{Discussion}

BULB \cite{furcy:ldb} behaves like regular beam search until a given depth limit is reached, at which point it uses backtracking to continue the search.  In contrast, rectangle search represents a new alternative to conventional beam search and does not require a depth limit.  For huge problems where memory capacity is an issue, it would be interesting to integrate BULB-like backtracking with rectangle search.

We have investigated rectangle search's performance with \aspect=1 and \aspect=500.  Additional research will be necessary to understand how to tune this parameter.  Nonlinear increases of width and depth are also a possibility.
We leave exploration of these variants to future work.


\section{Conclusions}

Rectangle search is an effective anytime algorithm with a simple design.  Unlike previous anytime algorithms, which are based on best-first search, rectangle search is instead based on breadth-first search. It has similarities to both monotonic beam search and depth-first algorithms like ILDS. It enforces exploration at a variety of depths in the search tree, which allows it to escape large local minima.  Rectangle search is often an effective replacement for fixed-width beam searches. Overall, rectangle search's promising performance suggests that suboptimal non-best-first heuristic search deserves further exploration.

\section{Acknowledgments}

We are grateful for support from the NSF-BSF program (via NSF grant 2008594) and Earlham College (via the Lemann Student/Faculty Collaborative Research Fund).

\onecolumn

\section*{Appendix}

This appendix contains all the experimental results, as there was insufficient space to present them in the paper. The results were summarized in the paper, but these plots provide additional detail.

First, in Section~\ref{sec:anytime}, we focus on anytime performance. Section \ref{sec:competitive} presents the results for all algorithms that provide competitive performance on a majority of the domains tested: ARA*, AEES, and rectangle search. In Section \ref{sec:dfs}, we present results for rectangle search versus several depth-first-based anytime approaches: ILDS*, DFS*, and DFS* with child ordering (DFS*-co). Finally, in Section \ref{sec:beam}, we present comprehensive results for the comparison of rectangle search to fixed-width bead search.

\section{Anytime Results} \label{sec:anytime}

We summarize results for anytime algorithms in three different ways. First, we use {\em quality}: the cost of the best known solution (optimal if known, otherwise the best solution cost given by any of the algorithms at any time), divided by the cost of the incumbent solution (or $\infty$ if no solution has been found yet). Quality can range from 0 for unsolved instances to a maximum of 1. A dot is included in the quality plots to show when an algorithm has solved all instances. Before this dot, an increase in quality could be due to either a new instance being solved or a better solution being found for an instance that was already solved. Coverage is the number of instances solved by a given algorithm. Solution cost is also given, though it is averaged across only those instances which all algorithms shown have solved at the time displayed.  Average cost may increase as more difficult problems with costlier solutions are solved by all algorithms shown.  Algorithms that solve fewer than 20\% of total problem instances by the end of the allowed time are omitted from the cost plots.

\subsection{Competitive Algorithms} \label{sec:competitive}

\begin{figure*}[h!]%
\centering%
\includegraphics[width=\plotwidth{}]{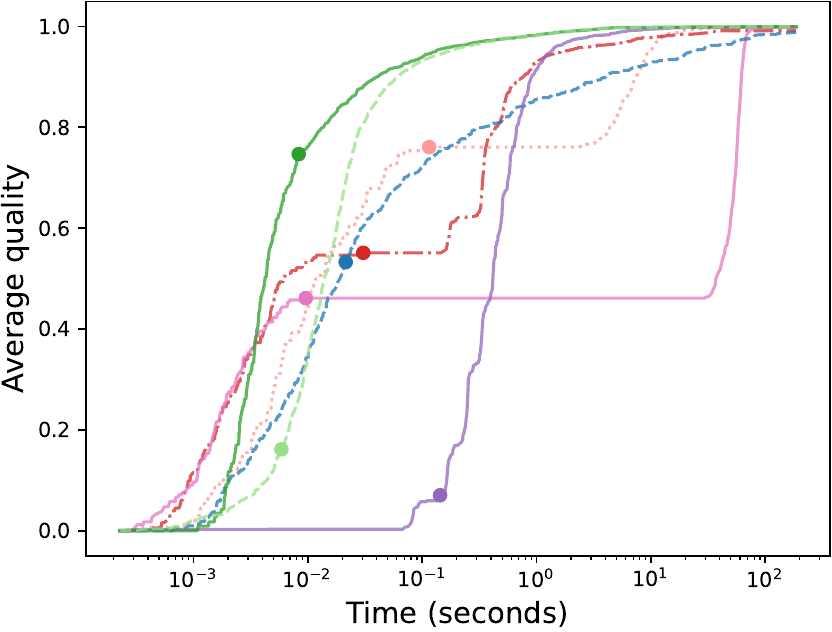}%
\centering%
\includegraphics[width=\plotwidth{}]{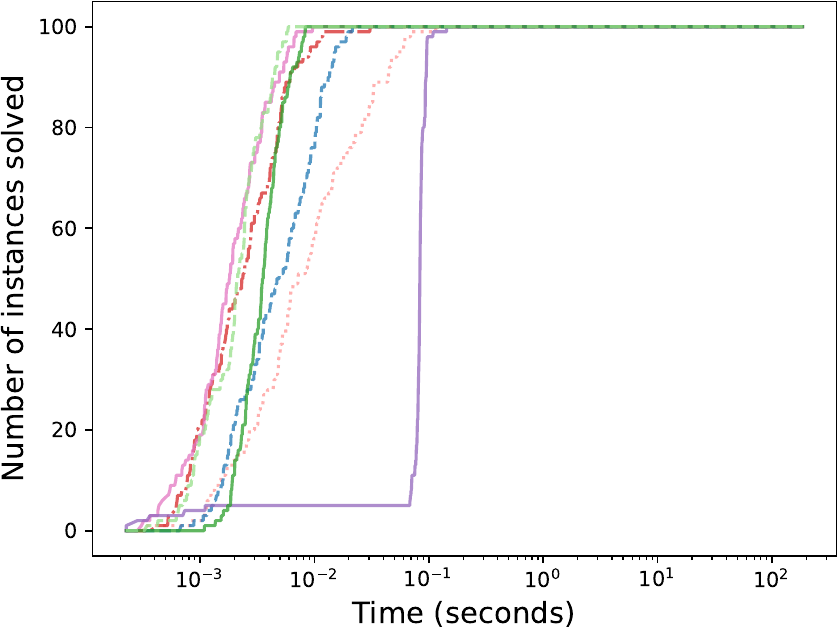}%
\centering%
\includegraphics[width=\plotwidth{}]{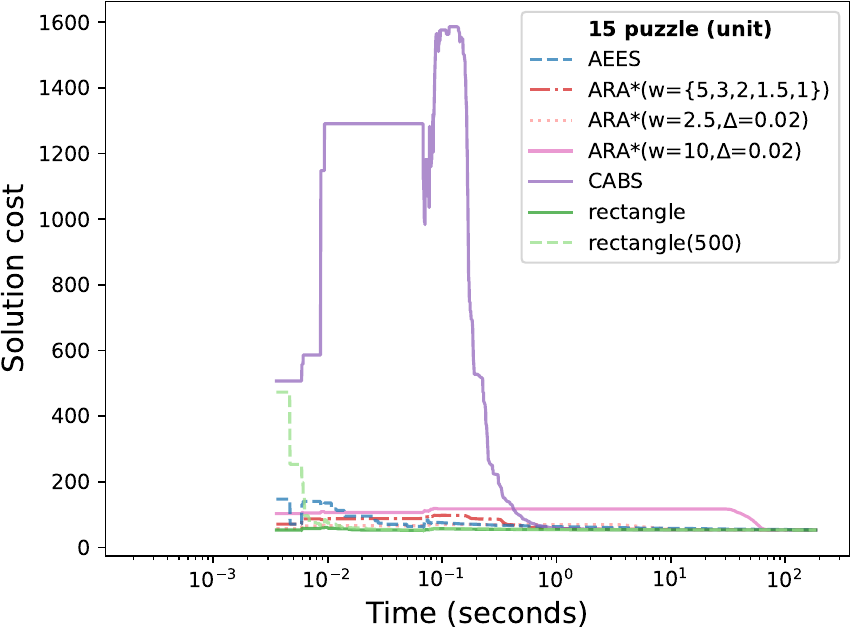}%
\caption{15 puzzle (unit cost)}%
\end{figure*}

\begin{figure*}[h!]%
\centering%
\includegraphics[width=\plotwidth{}]{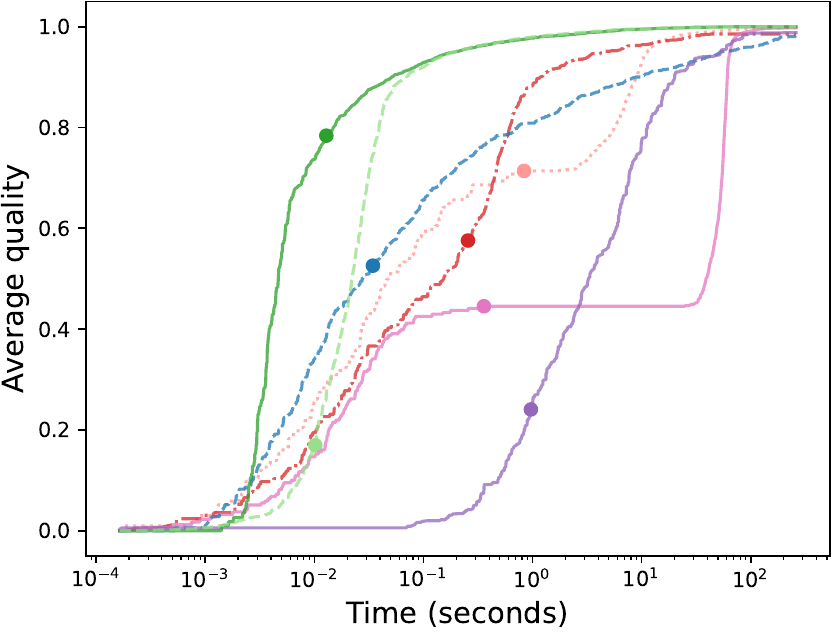}%
\centering%
\includegraphics[width=\plotwidth{}]{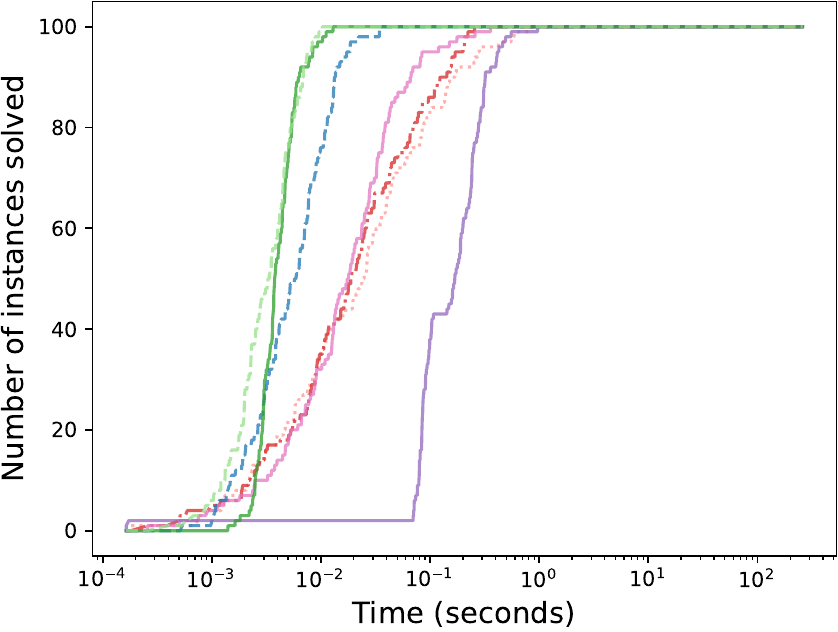}%
\centering%
\includegraphics[width=\plotwidth{}]{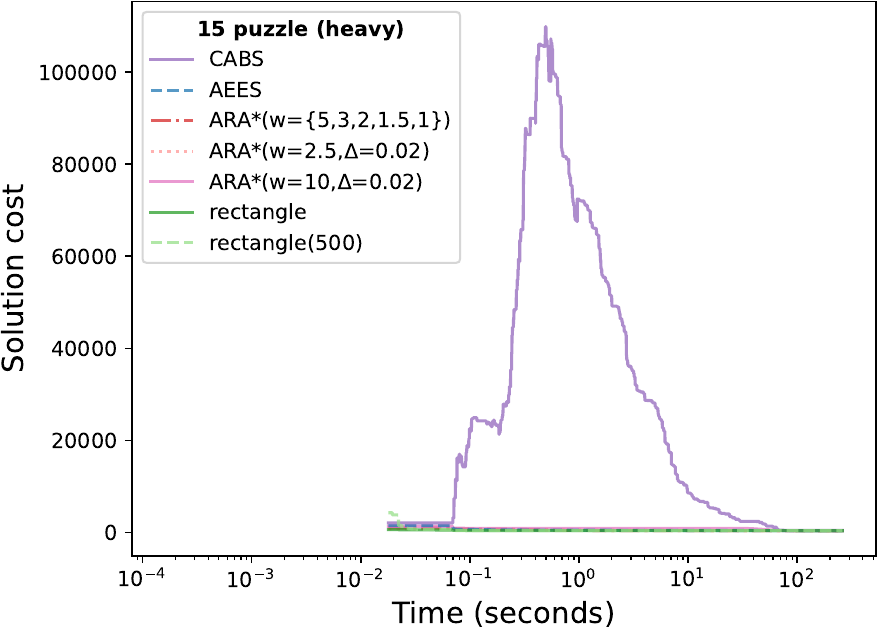}%
\caption{15 puzzle (heavy cost)}%
\end{figure*}

\begin{figure*}[h!]%
\centering%
\includegraphics[width=\plotwidth{}]{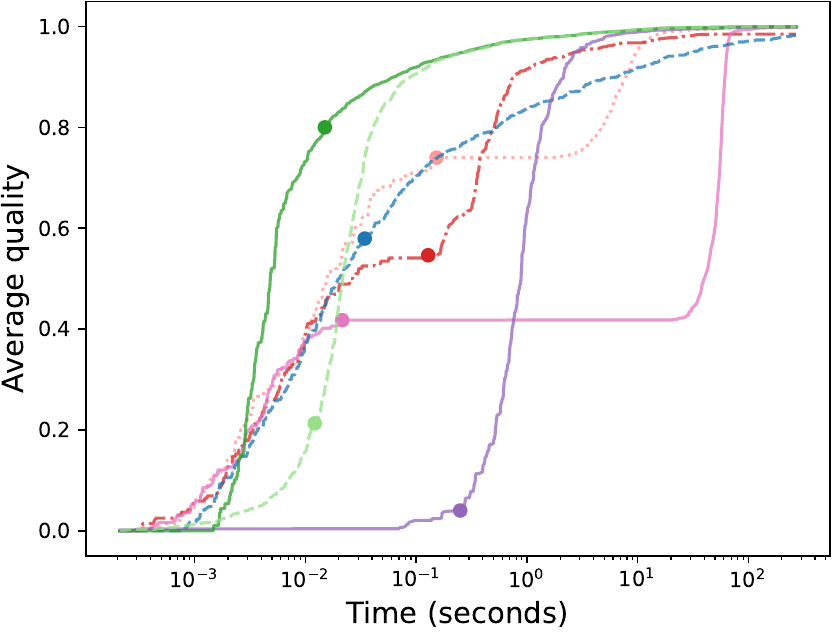}%
\centering%
\includegraphics[width=\plotwidth{}]{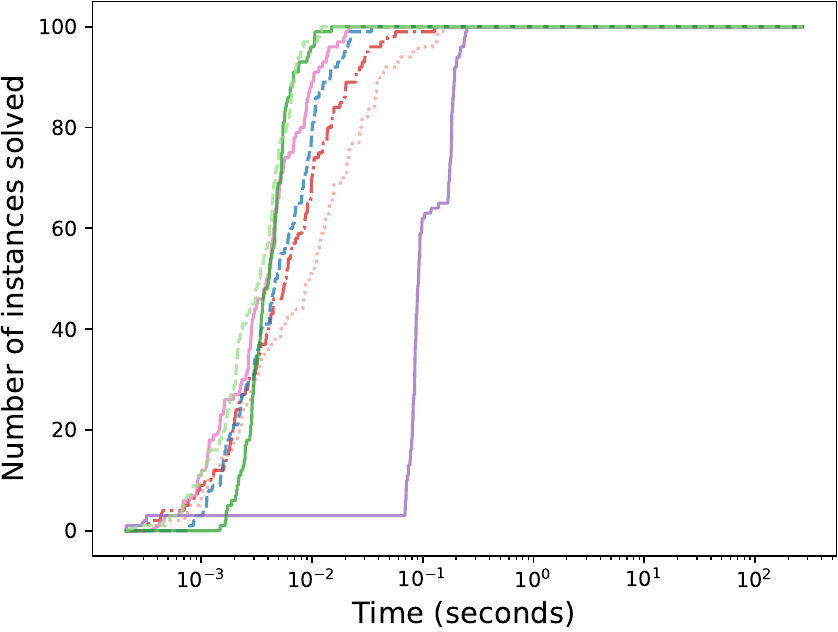}%
\centering%
\includegraphics[width=\plotwidth{}]{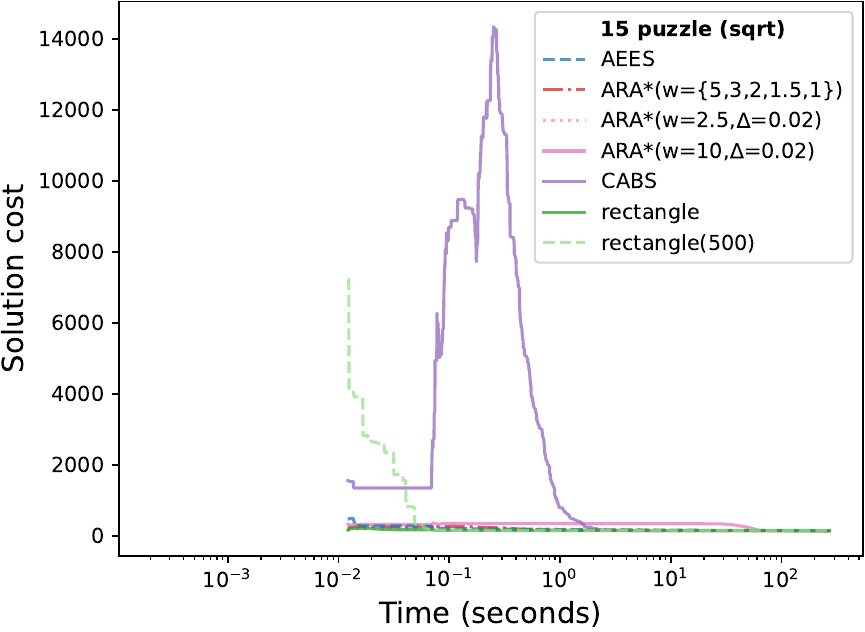}%
\caption{15 puzzle (sqrt cost)}%
\end{figure*}

\begin{figure*}[h!]%
\centering%
\includegraphics[width=\plotwidth{}]{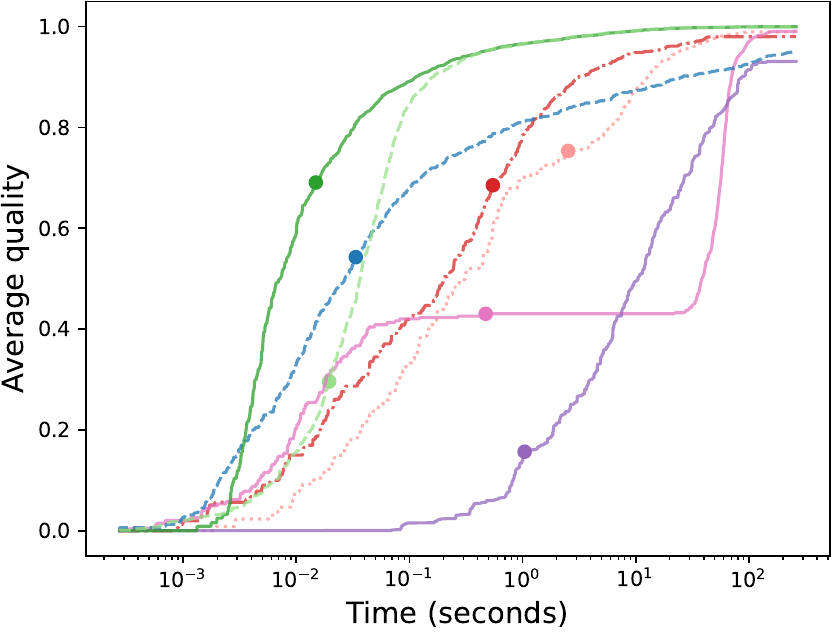}%
\centering%
\includegraphics[width=\plotwidth{}]{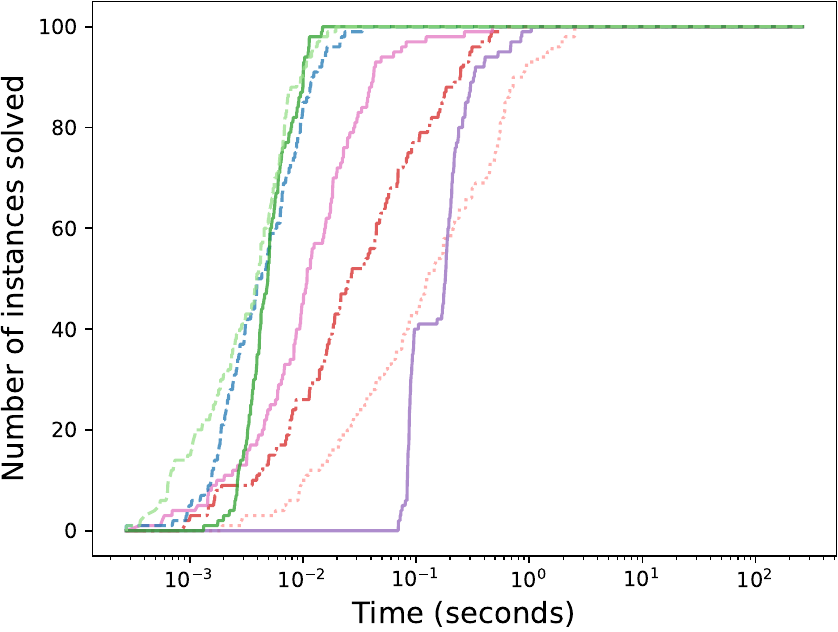}%
\centering%
\includegraphics[width=\plotwidth{}]{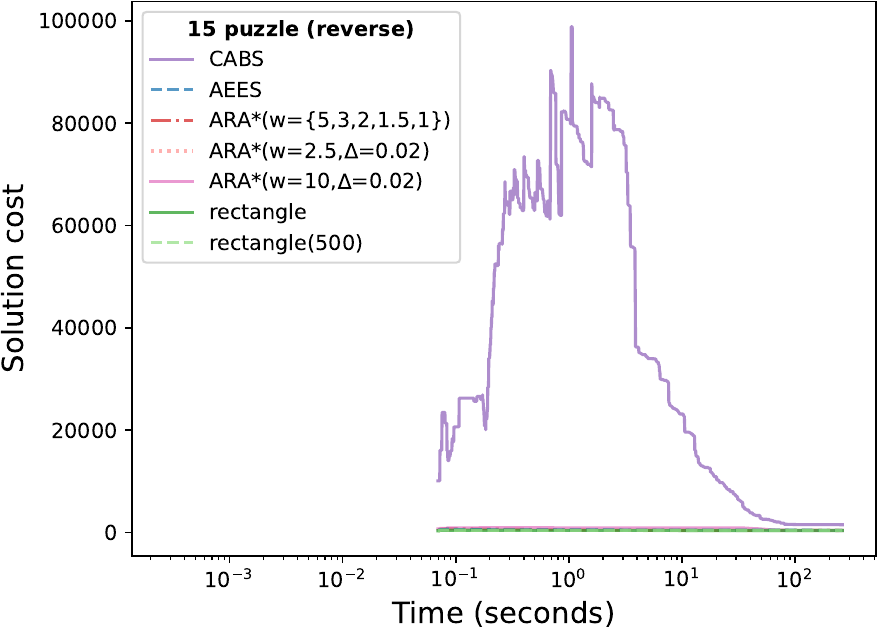}%
\caption{15 puzzle (reverse cost)}%
\end{figure*}

\begin{figure*}[h!]%
\centering%
\includegraphics[width=\plotwidth{}]{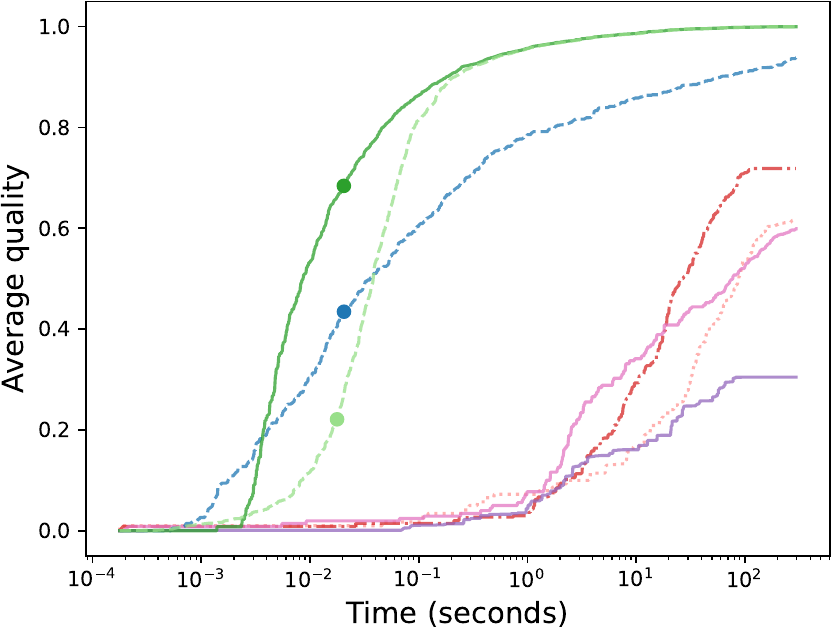}%
\centering%
\includegraphics[width=\plotwidth{}]{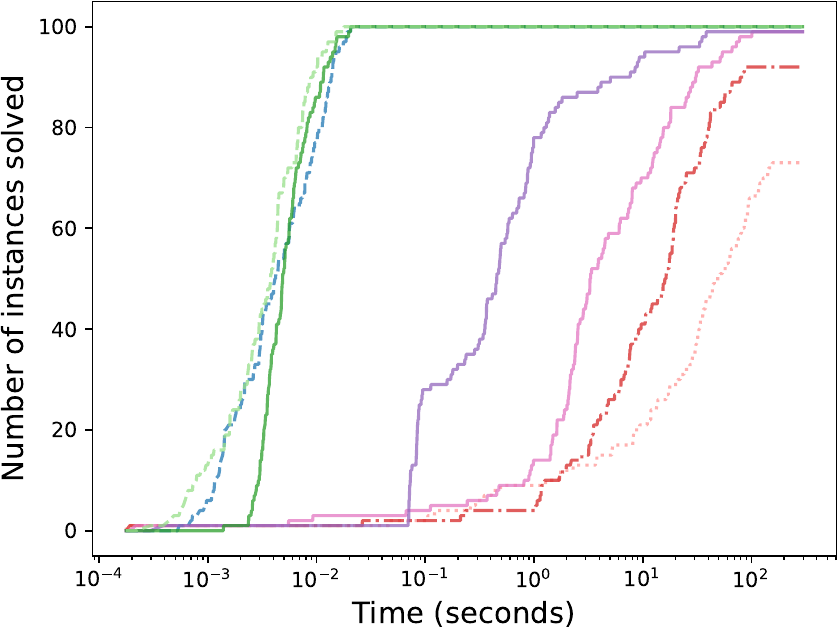}%
\centering%
\includegraphics[width=\plotwidth{}]{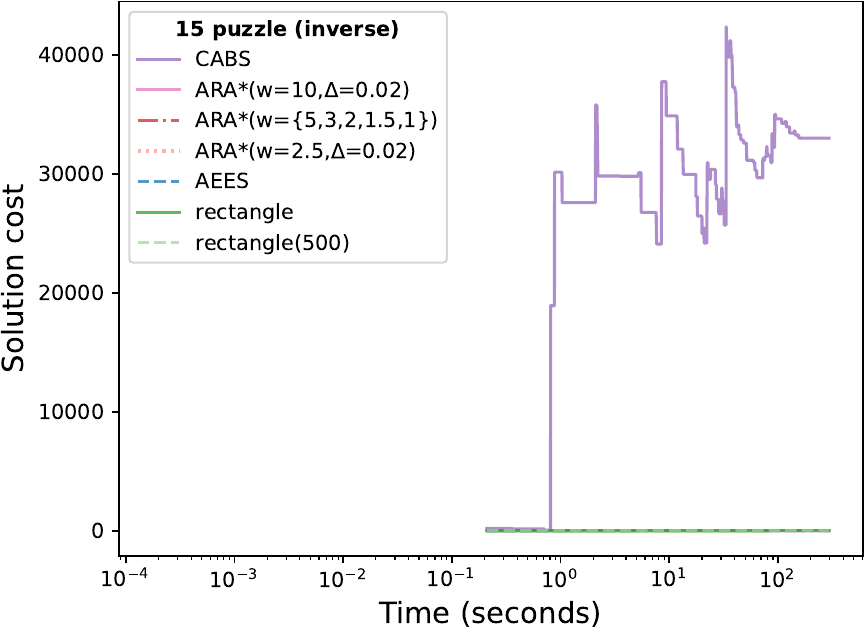}%
\caption{15 puzzle (inverse cost)}%
\end{figure*}

\begin{figure*}[h!]%
\centering%
\includegraphics[width=\plotwidth{}]{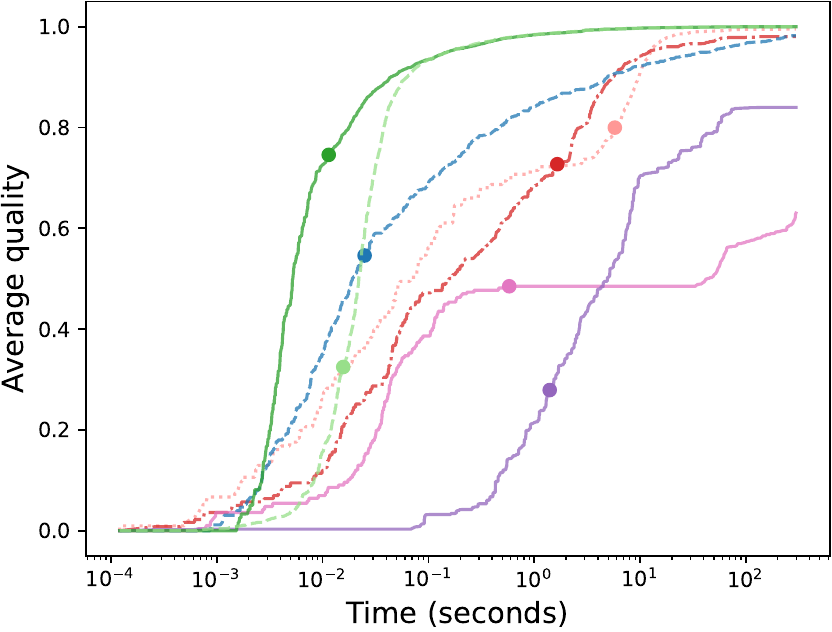}%
\centering%
\includegraphics[width=\plotwidth{}]{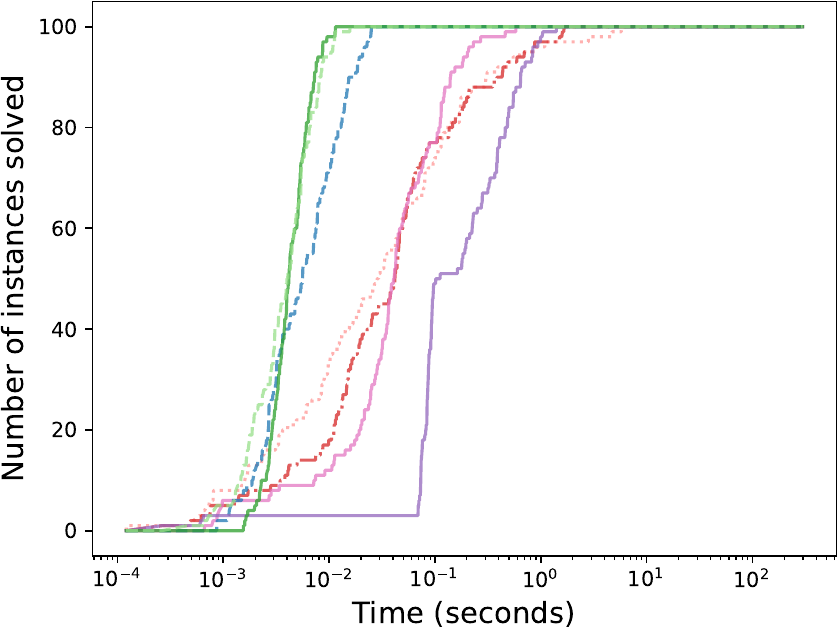}%
\centering%
\includegraphics[width=\plotwidth{}]{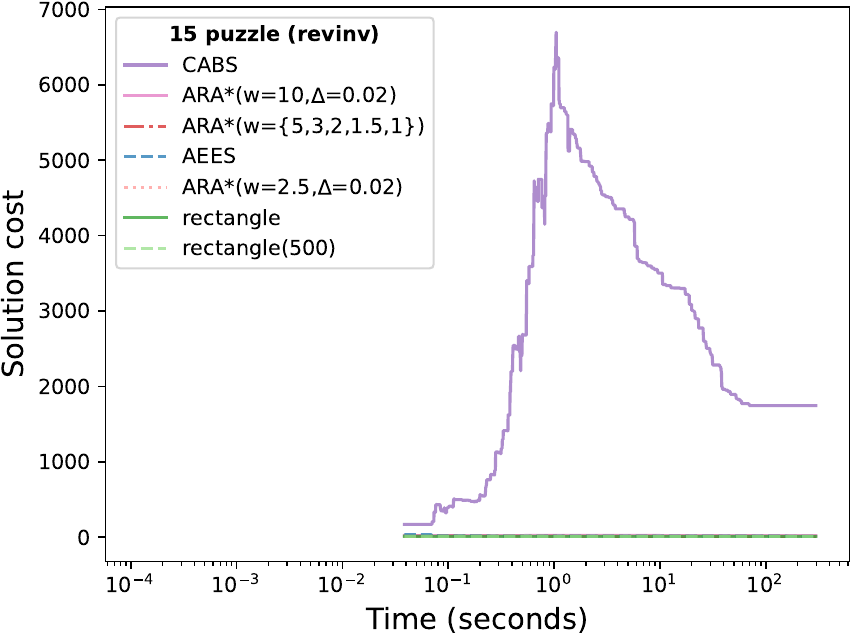}%
\caption{15 puzzle (reverse inverse cost)}%
\end{figure*}

\begin{figure*}[h!]%
\centering%
\includegraphics[width=\plotwidth{}]{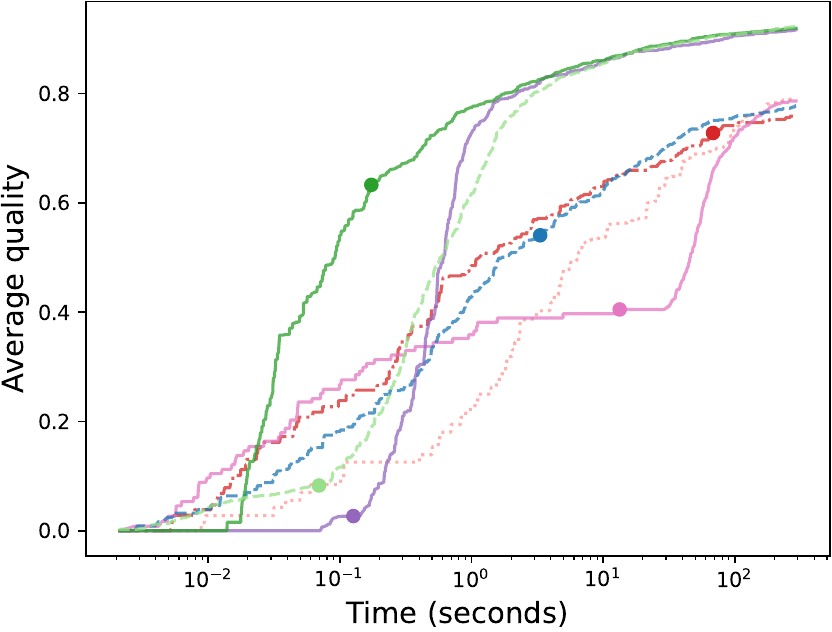}%
\centering%
\includegraphics[width=\plotwidth{}]{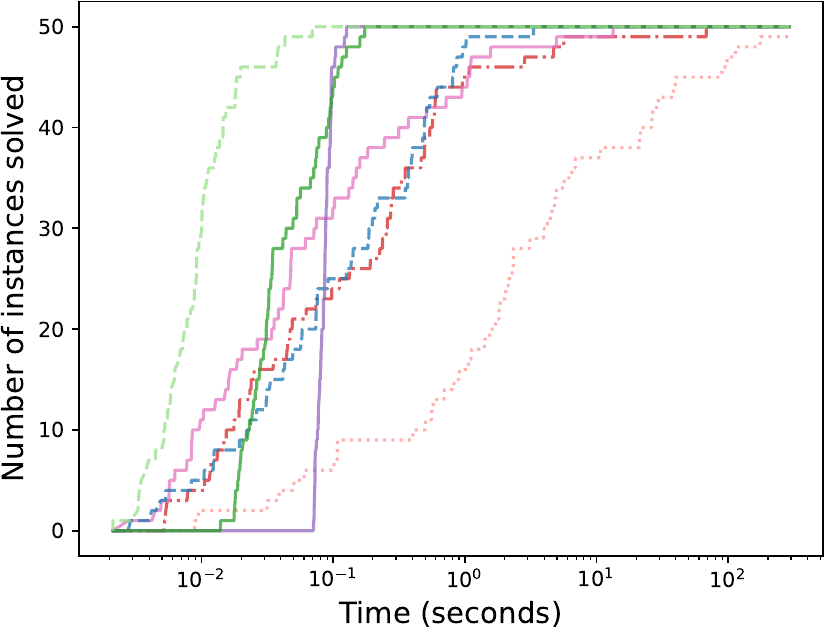}%
\centering%
\includegraphics[width=\plotwidth{}]{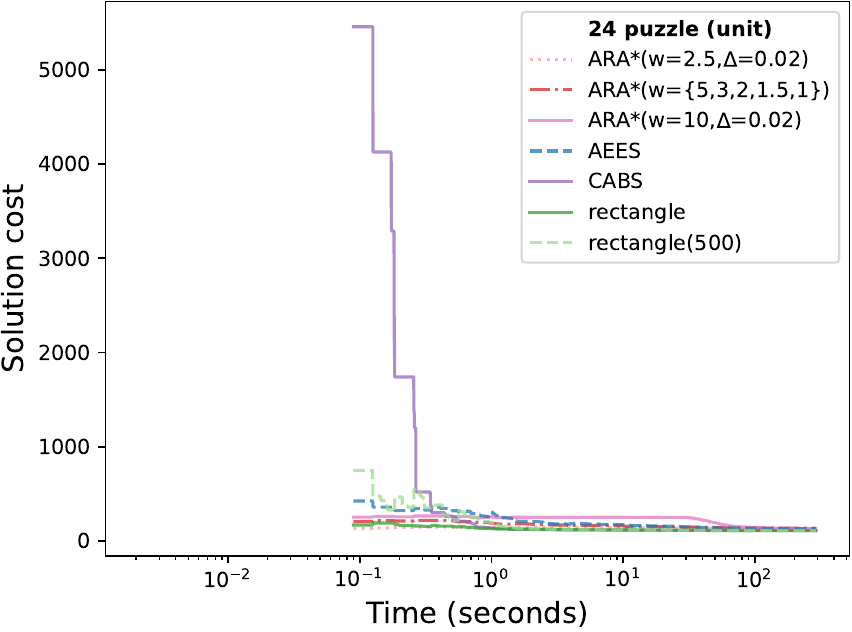}%
\caption{24 puzzle (unit cost)}%
\end{figure*}

\begin{figure*}[h!]%
\centering%
\includegraphics[width=\plotwidth{}]{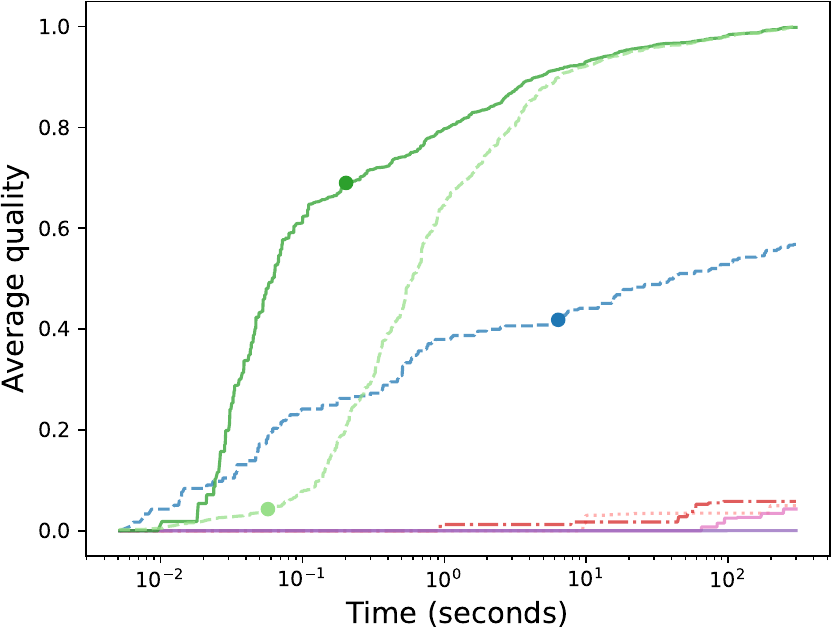}%
\centering%
\includegraphics[width=\plotwidth{}]{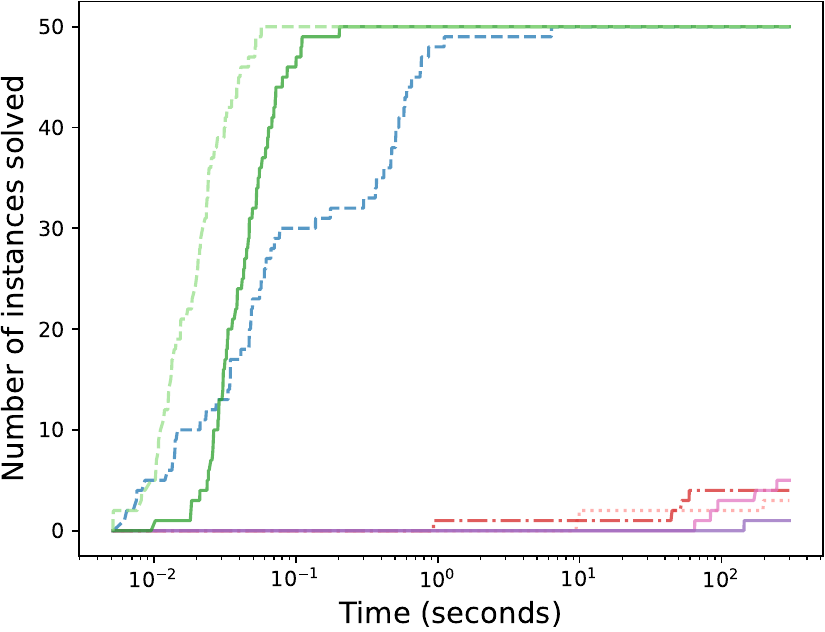}%
\centering%
\includegraphics[width=\plotwidth{}]{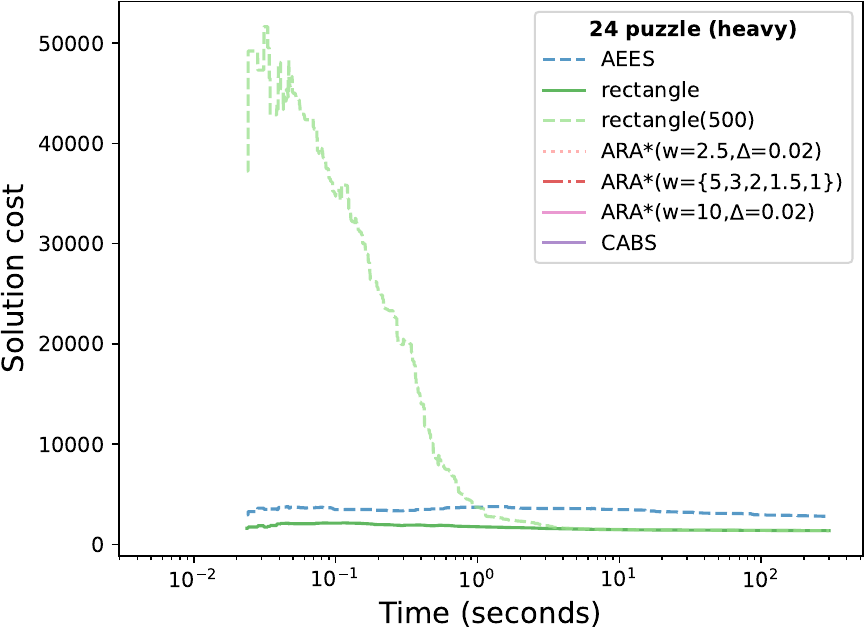}%
\caption{24 puzzle (heavy cost)}%
\end{figure*}

\begin{figure*}[h!]%
\centering%
\includegraphics[width=\plotwidth{}]{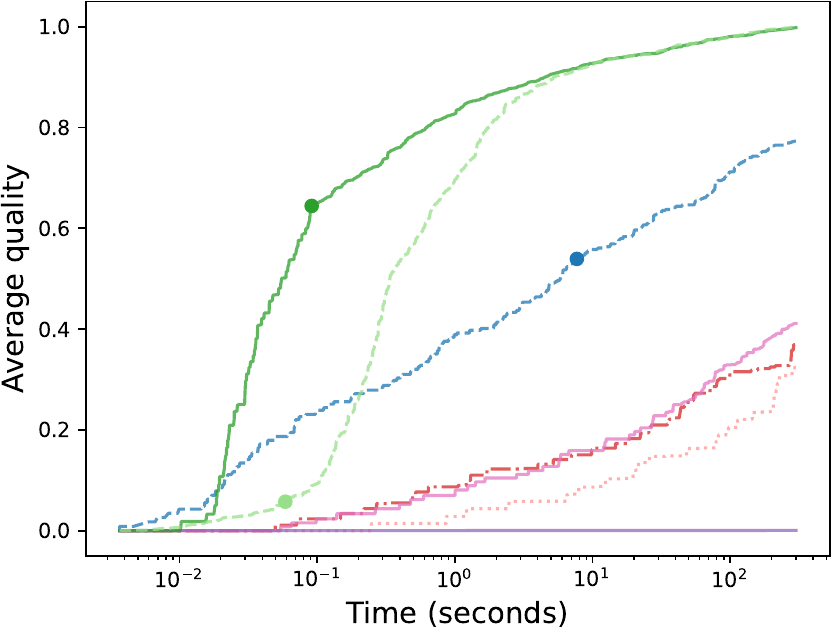}%
\centering%
\includegraphics[width=\plotwidth{}]{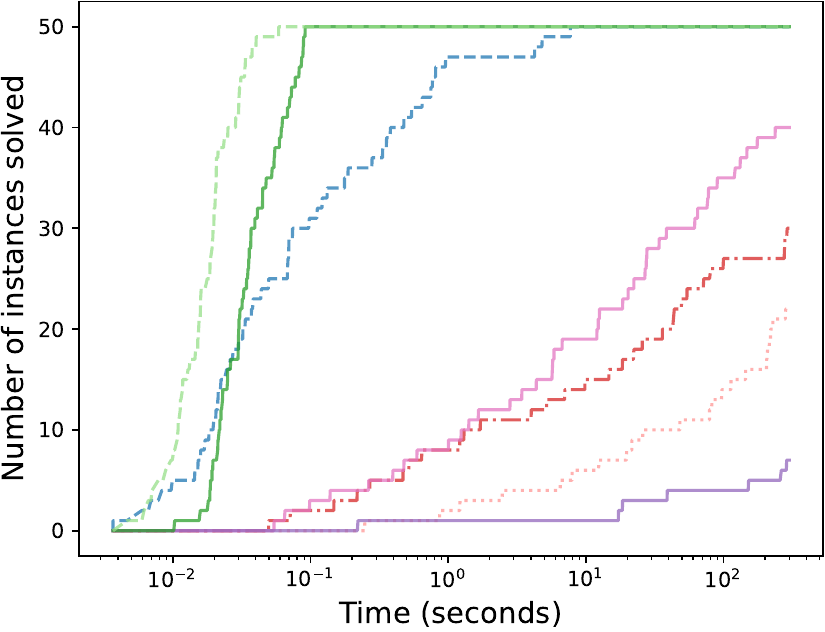}%
\centering%
\includegraphics[width=\plotwidth{}]{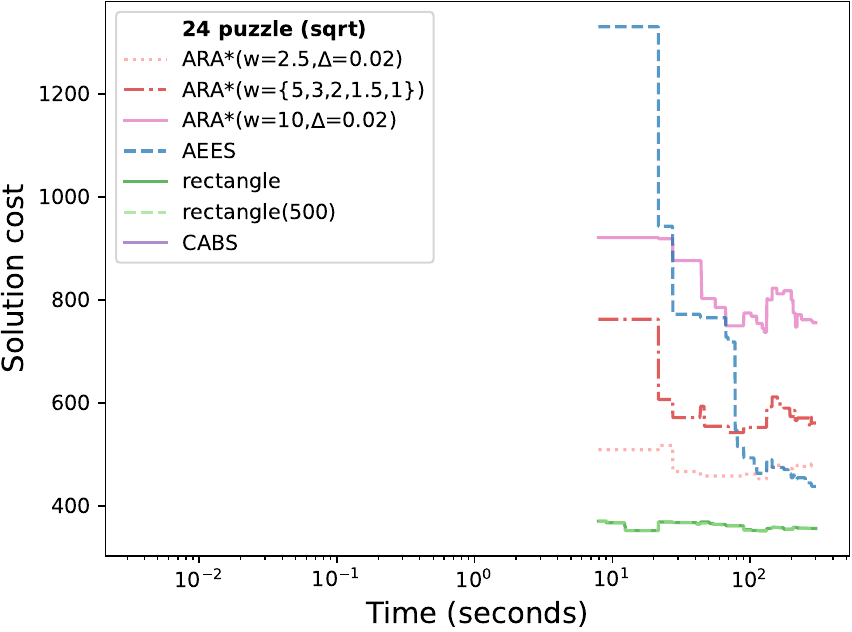}%
\caption{24 puzzle (sqrt cost)}%
\end{figure*}

\begin{figure*}[h!]%
\centering%
\includegraphics[width=\plotwidth{}]{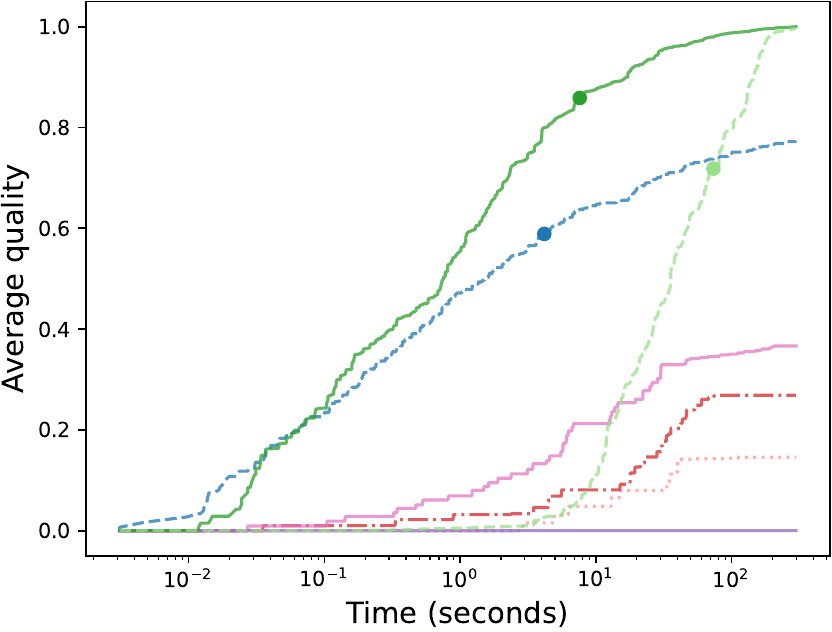}%
\centering%
\includegraphics[width=\plotwidth{}]{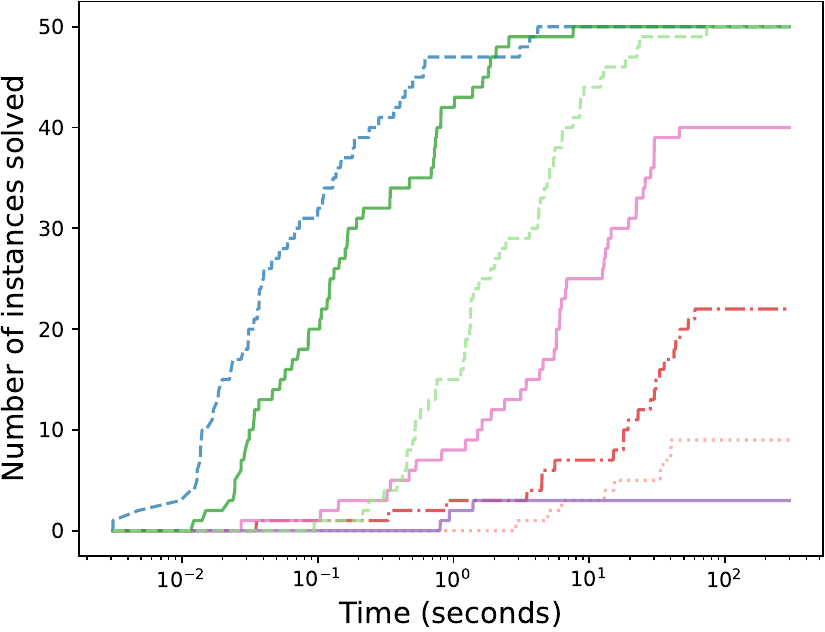}%
\centering%
\includegraphics[width=\plotwidth{}]{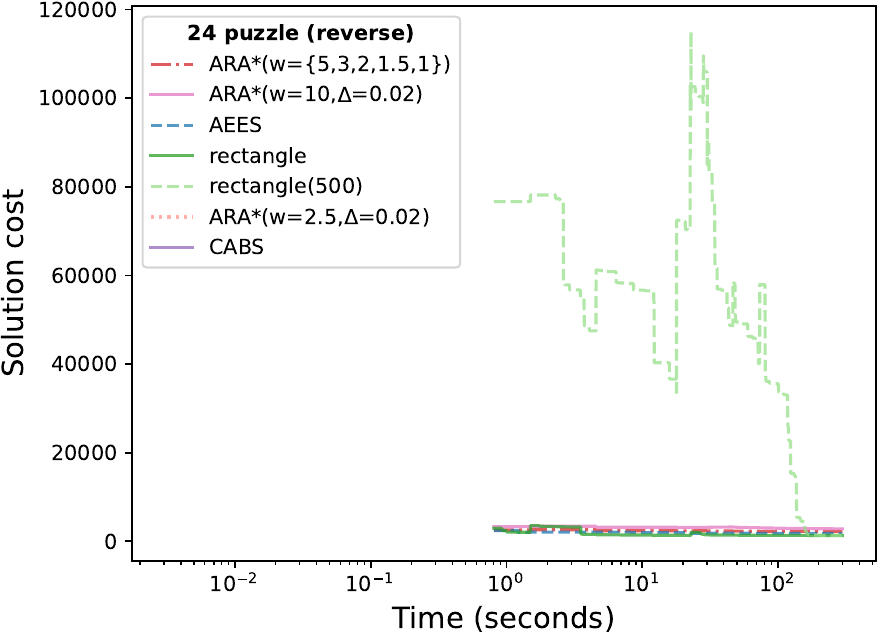}%
\caption{24 puzzle (reverse cost)}%
\end{figure*}

\begin{figure*}[h!]%
\centering%
\includegraphics[width=\plotwidth{}]{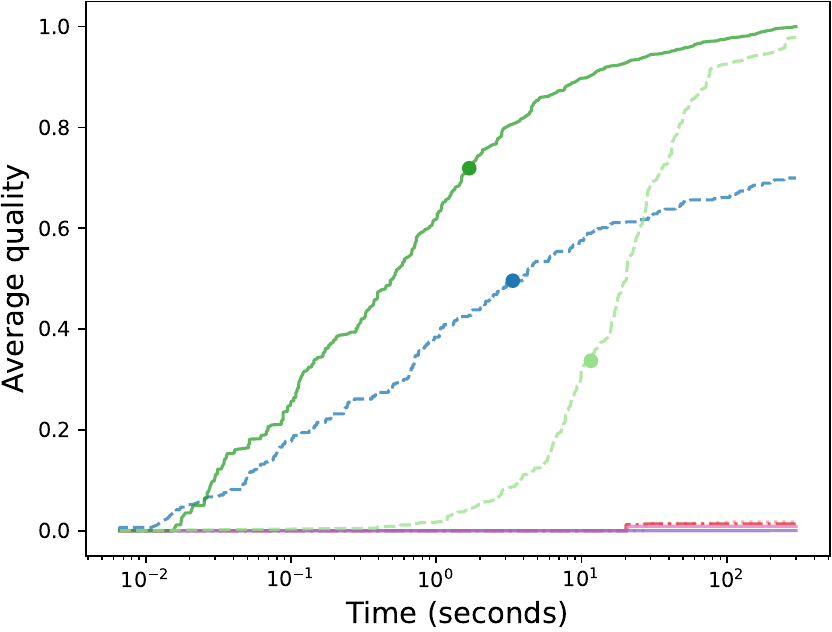}%
\centering%
\includegraphics[width=\plotwidth{}]{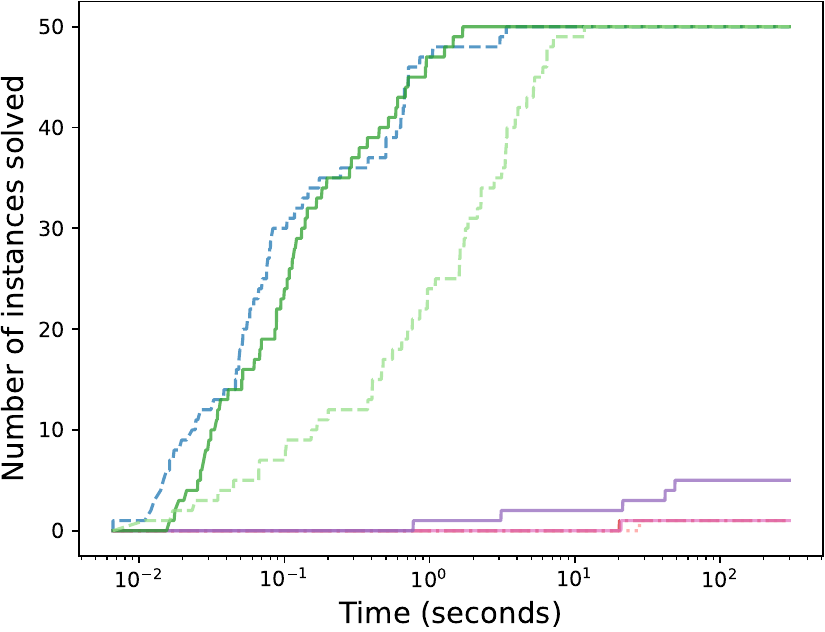}%
\centering%
\includegraphics[width=\plotwidth{}]{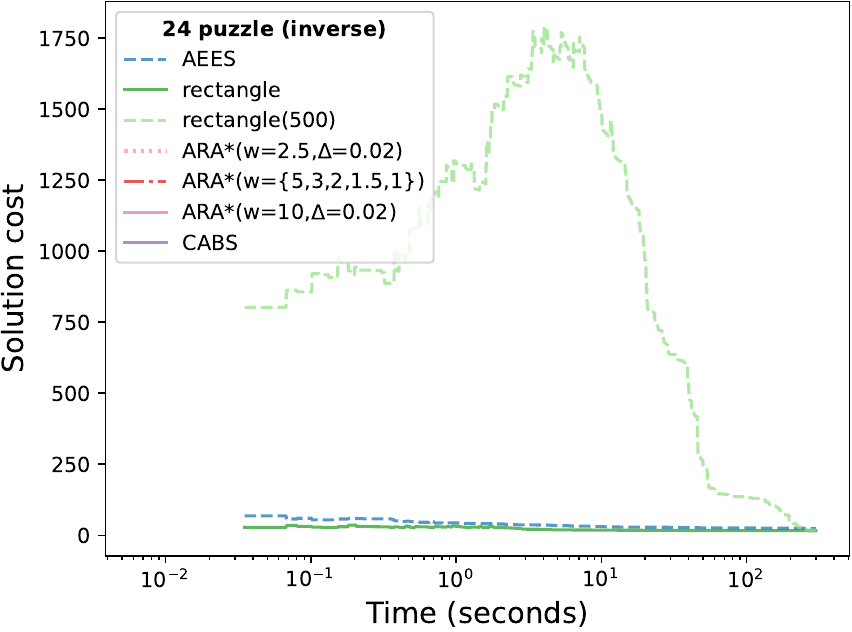}%
\caption{24 puzzle (inverse cost)}%
\end{figure*}

\begin{figure*}[h!]%
\centering%
\includegraphics[width=\plotwidth{}]{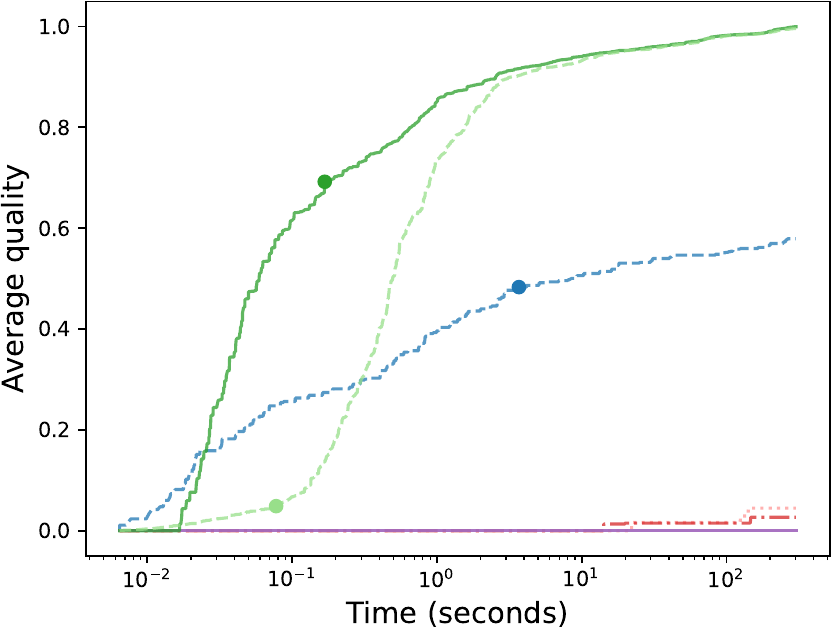}%
\centering%
\includegraphics[width=\plotwidth{}]{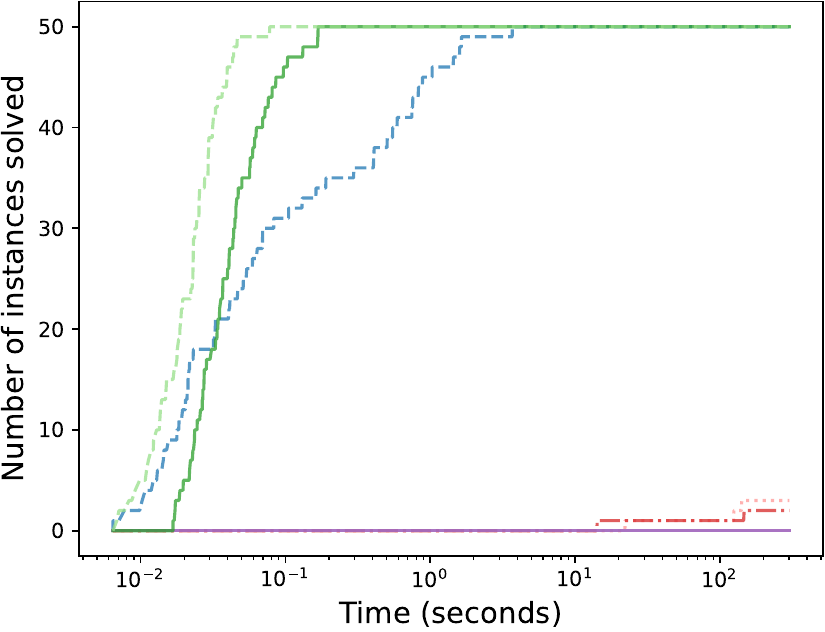}%
\centering%
\includegraphics[width=\plotwidth{}]{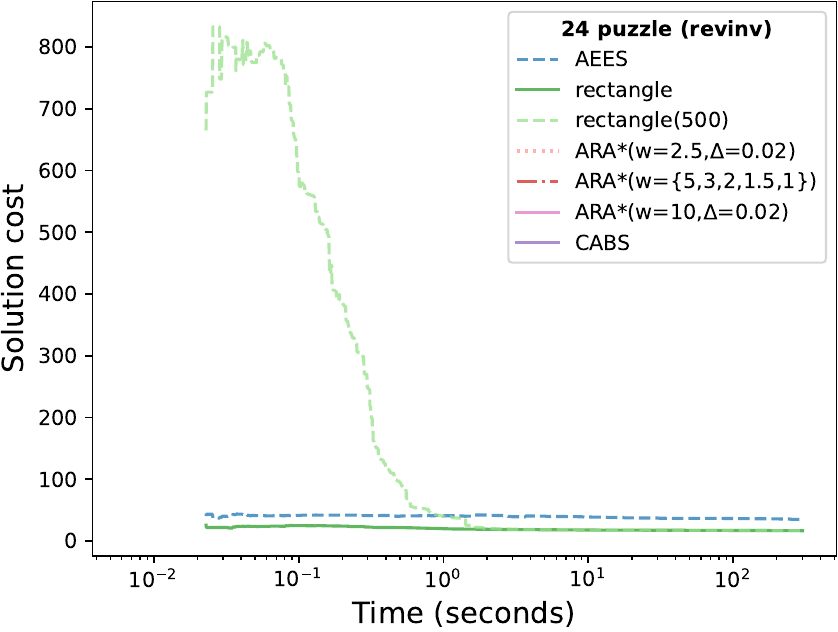}%
\caption{24 puzzle (reverse inverse cost)}%
\end{figure*}

\begin{figure*}[h!]%
\centering%
\includegraphics[width=\plotwidth{}]{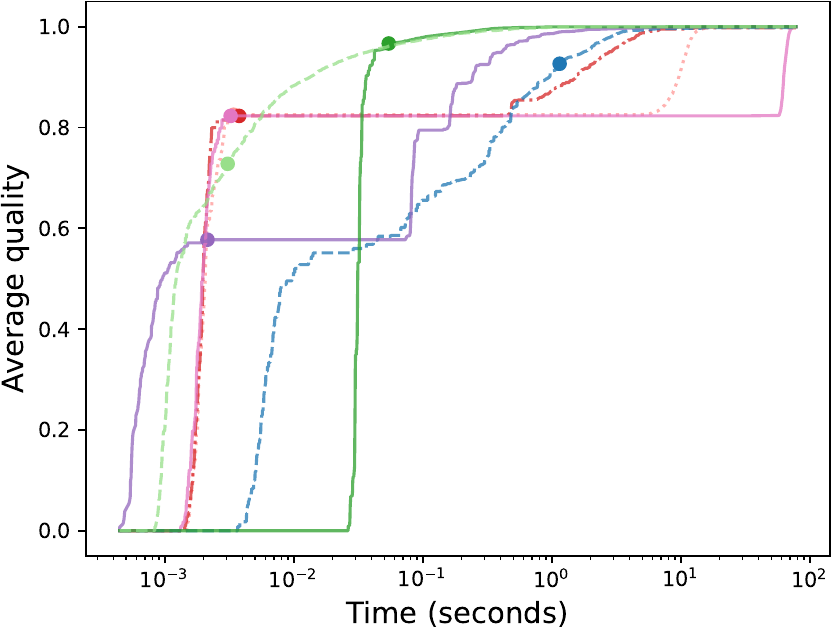}%
\centering%
\includegraphics[width=\plotwidth{}]{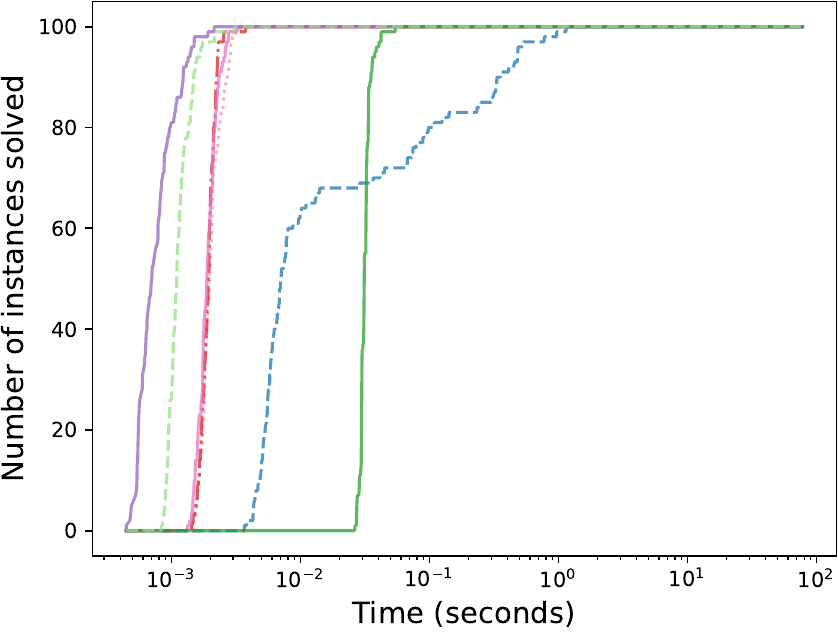}%
\centering%
\includegraphics[width=\plotwidth{}]{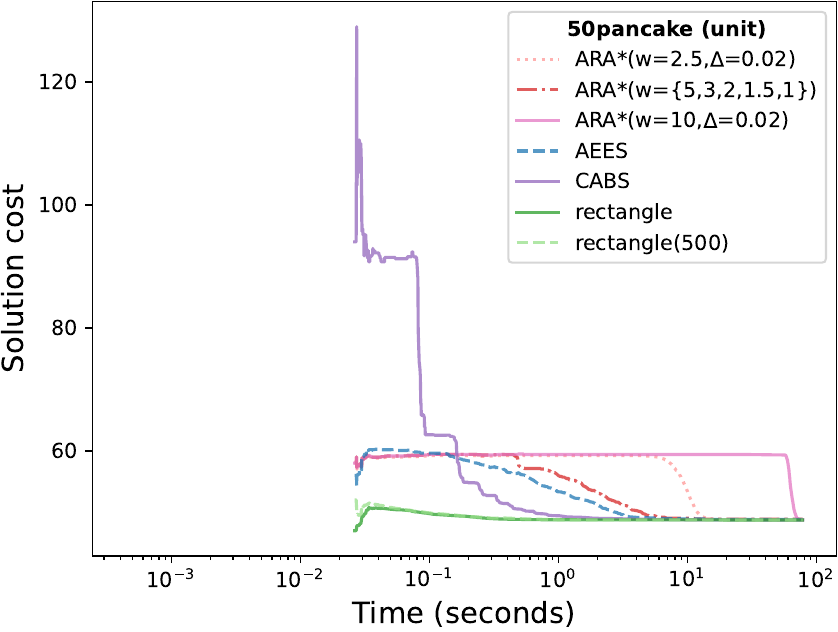}%
\caption{50pancake (unit cost)}%
\end{figure*}

\begin{figure*}[h!]%
\centering%
\includegraphics[width=\plotwidth{}]{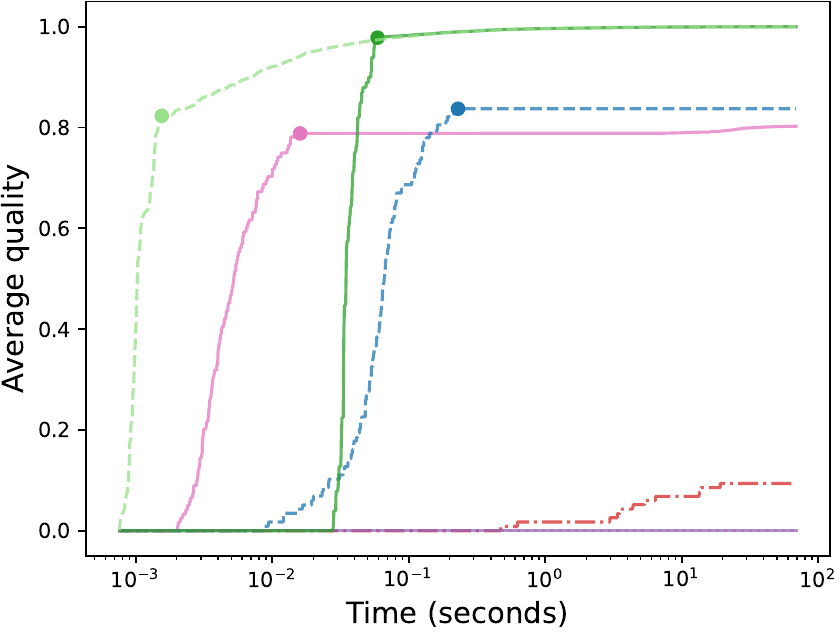}%
\centering%
\includegraphics[width=\plotwidth{}]{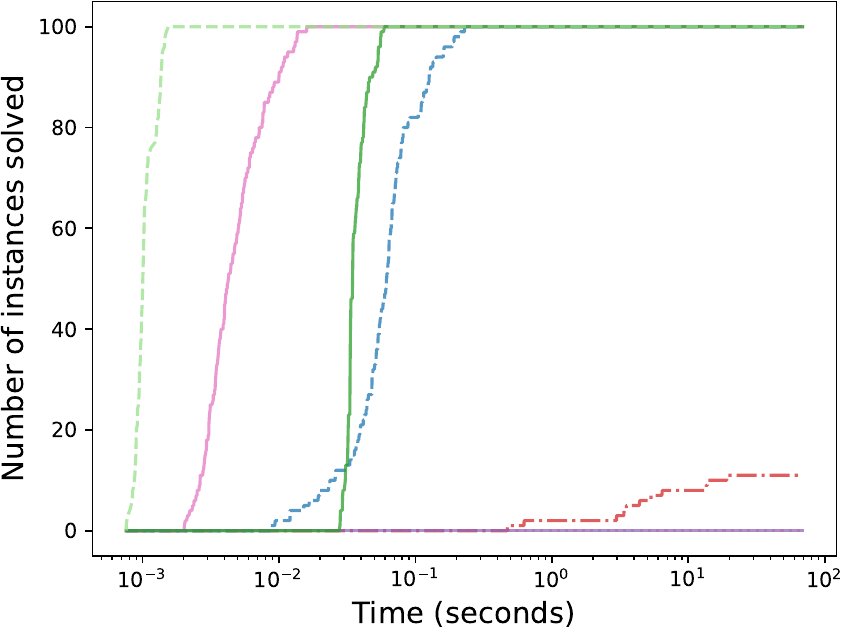}%
\centering%
\includegraphics[width=\plotwidth{}]{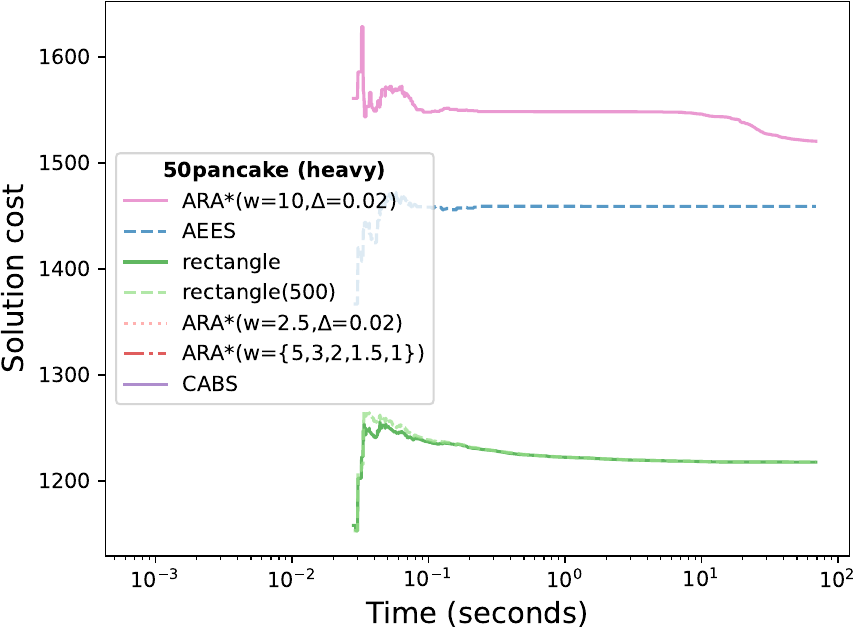}%
\caption{50pancake (heavy cost)}%
\end{figure*}

\begin{figure*}[h!]%
\centering%
\includegraphics[width=\plotwidth{}]{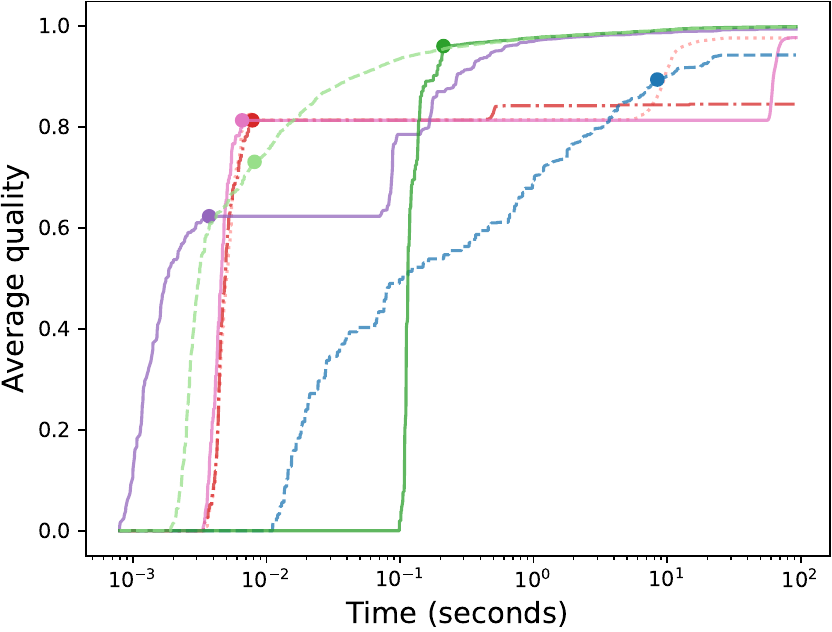}%
\centering%
\includegraphics[width=\plotwidth{}]{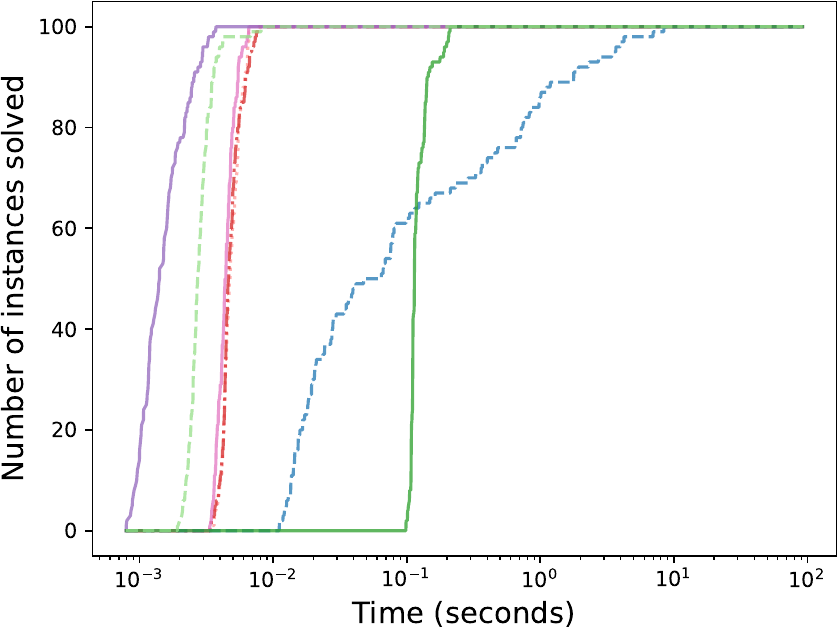}%
\centering%
\includegraphics[width=\plotwidth{}]{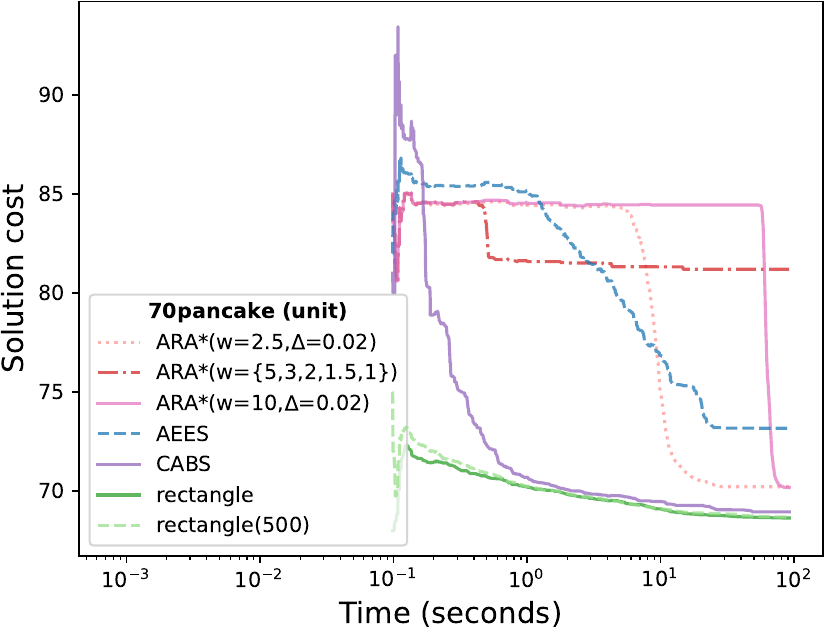}%
\caption{70pancake (unit cost)}%
\end{figure*}

\begin{figure*}[h!]%
\centering%
\includegraphics[width=\plotwidth{}]{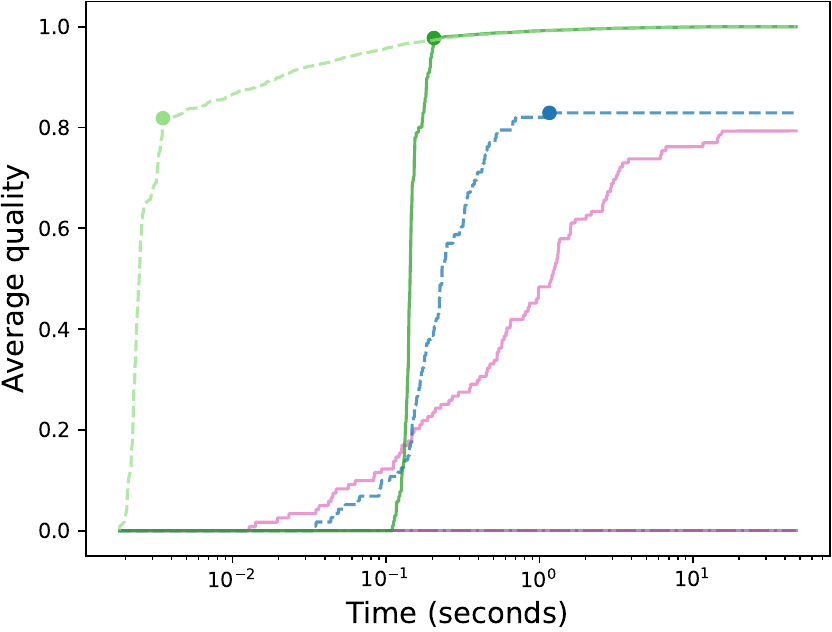}%
\centering%
\includegraphics[width=\plotwidth{}]{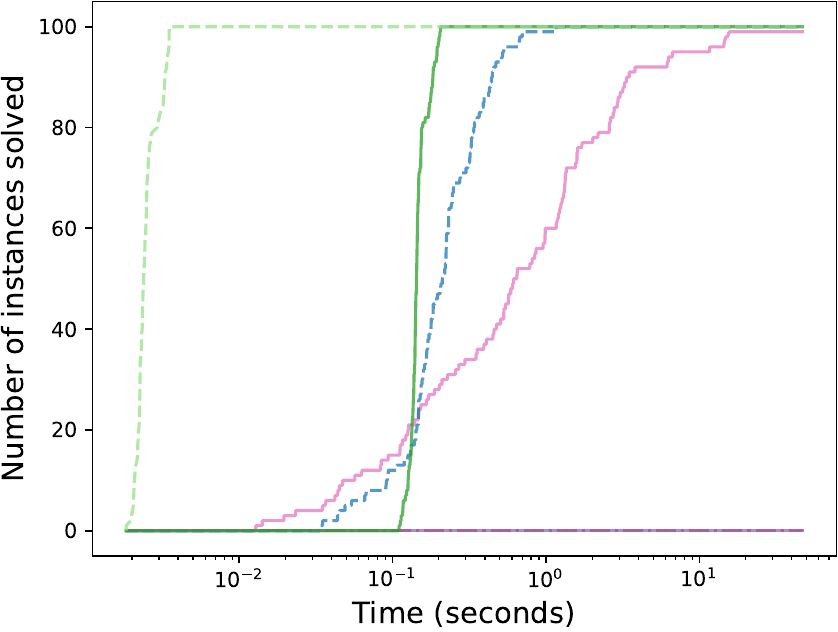}%
\centering%
\includegraphics[width=\plotwidth{}]{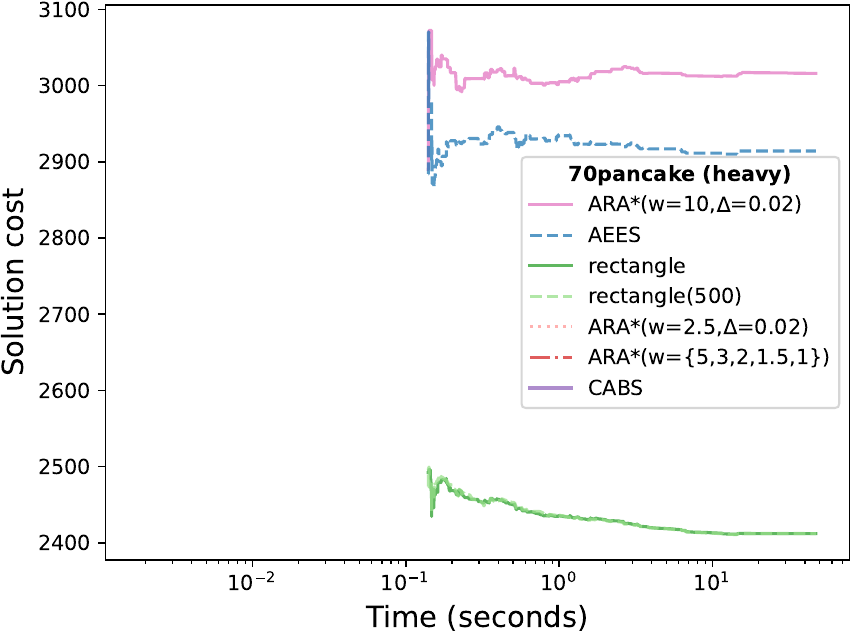}%
\caption{70pancake (heavy cost)}%
\end{figure*}

\begin{figure*}[h!]%
\centering%
\includegraphics[width=\plotwidth{}]{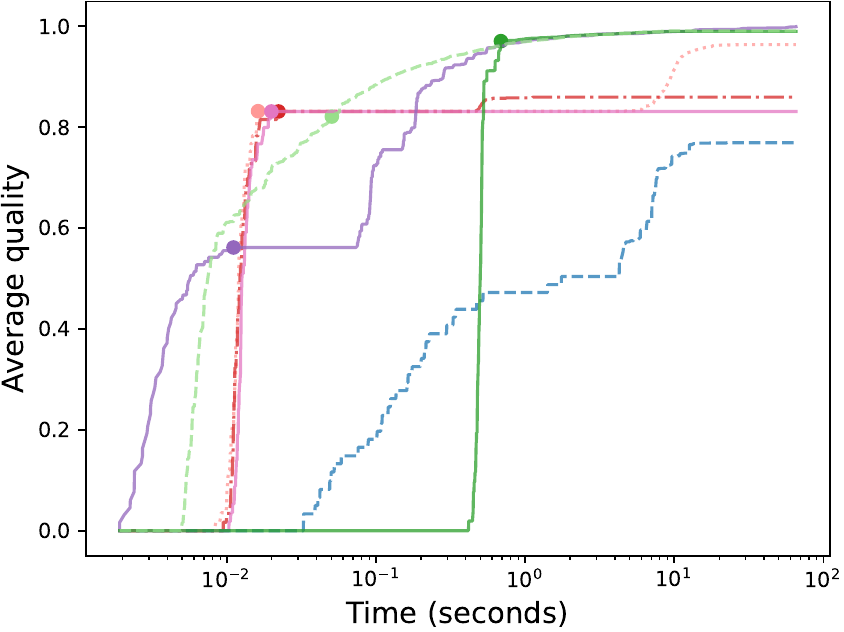}%
\centering%
\includegraphics[width=\plotwidth{}]{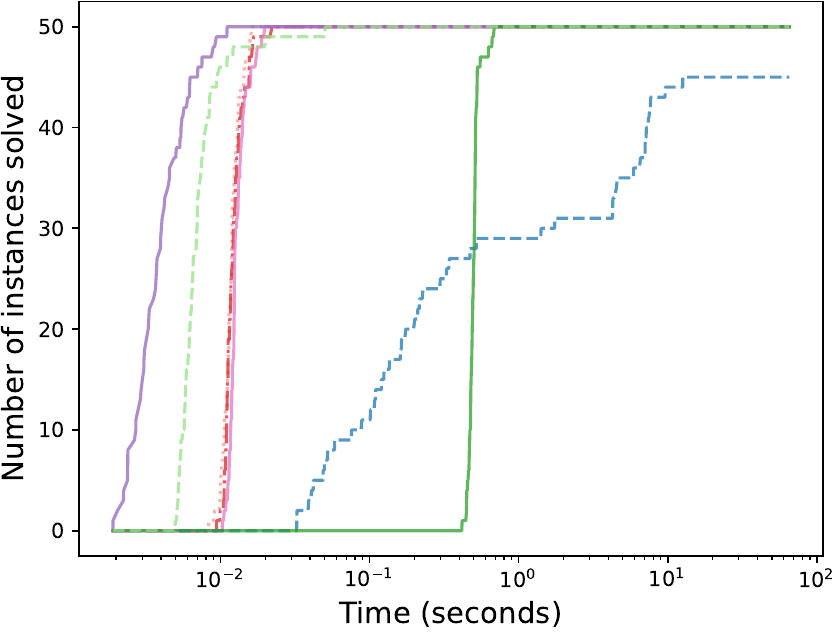}%
\centering%
\includegraphics[width=\plotwidth{}]{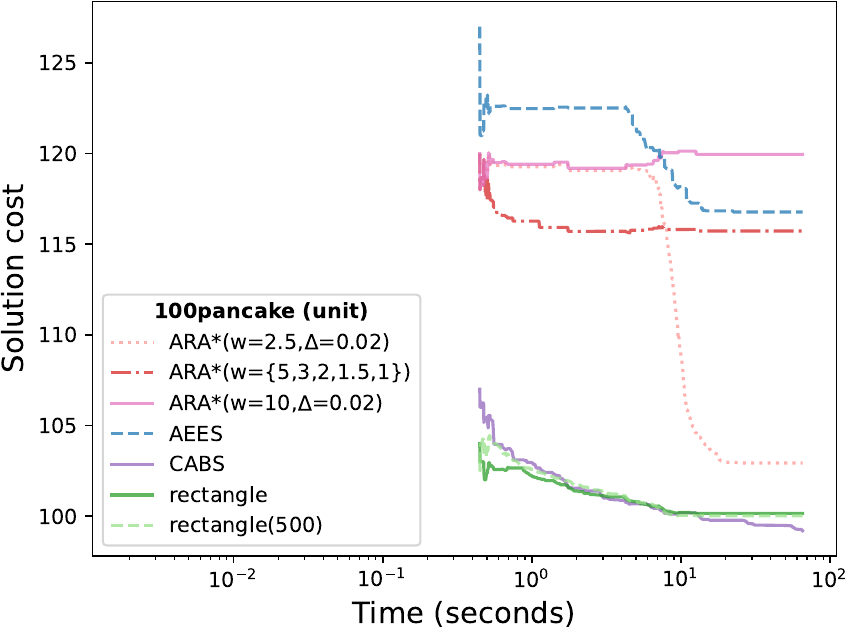}%
\caption{100pancake (unit cost)}%
\end{figure*}

\begin{figure*}[h!]%
\centering%
\includegraphics[width=\plotwidth{}]{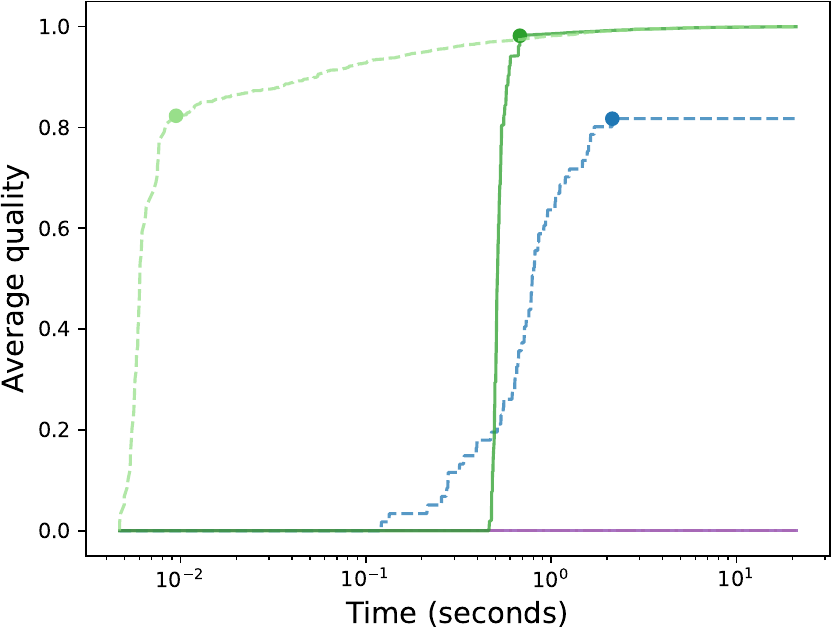}%
\centering%
\includegraphics[width=\plotwidth{}]{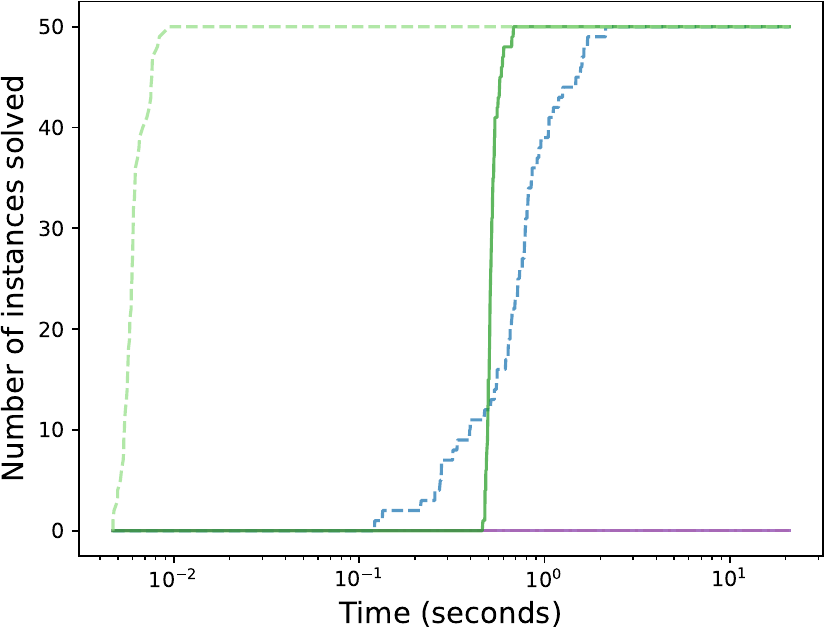}%
\centering%
\includegraphics[width=\plotwidth{}]{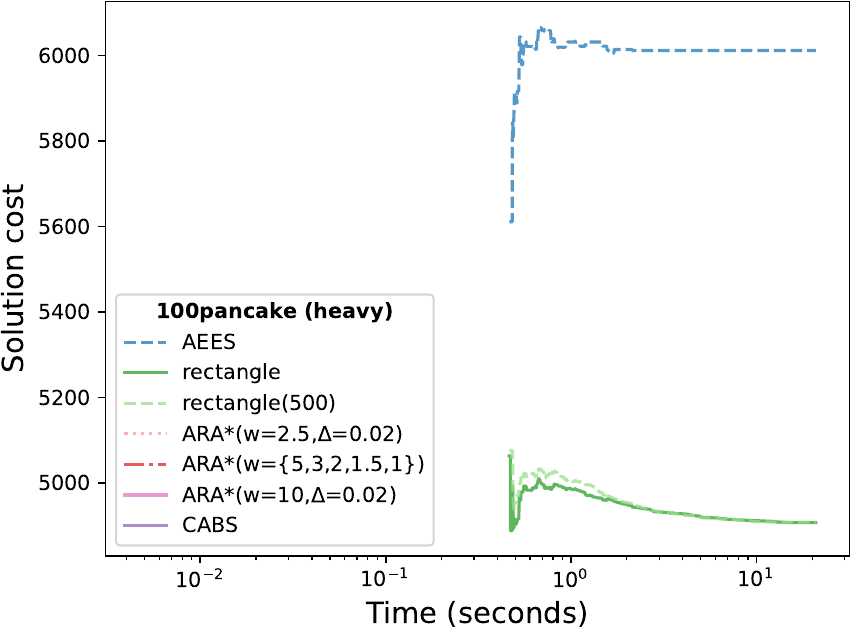}%
\caption{100pancake (heavy cost)}%
\end{figure*}

\begin{figure*}[h!]%
\centering%
\includegraphics[width=\plotwidth{}]{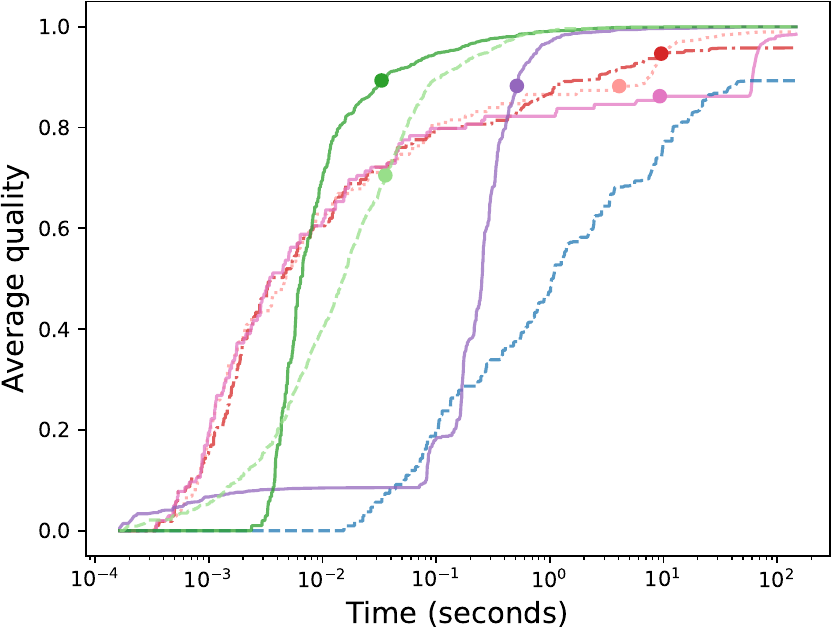}%
\centering%
\includegraphics[width=\plotwidth{}]{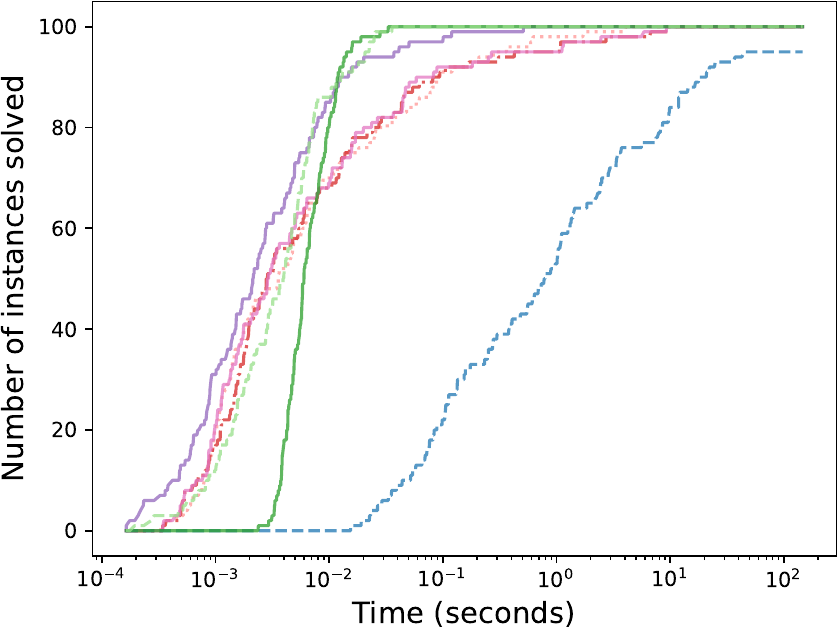}%
\centering%
\includegraphics[width=\plotwidth{}]{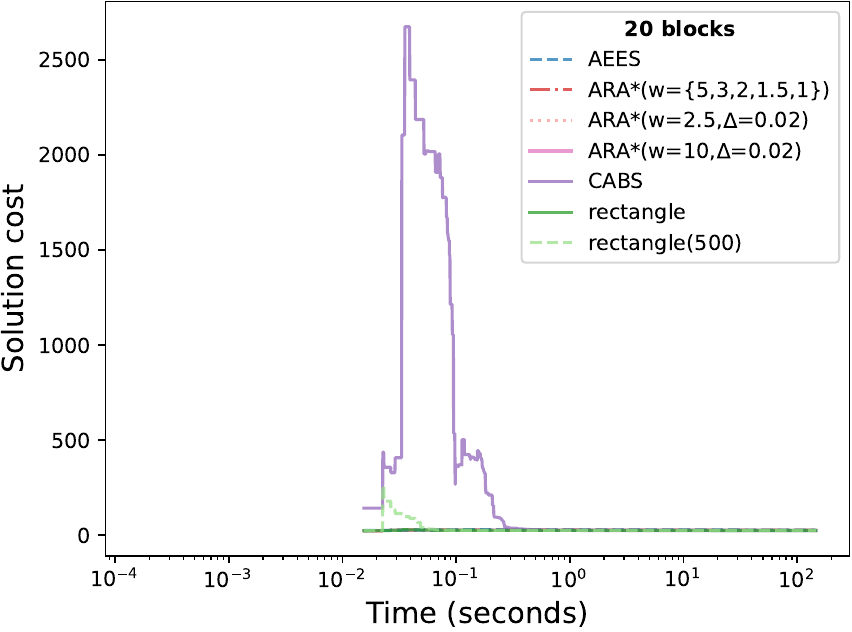}%
\caption{20 blocks}%
\end{figure*}

\begin{figure*}[h!]%
\centering%
\includegraphics[width=\plotwidth{}]{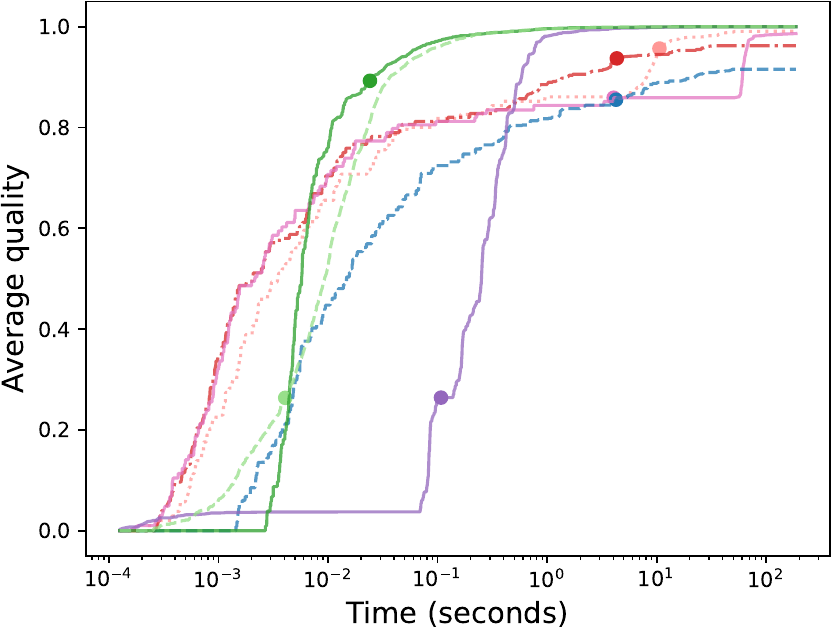}%
\centering%
\includegraphics[width=\plotwidth{}]{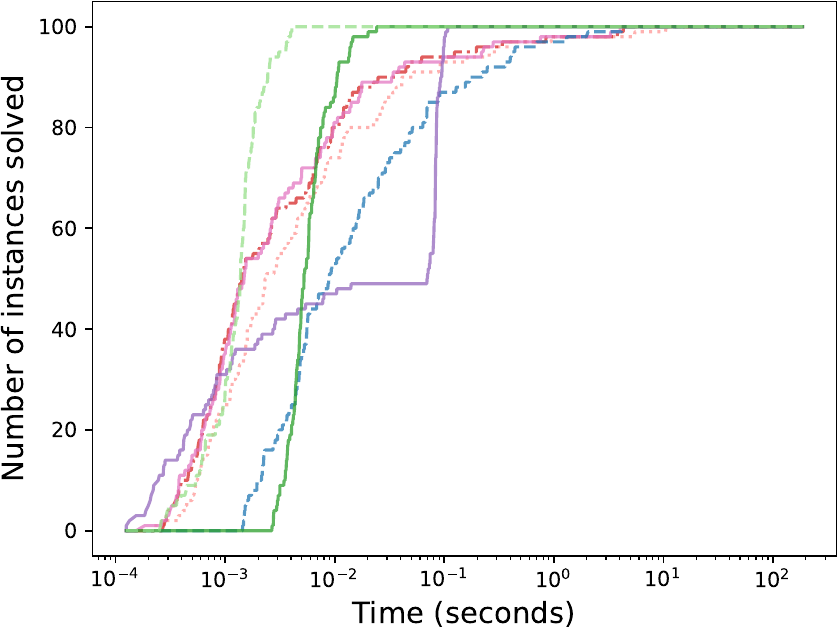}%
\centering%
\includegraphics[width=\plotwidth{}]{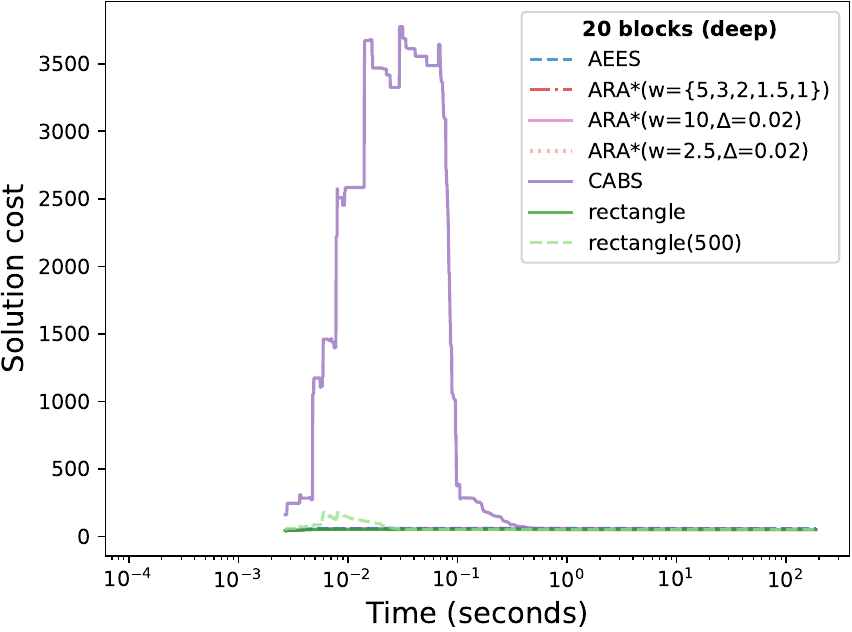}%
\caption{20 blocks (deep)}%
\end{figure*}

\begin{figure*}[h!]%
\centering%
\includegraphics[width=\plotwidth{}]{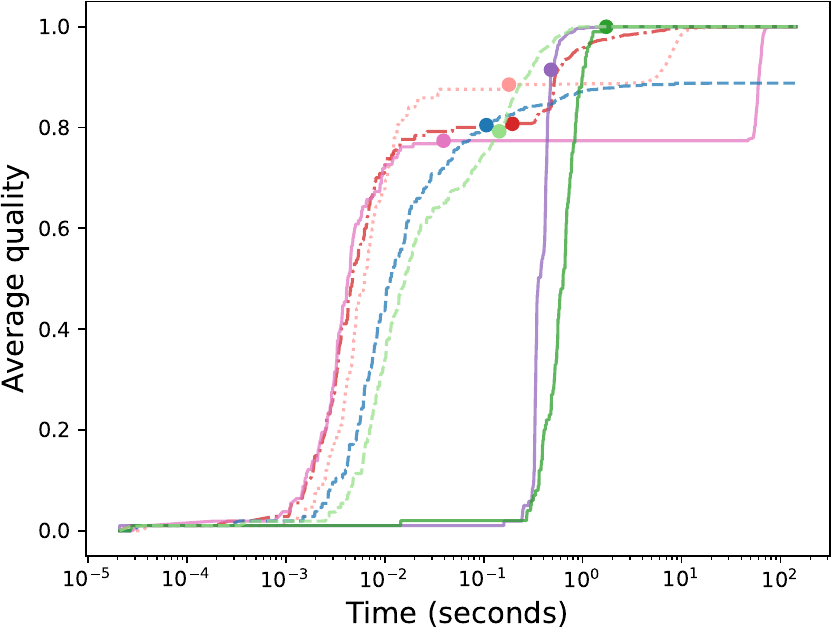}%
\centering%
\includegraphics[width=\plotwidth{}]{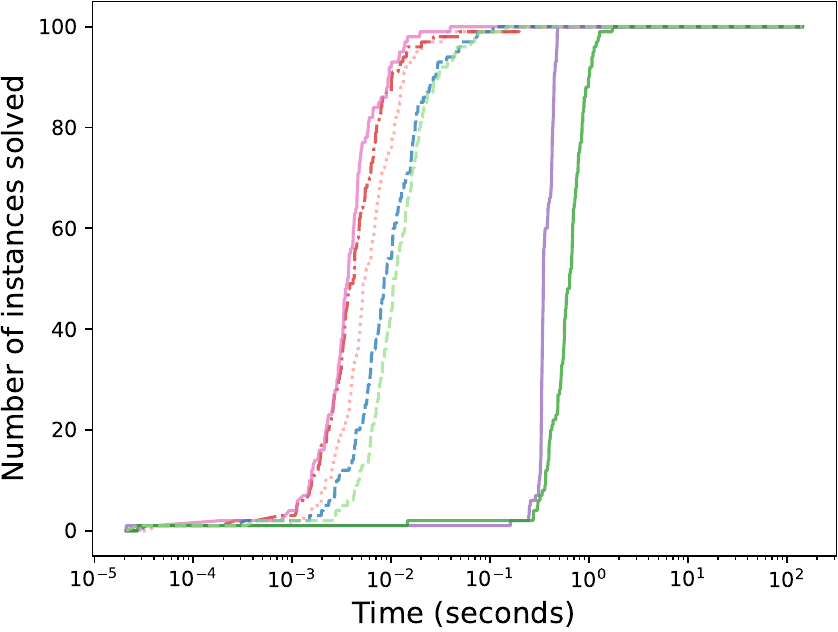}%
\centering%
\includegraphics[width=\plotwidth{}]{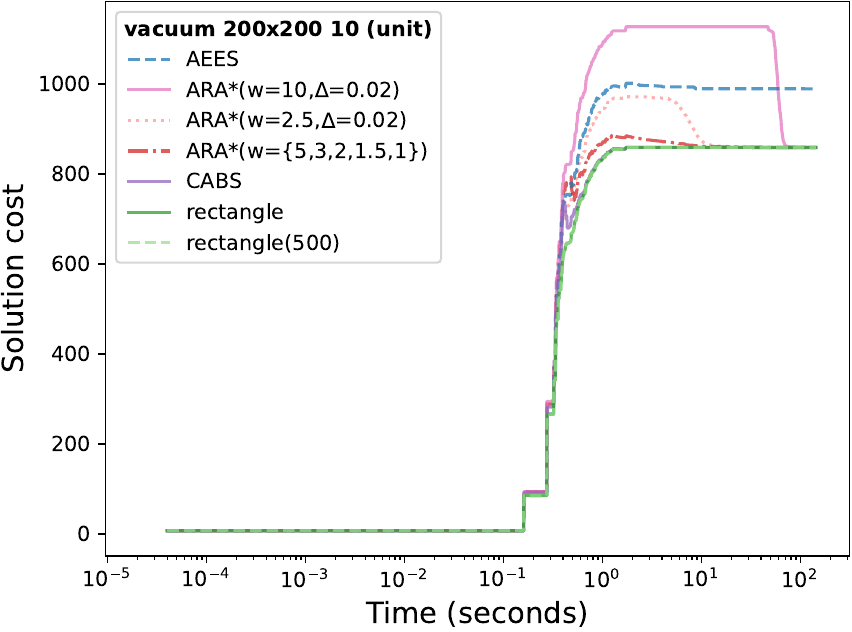}%
\caption{vacuum 200x200, 10 dirts (unit cost)}%
\end{figure*}

\begin{figure*}[h!]%
\centering%
\includegraphics[width=\plotwidth{}]{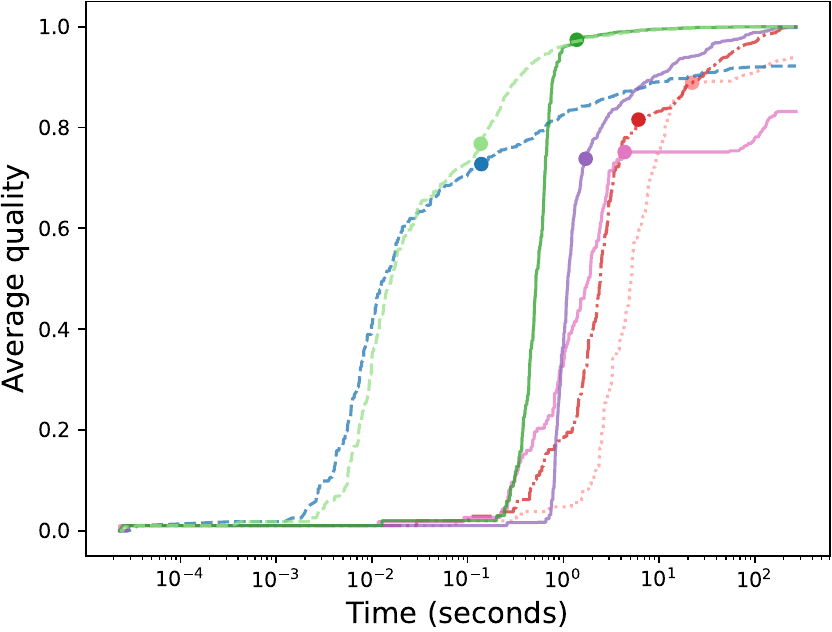}%
\centering%
\includegraphics[width=\plotwidth{}]{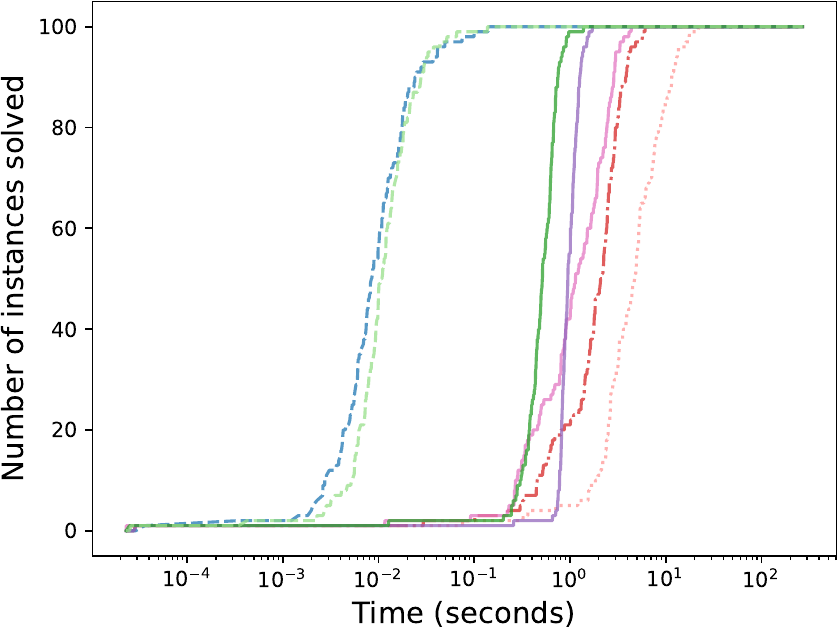}%
\centering%
\includegraphics[width=\plotwidth{}]{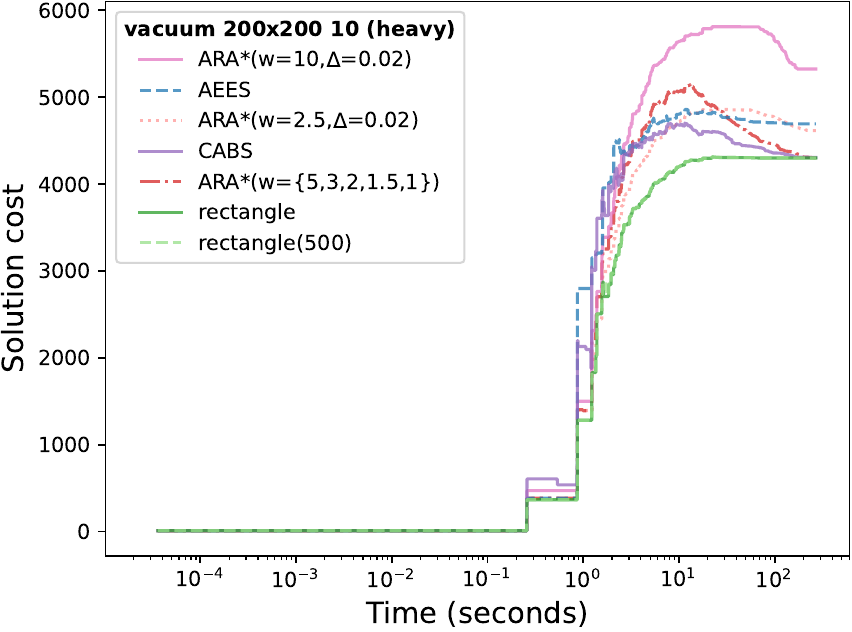}%
\caption{vacuum 200x200, 10 dirts (heavy cost)}%
\end{figure*}

\begin{figure*}[h!]%
\centering%
\includegraphics[width=\plotwidth{}]{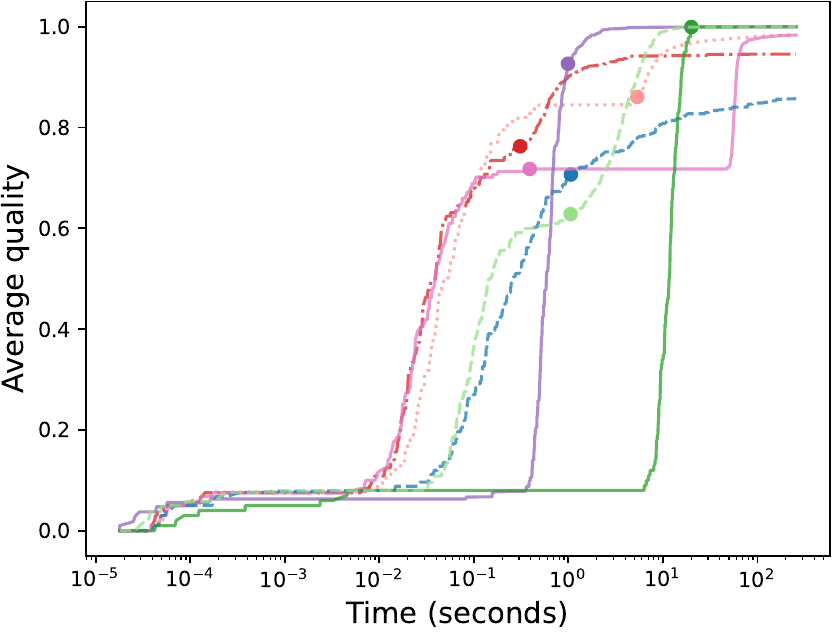}%
\centering%
\includegraphics[width=\plotwidth{}]{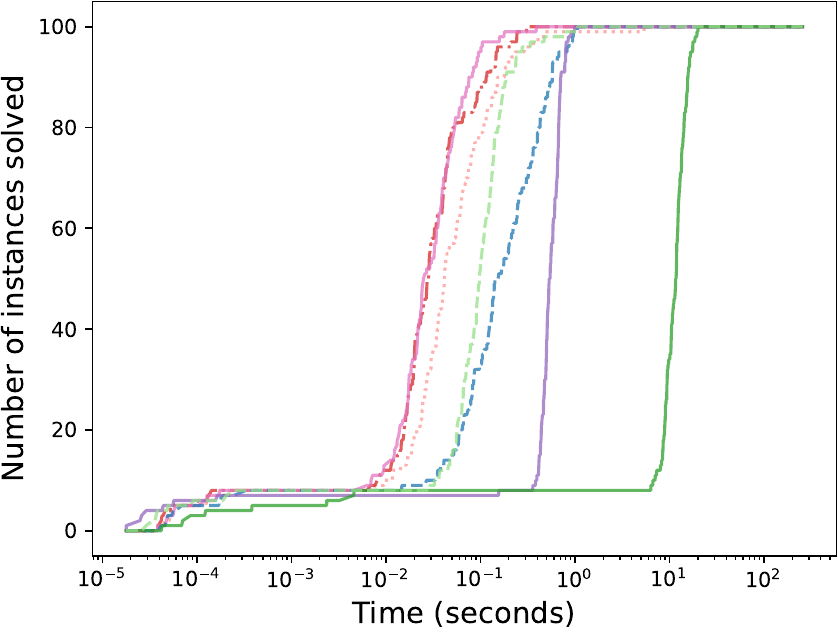}%
\centering%
\includegraphics[width=\plotwidth{}]{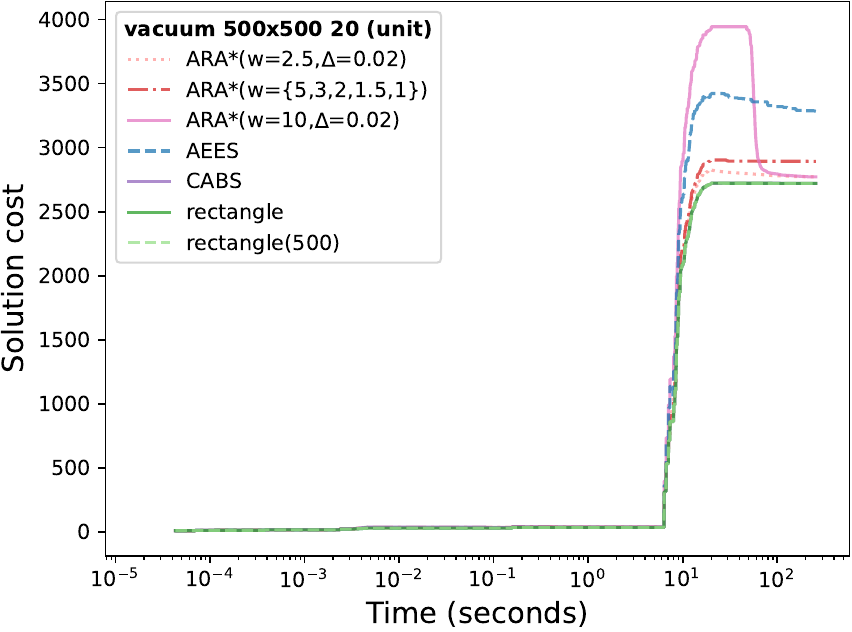}%
\caption{vacuum 500x500, 20 dirts (unit cost)}%
\end{figure*}

\begin{figure*}[h!]%
\centering%
\includegraphics[width=\plotwidth{}]{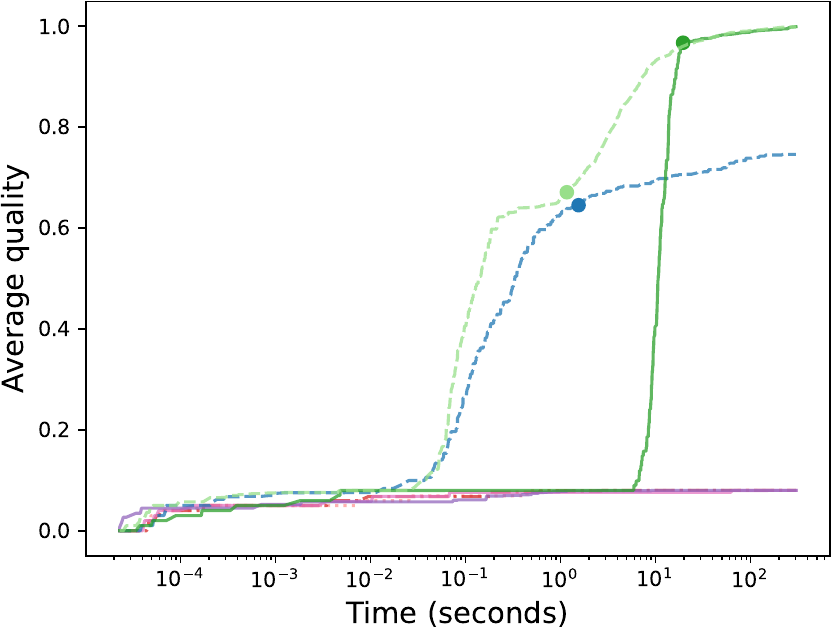}%
\centering%
\includegraphics[width=\plotwidth{}]{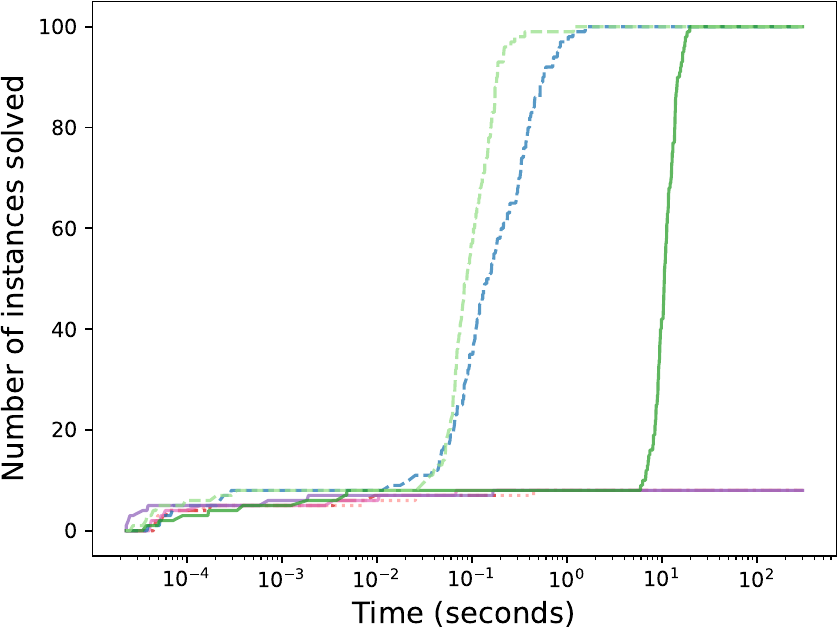}%
\centering%
\includegraphics[width=\plotwidth{}]{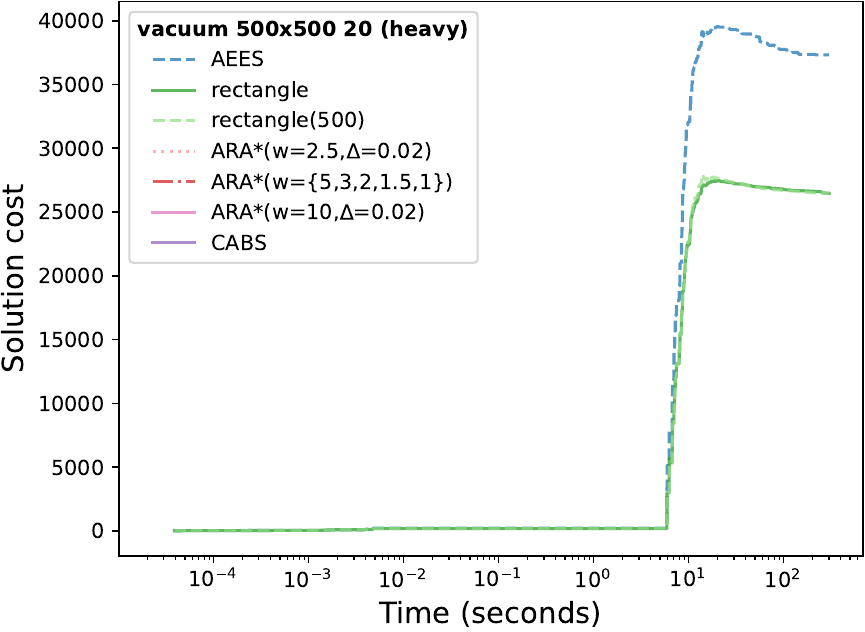}%
\caption{vacuum 500x500, 20 dirts (heavy cost)}%
\end{figure*}

\begin{figure*}[h!]%
\centering%
\includegraphics[width=\plotwidth{}]{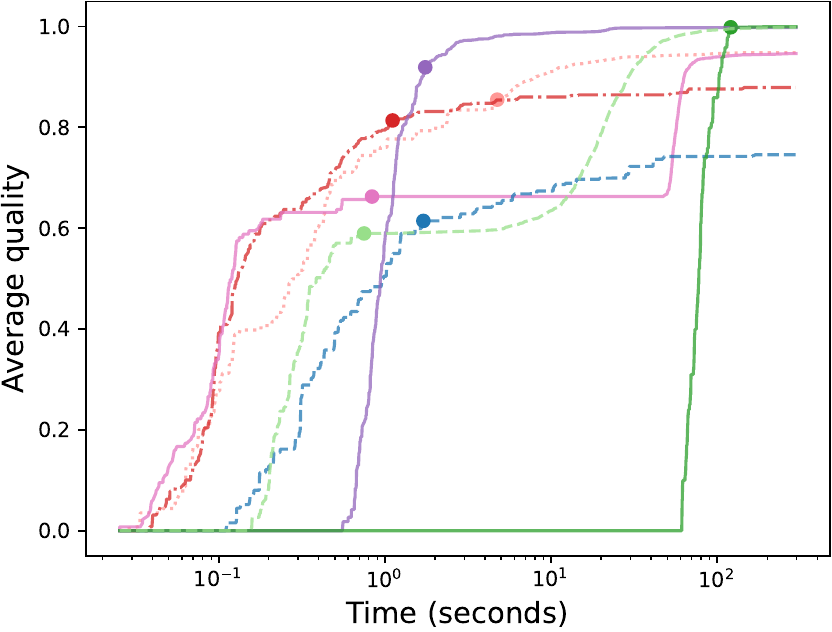}%
\centering%
\includegraphics[width=\plotwidth{}]{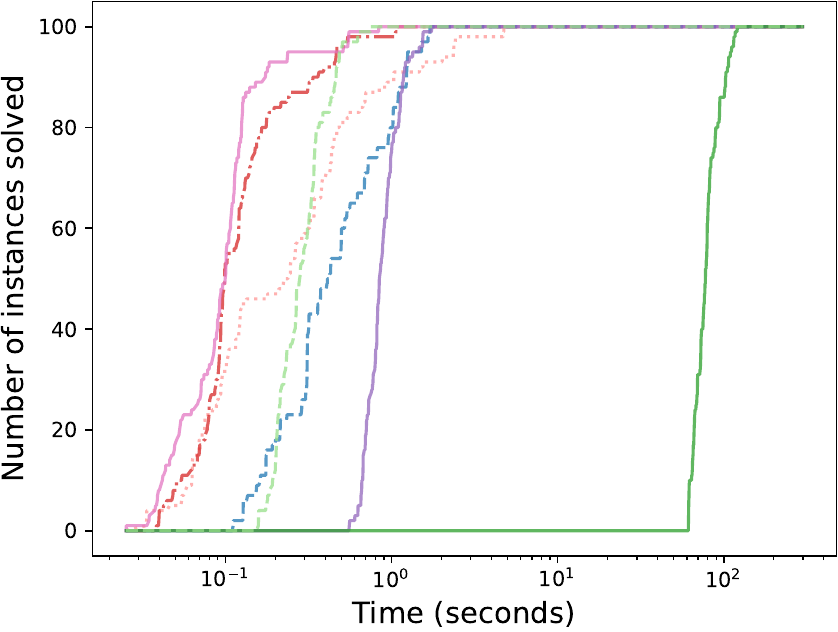}%
\centering%
\includegraphics[width=\plotwidth{}]{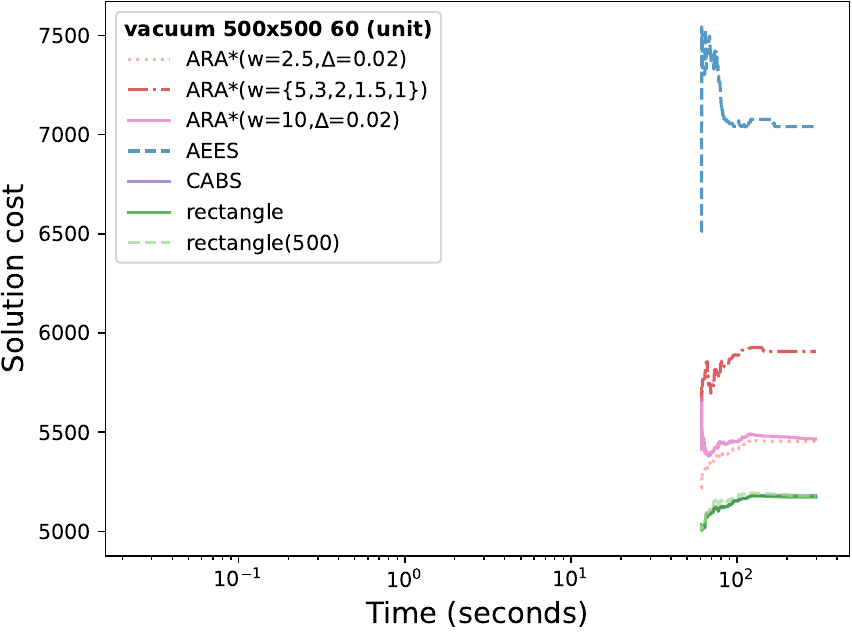}%
\caption{vacuum 500x500, 60 dirts (unit cost)}%
\end{figure*}

\begin{figure*}[h!]%
\centering%
\includegraphics[width=\plotwidth{}]{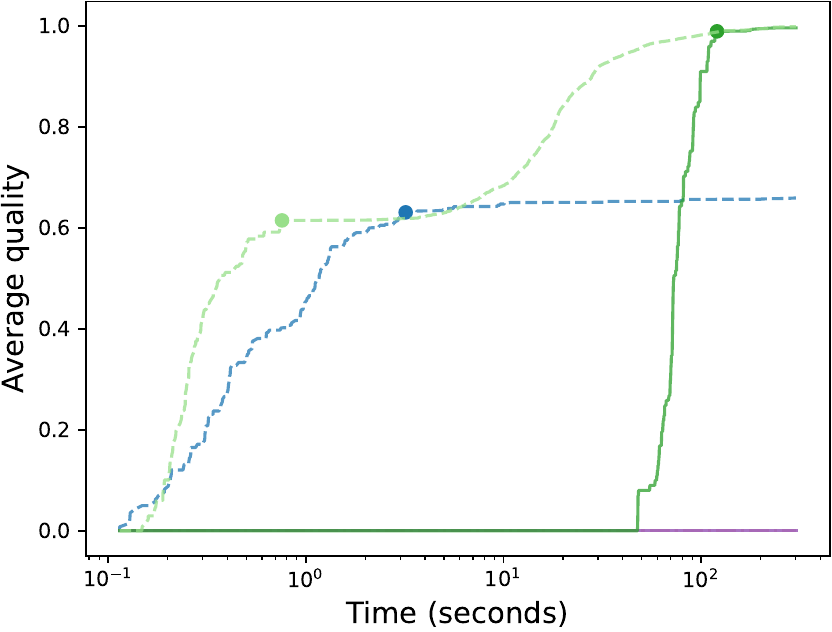}%
\centering%
\includegraphics[width=\plotwidth{}]{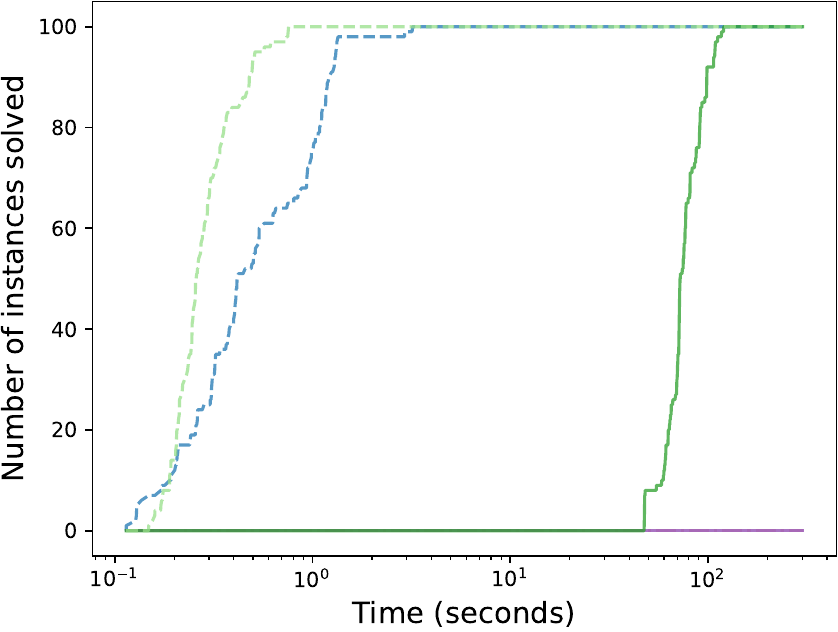}%
\centering%
\includegraphics[width=\plotwidth{}]{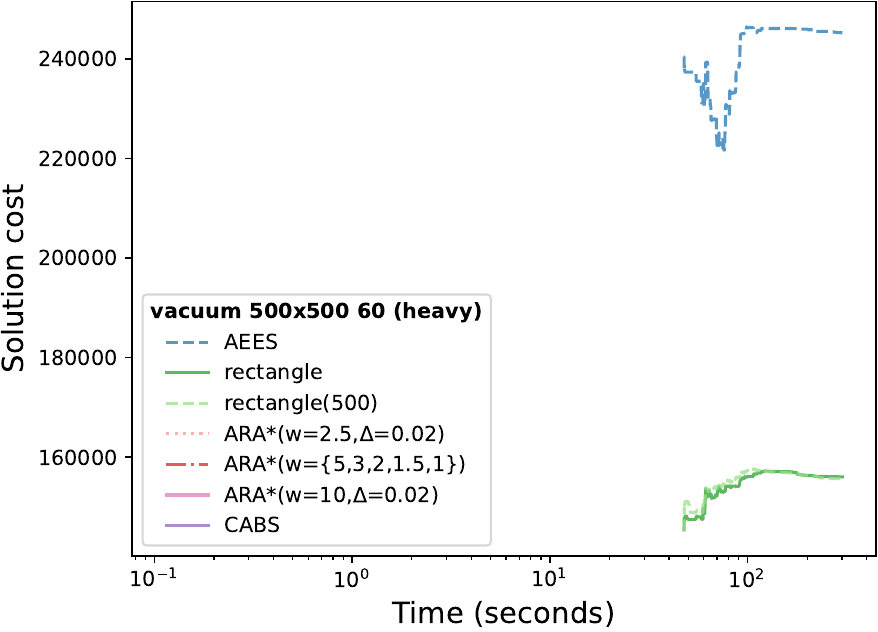}%
\caption{vacuum 500x500, 60 dirts (heavy cost)}%
\end{figure*}

\begin{figure*}[h!]%
\centering%
\includegraphics[width=\plotwidth{}]{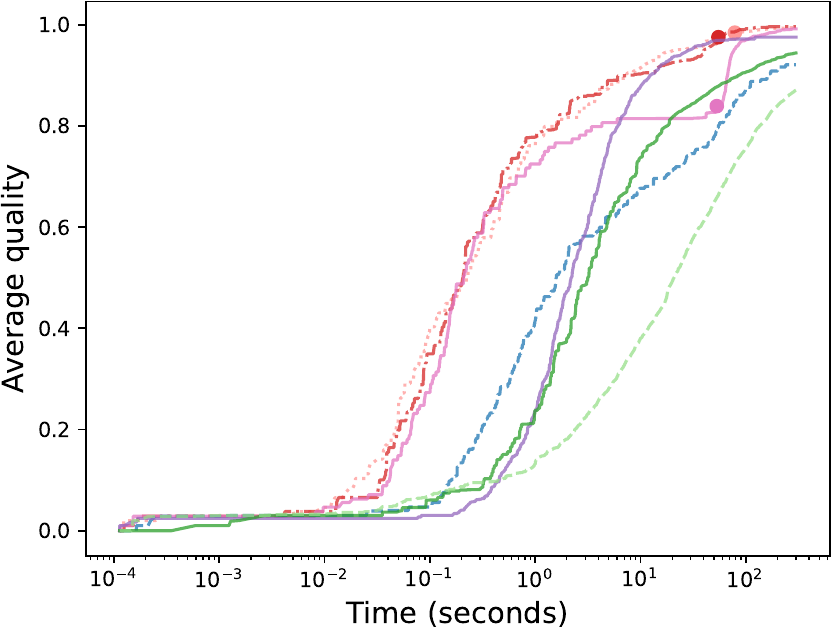}%
\centering%
\includegraphics[width=\plotwidth{}]{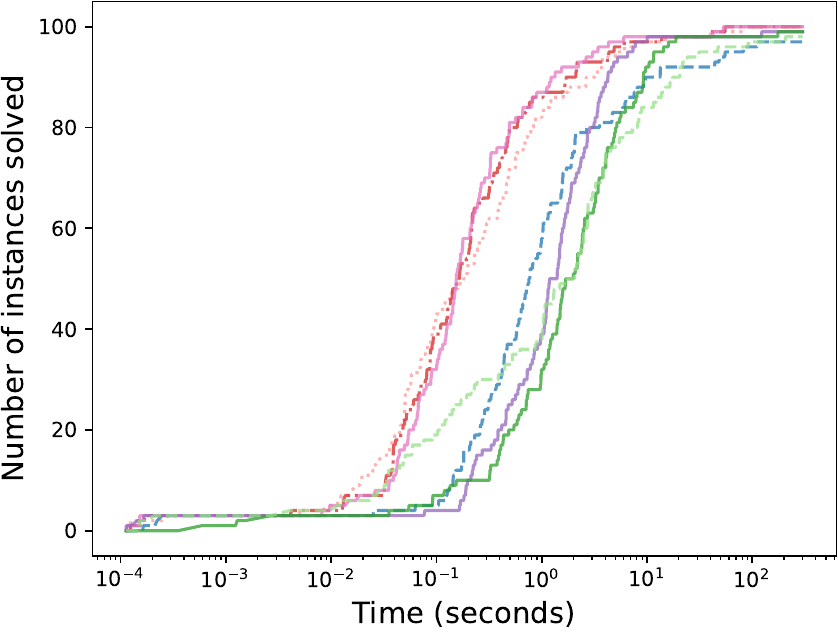}%
\centering%
\includegraphics[width=\plotwidth{}]{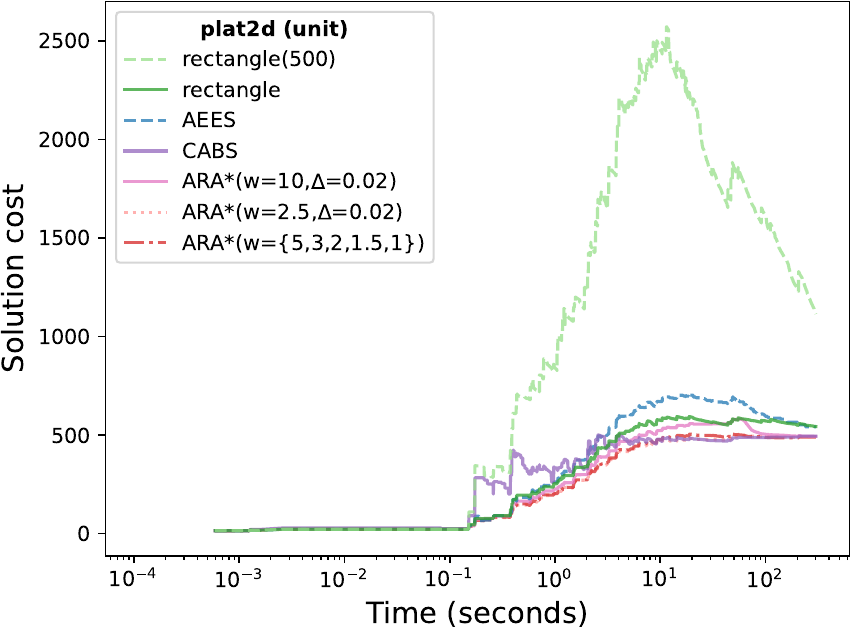}%
\caption{platform game (unit cost)}%
\end{figure*}

\begin{figure*}[h!]%
\centering%
\includegraphics[width=\plotwidth{}]{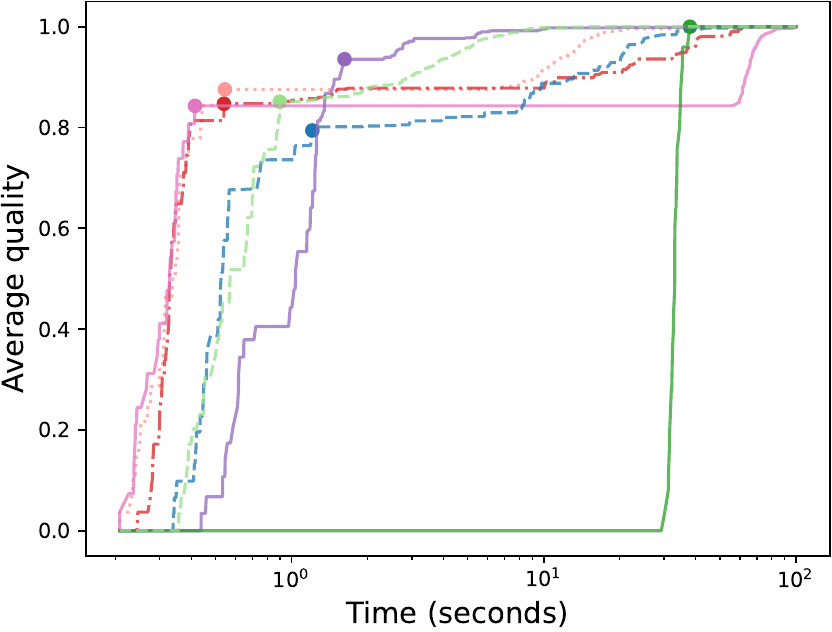}%
\centering%
\includegraphics[width=\plotwidth{}]{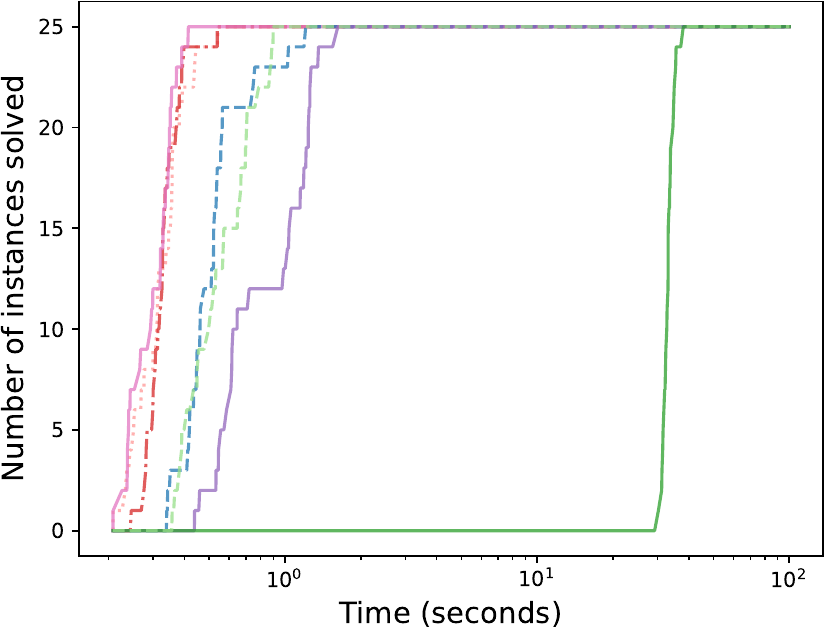}%
\centering%
\includegraphics[width=\plotwidth{}]{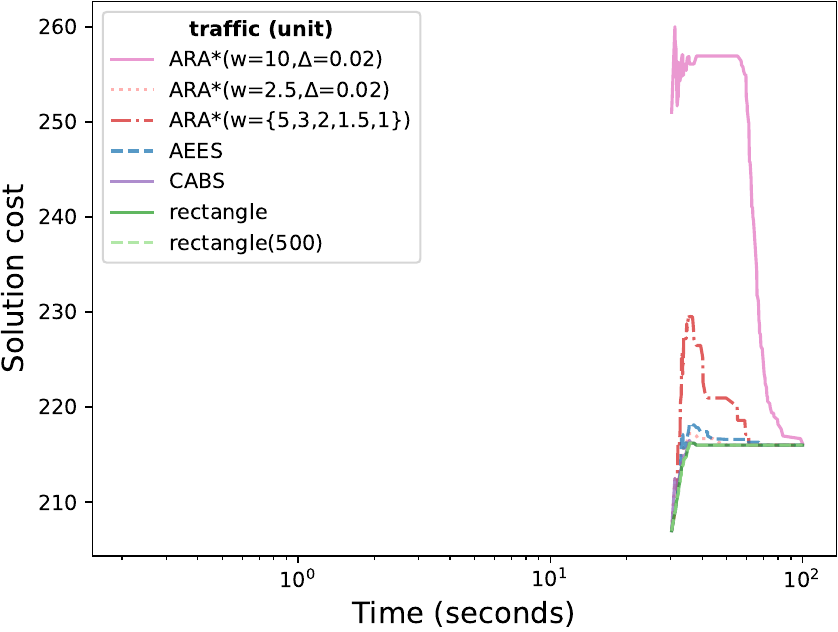}%
\caption{traffic (unit cost)}%
\end{figure*}

\begin{figure*}[h!]%
\centering%
\includegraphics[width=\plotwidth{}]{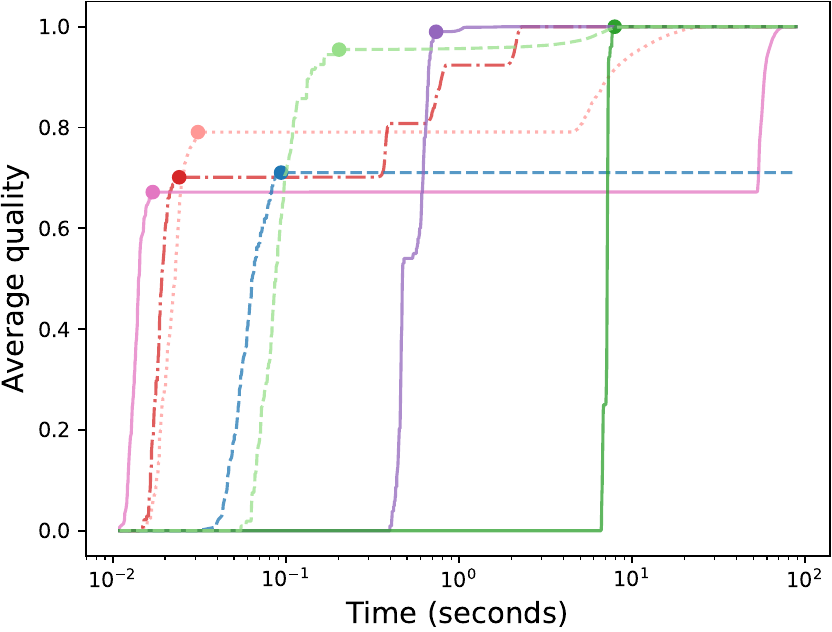}%
\centering%
\includegraphics[width=\plotwidth{}]{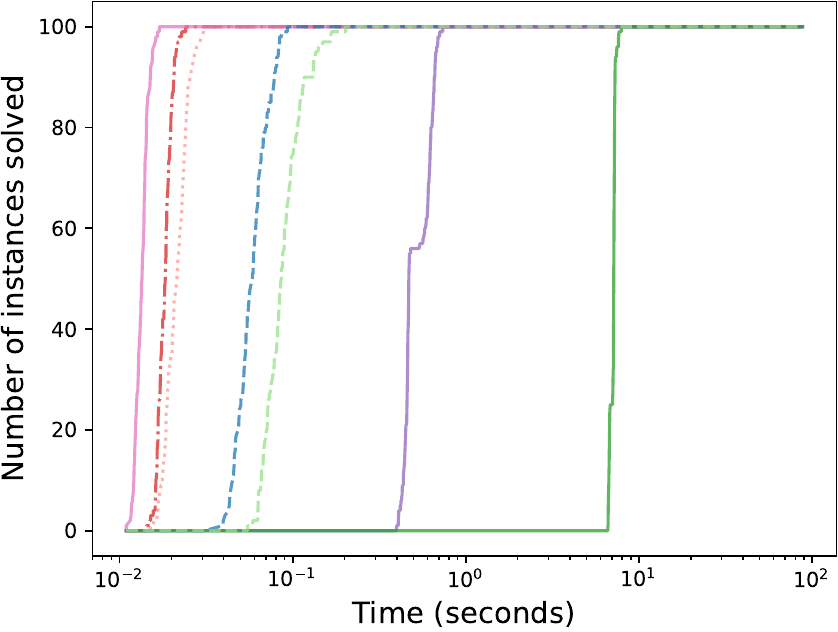}%
\centering%
\includegraphics[width=\plotwidth{}]{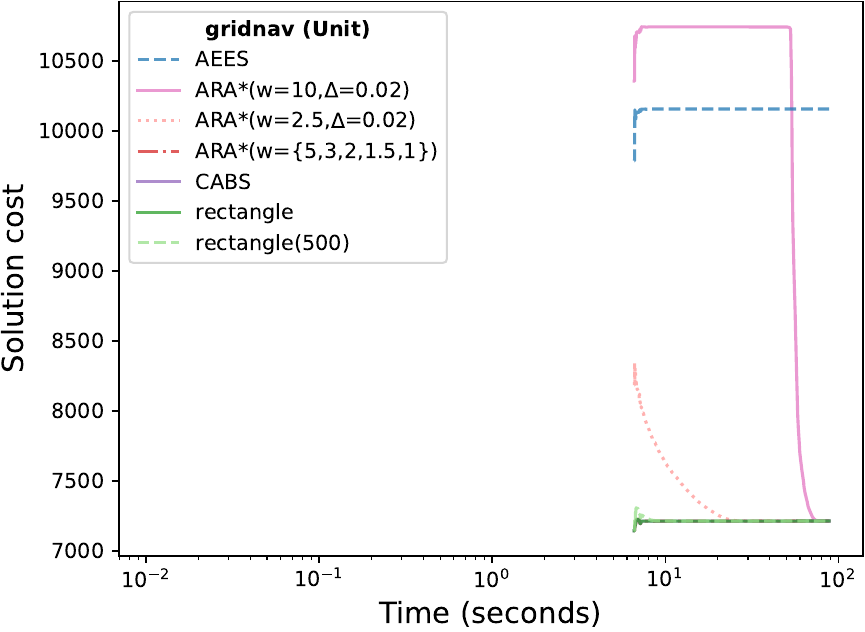}%
\caption{random grids (unit cost)}%
\end{figure*}

\begin{figure*}[h!]%
\centering%
\includegraphics[width=\plotwidth{}]{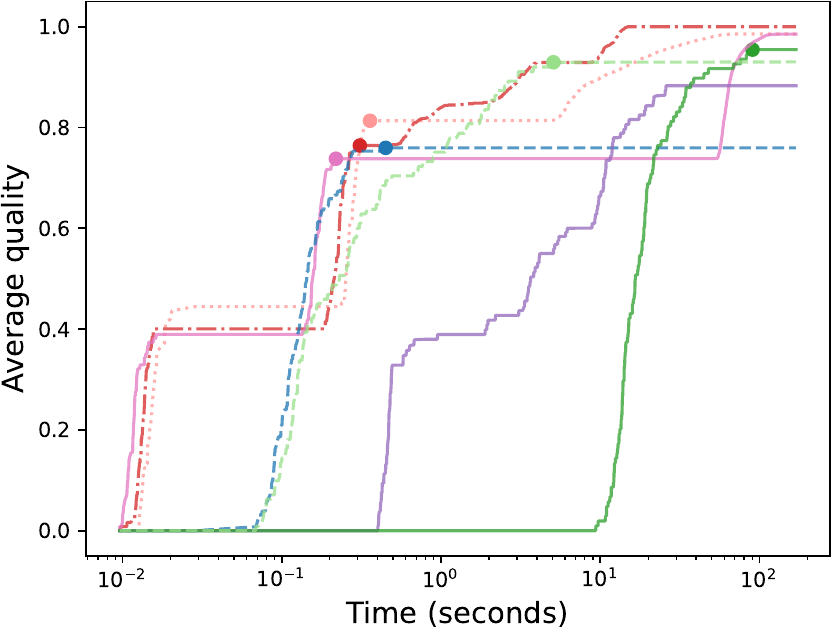}%
\centering%
\includegraphics[width=\plotwidth{}]{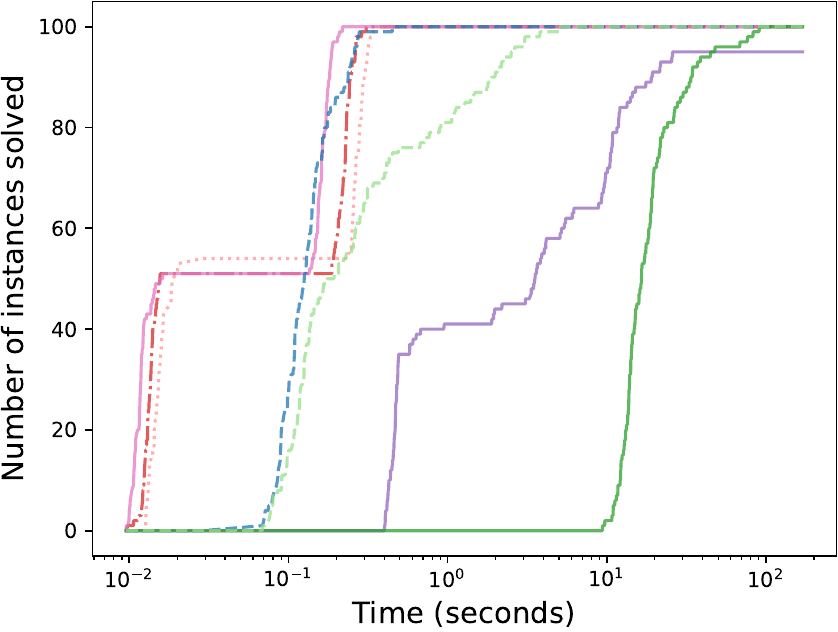}%
\centering%
\includegraphics[width=\plotwidth{}]{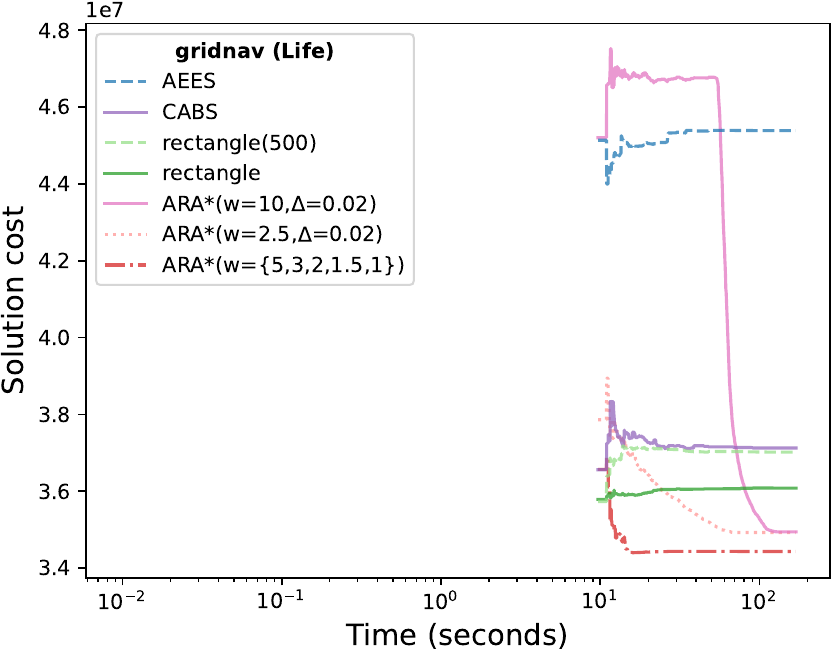}%
\caption{random grids (life cost)}%
\end{figure*}

\begin{figure*}[h!]%
\centering%
\includegraphics[width=\plotwidth{}]{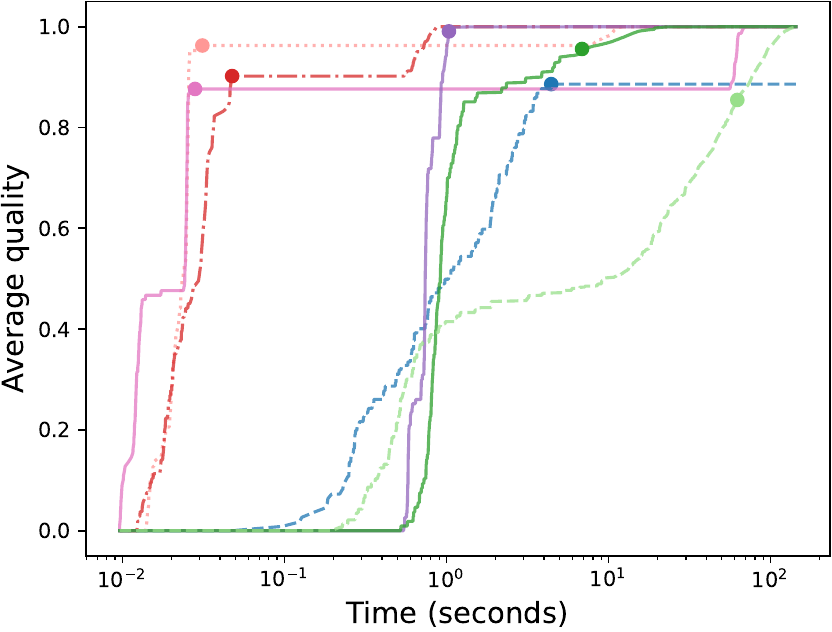}%
\centering%
\includegraphics[width=\plotwidth{}]{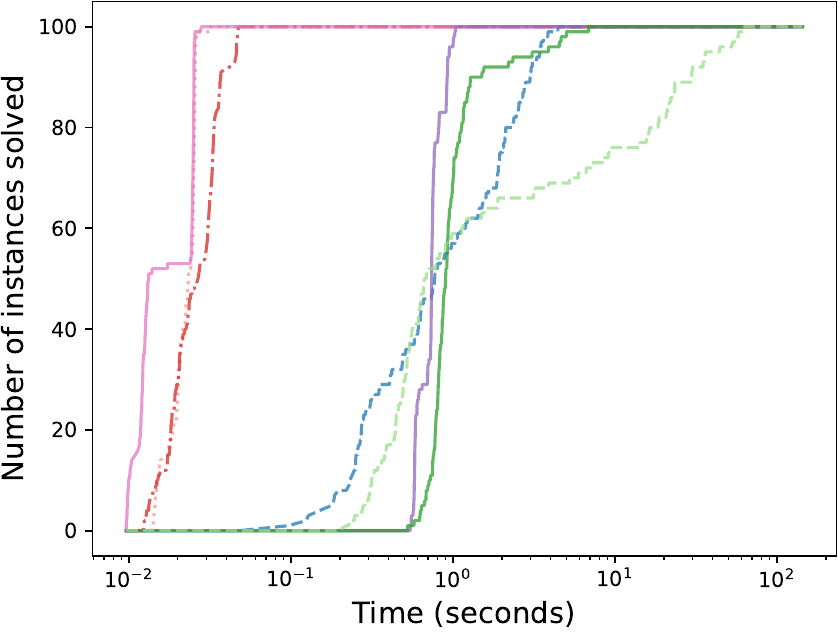}%
\centering%
\includegraphics[width=\plotwidth{}]{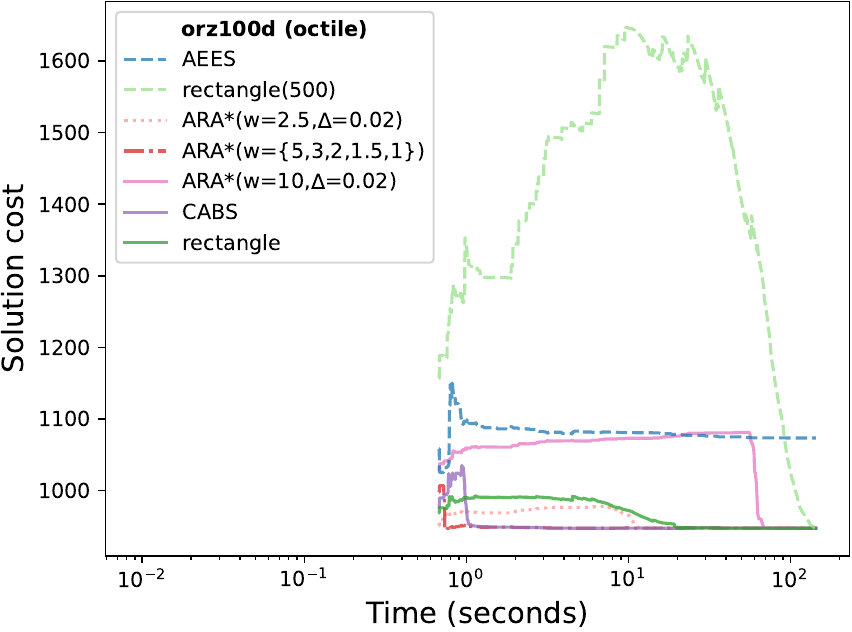}%
\caption{orz100d (octile cost)}%
\end{figure*}

\begin{figure*}[h!]%
\centering%
\includegraphics[width=\plotwidth{}]{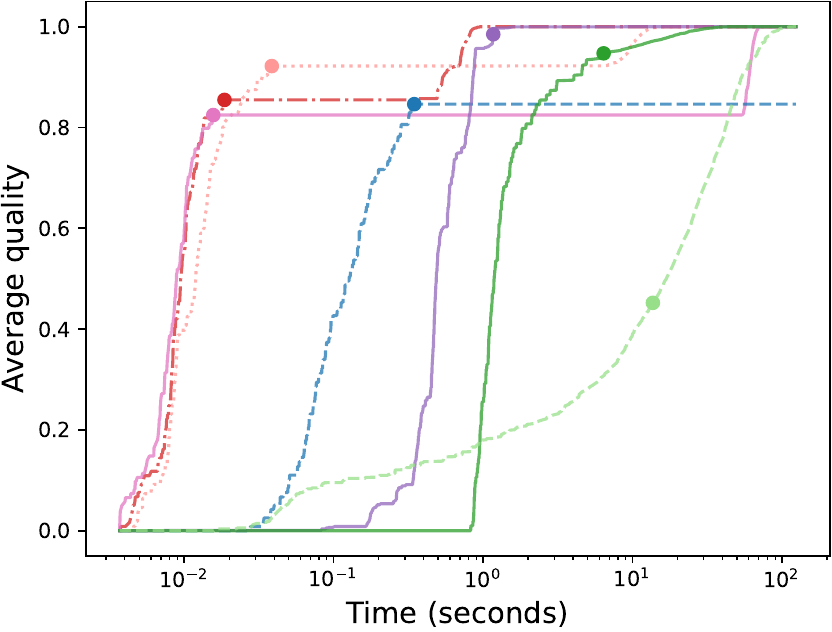}%
\centering%
\includegraphics[width=\plotwidth{}]{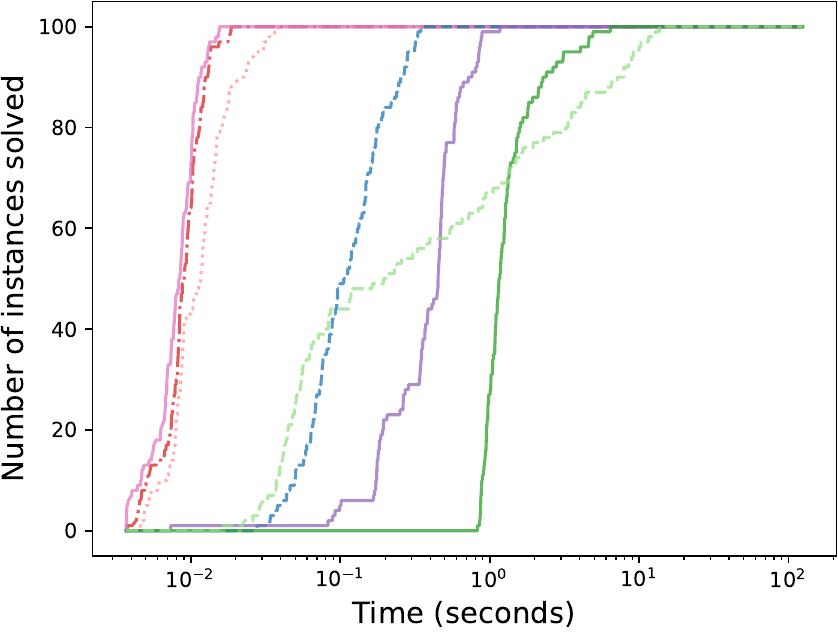}%
\centering%
\includegraphics[width=\plotwidth{}]{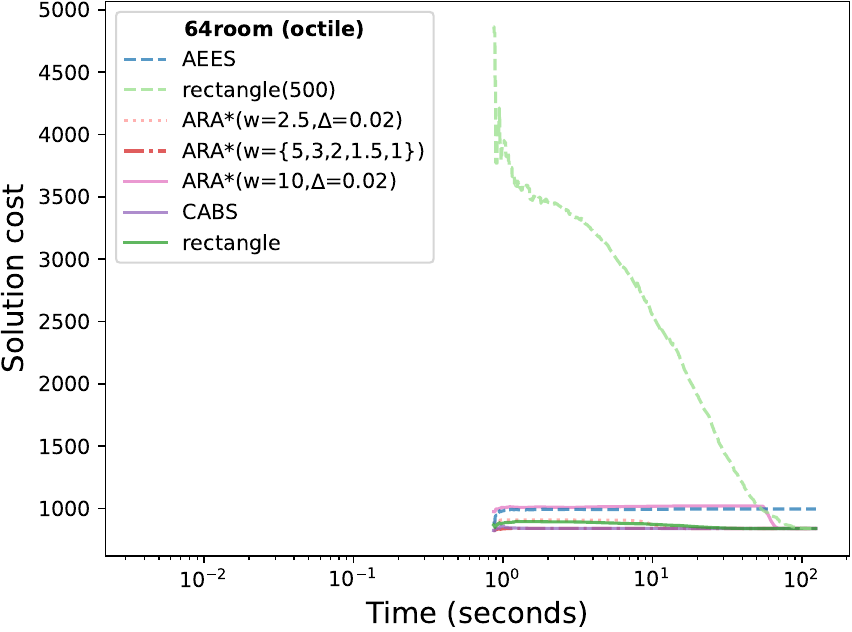}%
\caption{64 room (octile cost)}%
\end{figure*}

\clearpage

\subsection{Depth-first-style Algorithms} \label{sec:dfs}

\begin{figure*}[h!]%
\centering%
\includegraphics[width=\plotwidth{}]{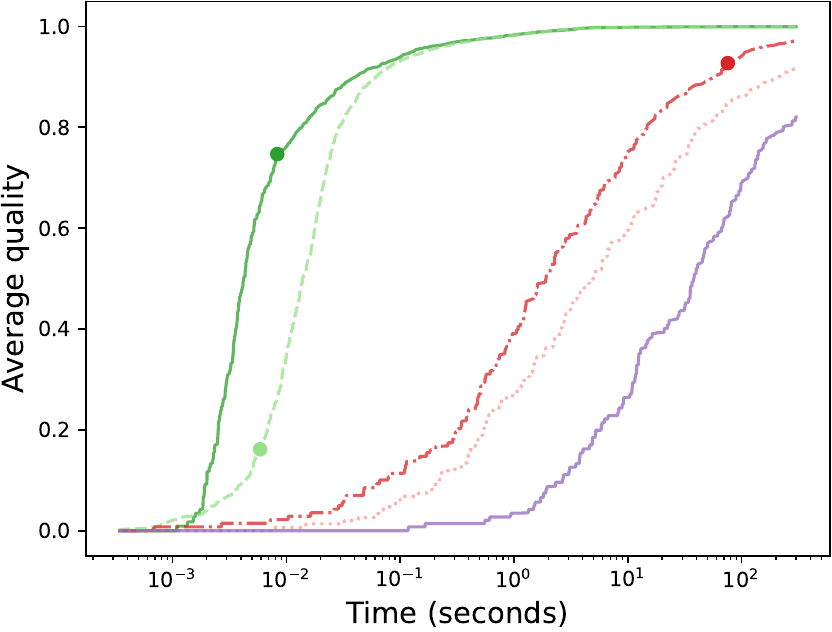}%
\centering%
\includegraphics[width=\plotwidth{}]{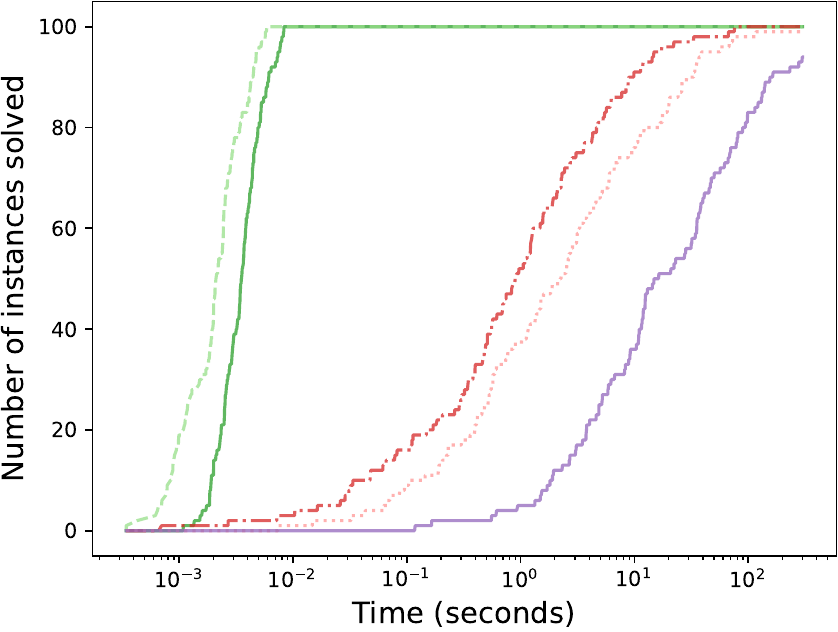}%
\centering%
\includegraphics[width=\plotwidth{}]{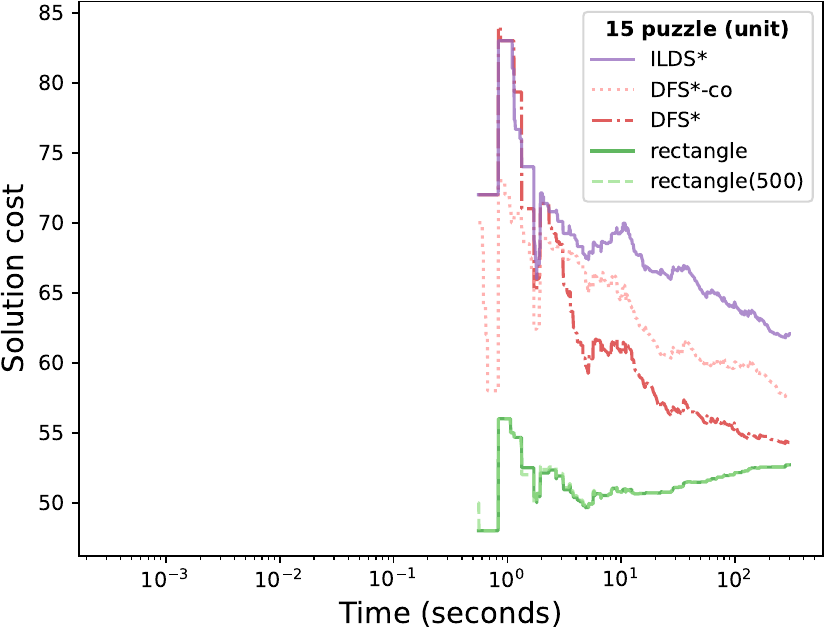}%
\caption{15 puzzle (unit cost)}%
\end{figure*}

\begin{figure*}[h!]%
\centering%
\includegraphics[width=\plotwidth{}]{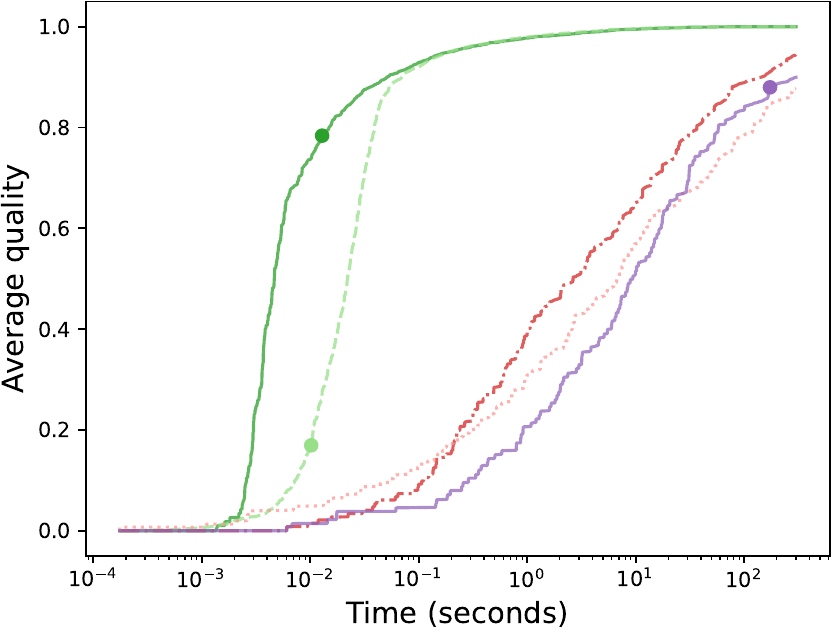}%
\centering%
\includegraphics[width=\plotwidth{}]{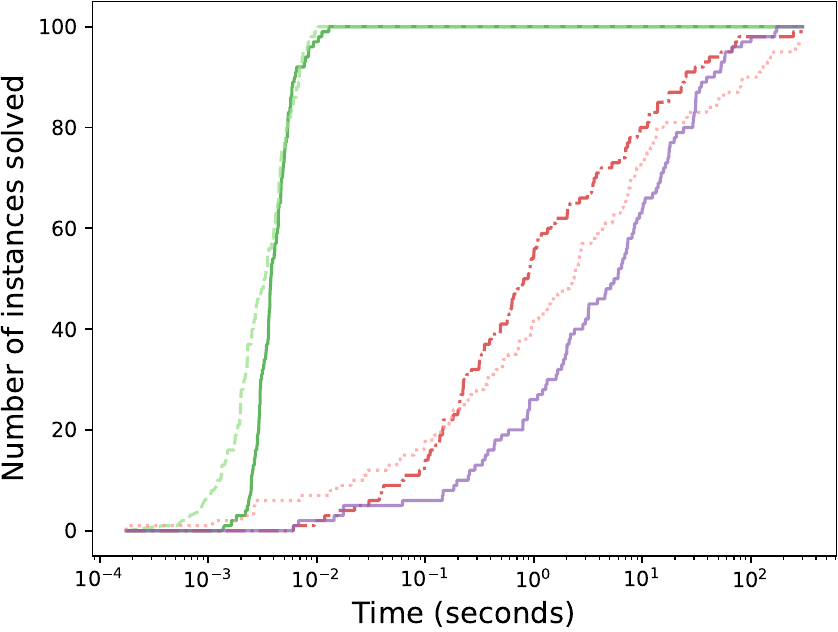}%
\centering%
\includegraphics[width=\plotwidth{}]{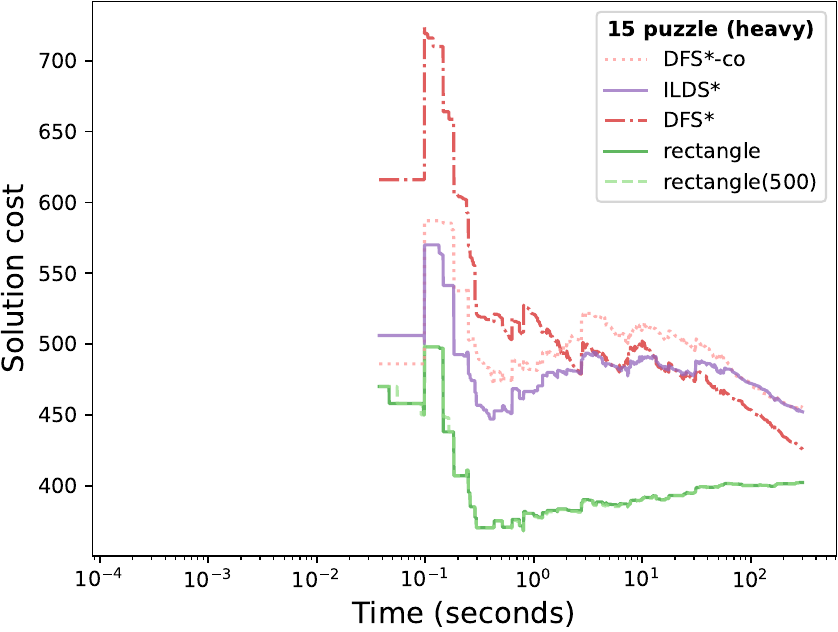}%
\caption{15 puzzle (heavy cost)}%
\end{figure*}

\begin{figure*}[h!]%
\centering%
\includegraphics[width=\plotwidth{}]{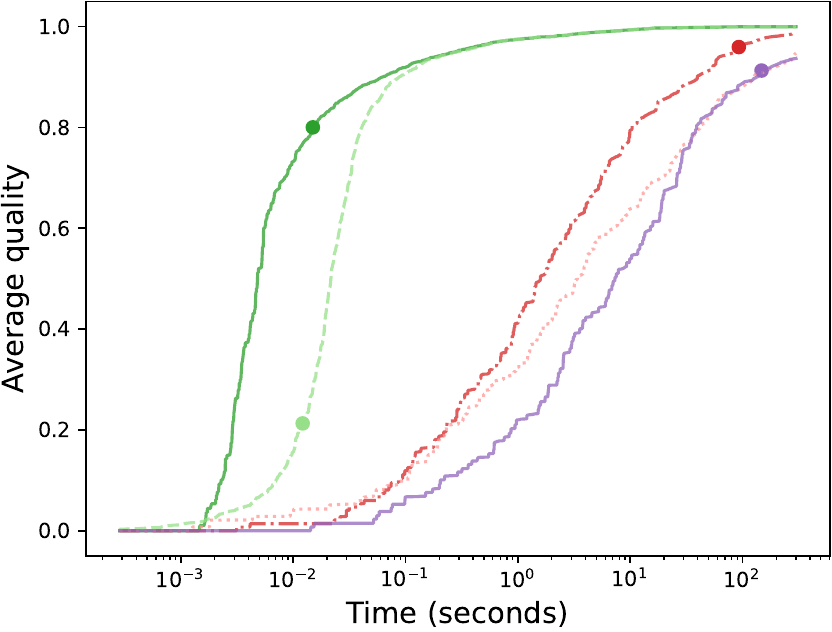}%
\centering%
\includegraphics[width=\plotwidth{}]{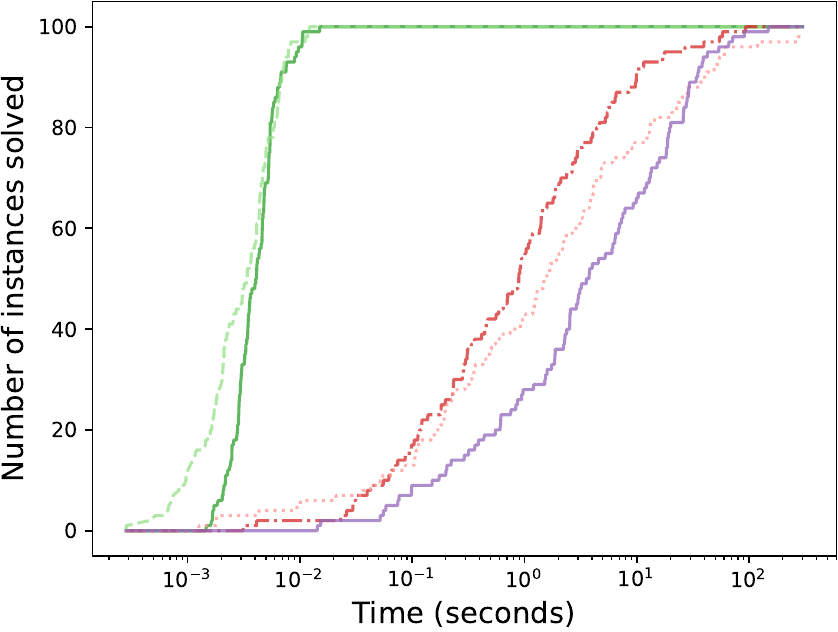}%
\centering%
\includegraphics[width=\plotwidth{}]{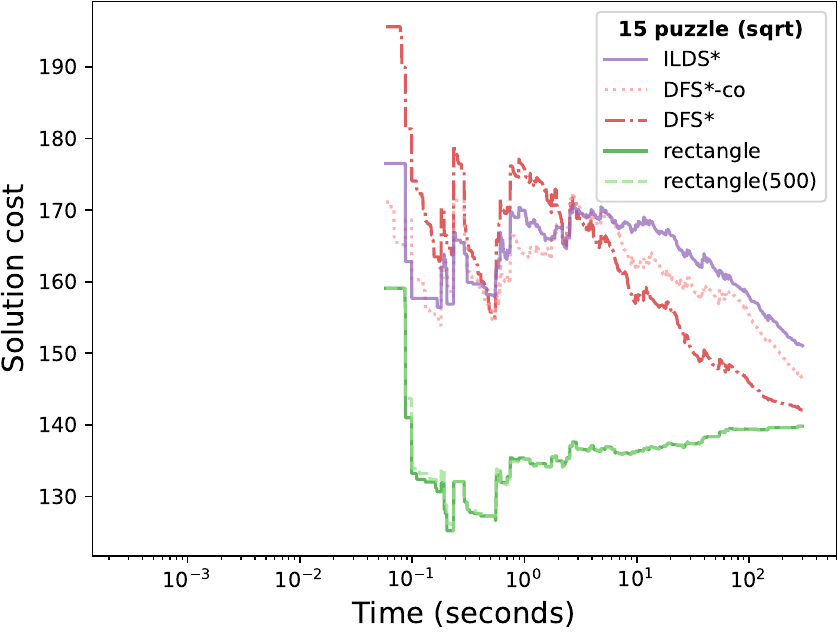}%
\caption{15 puzzle (sqrt cost)}%
\end{figure*}

\begin{figure*}[h!]%
\centering%
\includegraphics[width=\plotwidth{}]{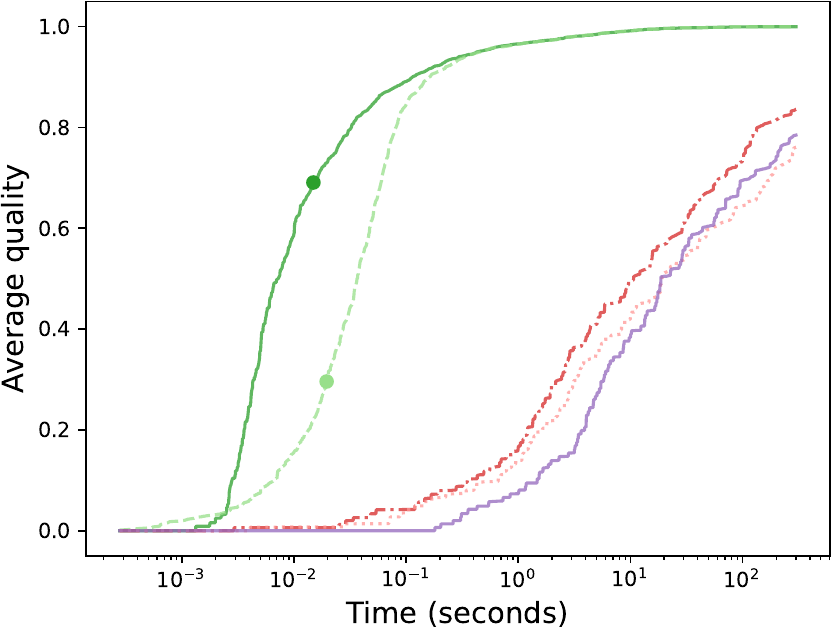}%
\centering%
\includegraphics[width=\plotwidth{}]{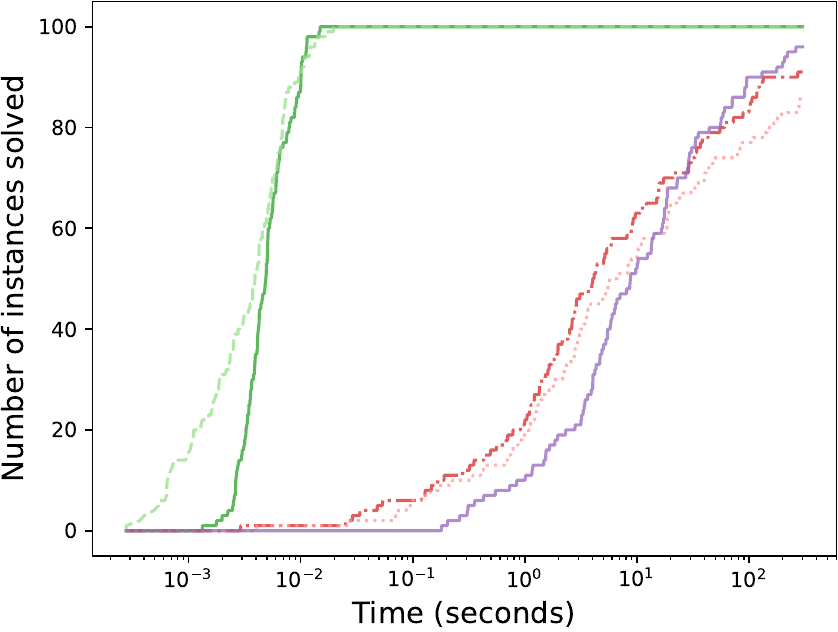}%
\centering%
\includegraphics[width=\plotwidth{}]{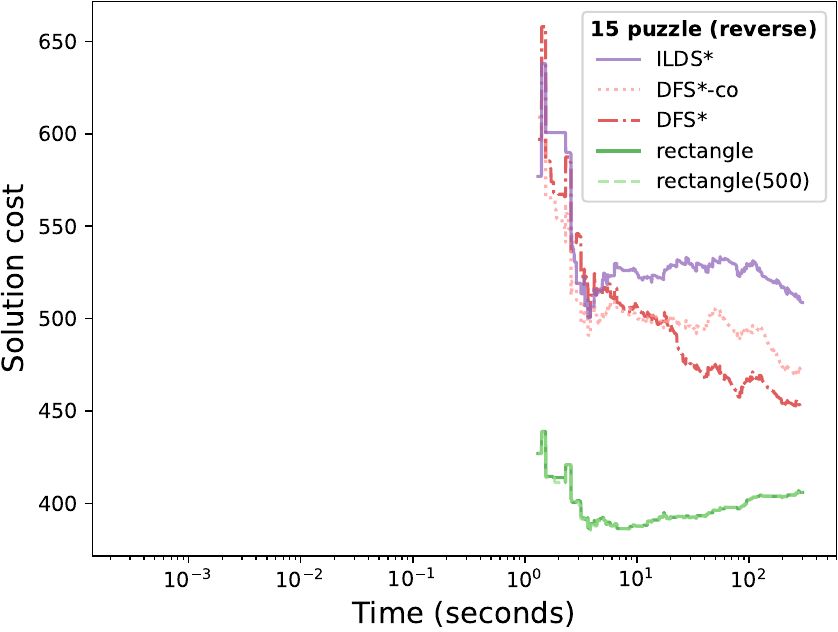}%
\caption{15 puzzle (reverse cost)}%
\end{figure*}

\begin{figure*}[h!]%
\centering%
\includegraphics[width=\plotwidth{}]{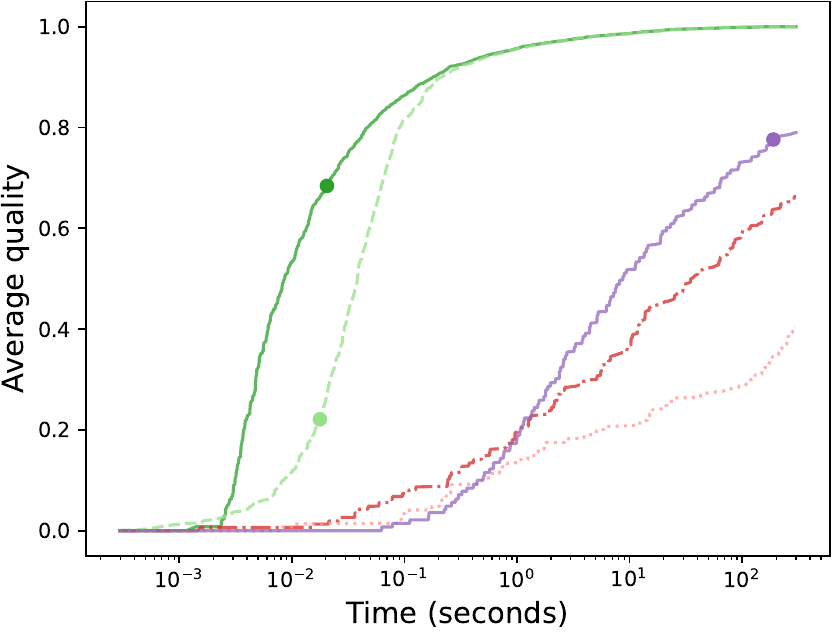}%
\centering%
\includegraphics[width=\plotwidth{}]{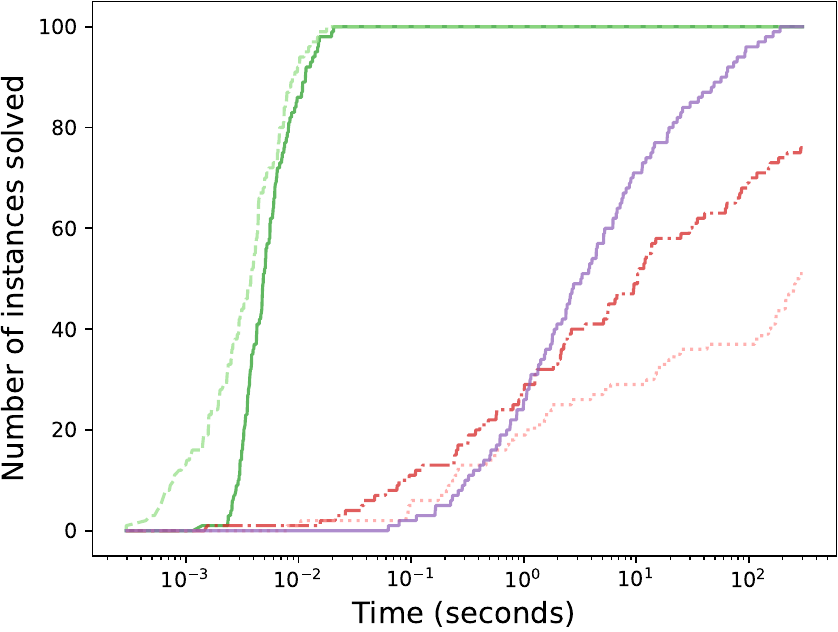}%
\centering%
\includegraphics[width=\plotwidth{}]{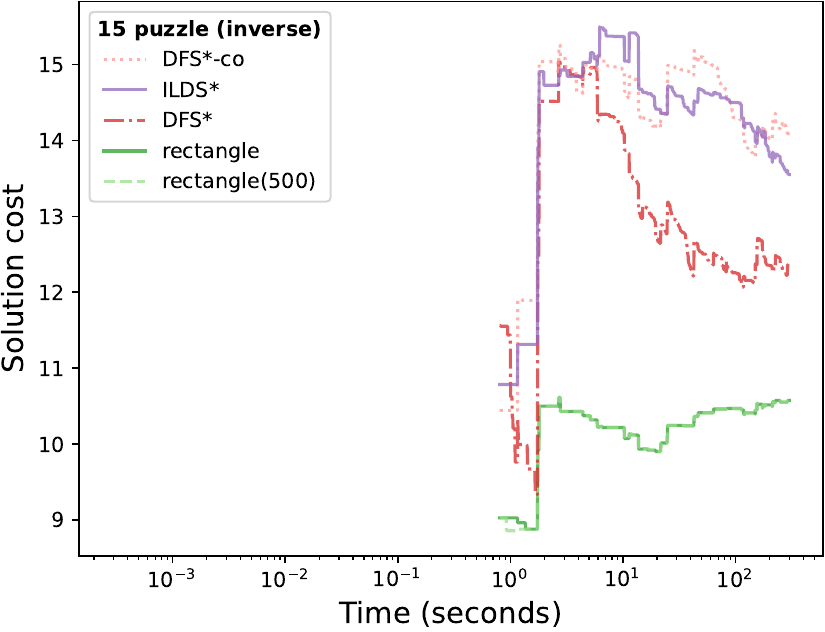}%
\caption{15 puzzle (inverse cost)}%
\end{figure*}

\begin{figure*}[h!]%
\centering%
\includegraphics[width=\plotwidth{}]{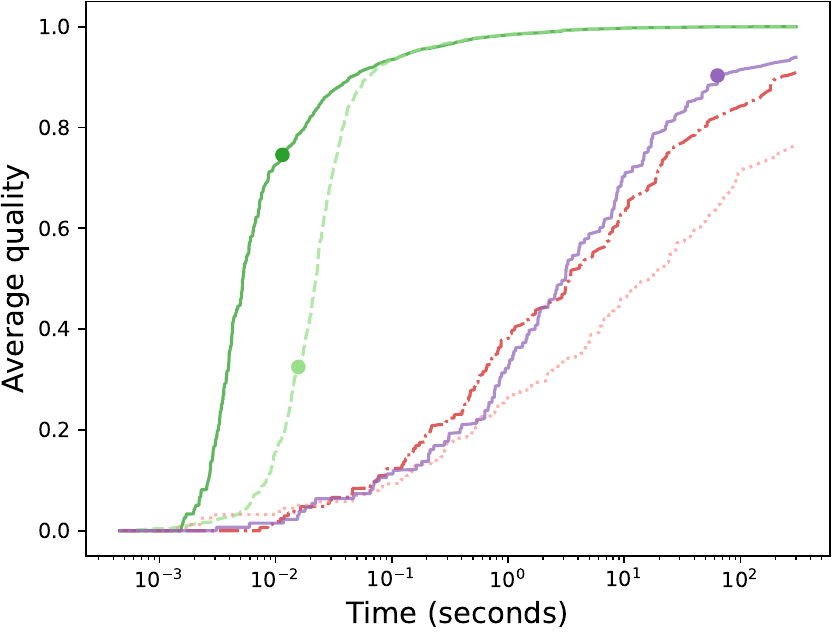}%
\centering%
\includegraphics[width=\plotwidth{}]{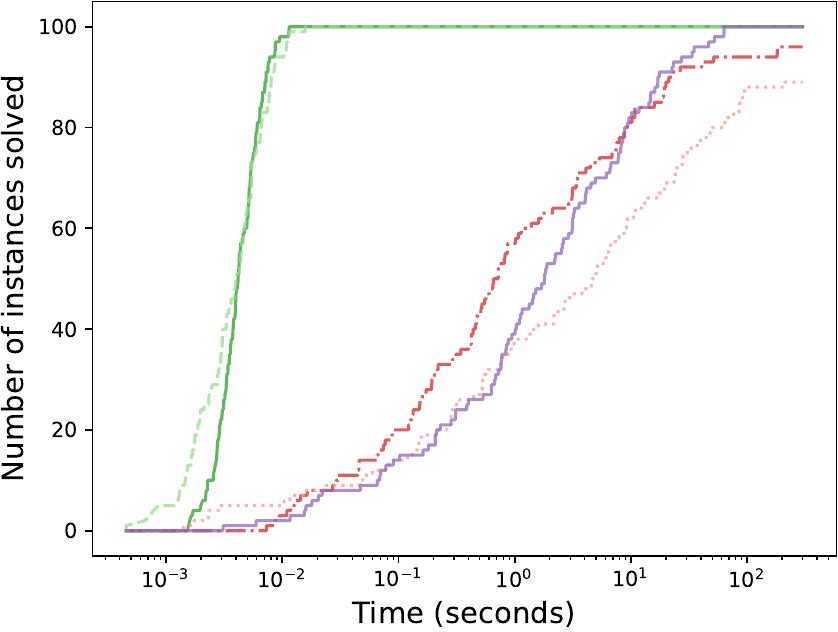}%
\centering%
\includegraphics[width=\plotwidth{}]{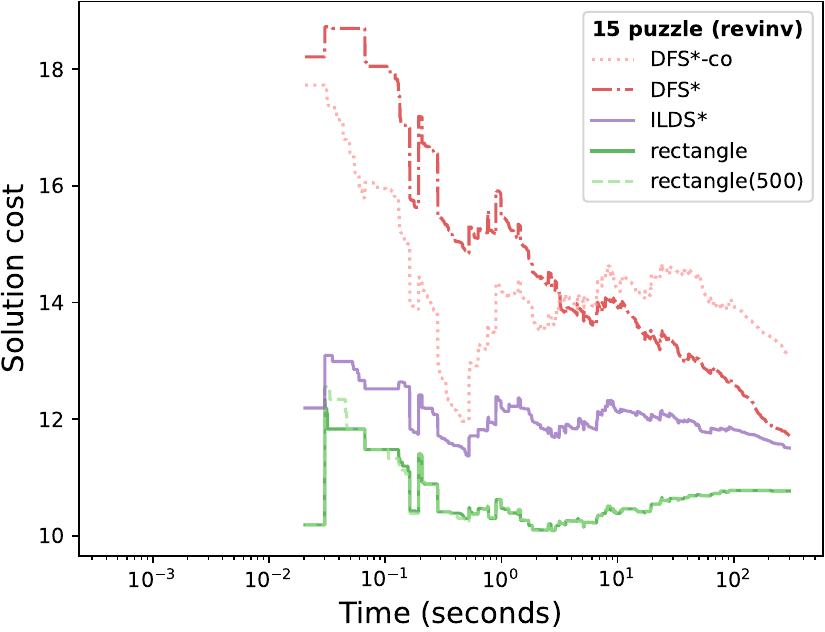}%
\caption{15 puzzle (reverse inverse cost)}%
\end{figure*}

\begin{figure*}[h!]%
\centering%
\includegraphics[width=\plotwidth{}]{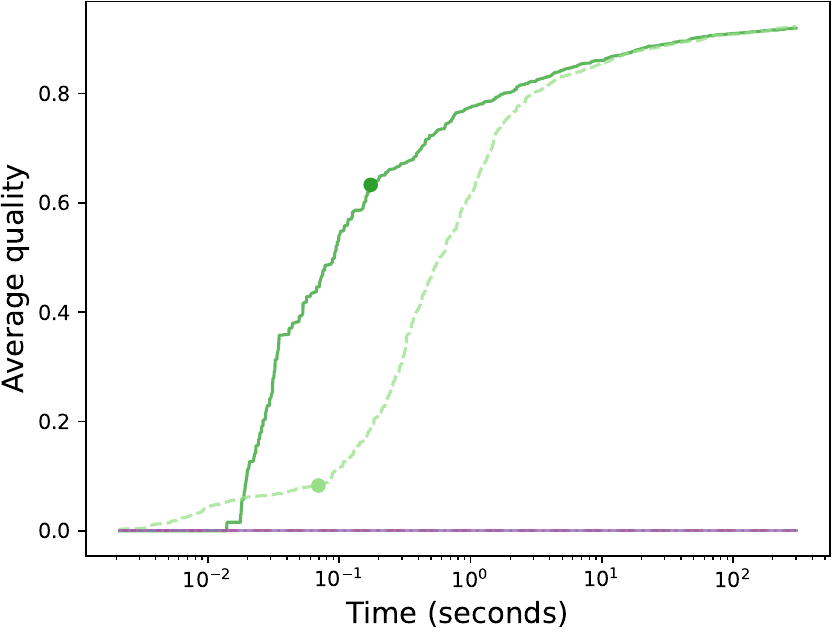}%
\centering%
\includegraphics[width=\plotwidth{}]{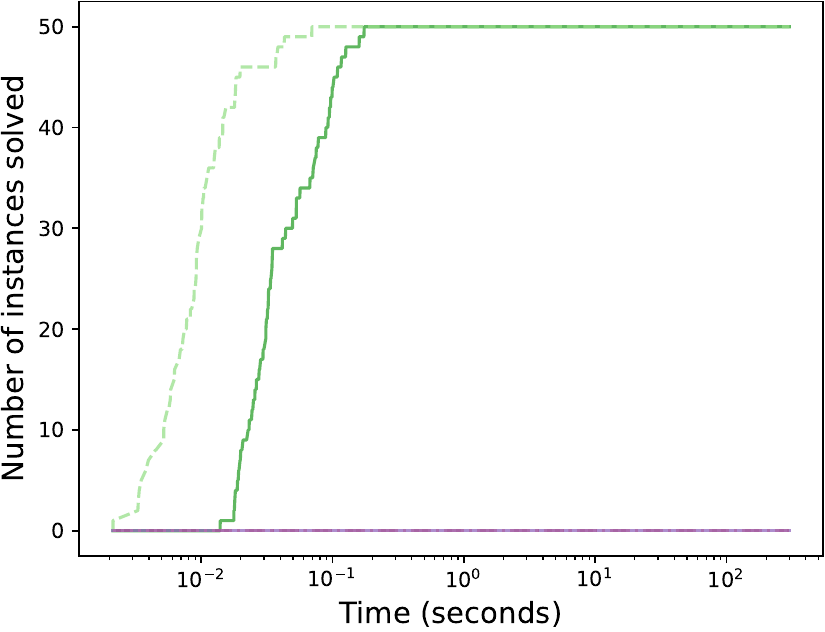}%
\centering%
\includegraphics[width=\plotwidth{}]{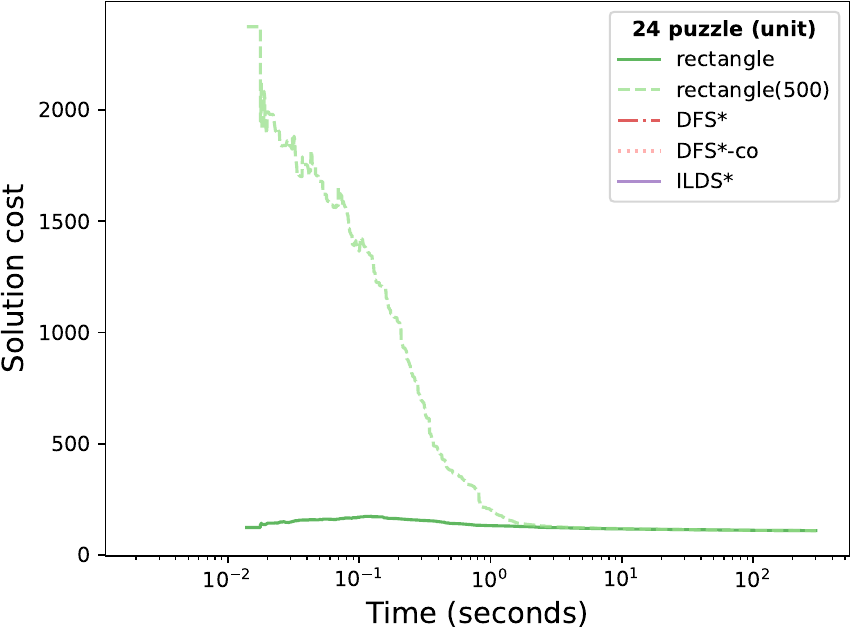}%
\caption{24 puzzle (unit cost)}%
\end{figure*}

\begin{figure*}[h!]%
\centering%
\includegraphics[width=\plotwidth{}]{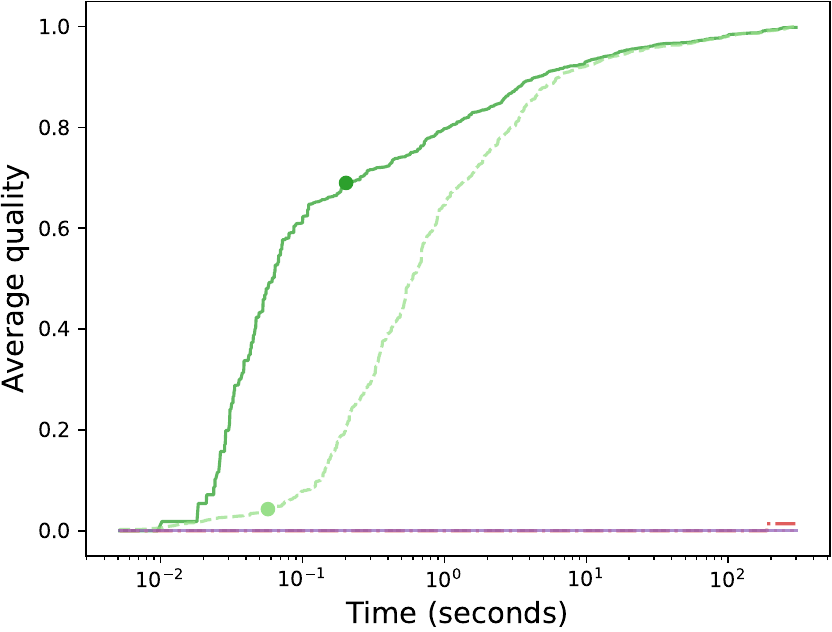}%
\centering%
\includegraphics[width=\plotwidth{}]{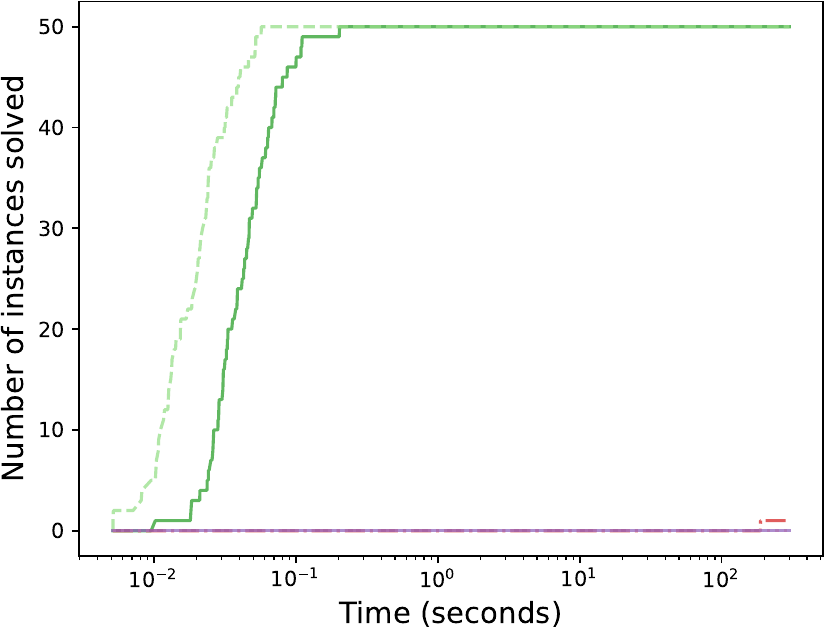}%
\centering%
\includegraphics[width=\plotwidth{}]{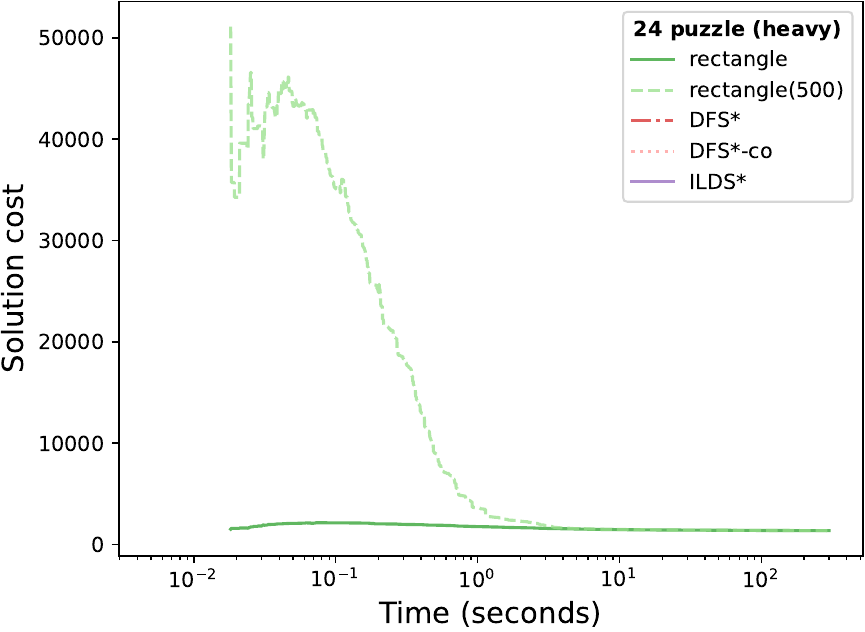}%
\caption{24 puzzle (heavy cost)}%
\end{figure*}

\begin{figure*}[h!]%
\centering%
\includegraphics[width=\plotwidth{}]{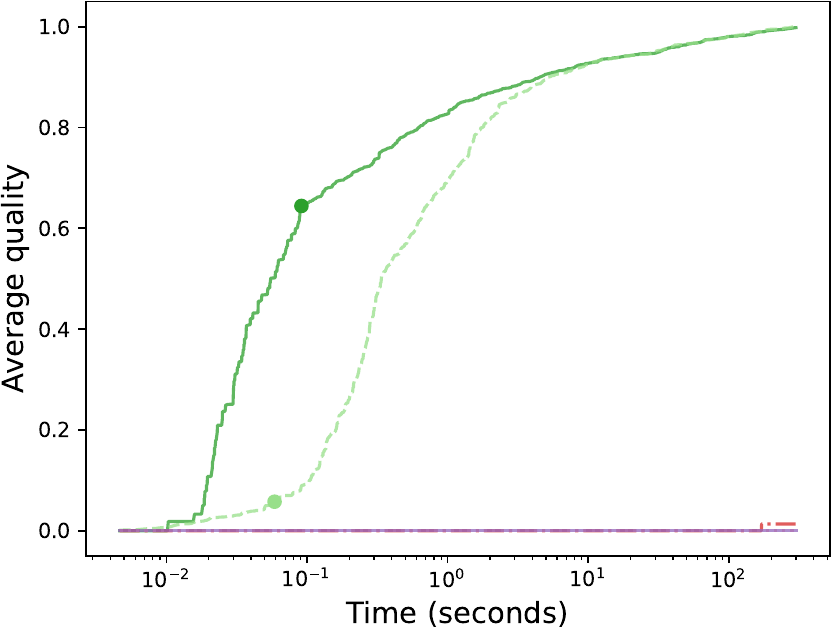}%
\centering%
\includegraphics[width=\plotwidth{}]{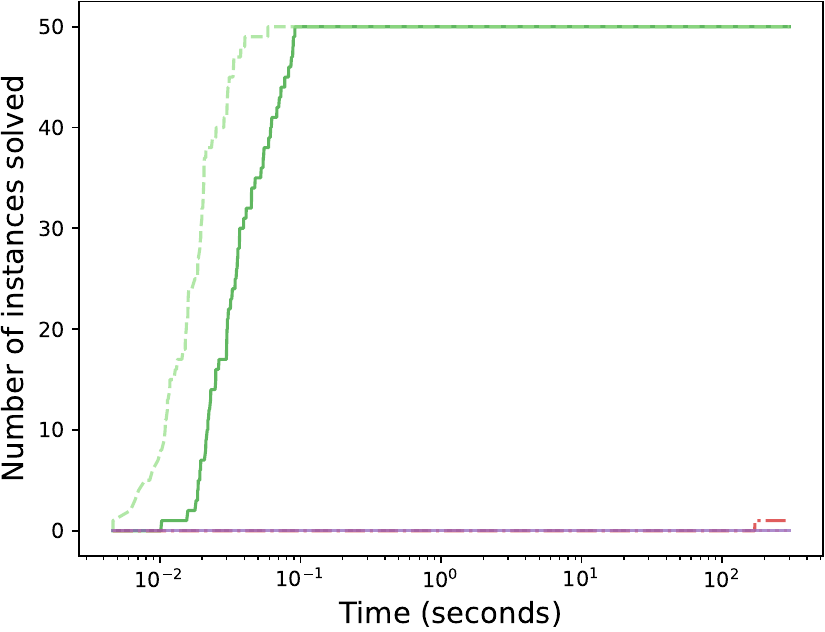}%
\centering%
\includegraphics[width=\plotwidth{}]{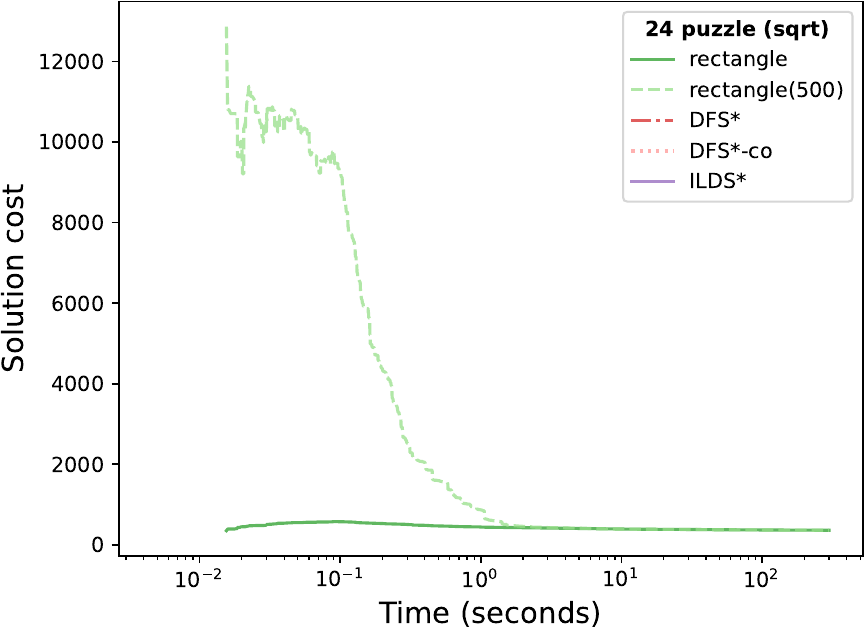}%
\caption{24 puzzle (sqrt cost)}%
\end{figure*}

\begin{figure*}[h!]%
\centering%
\includegraphics[width=\plotwidth{}]{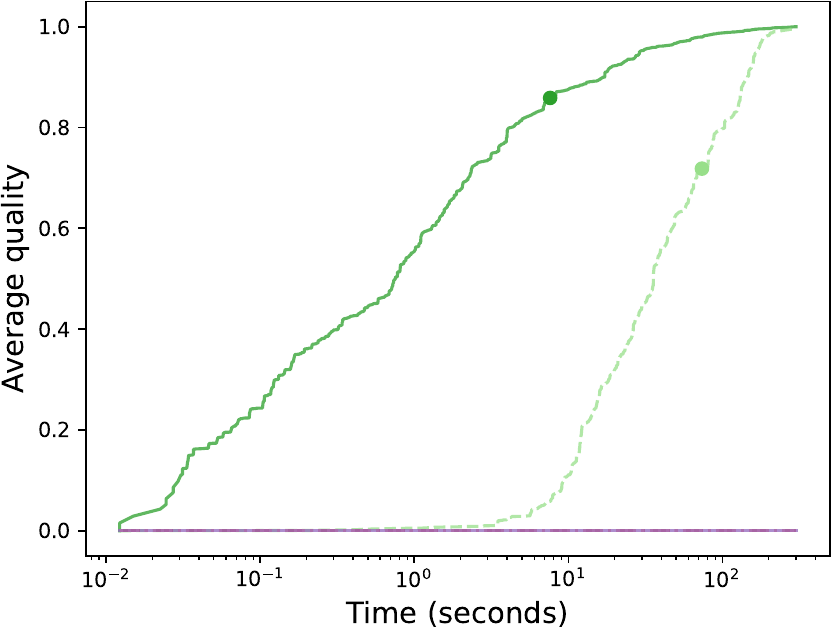}%
\centering%
\includegraphics[width=\plotwidth{}]{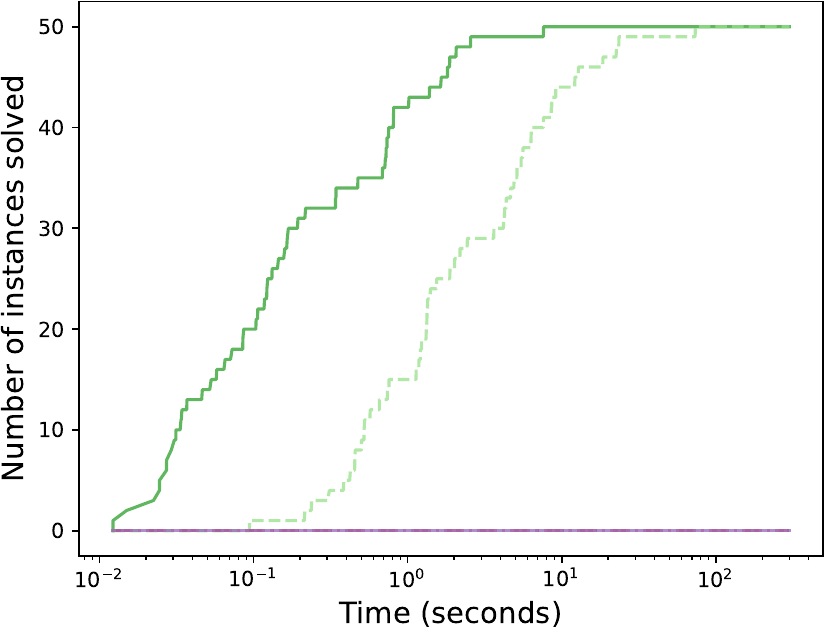}%
\centering%
\includegraphics[width=\plotwidth{}]{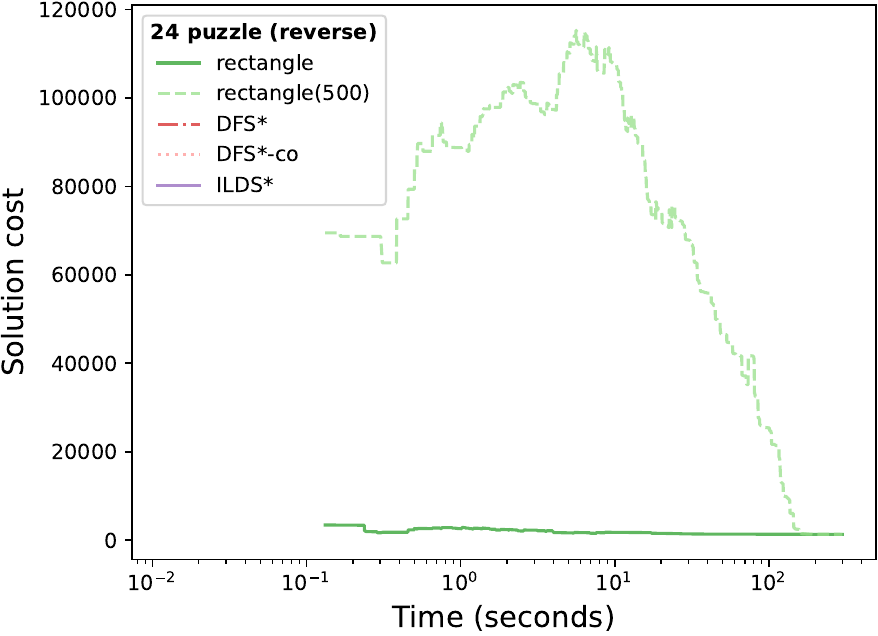}%
\caption{24 puzzle (reverse cost)}%
\end{figure*}

\begin{figure*}[h!]%
\centering%
\includegraphics[width=\plotwidth{}]{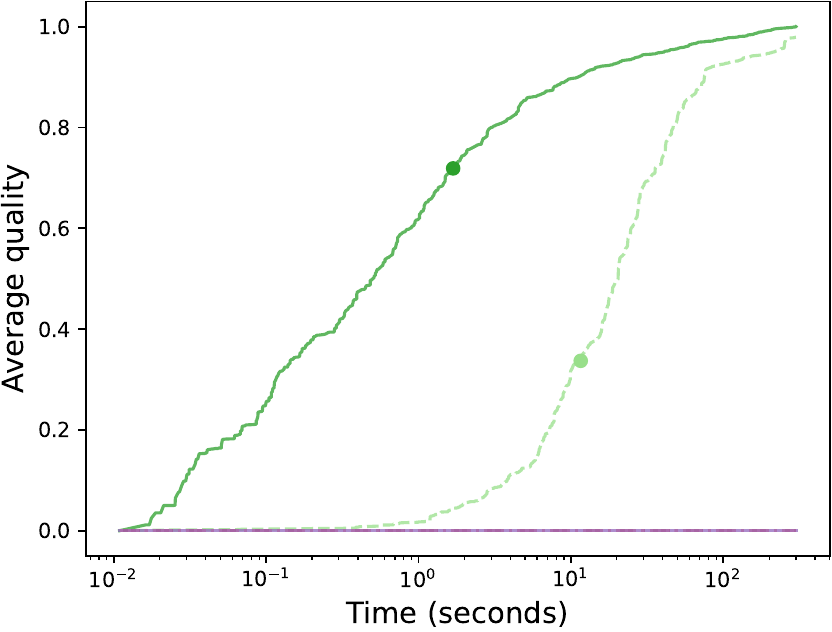}%
\centering%
\includegraphics[width=\plotwidth{}]{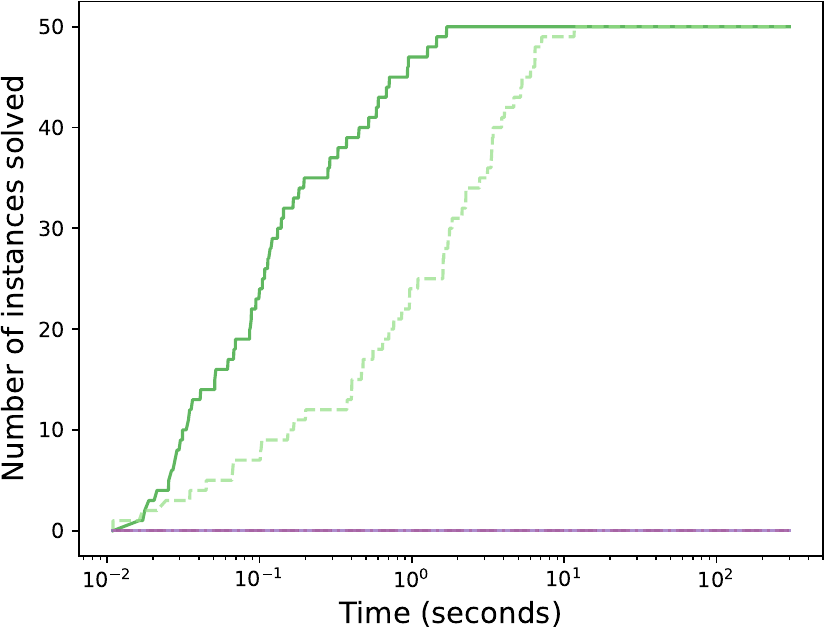}%
\centering%
\includegraphics[width=\plotwidth{}]{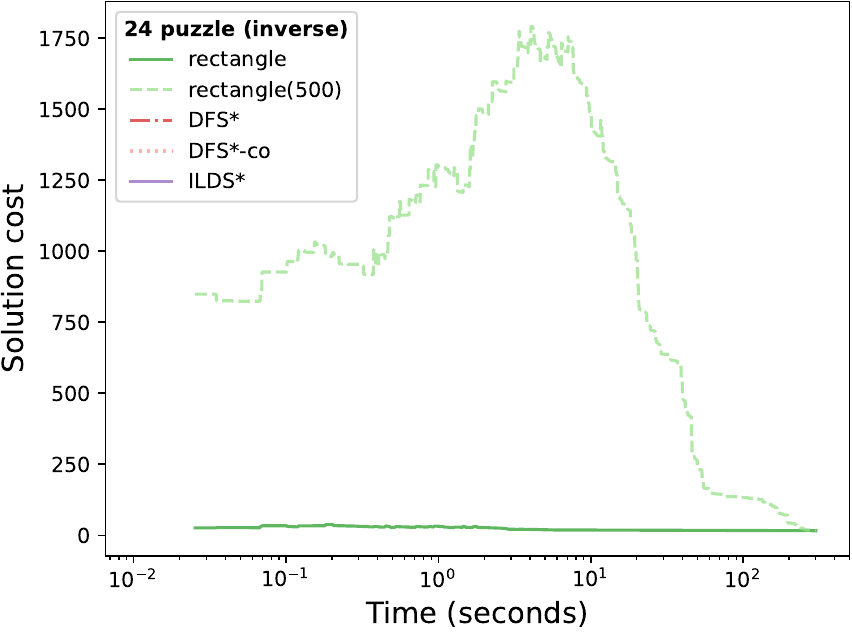}%
\caption{24 puzzle (inverse cost)}%
\end{figure*}

\begin{figure*}[h!]%
\centering%
\includegraphics[width=\plotwidth{}]{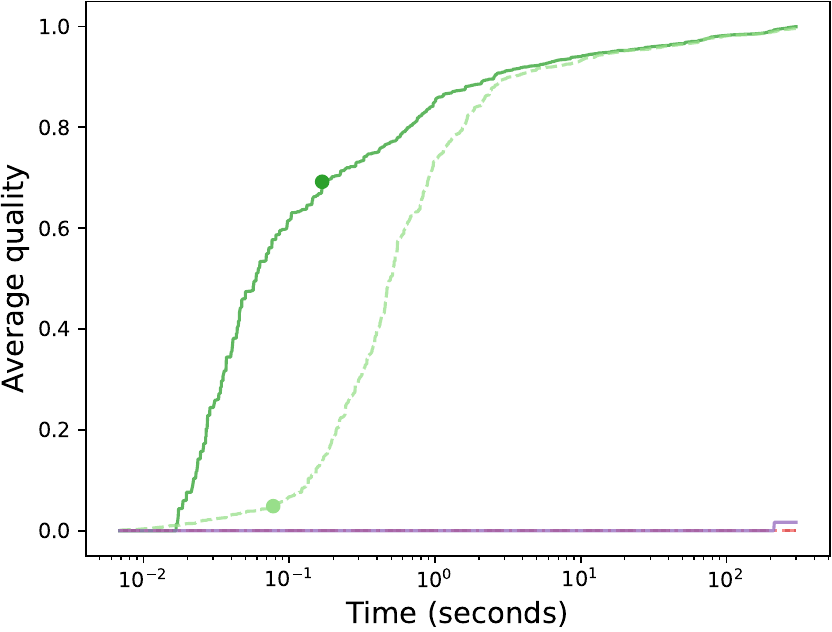}%
\centering%
\includegraphics[width=\plotwidth{}]{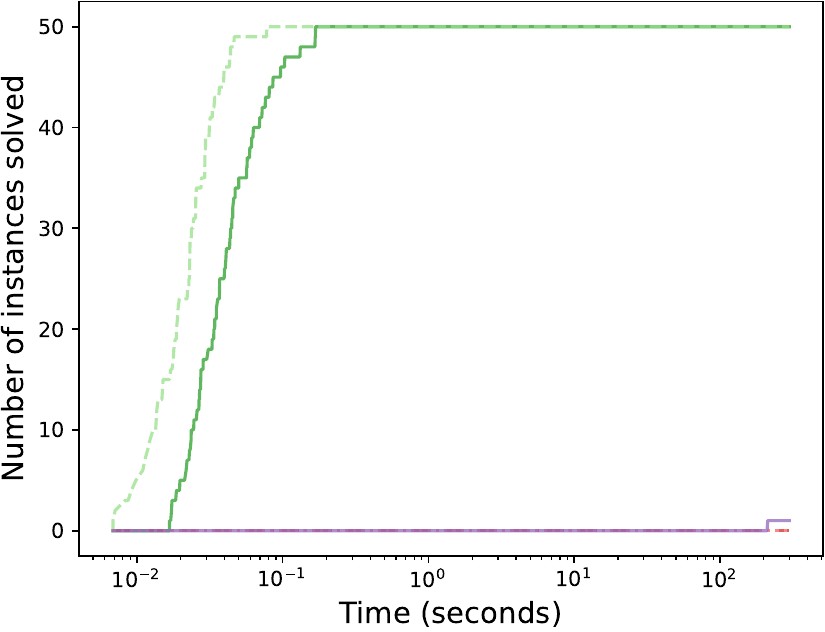}%
\centering%
\includegraphics[width=\plotwidth{}]{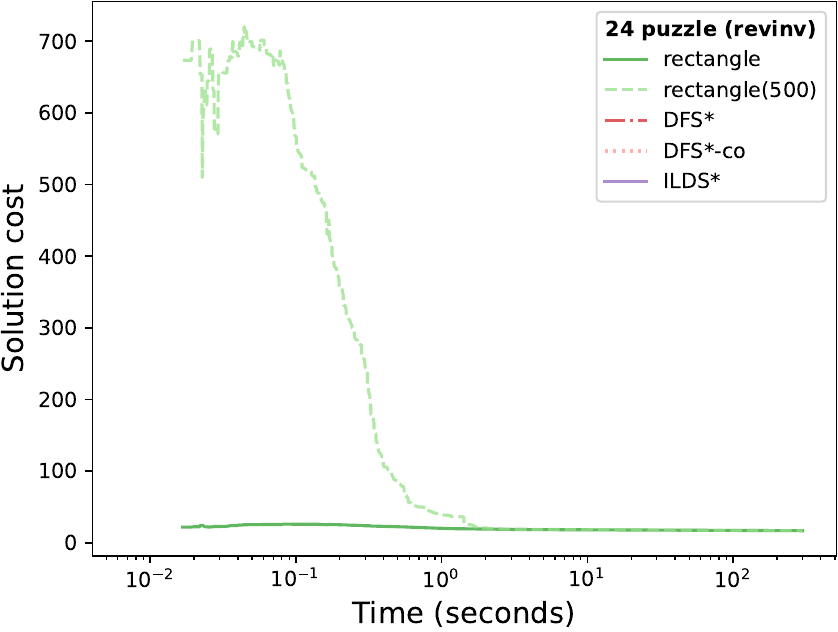}%
\caption{24 puzzle (reverse inverse cost)}%
\end{figure*}

\begin{figure*}[h!]%
\centering%
\includegraphics[width=\plotwidth{}]{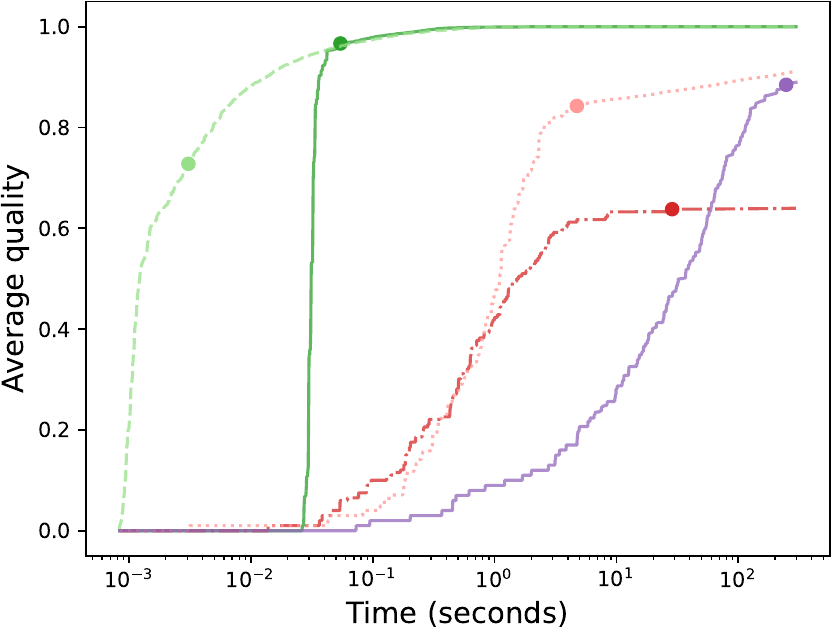}%
\centering%
\includegraphics[width=\plotwidth{}]{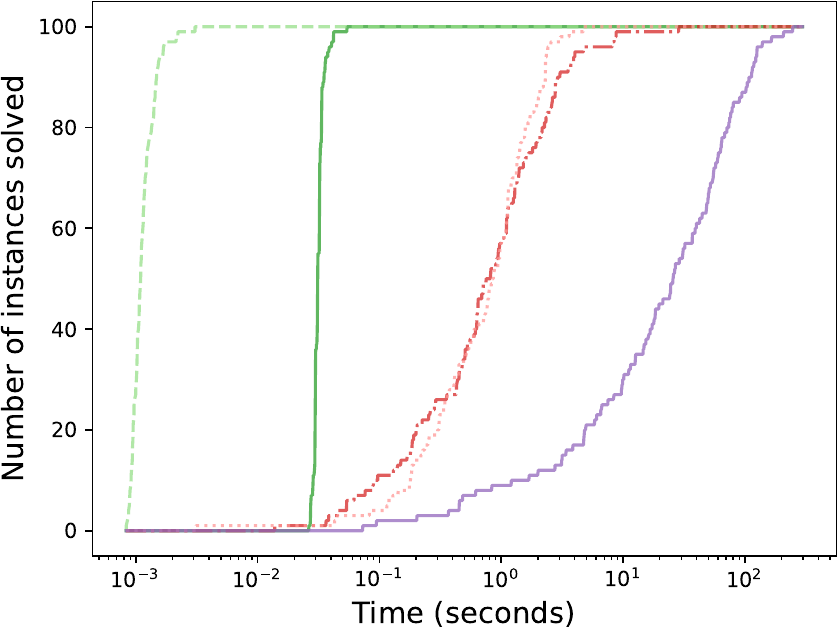}%
\centering%
\includegraphics[width=\plotwidth{}]{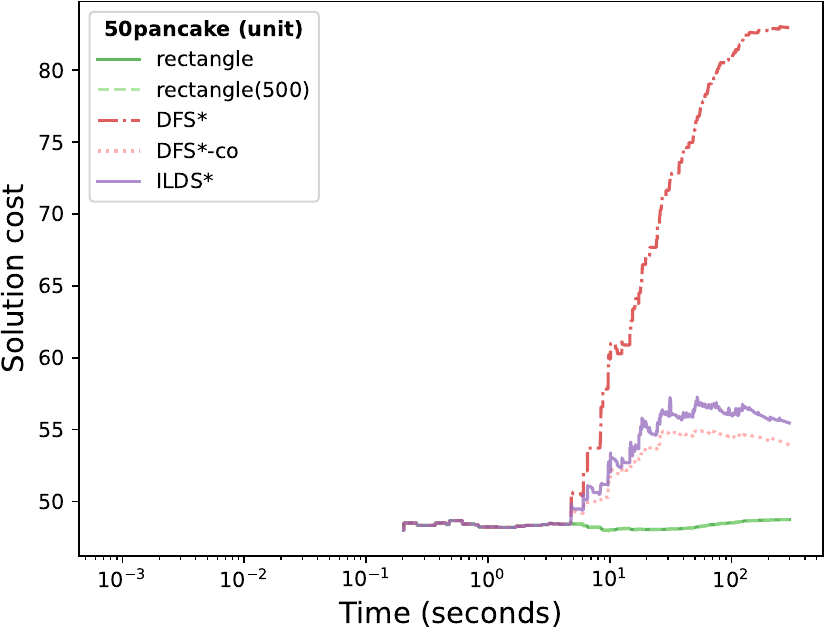}%
\caption{50pancake (unit cost)}%
\end{figure*}

\begin{figure*}[h!]%
\centering%
\includegraphics[width=\plotwidth{}]{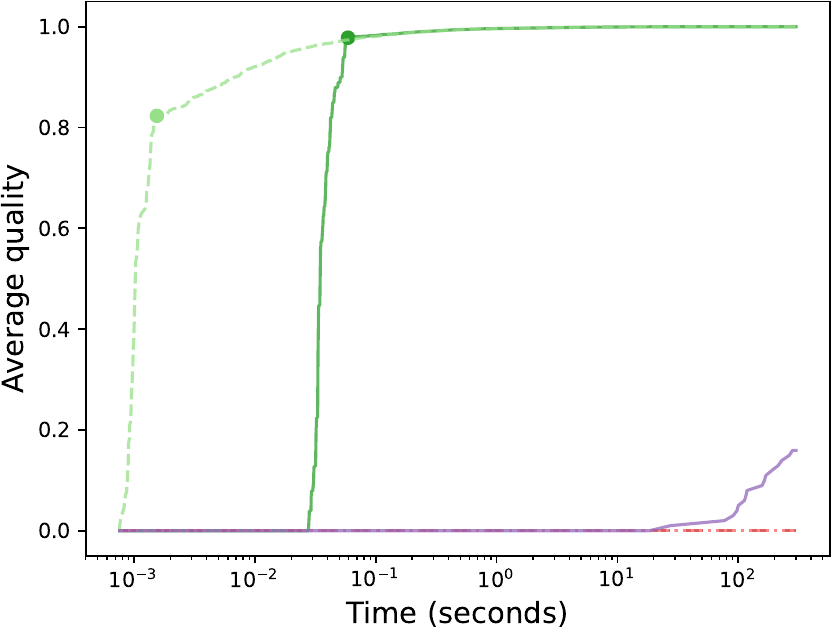}%
\centering%
\includegraphics[width=\plotwidth{}]{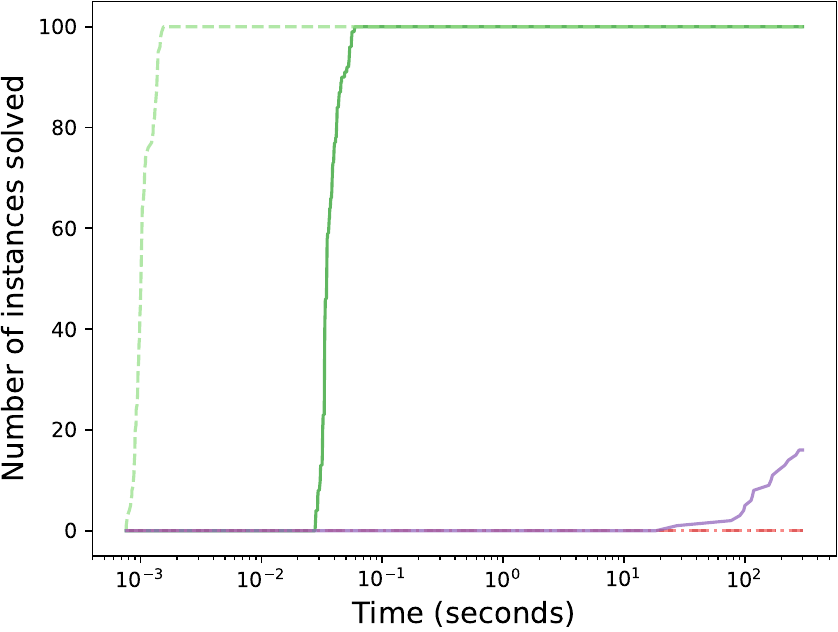}%
\centering%
\includegraphics[width=\plotwidth{}]{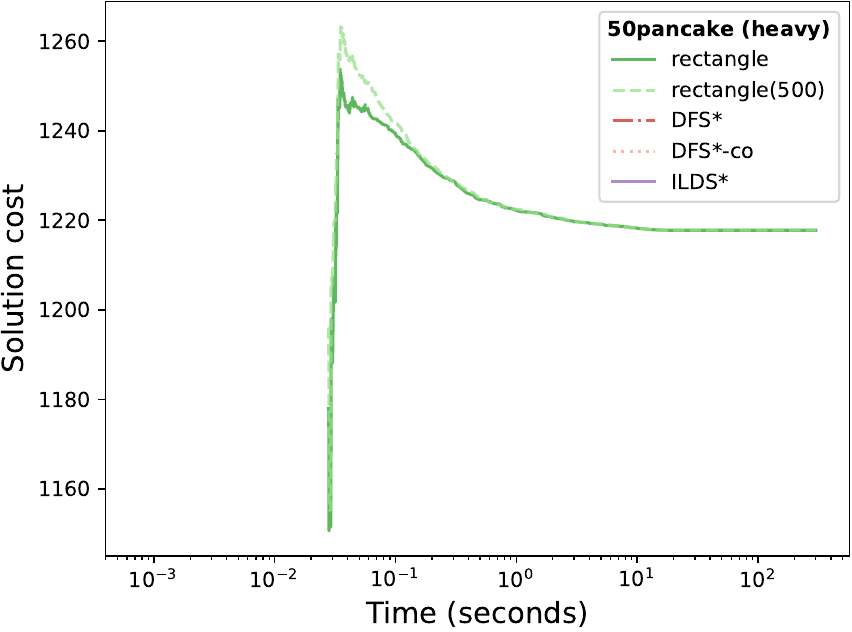}%
\caption{50pancake (heavy cost)}%
\end{figure*}

\begin{figure*}[h!]%
\centering%
\includegraphics[width=\plotwidth{}]{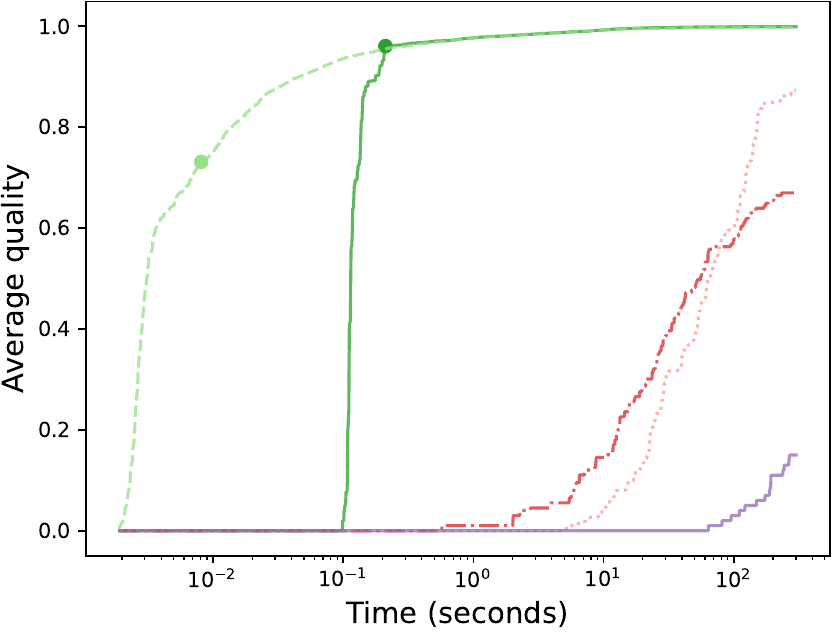}%
\centering%
\includegraphics[width=\plotwidth{}]{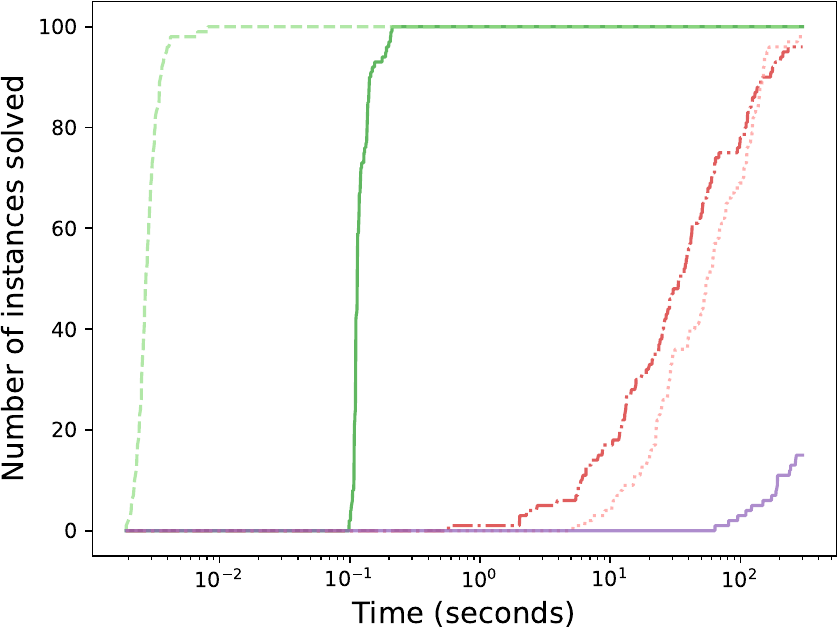}%
\centering%
\includegraphics[width=\plotwidth{}]{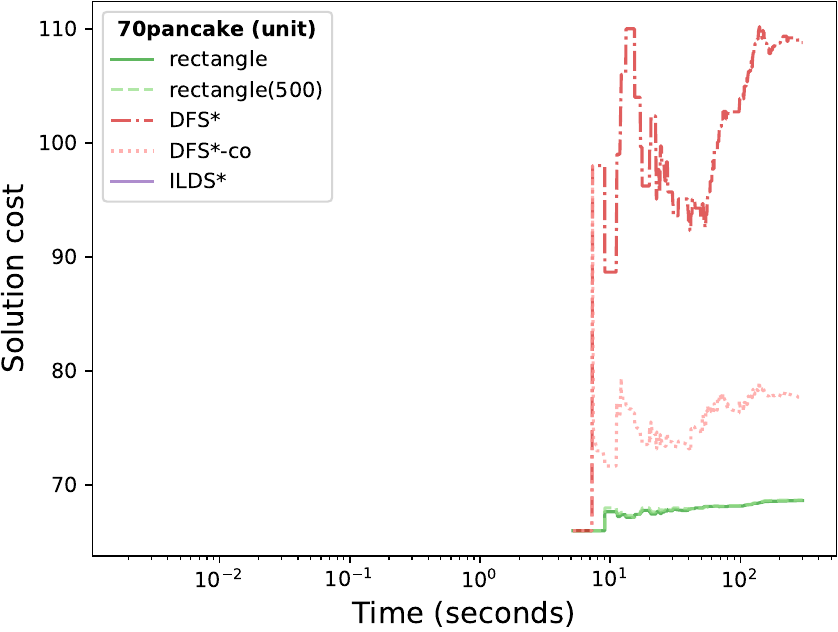}%
\caption{70pancake (unit cost)}%
\end{figure*}

\begin{figure*}[h!]%
\centering%
\includegraphics[width=\plotwidth{}]{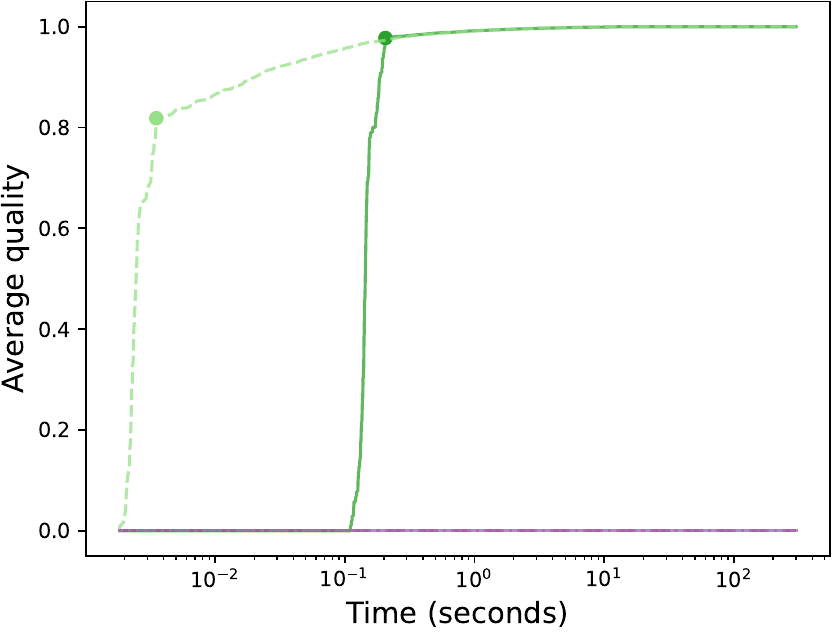}%
\centering%
\includegraphics[width=\plotwidth{}]{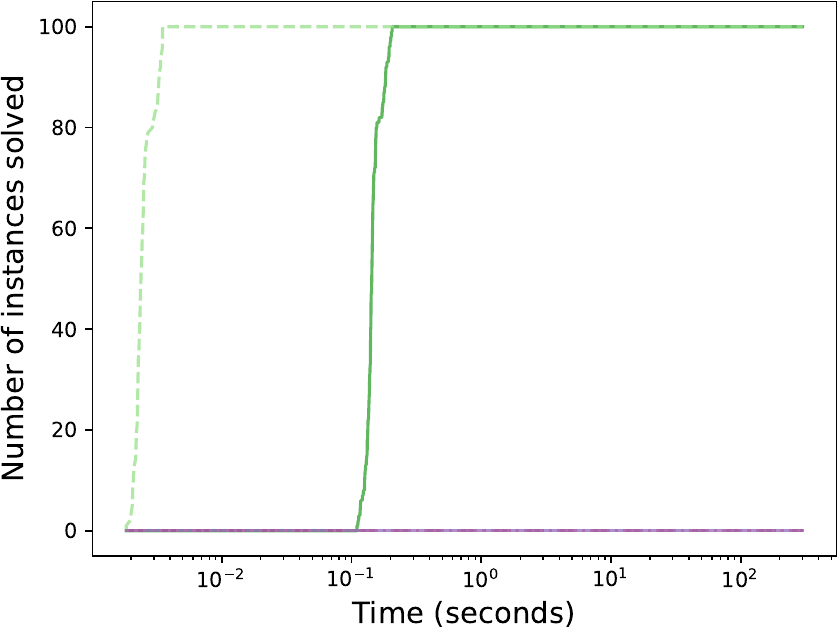}%
\centering%
\includegraphics[width=\plotwidth{}]{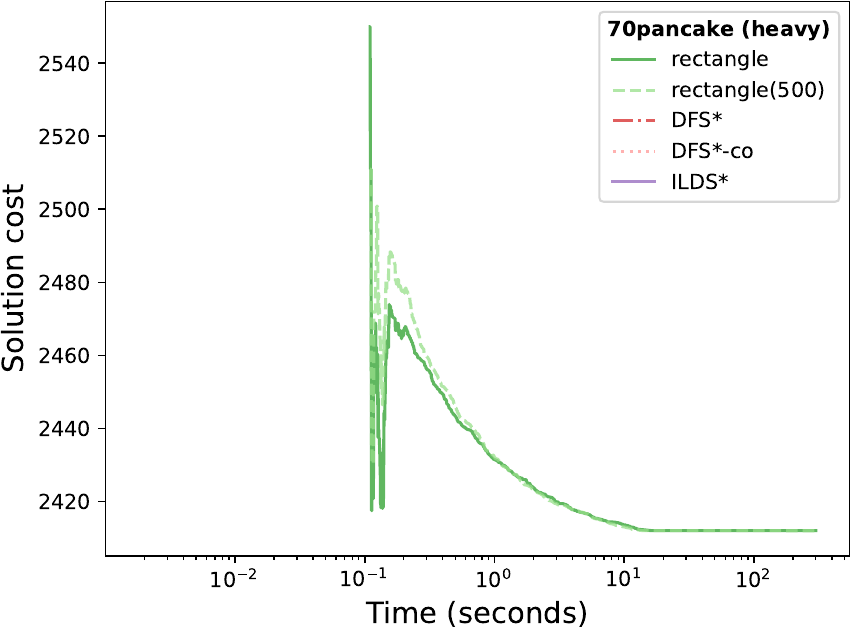}%
\caption{70pancake (heavy cost)}%
\end{figure*}

\begin{figure*}[h!]%
\centering%
\includegraphics[width=\plotwidth{}]{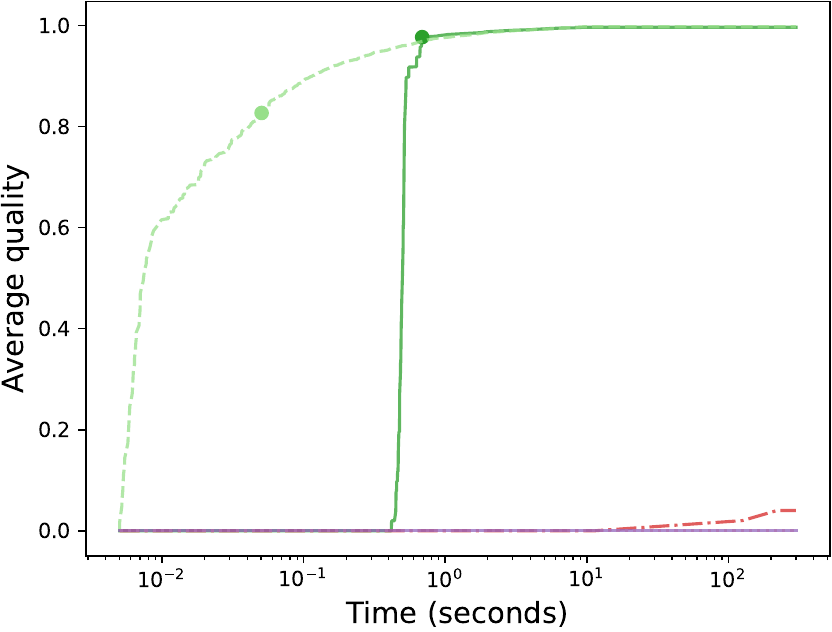}%
\centering%
\includegraphics[width=\plotwidth{}]{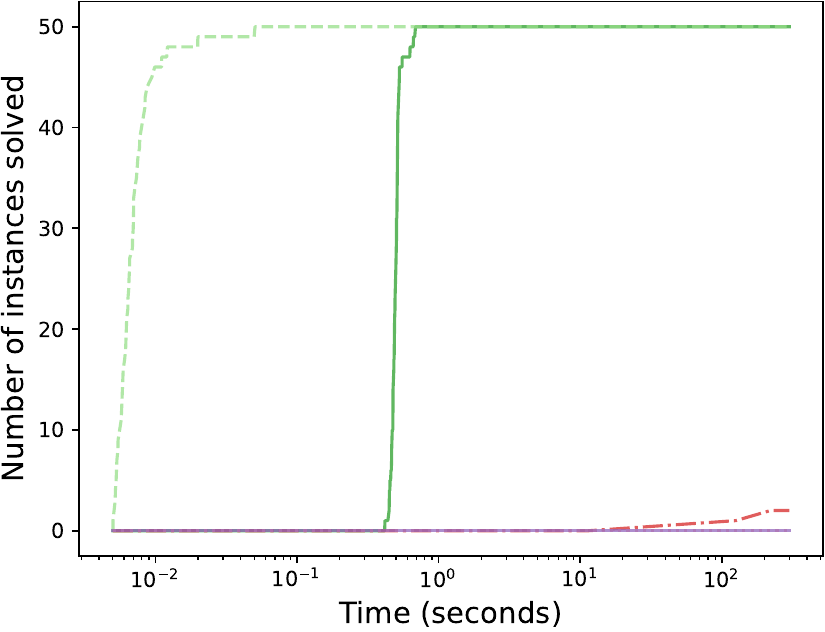}%
\centering%
\includegraphics[width=\plotwidth{}]{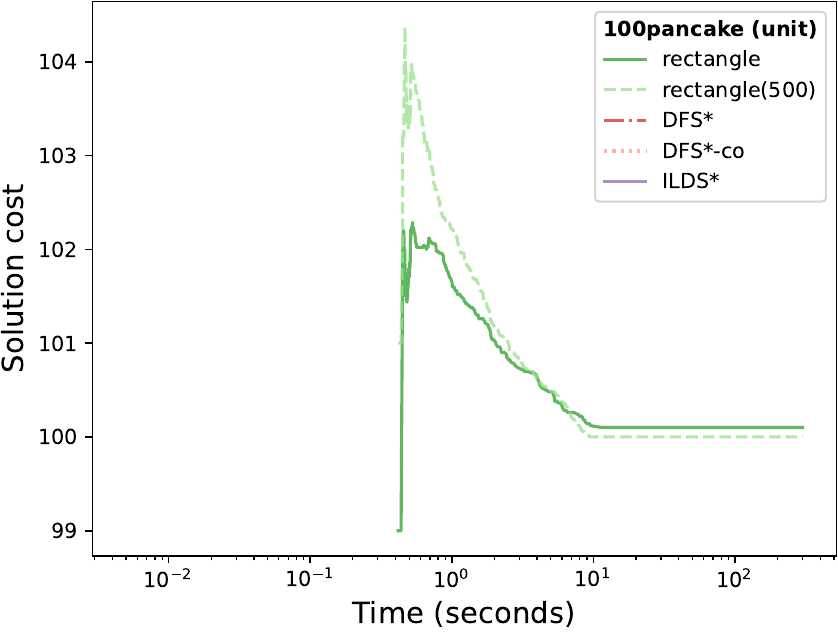}%
\caption{100pancake (unit cost)}%
\end{figure*}

\begin{figure*}[h!]%
\centering%
\includegraphics[width=\plotwidth{}]{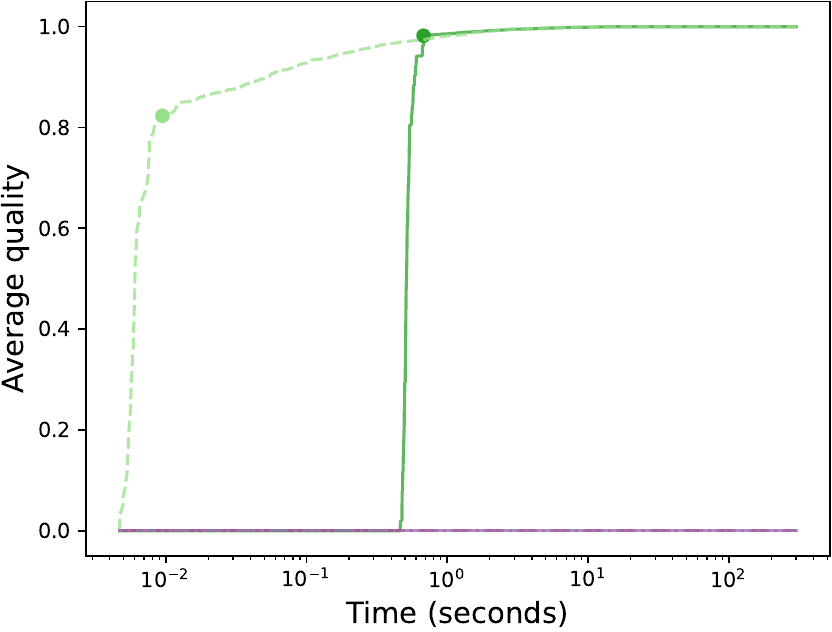}%
\centering%
\includegraphics[width=\plotwidth{}]{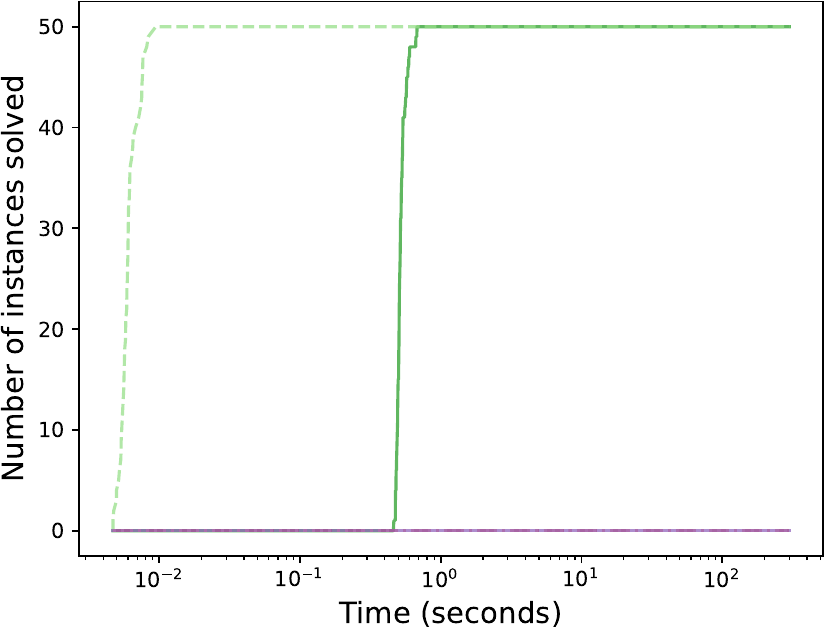}%
\centering%
\includegraphics[width=\plotwidth{}]{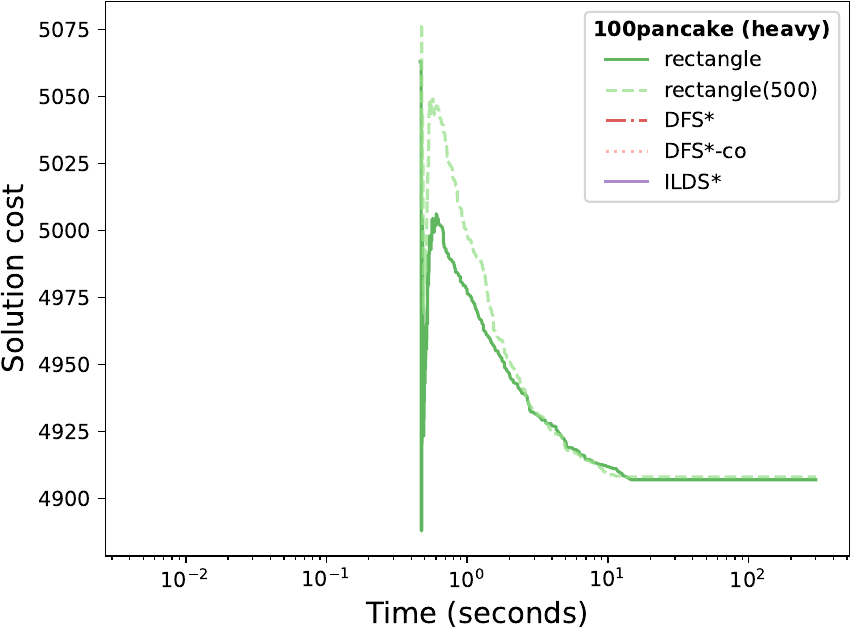}%
\caption{100pancake (heavy cost)}%
\end{figure*}

\begin{figure*}[h!]%
\centering%
\includegraphics[width=\plotwidth{}]{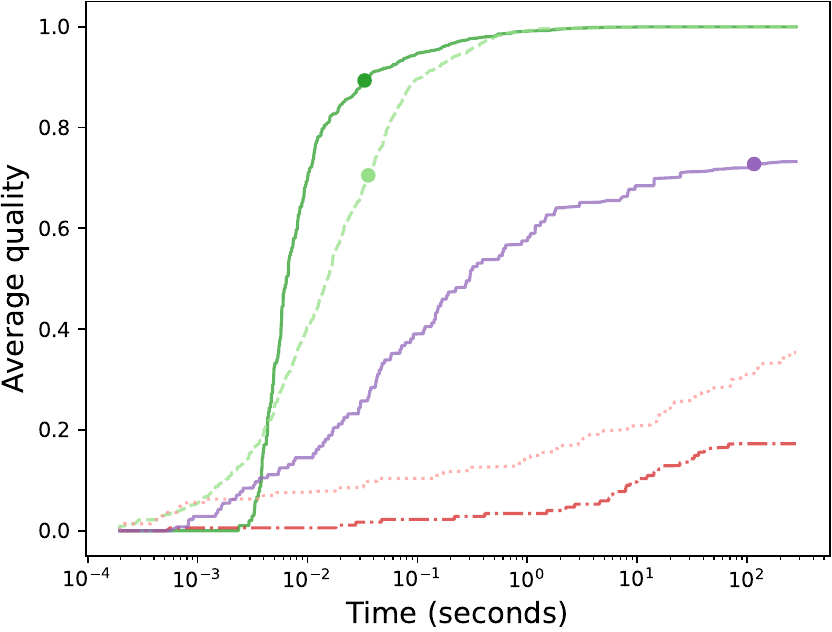}%
\centering%
\includegraphics[width=\plotwidth{}]{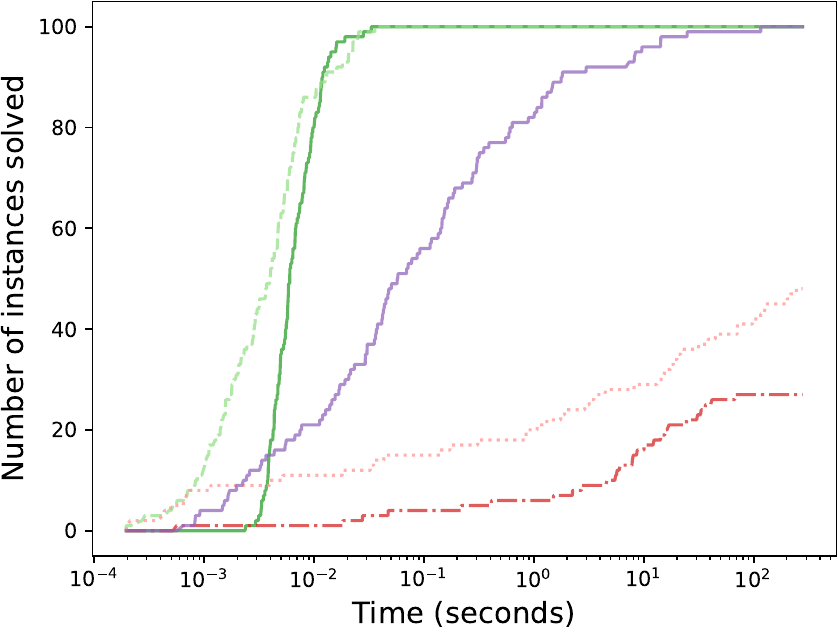}%
\centering%
\includegraphics[width=\plotwidth{}]{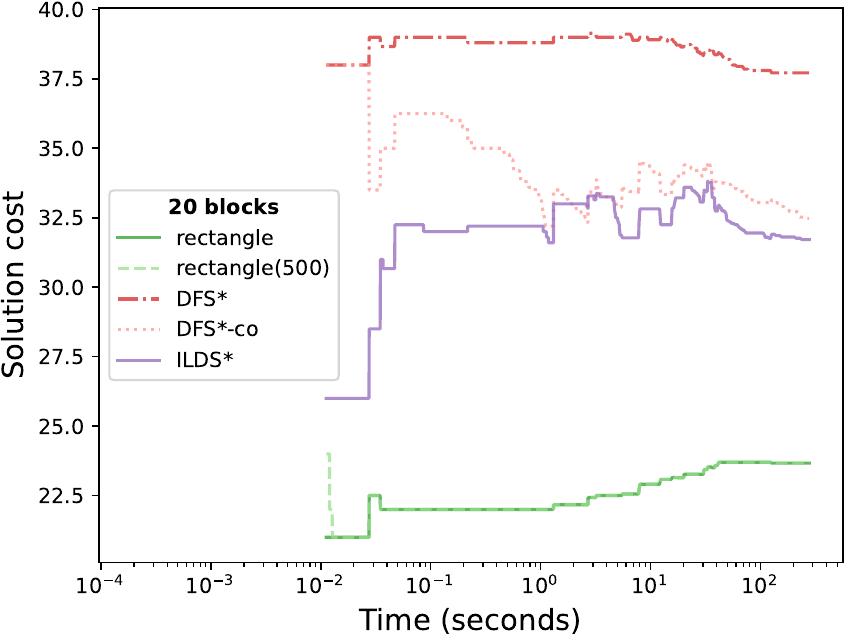}%
\caption{20 blocks}%
\end{figure*}

\begin{figure*}[h!]%
\centering%
\includegraphics[width=\plotwidth{}]{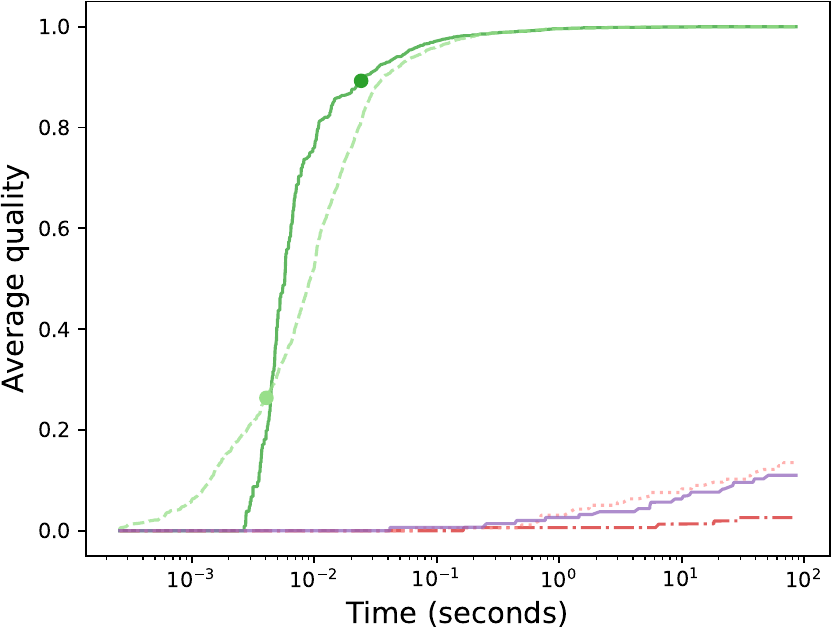}%
\centering%
\includegraphics[width=\plotwidth{}]{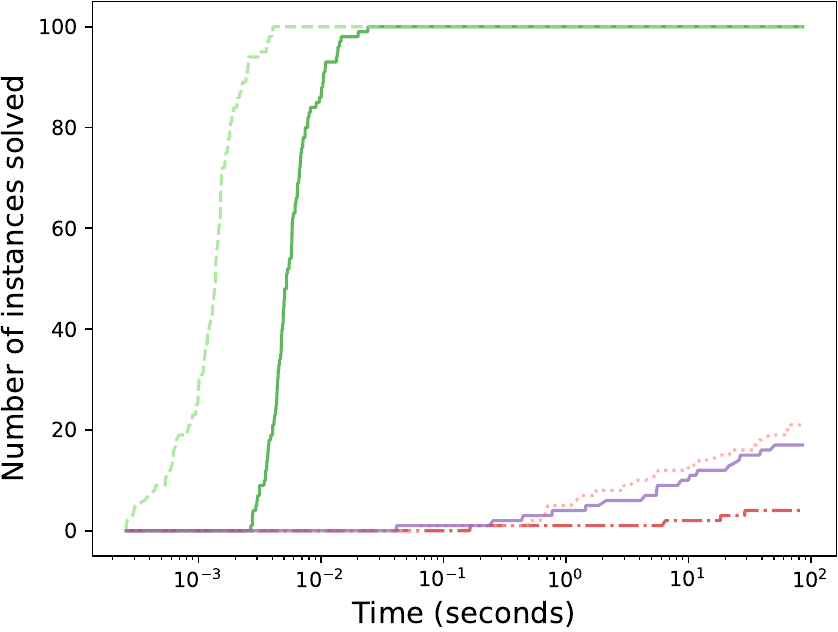}%
\centering%
\includegraphics[width=\plotwidth{}]{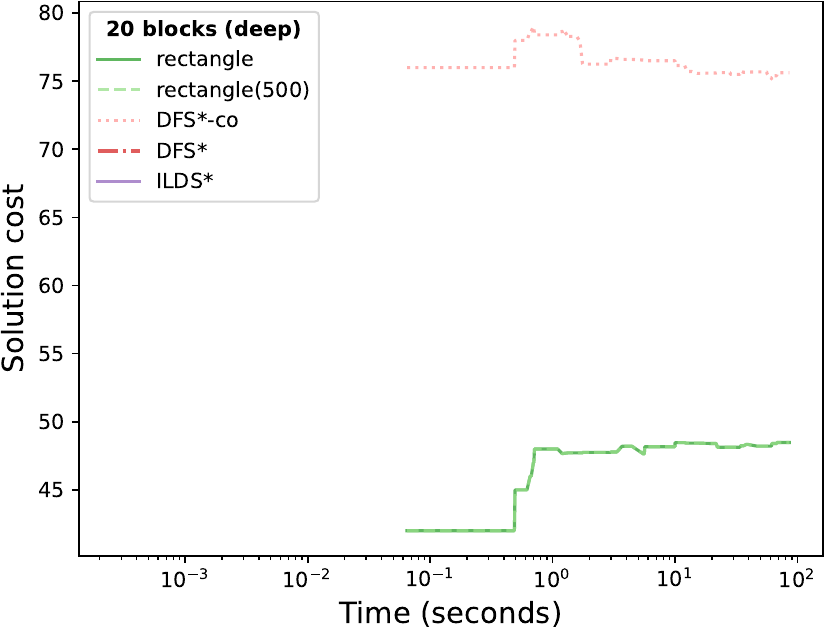}%
\caption{20 blocks (deep)}%
\end{figure*}

\begin{figure*}[h!]%
\centering%
\includegraphics[width=\plotwidth{}]{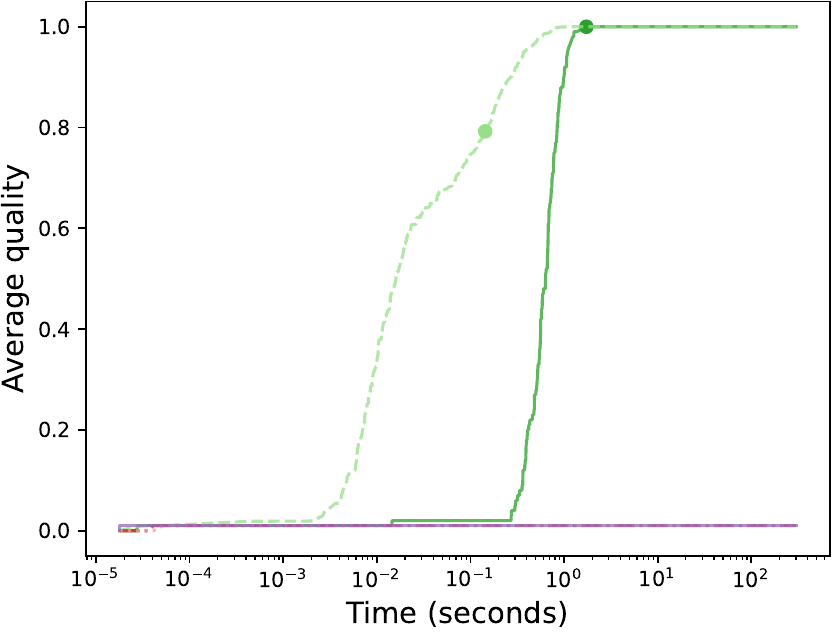}%
\centering%
\includegraphics[width=\plotwidth{}]{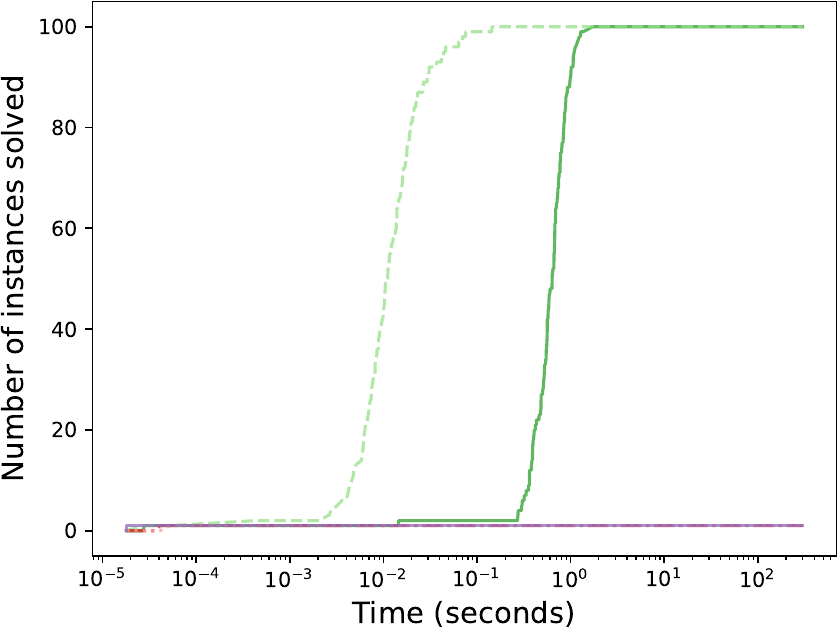}%
\centering%
\includegraphics[width=\plotwidth{}]{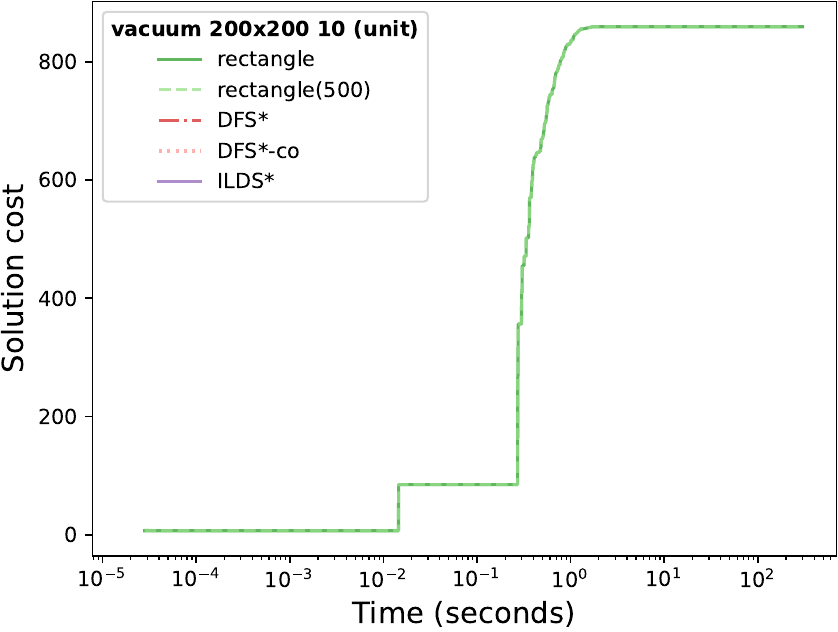}%
\caption{vacuum 200x200, 10 dirts (unit cost)}%
\end{figure*}

\begin{figure*}[h!]%
\centering%
\includegraphics[width=\plotwidth{}]{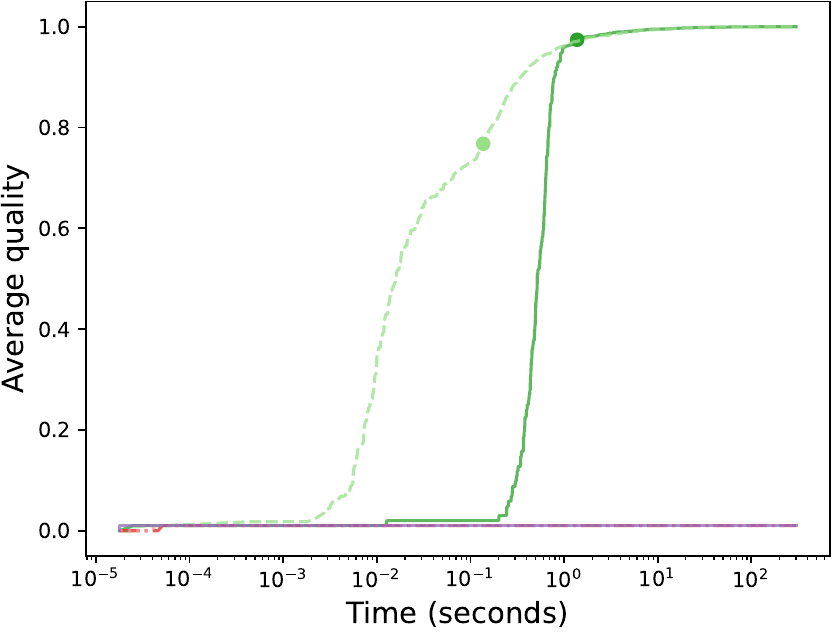}%
\centering%
\includegraphics[width=\plotwidth{}]{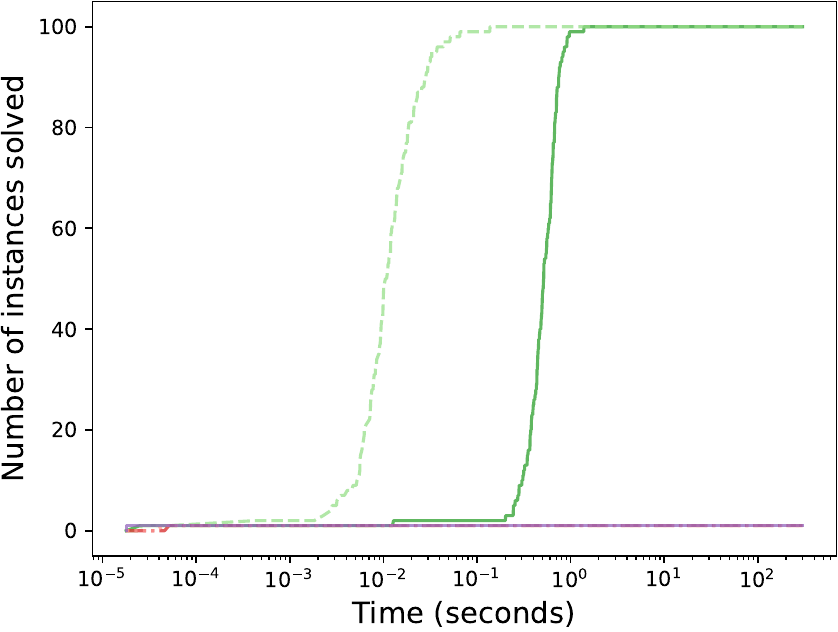}%
\centering%
\includegraphics[width=\plotwidth{}]{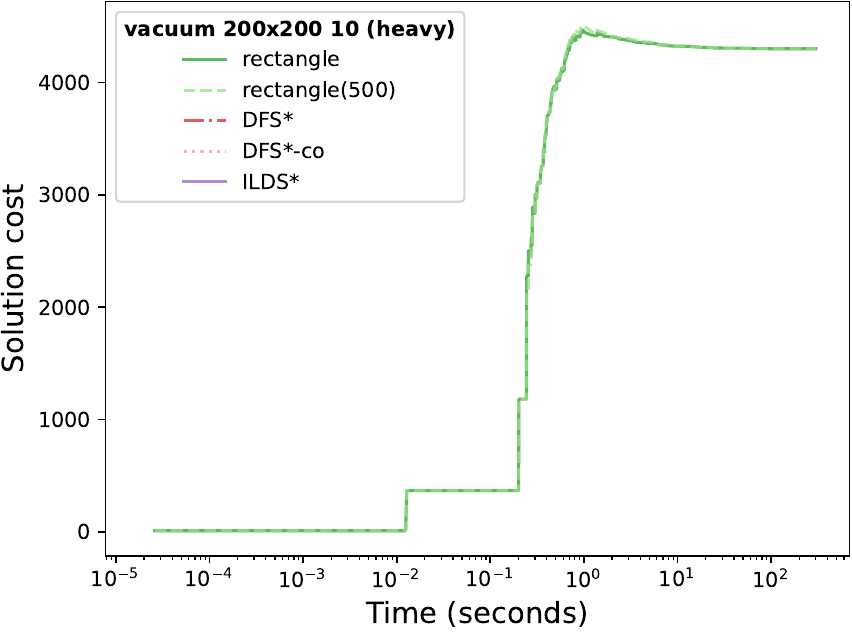}%
\caption{vacuum 200x200, 10 dirts (heavy cost)}%
\end{figure*}

\begin{figure*}[h!]%
\centering%
\includegraphics[width=\plotwidth{}]{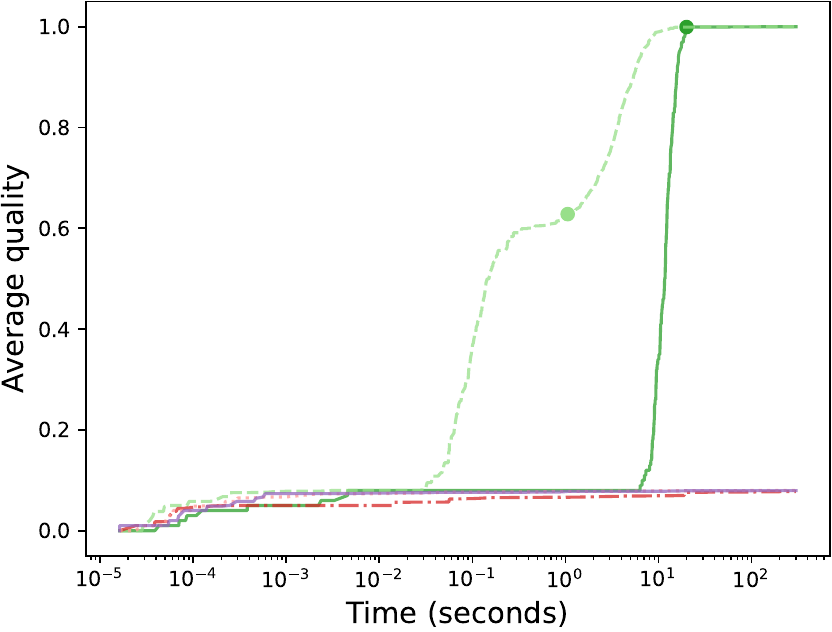}%
\centering%
\includegraphics[width=\plotwidth{}]{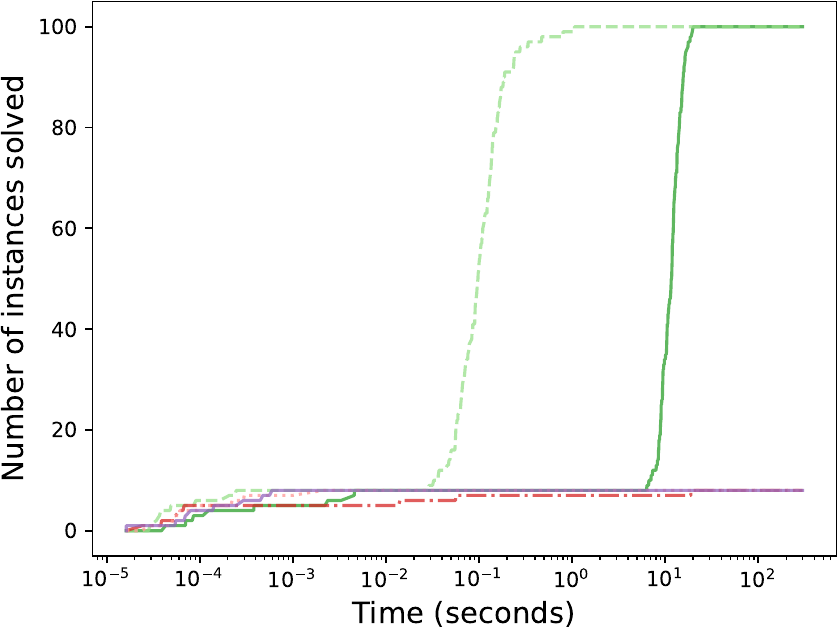}%
\centering%
\includegraphics[width=\plotwidth{}]{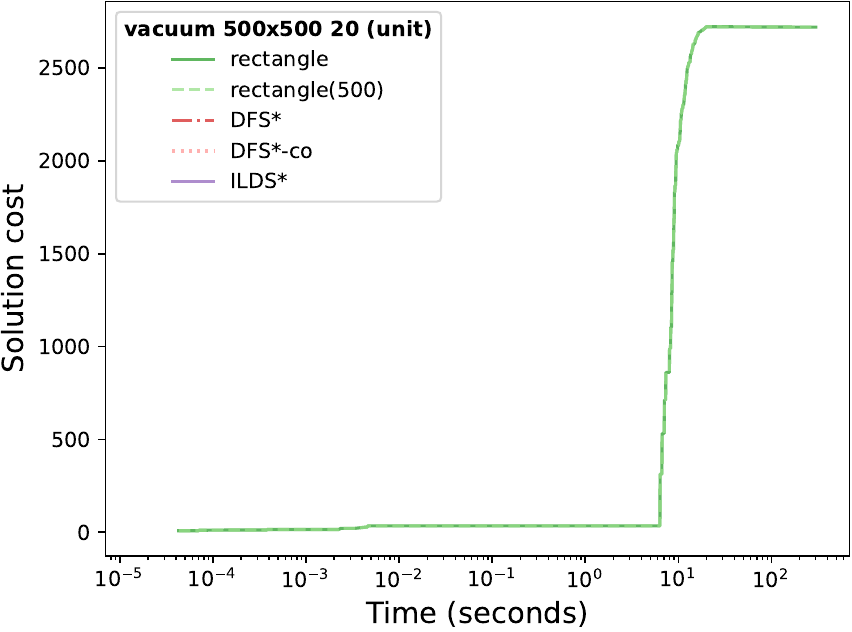}%
\caption{vacuum 500x500, 20 dirts (unit cost)}%
\end{figure*}

\begin{figure*}[h!]%
\centering%
\includegraphics[width=\plotwidth{}]{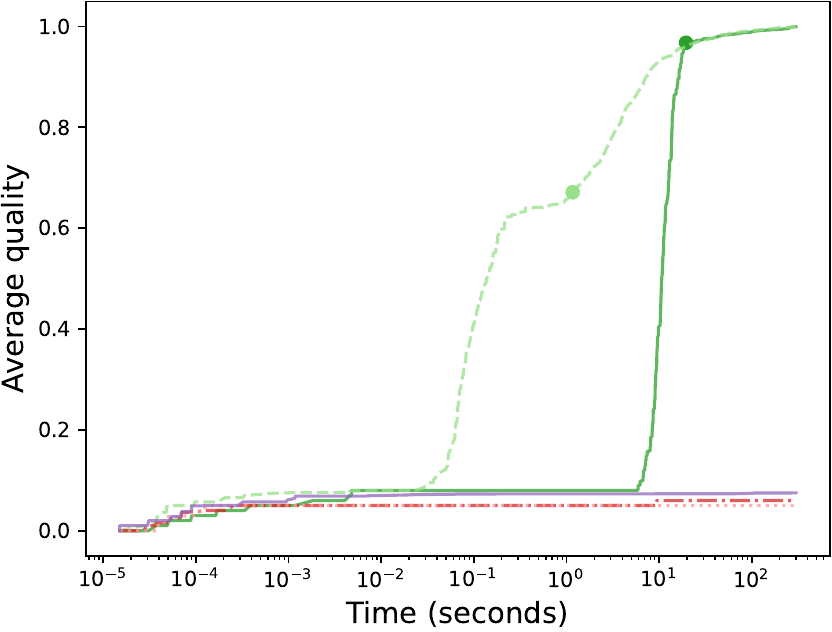}%
\centering%
\includegraphics[width=\plotwidth{}]{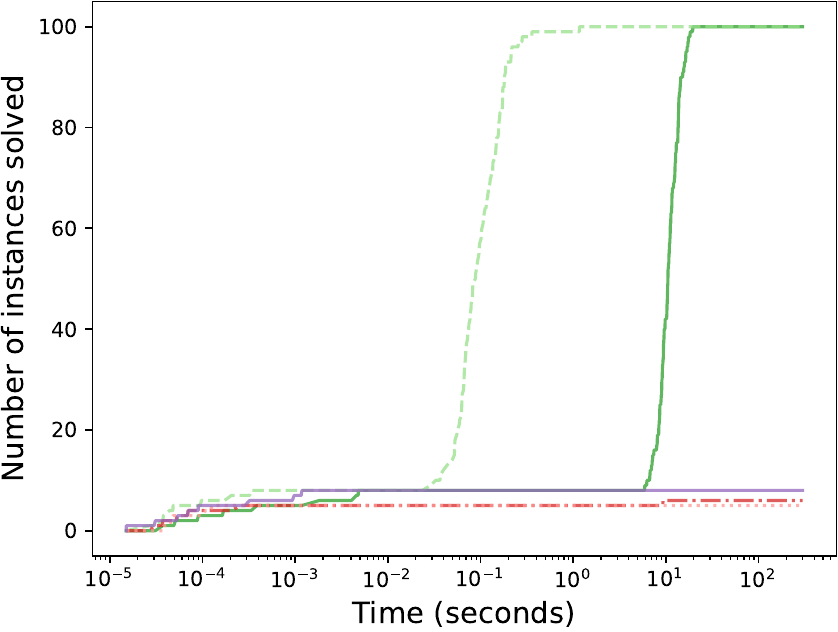}%
\centering%
\includegraphics[width=\plotwidth{}]{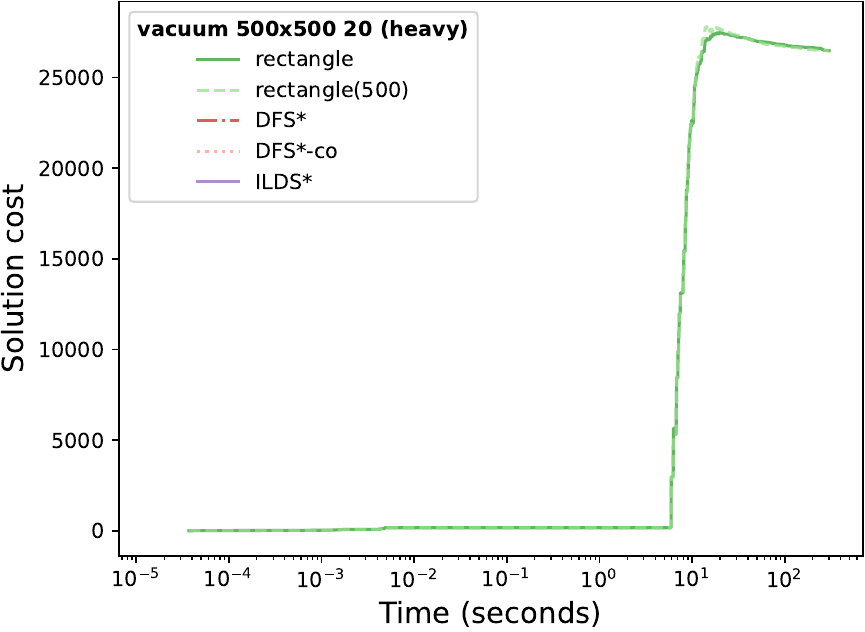}%
\caption{vacuum 500x500, 20 dirts (heavy cost)}%
\end{figure*}

\begin{figure*}[h!]%
\centering%
\includegraphics[width=\plotwidth{}]{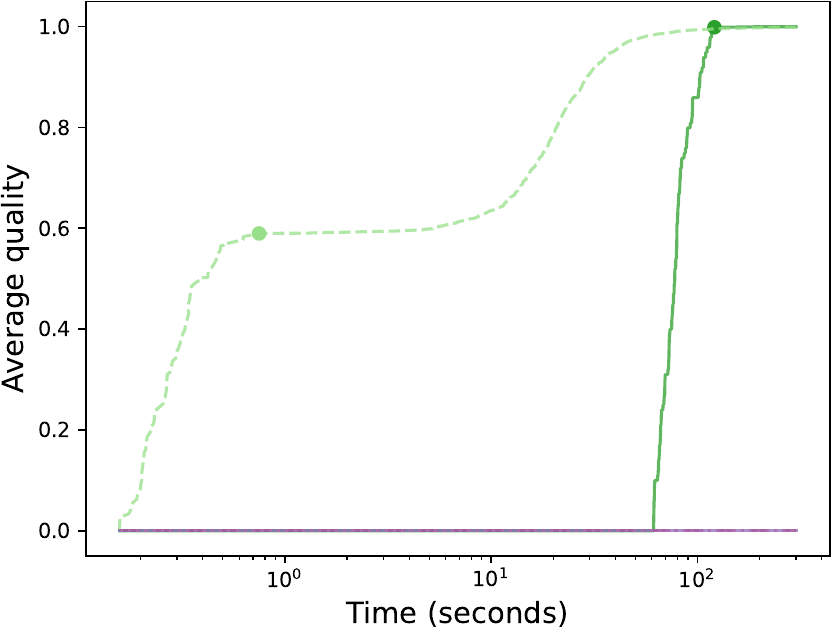}%
\centering%
\includegraphics[width=\plotwidth{}]{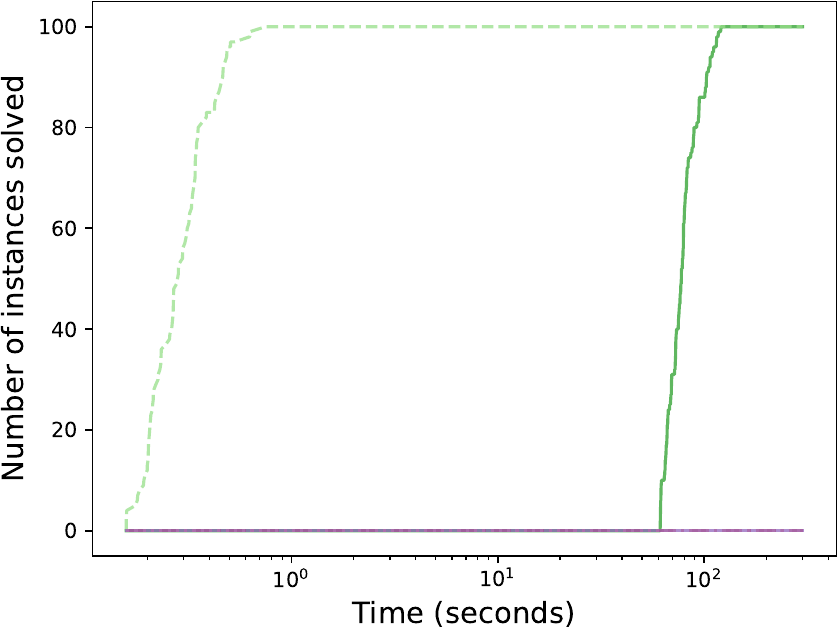}%
\centering%
\includegraphics[width=\plotwidth{}]{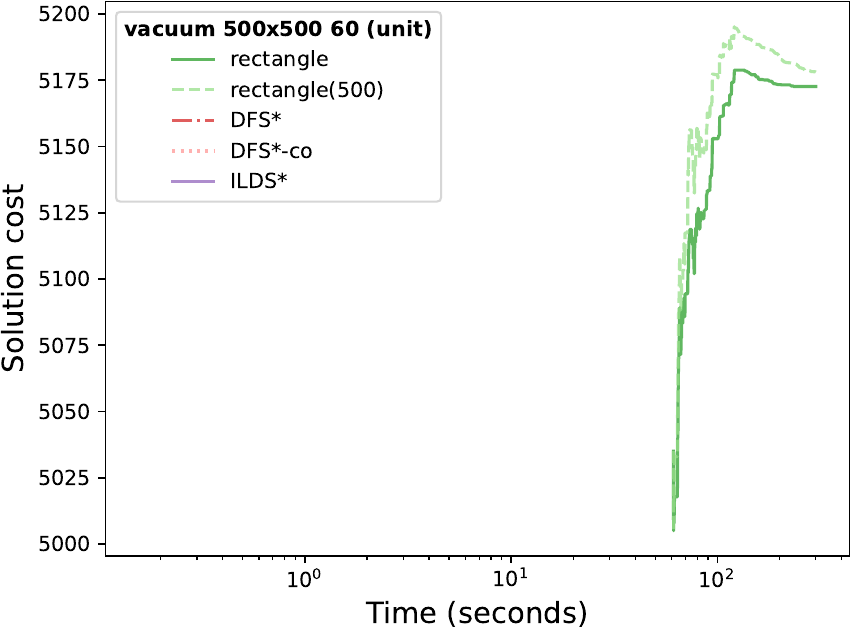}%
\caption{vacuum 500x500, 60 dirts (unit cost)}%
\end{figure*}

\begin{figure*}[h!]%
\centering%
\includegraphics[width=\plotwidth{}]{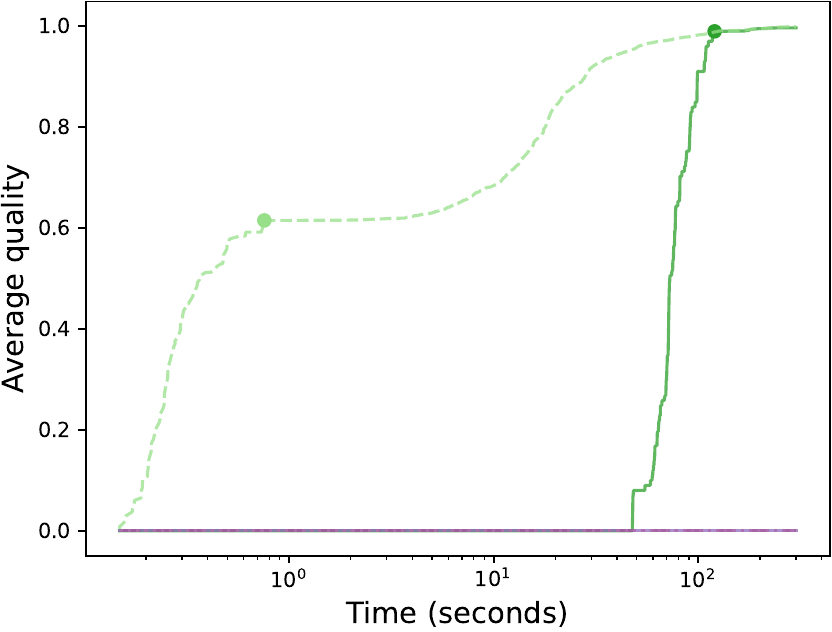}%
\centering%
\includegraphics[width=\plotwidth{}]{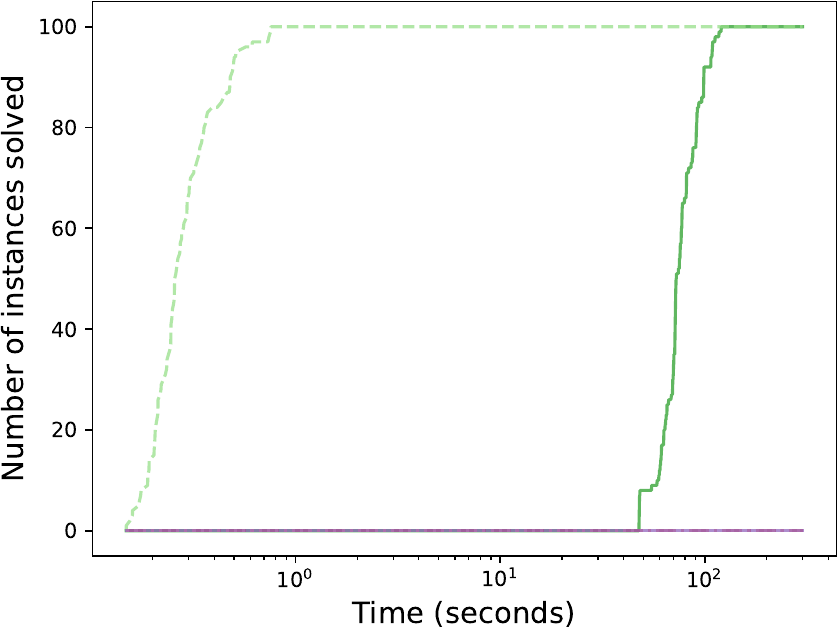}%
\centering%
\includegraphics[width=\plotwidth{}]{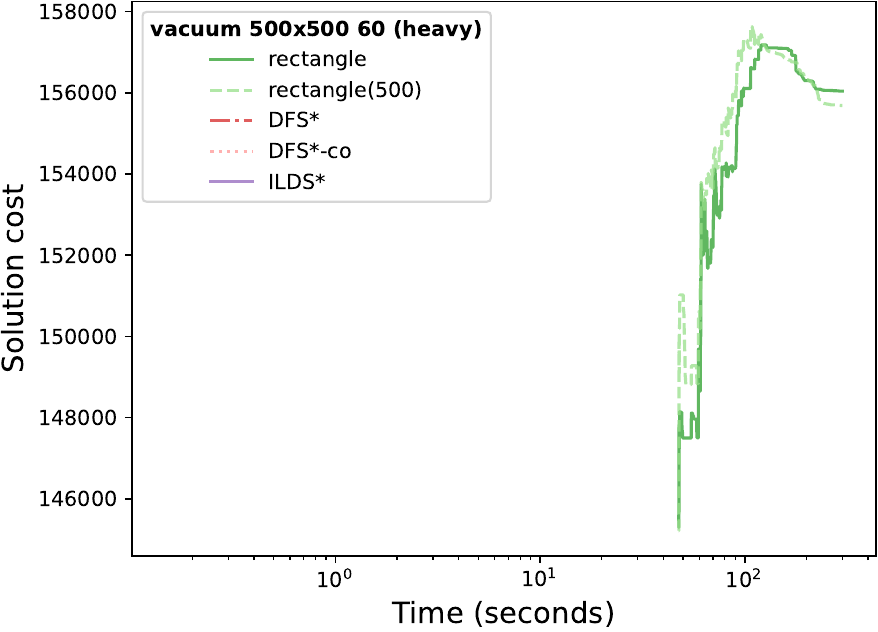}%
\caption{vacuum 500x500, 60 dirts (heavy cost)}%
\end{figure*}

\begin{figure*}[h!]%
\centering%
\includegraphics[width=\plotwidth{}]{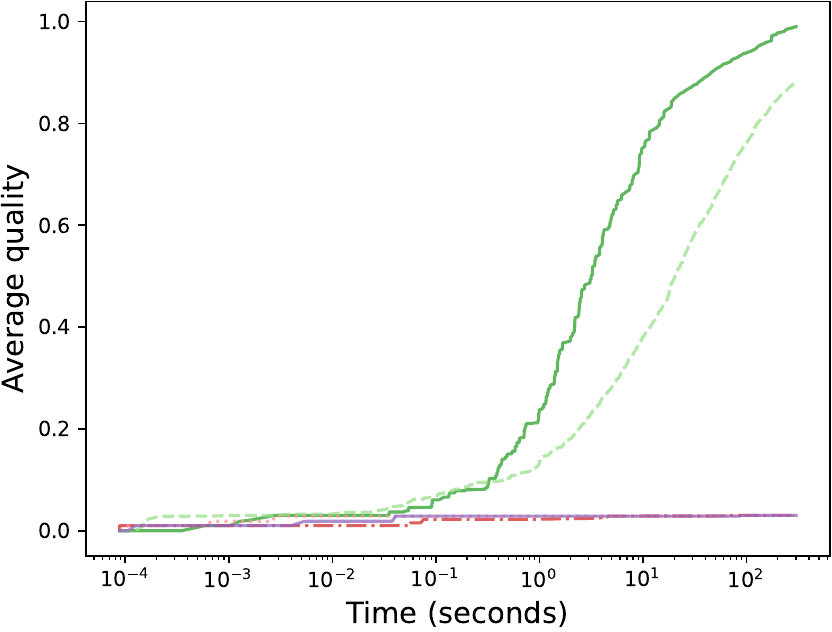}%
\centering%
\includegraphics[width=\plotwidth{}]{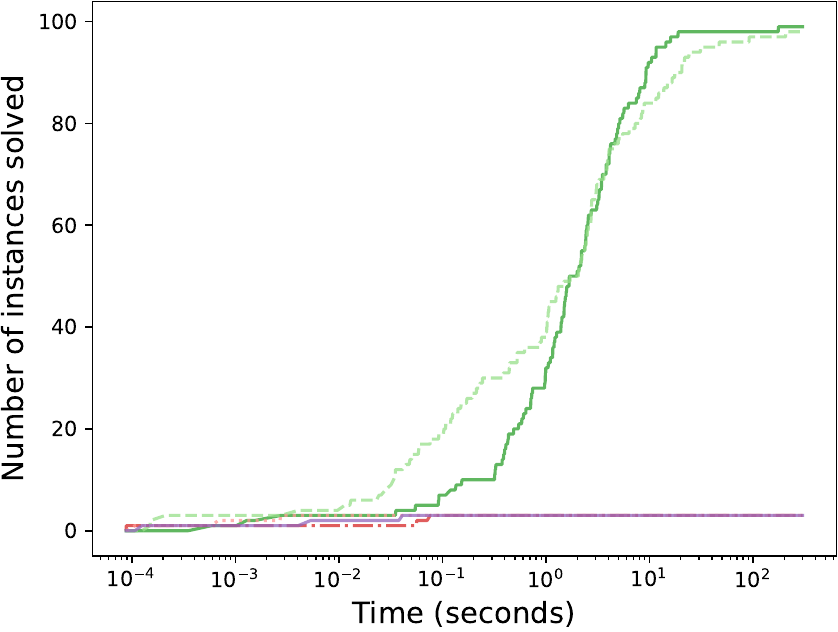}%
\centering%
\includegraphics[width=\plotwidth{}]{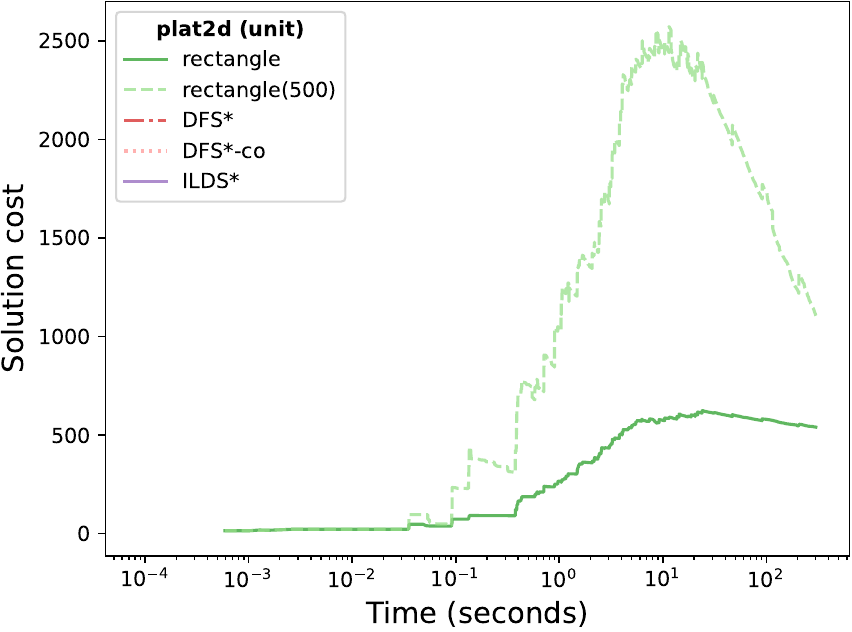}%
\caption{platform game (unit cost)}%
\end{figure*}

\begin{figure*}[h!]%
\centering%
\includegraphics[width=\plotwidth{}]{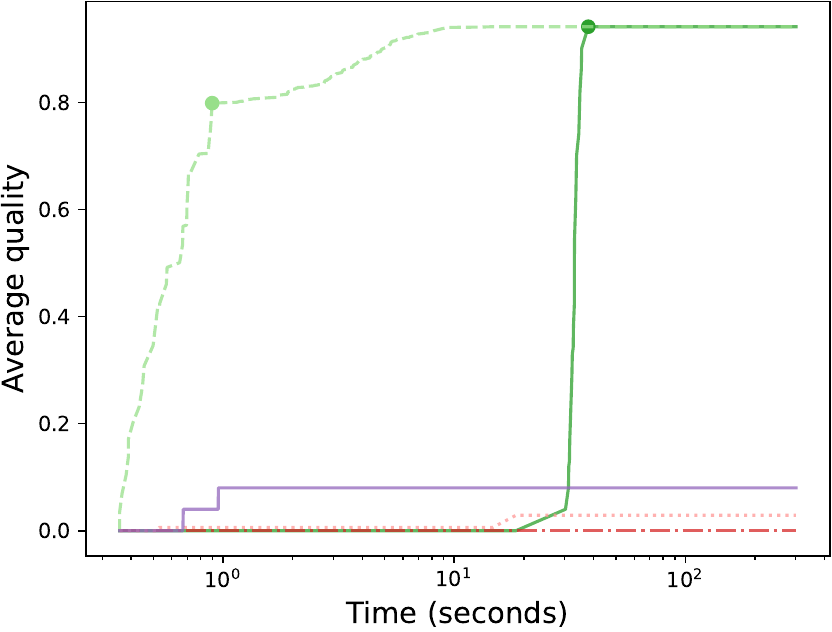}%
\centering%
\includegraphics[width=\plotwidth{}]{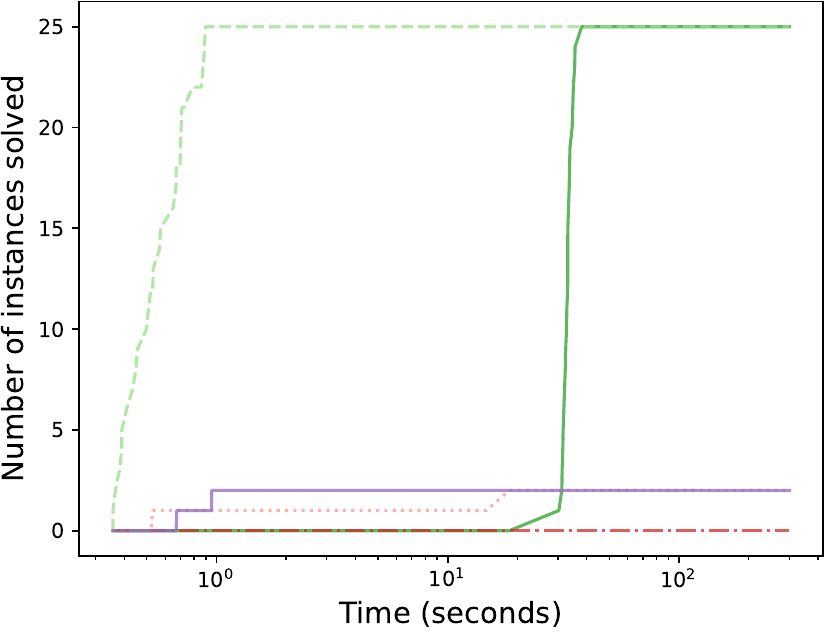}%
\centering%
\includegraphics[width=\plotwidth{}]{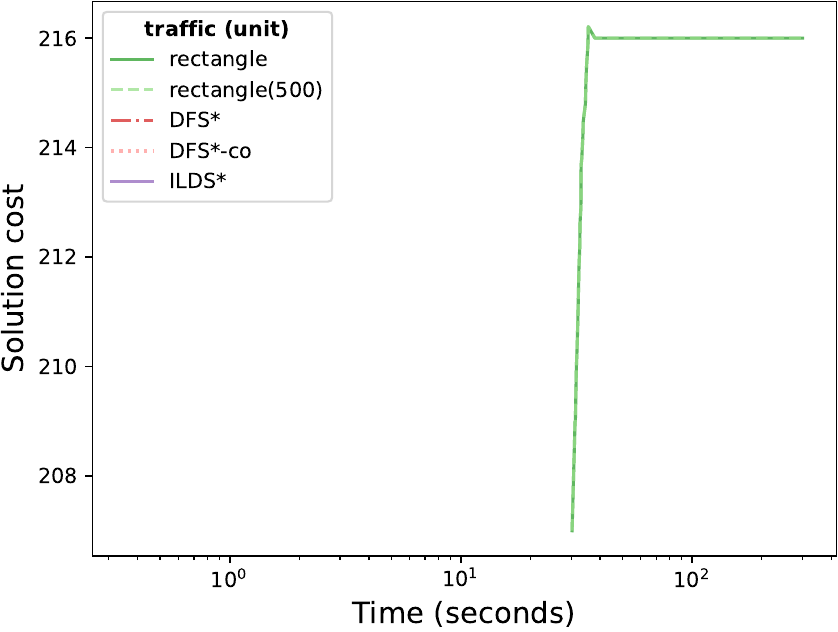}%
\caption{traffic (unit cost)}%
\end{figure*}

\begin{figure*}[h!]%
\centering%
\includegraphics[width=\plotwidth{}]{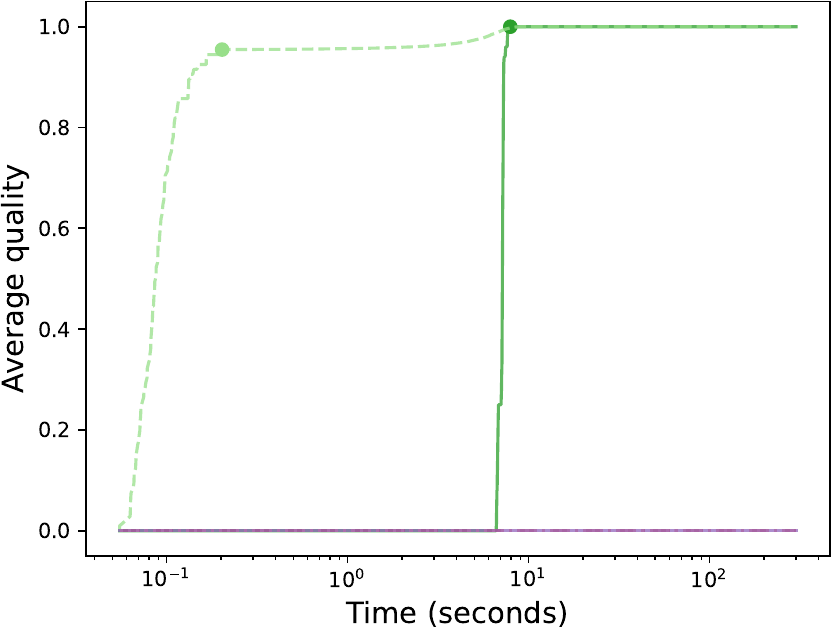}%
\centering%
\includegraphics[width=\plotwidth{}]{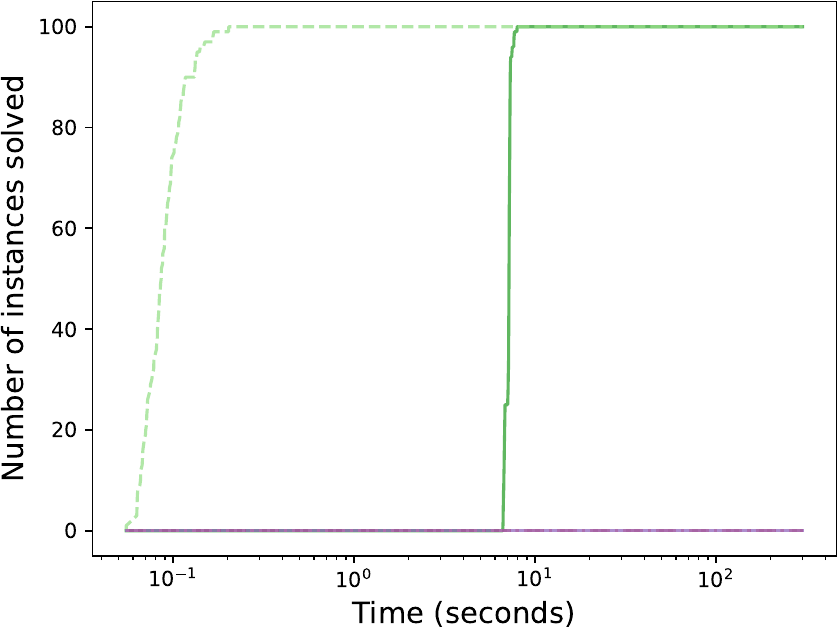}%
\centering%
\includegraphics[width=\plotwidth{}]{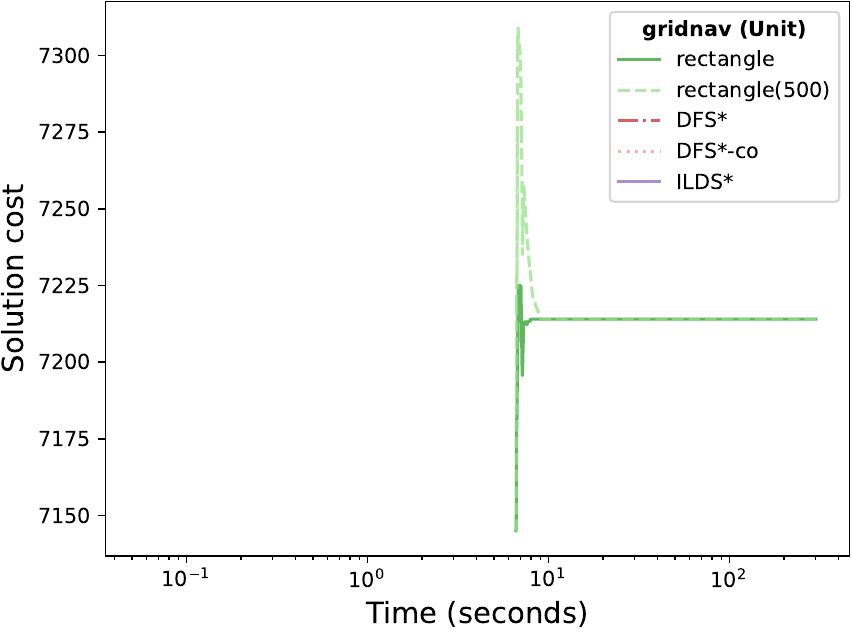}%
\caption{random grids (unit cost)}%
\end{figure*}

\begin{figure*}[h!]%
\centering%
\includegraphics[width=\plotwidth{}]{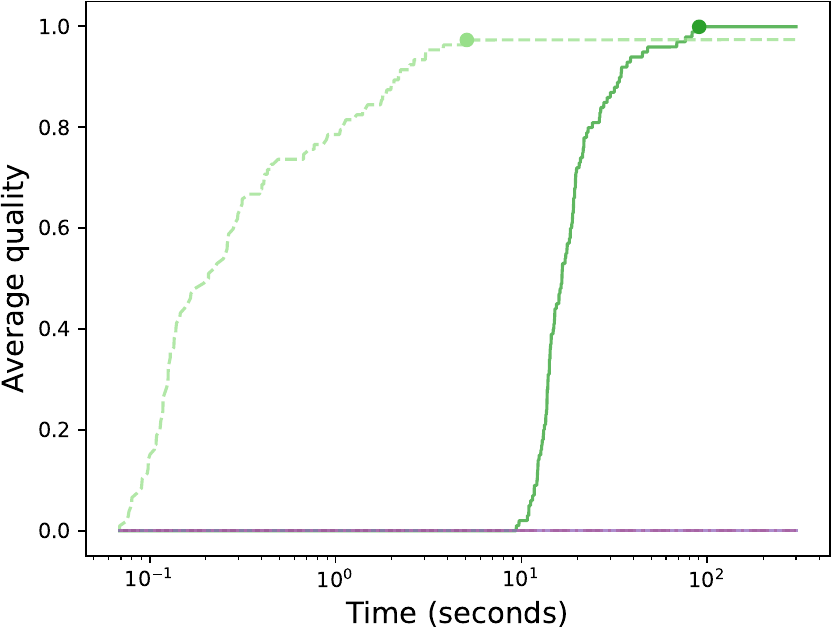}%
\centering%
\includegraphics[width=\plotwidth{}]{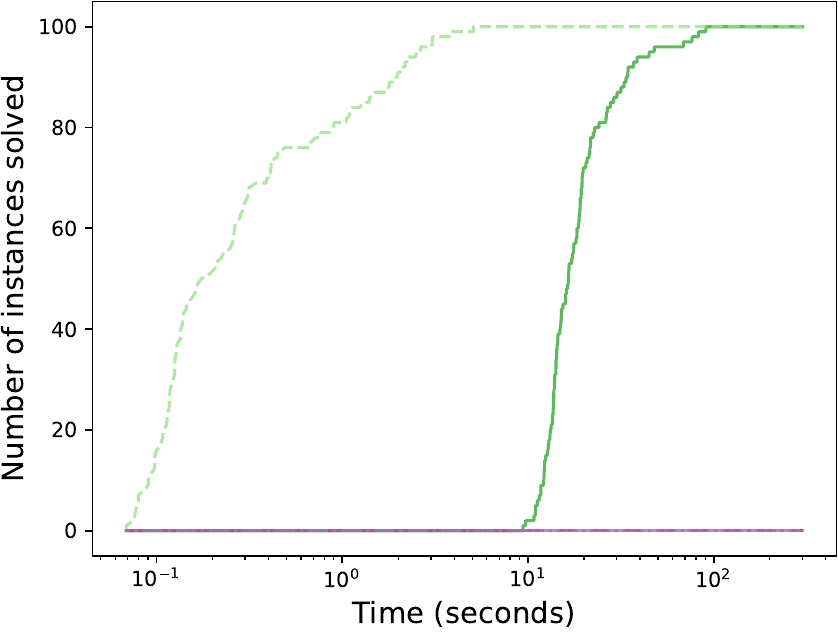}%
\centering%
\includegraphics[width=\plotwidth{}]{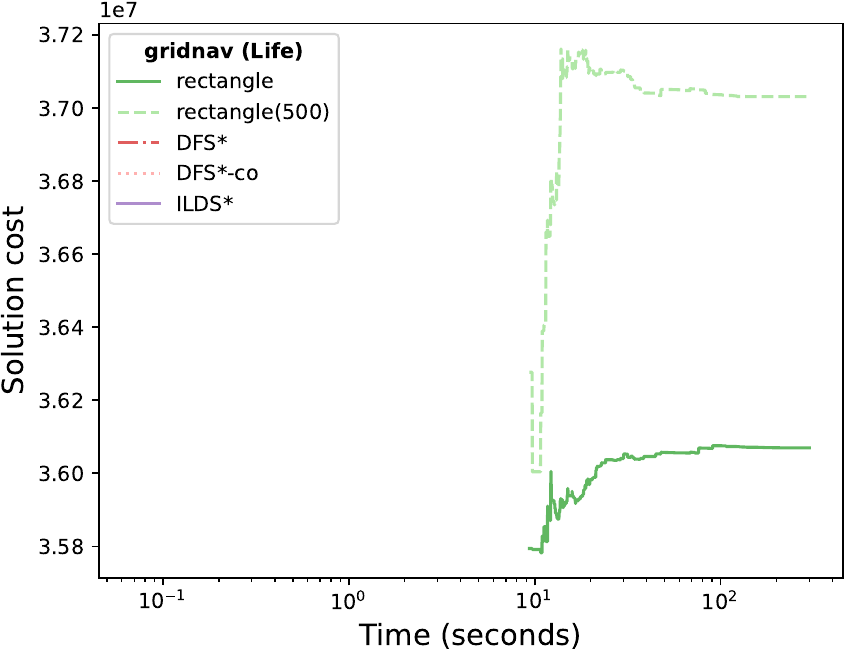}%
\caption{random grids (life cost)}%
\end{figure*}

\clearpage

\section{Fixed-width Beam Search Results} \label{sec:beam}

\begin{figure*}[h!]%
\centering%
\begin{subfigure}{\plotwidth{}}%
\includegraphics[width=\plotwidth{}]{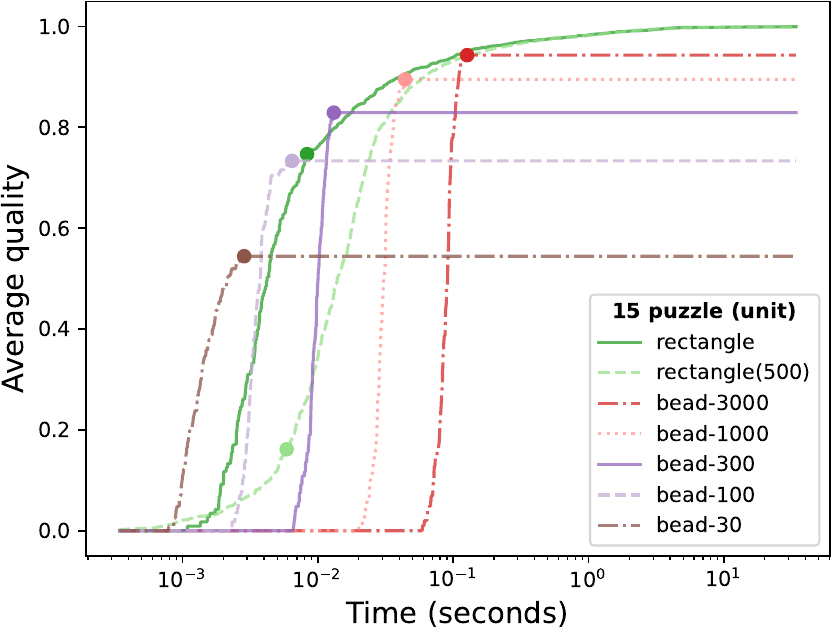}%
\caption{15 puzzle (unit)}%
\end{subfigure}
\begin{subfigure}{\plotwidth{}}%
\includegraphics[width=\plotwidth{}]{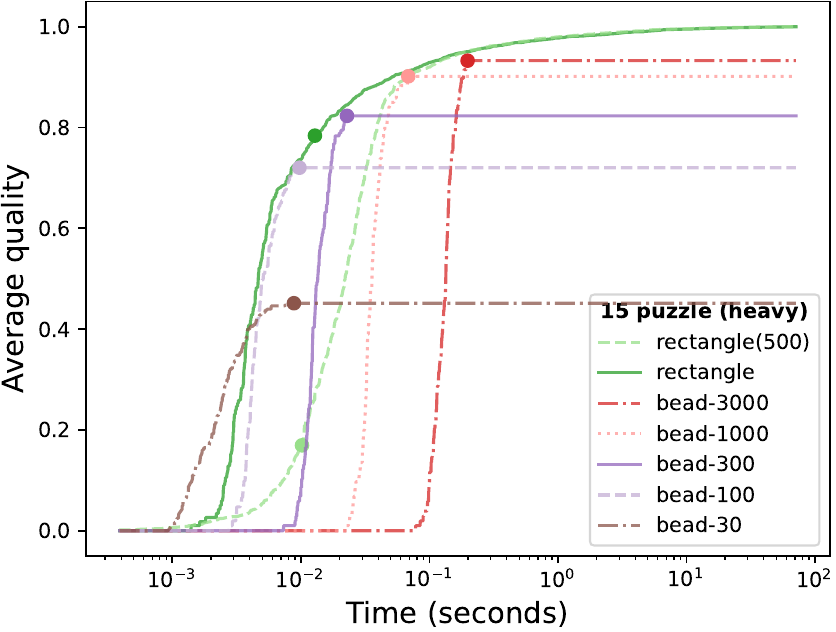}%
\caption{15 puzzle (heavy)}%
\end{subfigure}
\begin{subfigure}{\plotwidth{}}%
\includegraphics[width=\plotwidth{}]{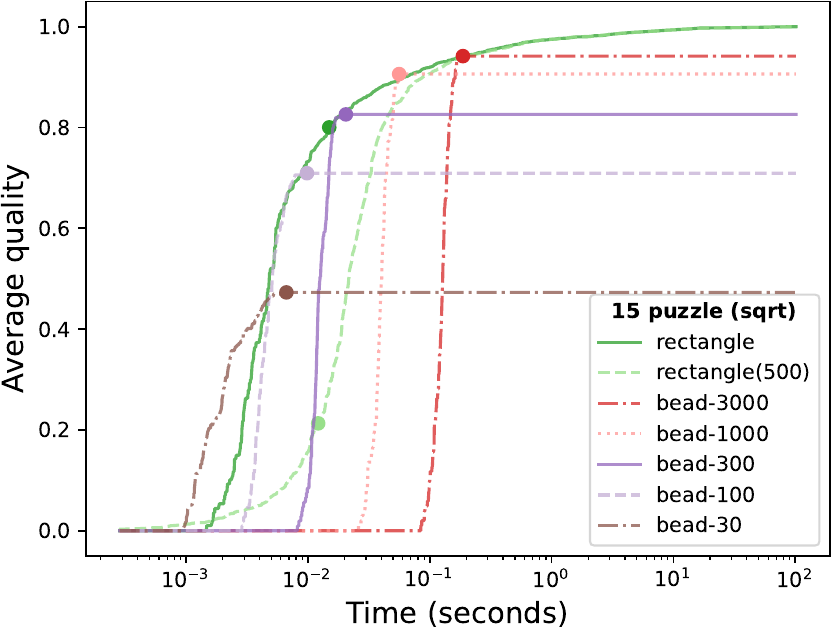}%
\caption{15 puzzle (sqrt)}%
\end{subfigure}
\caption{}
\end{figure*}

\begin{figure*}[h!]%
\centering%
\begin{subfigure}{\plotwidth{}}%
\includegraphics[width=\plotwidth{}]{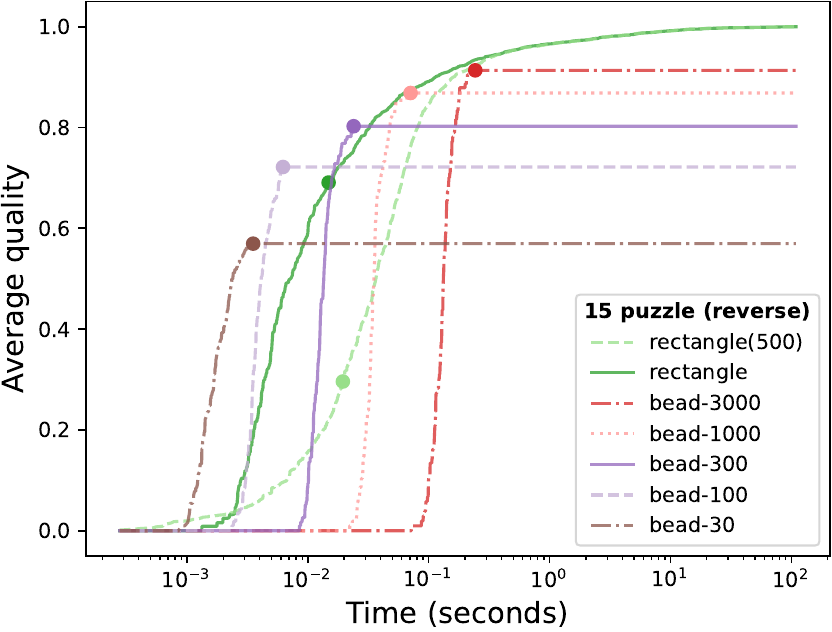}%
\caption{15 puzzle (reverse)}%
\end{subfigure}
\begin{subfigure}{\plotwidth{}}%
\includegraphics[width=\plotwidth{}]{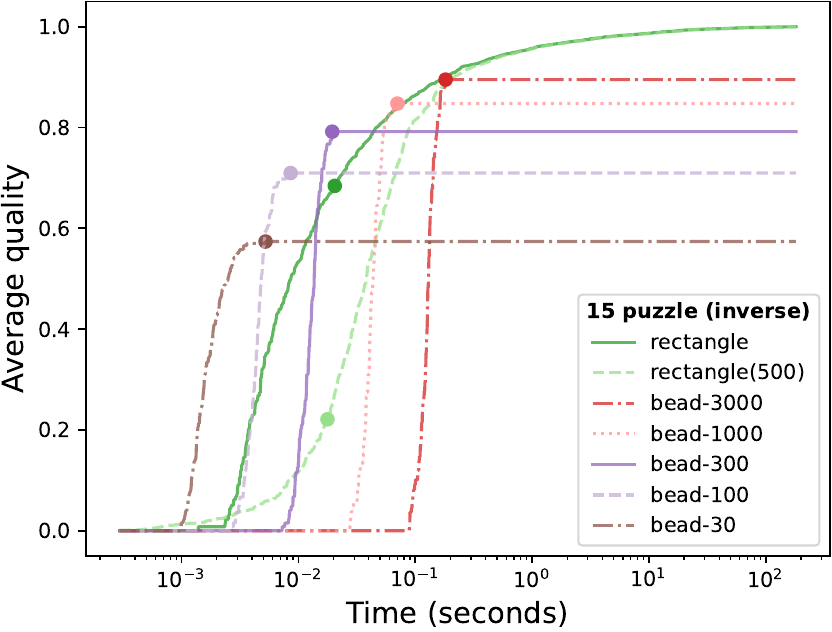}%
\caption{15 puzzle (inverse)}%
\end{subfigure}
\begin{subfigure}{\plotwidth{}}%
\includegraphics[width=\plotwidth{}]{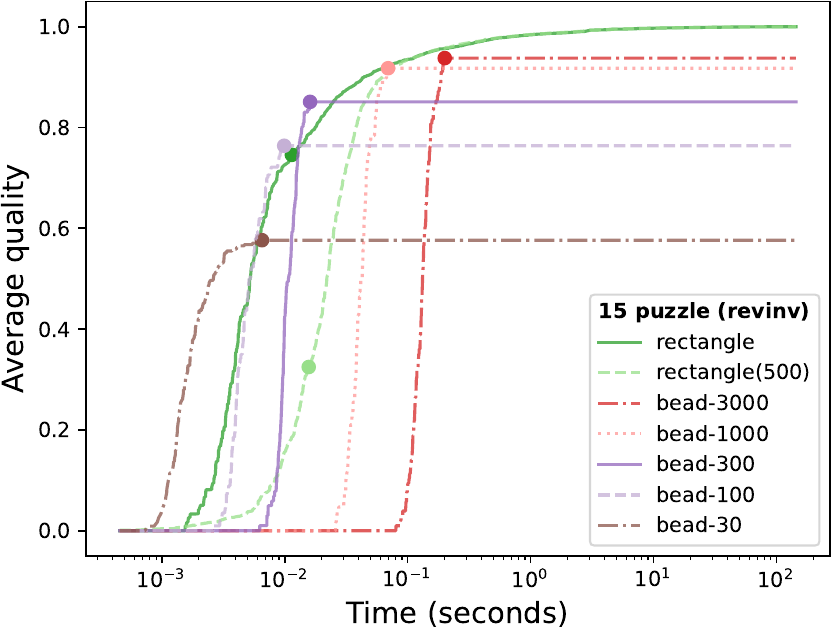}%
\caption{15 puzzle (reverse inverse)}%
\end{subfigure}
\caption{}
\end{figure*}

\begin{figure*}[h!]%
\centering%
\begin{subfigure}{\plotwidth{}}%
\includegraphics[width=\plotwidth{}]{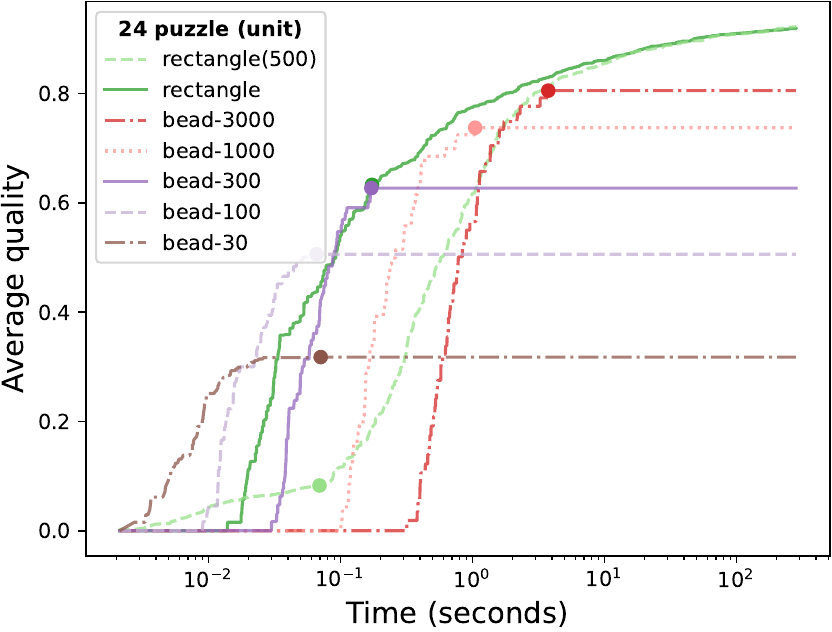}%
\caption{24 puzzle (unit)}%
\end{subfigure}
\begin{subfigure}{\plotwidth{}}%
\includegraphics[width=\plotwidth{}]{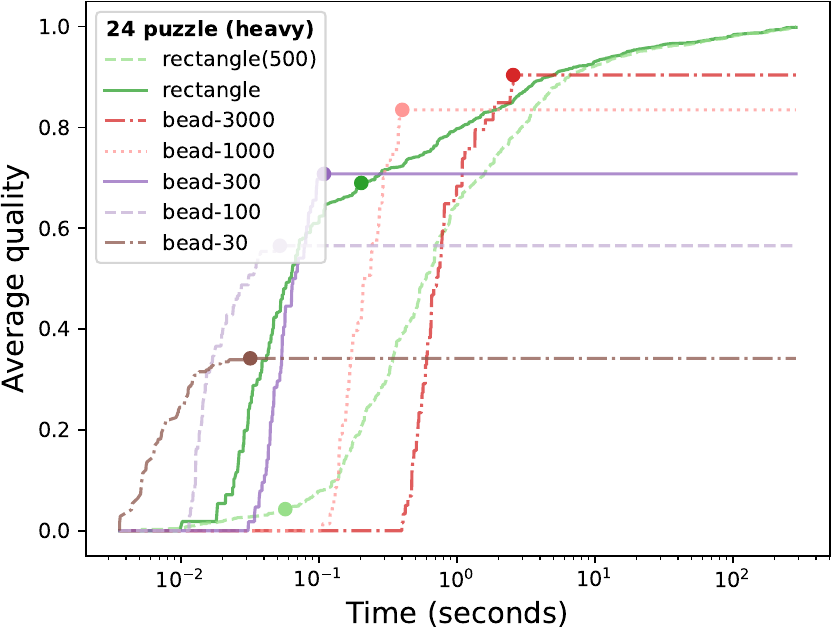}%
\caption{24 puzzle (heavy)}%
\end{subfigure}
\begin{subfigure}{\plotwidth{}}%
\includegraphics[width=\plotwidth{}]{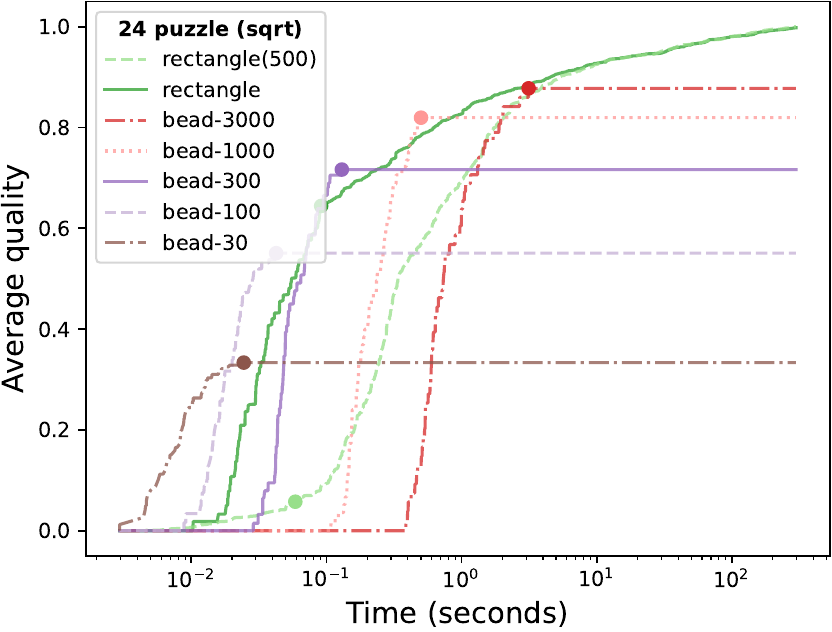}%
\caption{24 puzzle (sqrt)}%
\end{subfigure}
\caption{}
\end{figure*}

\begin{figure*}[h!]%
\centering%
\begin{subfigure}{\plotwidth{}}%
\includegraphics[width=\plotwidth{}]{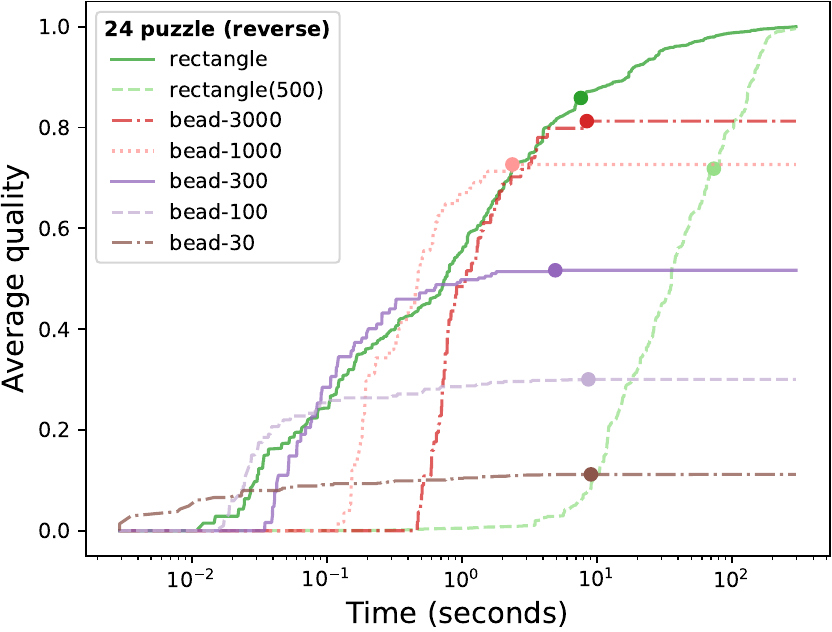}%
\caption{24 puzzle (reverse)}%
\end{subfigure}
\begin{subfigure}{\plotwidth{}}%
\includegraphics[width=\plotwidth{}]{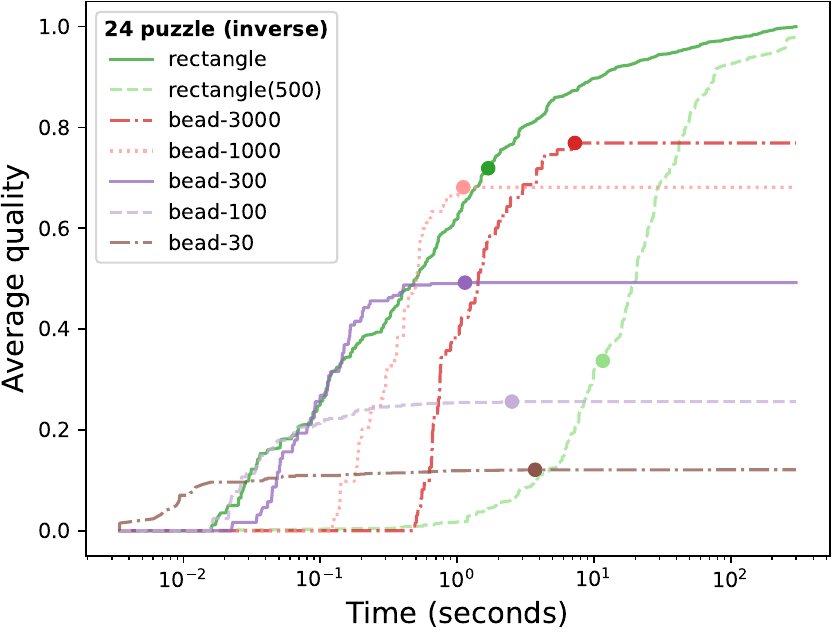}%
\caption{24 puzzle (inverse)}%
\end{subfigure}
\begin{subfigure}{\plotwidth{}}%
\includegraphics[width=\plotwidth{}]{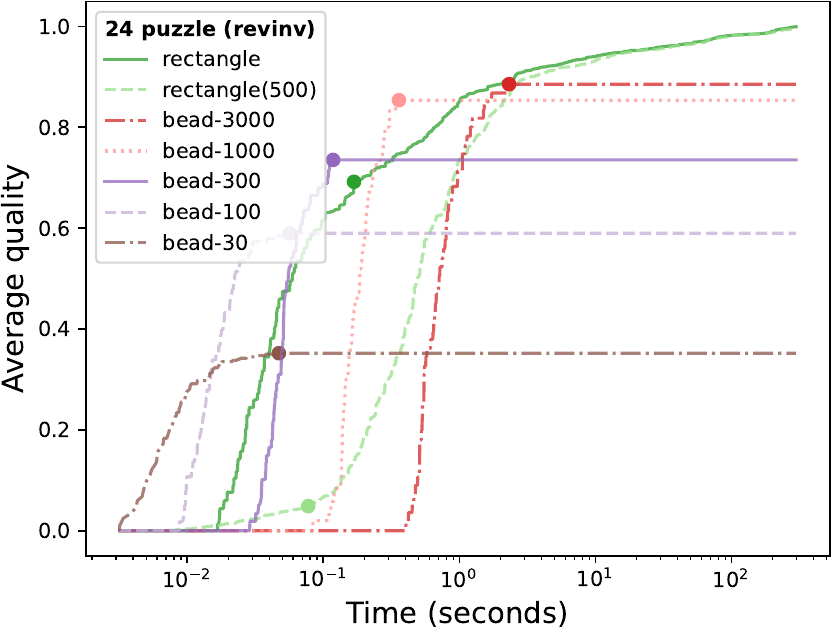}%
\caption{24 puzzle (reverse inverse)}%
\end{subfigure}
\caption{}
\end{figure*}

\begin{figure*}[h!]%
\centering%
\begin{subfigure}{\plotwidth{}}%
\includegraphics[width=\plotwidth{}]{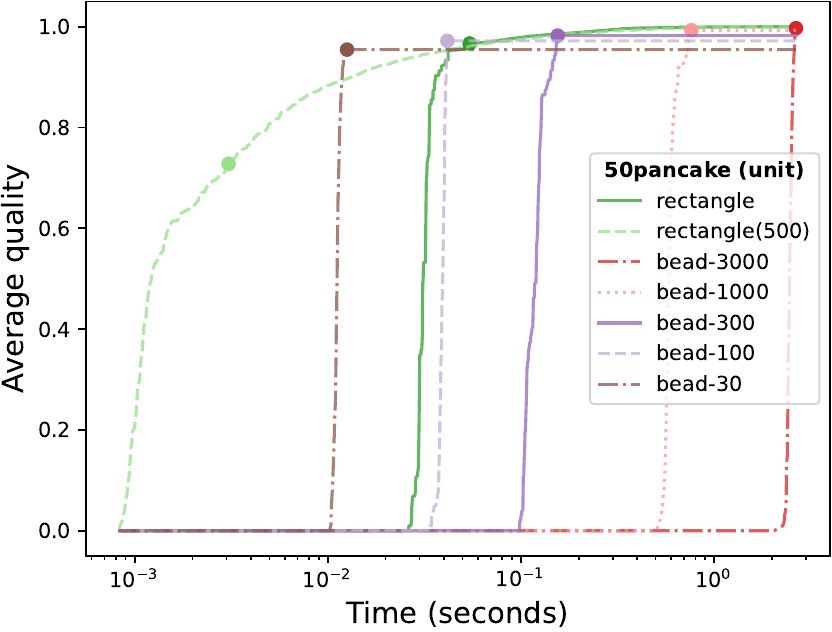}%
\caption{50pancake (unit)}%
\end{subfigure}
\begin{subfigure}{\plotwidth{}}%
\includegraphics[width=\plotwidth{}]{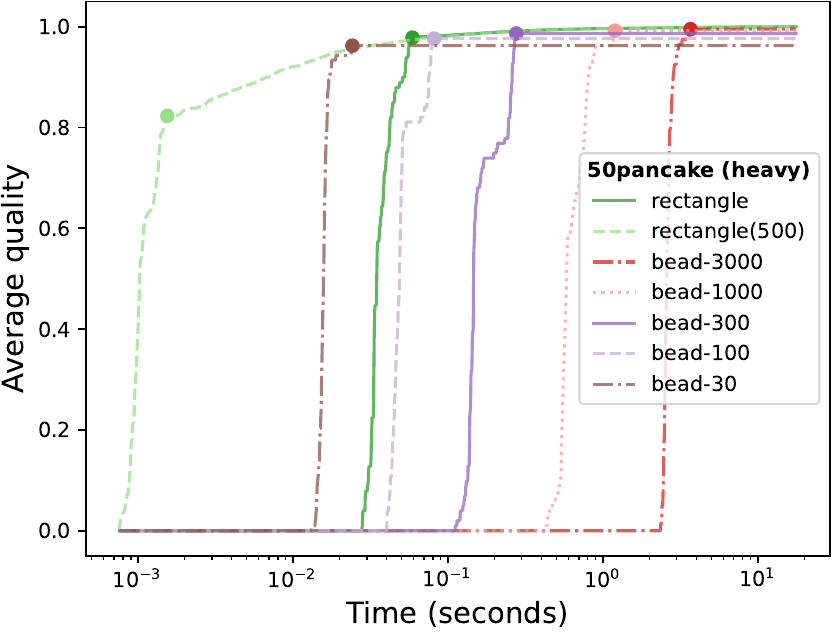}%
\caption{50pancake (heavy)}%
\end{subfigure}
\begin{subfigure}{\plotwidth{}}%
\includegraphics[width=\plotwidth{}]{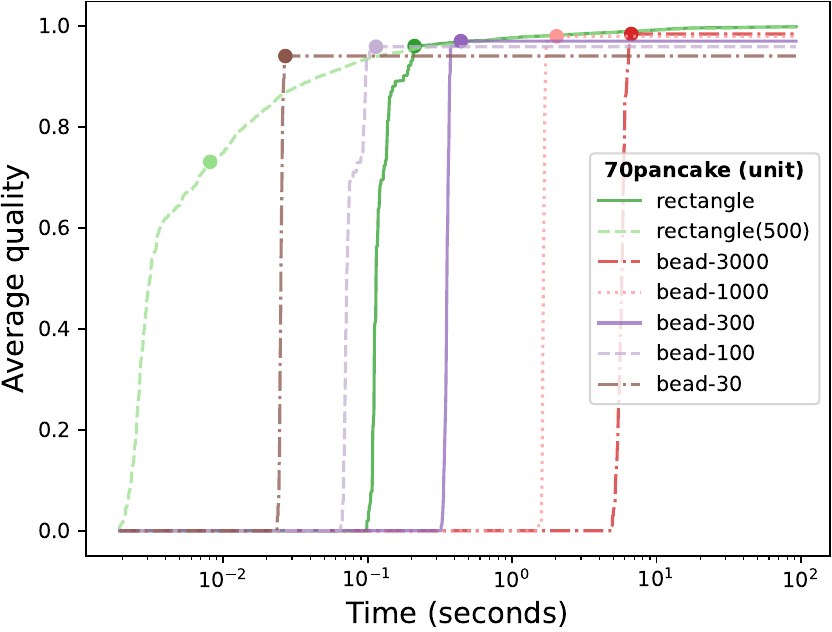}%
\caption{70pancake (unit)}%
\end{subfigure}
\caption{}
\end{figure*}

\begin{figure*}[h!]%
\centering%
\begin{subfigure}{\plotwidth{}}%
\includegraphics[width=\plotwidth{}]{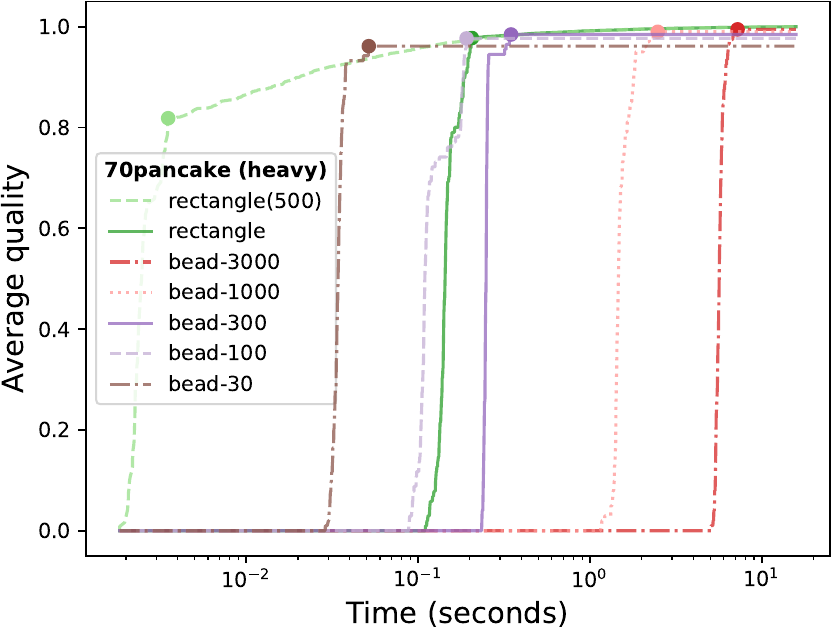}%
\caption{70pancake (heavy)}%
\end{subfigure}
\begin{subfigure}{\plotwidth{}}%
\includegraphics[width=\plotwidth{}]{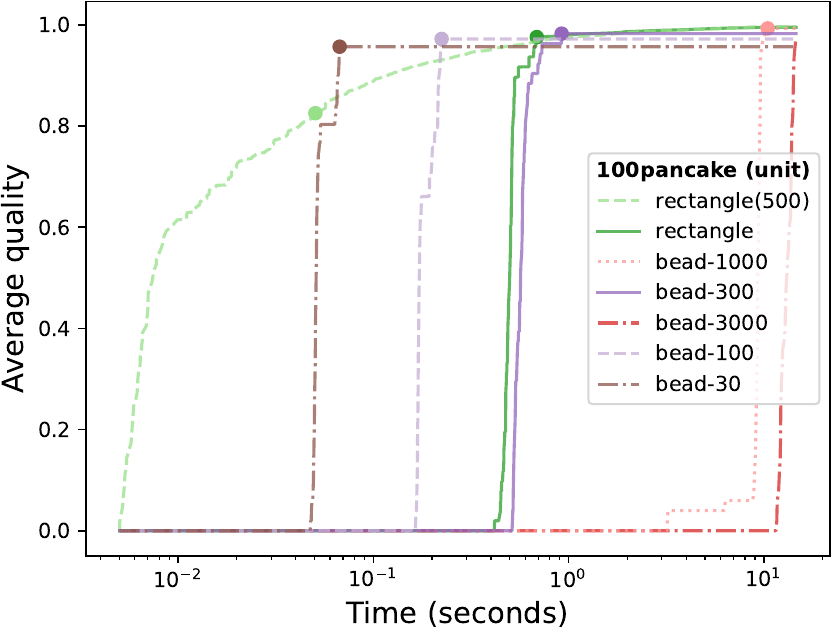}%
\caption{100pancake (unit)}%
\end{subfigure}
\begin{subfigure}{\plotwidth{}}%
\includegraphics[width=\plotwidth{}]{figures/plots/100pancake/rectangle-beam-quality-unit}%
\caption{100pancake (heavy)}%
\end{subfigure}
\caption{}
\end{figure*}

\begin{figure*}[h!]%
\centering%
\begin{subfigure}{\plotwidth{}}%
\includegraphics[width=\plotwidth{}]{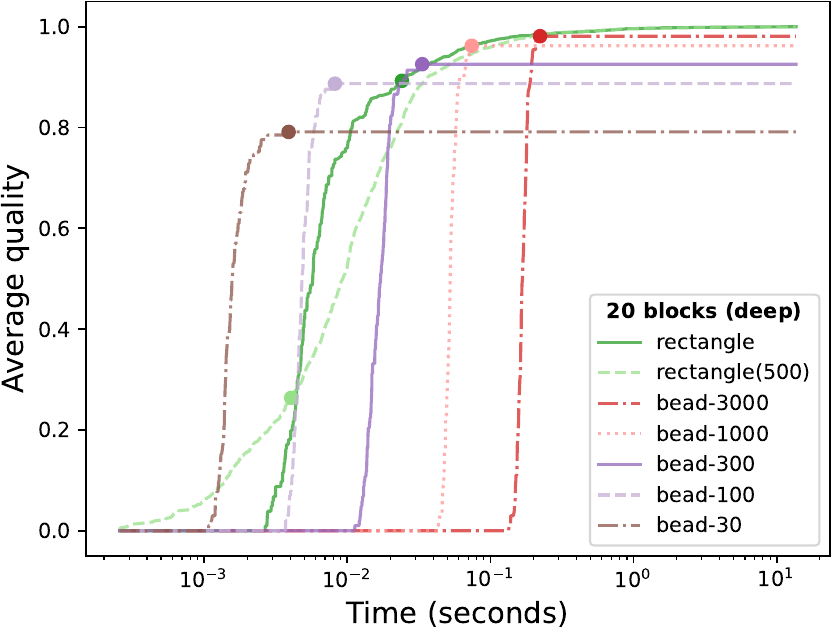}%
\caption{20 blocks (deep)}%
\end{subfigure}
\begin{subfigure}{\plotwidth{}}%
\includegraphics[width=\plotwidth{}]{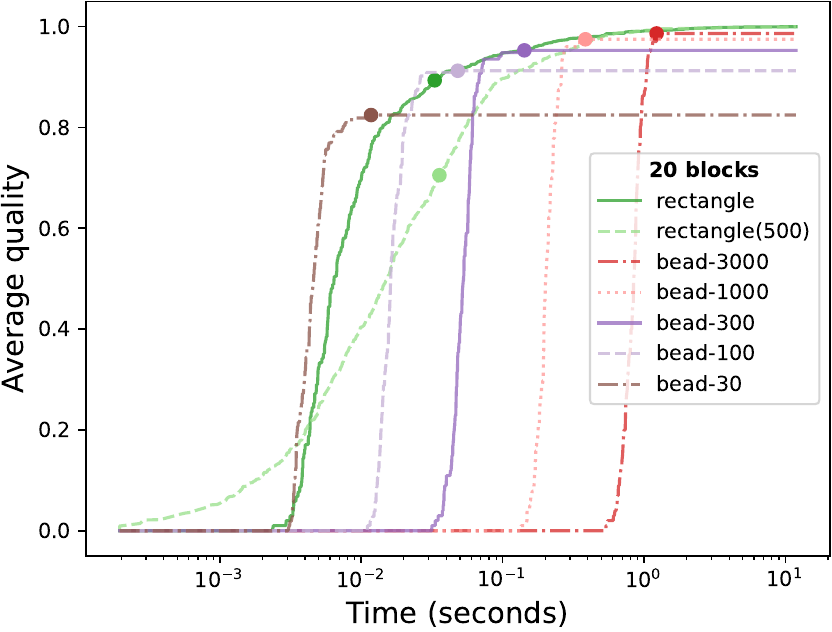}%
\caption{20 blocks}%
\end{subfigure}
\begin{subfigure}{\plotwidth{}}%
\includegraphics[width=\plotwidth{}]{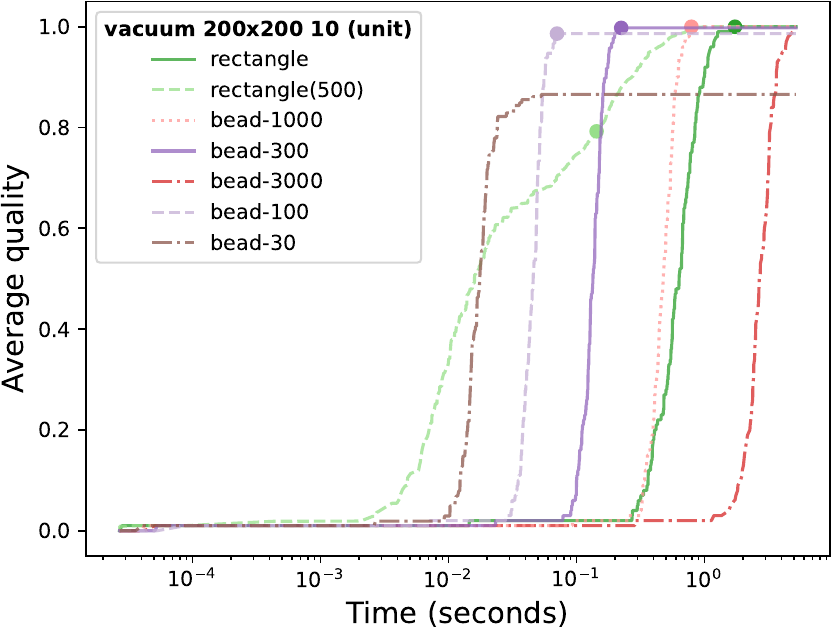}%
\caption{vacuum 200x200 10 (unit)}%
\end{subfigure}
\caption{}
\end{figure*}

\begin{figure*}[h!]%
\centering%
\begin{subfigure}{\plotwidth{}}%
\includegraphics[width=\plotwidth{}]{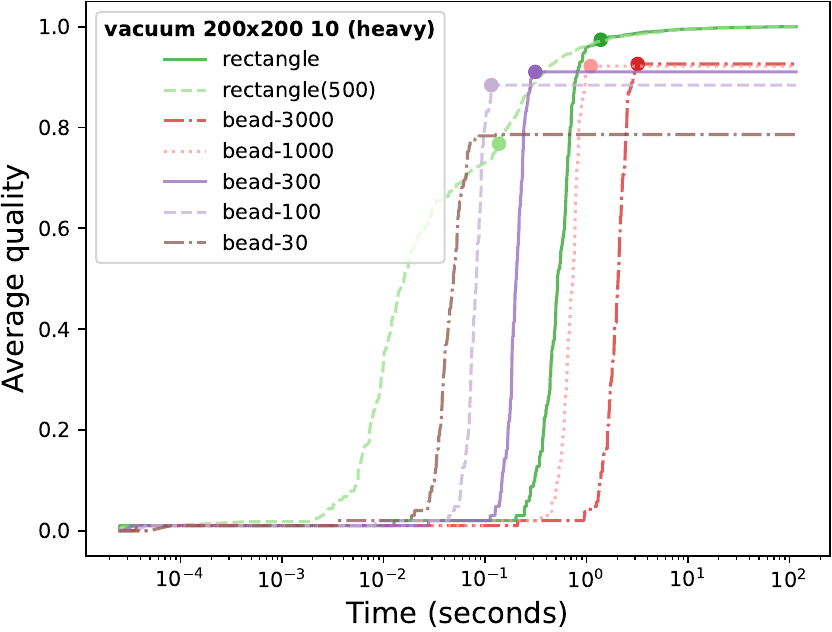}%
\caption{vacuum 200x200 10 (heavy)}%
\end{subfigure}
\begin{subfigure}{\plotwidth{}}%
\includegraphics[width=\plotwidth{}]{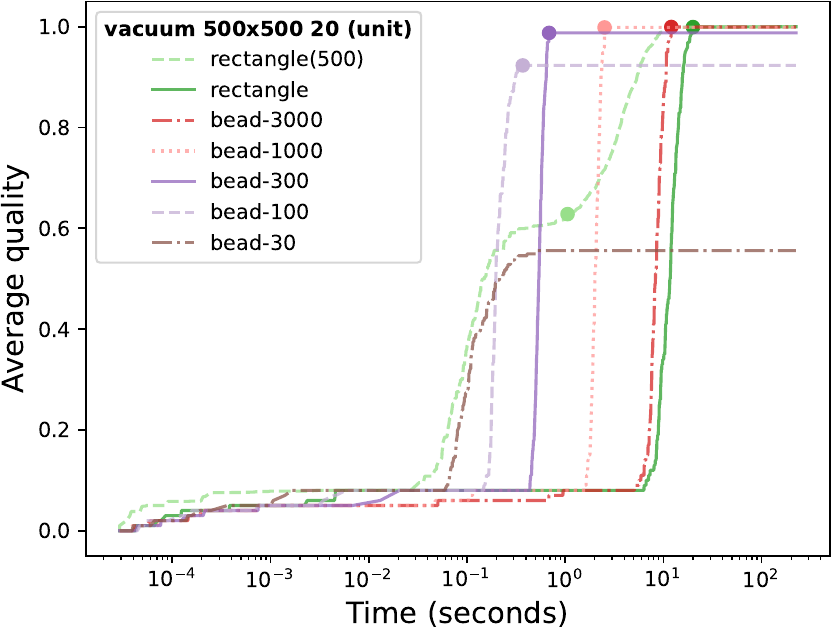}%
\caption{vacuum 500x500 20 (unit)}%
\end{subfigure}
\begin{subfigure}{\plotwidth{}}%
\includegraphics[width=\plotwidth{}]{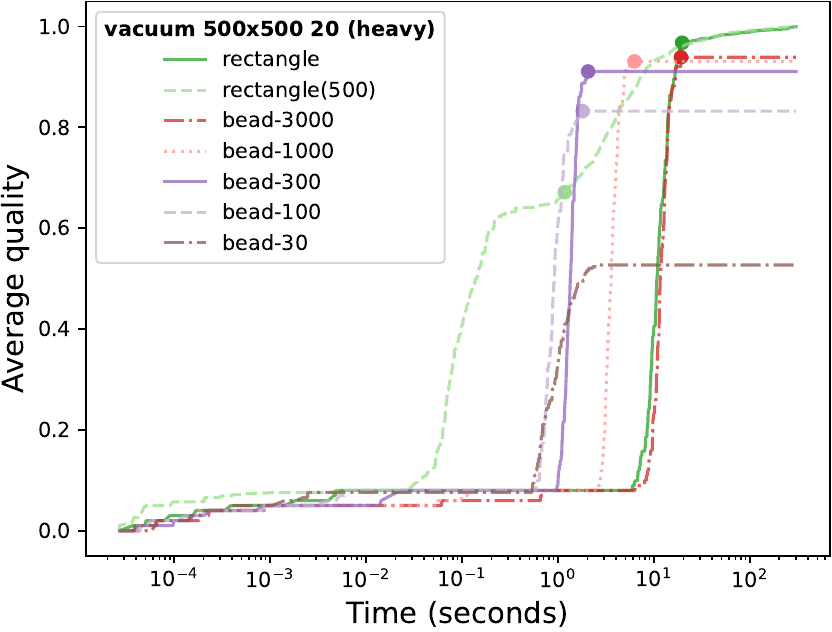}%
\caption{vacuum 500x500 20 (heavy)}%
\end{subfigure}
\caption{}
\end{figure*}

\begin{figure*}[h!]%

\centering%
\begin{subfigure}{\plotwidth{}}%
\includegraphics[width=\plotwidth{}]{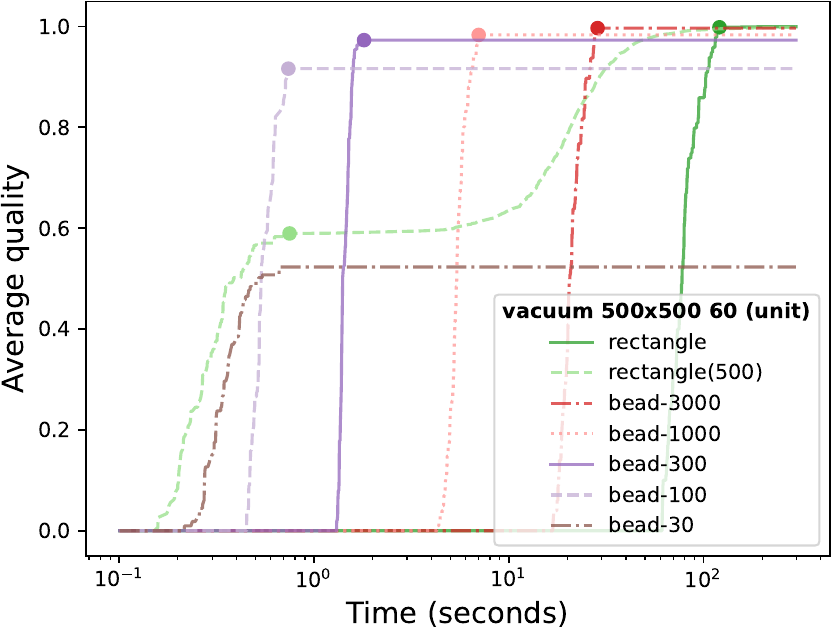}%
\caption{vacuum 500x500 60 (unit)}%
\end{subfigure}
\begin{subfigure}{\plotwidth{}}%
\includegraphics[width=\plotwidth{}]{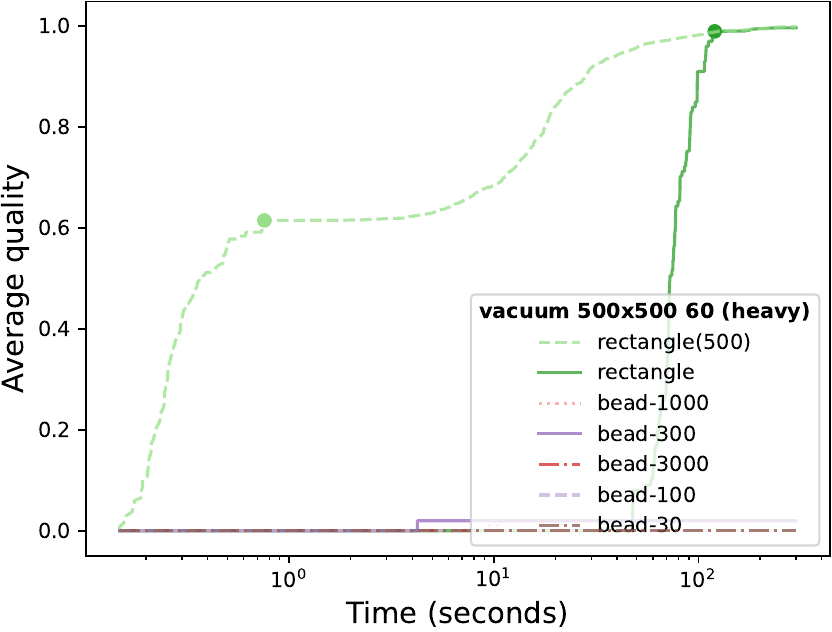}%
\caption{vacuum 500x500 60 (heavy)}%
\end{subfigure}
\begin{subfigure}{\plotwidth{}}%
\includegraphics[width=\plotwidth{}]{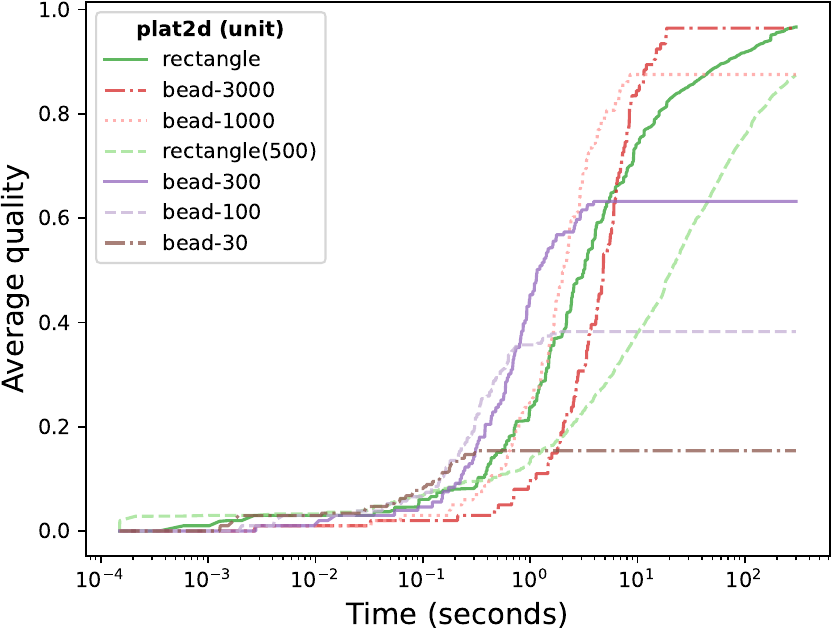}%
\caption{plat2d (unit)}%
\end{subfigure}
\caption{}
\end{figure*}

\begin{figure*}[h!]%
\centering%
\begin{subfigure}{\plotwidth{}}%
\includegraphics[width=\plotwidth{}]{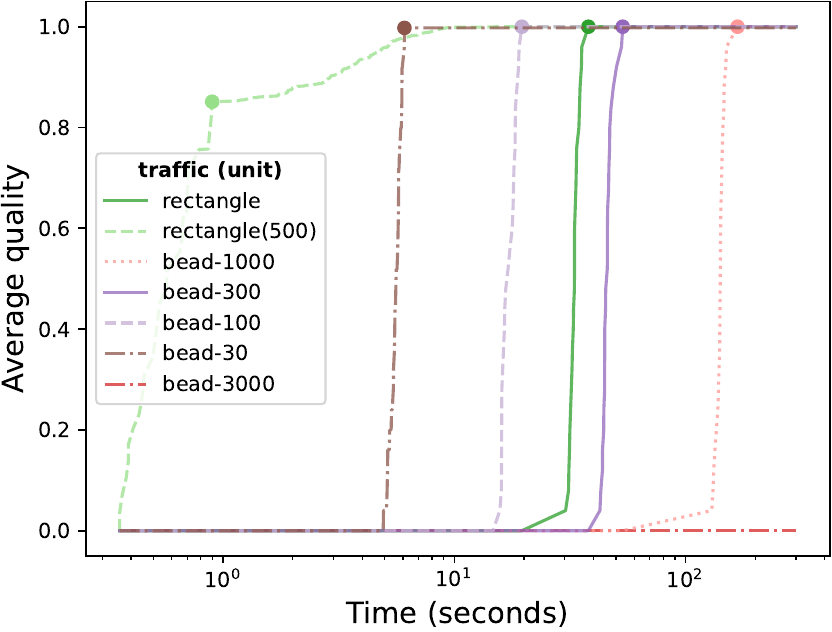}%
\caption{traffic (unit)}%
\end{subfigure}
\begin{subfigure}{\plotwidth{}}%
\includegraphics[width=\plotwidth{}]{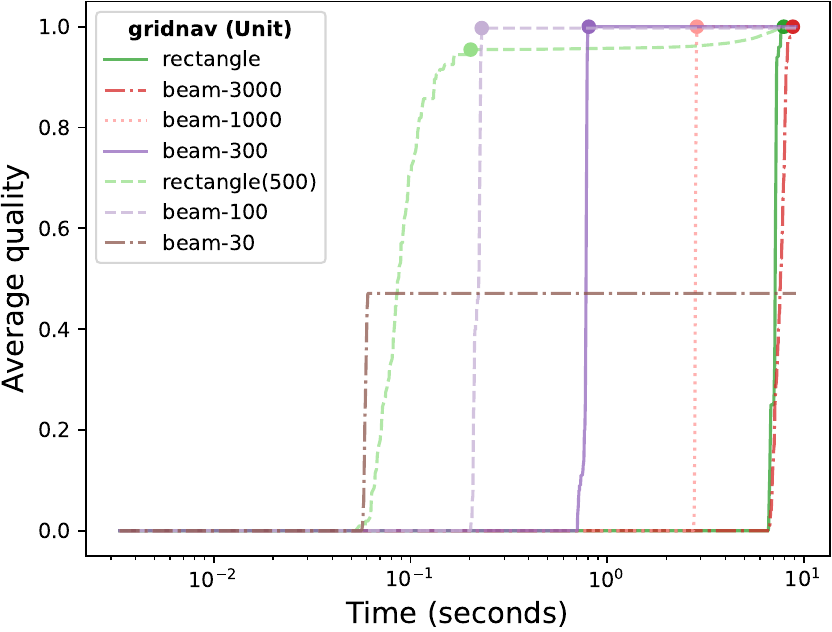}%
\caption{random grids (unit)}%
\end{subfigure}
\begin{subfigure}{\plotwidth{}}%
\includegraphics[width=\plotwidth{}]{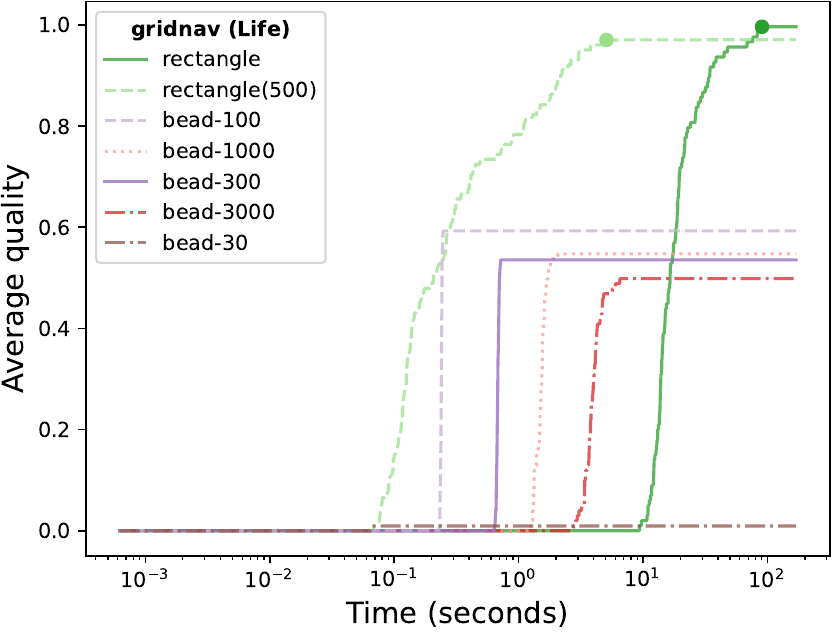}%
\caption{random grids (life)}%
\end{subfigure}
\caption{}
\end{figure*}

\clearpage
\bibliography{rectangle}

\end{document}